\documentclass[11pt]{article}
\usepackage{geometry}\geometry{margin=1in}
\geometry{letterpaper}
\usepackage{amsmath,amsthm,amssymb,xcolor,fullpage,hyperref,bbm}
\usepackage{mathrsfs}
\usepackage{algorithm, algorithmicx, algpseudocode}
\usepackage{dsfont}
\usepackage[style=trad-alpha,maxcitenames=1,maxalphanames=4]{biblatex}
\hypersetup{
  colorlinks=true,
  linkcolor=blue,
  filecolor=blue,
  citecolor = blue,      
  urlcolor=cyan,
}

\usepackage[capitalize,noabbrev]{cleveref}
\crefformat{equation}{#2(#1)#3}
\Crefformat{equation}{#2(#1)#3}

\Crefformat{figure}{#2Figure #1#3}
\Crefname{assumption}{Assumption}{Assumptions}
\Crefname{defn}{Definition}{Definitions}
\Crefformat{assumption}{#2Assumption #1#3}
\crefformat{defn}{#2Definition #1#3}
\Crefformat{defn}{#2Definition #1#3}

\makeatletter
\newtheorem*{rep@theorem}{\rep@title}
\newcommand{\newreptheorem}[2]{%
\newenvironment{rep#1}[1]{%
 \def\rep@title{#2 \ref{##1}}%
 \begin{rep@theorem}}%
 {\end{rep@theorem}}}
\makeatother

\theoremstyle{plain}
\newtheorem{theorem}{Theorem}
\newtheorem{lemma}[theorem]{Lemma}

\newtheorem{proposition}[theorem]{Proposition}

\newtheorem{claim}[theorem]{Claim}
\newtheorem{corollary}[theorem]{Corollary}

\theoremstyle{definition}
\newtheorem{definition}[theorem]{Definition}

\newtheorem{assumption}{Assumption}
\theoremstyle{remark}
\newtheorem{remark}[theorem]{Remark}

\numberwithin{theorem}{section}

\newcount\COMMENTS
\COMMENTS=0
\newcommand{\yuval}[1]{\ifnum\COMMENTS=1\textcolor{red}{[yuval: #1]}\fi}
\newcommand{\costis}[1]{\ifnum\COMMENTS=1\textcolor{blue}{[costis: #1]}\fi}
\newcommand{\fish}[1]{\ifnum\COMMENTS=1\textcolor{orange}{[Fish: #1]}\fi}
\newcommand{\noah}[1]{\ifnum\COMMENTS=1\textcolor{purple}{[ng: #1]}\fi}
\newcommand{\todo}[1]{\ifnum\COMMENTS=1\textcolor{red}{TODO: #1}\fi}

\ifnum\COMMENTS=1
\usepackage[inline]{showlabels} 
\fi

\newcount\ARXIV
\ARXIV=1
\newcommand{\unfinished}[1]{\ifnum\ARXIV=0{#1}\fi}

\newcommand{\ch}{\mathsf{ch}}
\newcommand{\SA}{\mathscr{A}}
\newcommand{\Fbase}{F_{\mathsf{base}}}
\newcommand{\ubase}{\mathbf{u}_{\mathsf{base}}}

\newcommand{\pair}{\mathfrak{p}}
\newcommand{\p}[1]{\left(#1\right)}
\newcommand{\ps}[1]{\left[#1\right]}
\newcommand{\set}[1]{\left\{#1\right\}}
\newcommand{\card}[1]{\left |#1\right |}
\newcommand{\norm}[1]{\left\|#1\right\|}

\newcommand{\bx}{\mathbf{x}}
\newcommand{\bz}{\mathbf{z}}
\newcommand{\bs}{\mathbf{s}}
\newcommand{\bp}{\mathbf{p}}
\newcommand{\bff}{\mathbf{f}}

\newcommand{\bu}{\mathbf{u}}
\newcommand{\st}{\star}
\newcommand{\btau}{\underline{\tau}}
\newcommand{\etau}{\overline{\tau}}
\newcommand{\ExtRegret}{\mathbf{ExtRegret}}
\newcommand{\SwapRegret}{\mathbf{SwapRegret}}
\newcommand{\Alg}{\mathtt{Alg}}
\newcommand{\MWU}{\mathtt{MWU}}
\newcommand{\Algexternal}{\mathtt{Alg}_{\mathsf{Ext}}}
\newcommand{\ME}{\mathcal{E}}
\newcommand{\MI}{\mathcal{I}}
\newcommand{\MQ}{\mathcal{Q}}
\newcommand{\bq}{\mathbf{q}}
\newcommand{\MA}{\mathcal{A}}

\newcommand{\MG}{\mathcal{G}}
\newcommand{\MS}{\mathcal{S}}

\newcommand{\MX}{\mathcal{X}}
\newcommand{\MH}{\mathcal{H}}
\newcommand{\Ldim}{\mathrm{LDim}}
\newcommand{\esfsd}[1]{\mathrm{SFat}(#1,\delta)}
\newcommand{\MZ}{\mathcal{Z}}
\newcommand{\MT}{\mathcal{T}}
\newcommand{\MY}{\mathcal{Y}}
\newcommand{\MF}{\mathcal{F}}
\newcommand{\Bin}{\mathrm{Bin}}
\newcommand{\BN}{\mathbb{N}}
\newcommand{\co}{\mathrm{co}}
\newcommand{\Rad}{\mathfrak{R}}

\renewcommand{\^}[1]{^{\p{#1}}}

\newcommand{\vect}[1]{\mathbf{#1}}

\newcommand{\vx}{\vect{x}}
\newcommand{\vu}{\vect{u}}

\newcommand{\vut}{\vu\^{t}}
\newcommand{\vxt}{\vx\^{t}}
\newcommand{\vxti}{\vx\^{t}[i]}
\newcommand{\vf}{\vect{f}}
\newcommand{\vft}{\vect{f}\^t}

\newcommand{\EE}{\mathbb{E}}
\newcommand{\RR}{\mathbb{R}}
\newcommand{\1}{\mathbbm{1}}
\newcommand{\ep}{\epsilon}

\newcommand{\SR}{\SwapRegret}

\newcommand{\DSR}{\SR}

\newcommand{\tl}{{\text{\underline{$t$}}}}
\newcommand{\tr}{\bar{t}}
\newcommand{\roo}{\emptyset}
\newcommand{\dep}{\text{depth}}
\newcommand{\vxtr}{\vxt_{r\^t}}
\newcommand{\Swap}[2]{\text{Swap}(#1 \to #2)}

\newcommand{\E}{\mathbb{E}}

\newcommand{\by}{\mathbf{y}}
\newcommand{\BI}{\mathbb{I}}

\renewcommand{\^}[1]{^{\p{#1}}}
\newcommand{\lng}{\langle}
\newcommand{\rng}{\rangle}

\newcommand{\TreeSwap}{\mathtt{TreeSwap}}
\newcommand{\BanditTreeSwap}{\mathtt{BanditTreeSwap}}
\newcommand{\ExpMulti}{\mathtt{Exp3Multi}}
\newcommand{\One}{\mathbbm{1}}
\newcommand{\ol}{\overline}

\newcommand{\CommCe}{\mathtt{CommCE}}
\newcommand{\QueryCe}{\mathtt{QueryCE}}
\newcommand{\Algupdate}{\mathtt{Alg.update}}
\newcommand{\Algact}{\mathtt{Alg.act}}
\newcommand{\BR}{\mathbb{R}}

 \bibliography{main}

\title{From External to Swap Regret 2.0:\\ An Efficient Reduction for Large Action Spaces}

\author{Yuval Dagan\thanks{Email: \texttt{yuvald@berkeley.edu}.}\\ UC Berkeley  \and  Constantinos Daskalakis\thanks{Email: \texttt{costis@csail.mit.edu}. Supported by NSF Awards CCF-1901292, DMS-2022448, and DMS2134108, a Simons Investigator Award, the Simons Collaboration on the Theory of Algorithmic Fairness, and a DSTA grant.} \\ MIT CSAIL \and Maxwell Fishelson\thanks{Email: \texttt{maxfish@mit.edu}.} \\ MIT CSAIL \and Noah Golowich\thanks{Email: \texttt{nzg@mit.edu}. Supported by a Fannie \& John Hertz Foundation Fellowship and an NSF Graduate Fellowship.} \\ MIT CSAIL}
\date{December 6, 2023}

\begin{document}

\maketitle
\thispagestyle{empty}
\addtocounter{page}{-1}
\begin{abstract}

    We provide a novel reduction from {\em swap-regret} minimization to {\em external-regret} minimization, which improves upon the classical reductions of Blum-Mansour~\cite{Blum07:From} and Stoltz-Lugosi~\cite{Stoltz05:Internal} in that it does not require finiteness of the space of actions. We show that, whenever there exists a no-external-regret algorithm for some hypothesis class, there must also exist a no-swap-regret algorithm for that same class. For the problem of learning with expert advice, our result implies that it is possible to guarantee that the swap regret is bounded by $\epsilon$ after $(\log N)^{\tilde O(1/\epsilon)}$ rounds and with $O(N)$ per iteration complexity, where $N$ is the number of experts, while the classical reductions of Blum-Mansour and Stoltz-Lugosi require at least  $\Omega(N/\epsilon^2)$ \fish{should be $\Omega$?}\noah{updated} rounds and at least $\Omega(N^3)$ total computational cost. Our result comes with an associated lower bound, which---in contrast to that in~\cite{Blum07:From}---holds for {\em oblivious} and {\em $\ell_1$-constrained} adversaries and learners that can employ distributions over experts, showing that the number of rounds must be $\tilde{\Omega}(N/\epsilon^2)$ or exponential in $1/\epsilon$.
    
    Our reduction implies that, if no-regret learning is possible in some game,  then this game must have approximate {\em correlated equilibria}, of arbitrarily good approximation. This strengthens the folklore implication of no-regret learning that approximate {\em coarse} correlated equilibria exist. Importantly, it provides a sufficient condition for the existence of approximate correlated equilibrium which vastly extends the requirement that the action set is finite or the requirement that the action set is compact and the utility functions are continuous, allowing for games  with finite Littlestone or finite sequential fat shattering dimension, thus answering a question left open by~\cite{daskalakis2022fast,assos2023online}.
    Moreover, it answers several  outstanding questions about equilibrium computation and/or learning in games. In particular, for constant values of $\epsilon$: (a)~we show that $\epsilon$-approximate correlated equilibria in {\em extensive-form games} can be computed efficiently, advancing a long-standing open problem for extensive-form games; see e.g.~\cite{von2008extensive,farina2023polynomial}; (b)~we show that the query and communication complexities of computing $\epsilon$-approximate correlated equilibria  in $N$-action normal-form games are $N \cdot \mathrm{poly} \log(N)$ and $\mathrm{poly} \log N$ respectively, advancing an open problem of~\cite{babichenko2020informational}; (c) we show that $\epsilon$-approximate correlated equilibria of sparsity $\mathrm{poly} \log N$ can be computed efficiently, advancing an open problem of~\cite{babichenko2014simple}; (d) finally, we show that in the adversarial bandit setting, sublinear swap regret can be achieved in only $\tilde O(N)$ rounds, advancing an open problem of \cite{Blum07:From,ito2020tight}. \fish{add bandits to abstract}\noah{added}
\end{abstract}
\newpage

\noah{TODOs:
  \begin{itemize}
  \item Change to sequential rademacher. Also make sure the dual class issue is corrected.
  \item Check/finish bandits and adaptive lower bound. DONE
  \item Clean up application section in general. DONE
  \item Oblivious lower bound pass/clean. 
    \item EFG short explanation.
  \end{itemize}
  }

\section{Introduction}
\emph{No-regret learning} has been a central topic of study in game theory and online learning over the last several decades~\cite{Hannan57:Approximation,Fudenberg98:Theory,Cesa06:Prediction}. In view of the worst-case nature of the associated learning guarantee, no-regret learning has found myriad applications in a variety of settings, with varying degrees of restriction on the adversary's behavior. %
They are also 
particularly salient in game theory due to their connection with decentralized equilibrium computation. Indeed, it is well understood that, if players in a normal-form game iteratively update their strategies using a no-regret learning algorithm, then the empirical distribution of their strategies over time converges to a type of correlated equilibrium, depending on the notion of regret used.

The most commonly studied type of regret, called \emph{external regret}, measures the amount of extra utility that the agent could have gained if, instead of her realized sequence of strategies, she had instead played her best fixed action in hindsight. In a multi-agent interaction, if each agent uses a sublinear external regret learning algorithm to iteratively update her strategy, the empirical distribution of the agents' play converges to a \emph{coarse correlated equilibrium} (CCE). A CCE is a correlated distribution over actions under which no player can improve her utility if, instead of playing according to the distribution, she unilaterally switches to playing any single fixed action. CCEs are a convex relaxation of \emph{Nash equilibria}, which are computationally intractable even for normal-form games with a finite number of actions per player~\cite{Daskalakis09:Complexity,Chen09:Settling}.
While a plethora of efficient algorithms for minimizing external regret are known even when the size of the game is large (see e.g.~\cite{Fudenberg98:Theory,Cesa06:Prediction,bubeck2012regret}), the twin notions of external regret and coarse correlated equilibrium are too weak for many applications. In particular, the notion of CCE does not capture the fact that the action sampled from the CCE distribution for some player may leak information about what actions were sampled for the other players, which the player could potentially exploit to improve her utility.

Using the perspective of Bayesian rationality, Aumann introduced the concept of \emph{correlated equilibrium} (CE), which corrects for this deficit~\cite{Aumann74:Subjectivity}. A CE is a correlated distribution with the property that the action sampled for each player maximizes her expected utility against the distribution over actions sampled for the other players,  \emph{conditioning on the action sampled for this player}. Like CCE, the concept of CE is a convex relaxation of Nash equilibrium, and it can be reached in a decentralized manner by averaging the empirical play of algorithms which have sublinear \emph{swap regret}. This %
 measures the amount of extra utility that the agent could have gained, in hindsight, if she were to go back in time and transform the strategies that she played using  the best, fixed \emph{swap function} (see Definition \ref{def:SR}). %
The stronger nature of swap regret leads it to have numerous applications, including in calibration and multicalibration \cite{globus2023multicalibration,kleinberg2023ucalibration} and Bayesian games \cite{mansour2022strategizing}, amongst others. %

\subsection{Swap regret: challenges with large action spaces} \label{sec:prior-work} Despite the more appealing guarantees satisfied by swap regret minimization and its twin notion of CE, no-swap regret learning algorithms have not been as widely adopted as no-external regret ones. This is due in part to the substantially inferior quantitative guarantees offered by the best-known swap-regret-minimizing algorithms in terms of their dependence in the number of actions available to the learner. In particular, existing algorithms are inefficient in many settings of interest where the action space is exponentially large in the game's description complexity, or even infinite. To illustrate, we first consider the case of no-regret learning with a finite set of $N$ actions, which is known as the ``experts setting.'' Standard external-regret-minimizing algorithms, such as exponential weights~\cite{Cesa06:Prediction}, guarantee that the average external regret over $T$ rounds is bounded by $\ep$ as long as $T \gtrsim \frac{\log N}{\ep^2}$.\footnote{We consider normalized regret throughout the paper, i.e., we divide the cumulative regret by the number of rounds $T$.} In contrast, the best-known swap-regret-minimizing algorithms, which are all based on generic reductions from swap regret minimization to external regret minimization~\cite{Stoltz05:Internal,Blum07:From}, guarantee that the average swap regret over $T$ rounds is $\ep$ as long as $T \gtrsim \frac{N \log N}{\ep^2}$. Thus, prior work left an \emph{exponential gap} between the best-known algorithms for swap and external regret. It was explicitly asked by Blum and Mansour~\cite{Blum07:From} if this gap could be improved. This gap is particularly noteworthy in light of many recent applications of no-regret learning, such as for solving games such as Poker~\cite{brown2019superhuman} and  Diplomacy~\cite{fair2022human}, all of which have the property that $N$ is moderate or large. \yuval{more real-world applications}\noah{after monday}

Prior work also left a polynomial-sized gap in the \emph{bandit} setting, in which the learner must choose a single action each round and only receives the utility for that action. While it is known that $T \gtrsim \frac{N^2 \log N}{\ep^2}$ rounds suffice \cite{jin2022vlearning,ito2020tight} to ensure that swap regret is bounded by $\ep$, the best known lower bound was that $\frac{N \log N}{\ep^2}$ rounds are necessary \cite{ito2020tight,Blum07:From}. The bandit setting is particularly useful due to its applications in reinforcement learning \cite{jin2022vlearning} and related areas. 

 Prior to the present work, the gap between swap regret and external regret was even larger in settings where the number of actions available to the learner is unbounded or infinite. %
 For instance, suppose that each agent's action space is the set of parameters of a neural network: multi-agent interactions in which each agent chooses a neural network can  be used to model tasks such as training generative adversarial networks~\cite{goodfellow2014generative}, autonomous driving~\cite{shalev2016safe}, or economic decision making~\cite{zheng2020ai}.
 \yuval{avoided saying that NN are example of Littlestone, introduced fat shattering because littlestone is 0-1} In these cases, the number of possible networks is very large. In a more general setting, the action space is typically assumed to be constrained by a combinatorial complexity measure, such as the \emph{Littlestone dimension} or \emph{sequential fat shattering dimension} (see Section \ref{app:dimensions} for formal definitions).\yuval{are neural networks a good example for Littlestone dimension? I don't think so. In fact, a discretization will be as good.} In particular, if the learner's action space has Littlestone dimension $L$, then it was known~\cite{ben2009agnostic,alon2021adversarial} that as long as the number $T$ of rounds satisfies $T \geq \frac{L}{\ep^2}$, there is an algorithm which achieves at most $\ep$ external regret.\footnote{This bound is optimal; see \cite{ben2009agnostic}.} Since the reductions of~\cite{Stoltz05:Internal,Blum07:From} for bounding swap regret assume that the number $N$ of actions is bounded, prior to our work it was not known whether any class of finite Littlestone dimension has an algorithm with $o(T)$ swap regret, leaving open the possibility of an \emph{infinite} gap between swap and external regrets for classes of finite Littlestone dimension. \fish{In this work, we establish such an algorithm, obtaining at most $\epsilon$ swap regret after $L^{\tilde{O}(1/\epsilon)}$ rounds.}\noah{currently in these paras we're not talking about our results at all (saving that for below), so it would be a bit weird to only say that we close one of these gaps}

 \paragraph{Gaps in equilibrium computation.} The above gaps between swap and external regret also manifest as gaps between the best known results for computing $\ep$-approximate CE and CCE in various models of computation. We improve upon these gaps in the following settings:
 \begin{itemize}
     \item \emph{Normal-form games with $N$ actions.} We consider two computation problems. For simplicity we assume the number of players and $\ep$ are constants.
     \begin{itemize}
         \item In the \emph{communication complexity} model of computation (Definition \ref{def:query-c}), $\ep$-CCE may be computed with $O(\log^2 N)$ bits of communication using no-external regret algorithms together with a sampling procedure. In contrast, prior to this work, the best known bound for $\ep$-CE was exponentially worse, $O(N \log^2 N)$, using the swap regret algorithm of~\cite{Blum07:From}.
         \item In the \emph{query complexity} model of computation (Definition \ref{def:comm-c}), $\ep$-CCE may be computed using $O(N \log N)$ queries.  Prior to this work, the best known bound of $O(N^2 \log N)$ was quadratically worse for $\ep$-CE. 
         \item Finally, $\ep$-CCE which are $\mathrm{poly} \log (N)$-sparse\footnote{We use ``sparse'' to mean ``of low non-negative rank''; see the formal definition in \cref{def:sparse-eq}.} may be computed in polynomial time \cite{babichenko2014simple}, whereas prior to this work, it was unknown how to efficiently compute $\ep$-CE which are $o(N)$-sparse, marking another exponential gap (in the sparsity).
     \end{itemize}
     \item In \emph{infinite games} of Littlestone dimension $L < \infty$, for constant $\ep > 0$, $\ep$-CCE may be found in a decentralized manner by running $O(L)$ rounds of no-external regret algorithms~\cite{daskalakis2022fast}. In contrast, prior to our work it was not known if $\ep$-CE even \emph{exist} in games of finite Littlestone dimension. 
     \item  Finally, in \emph{extensive form games} with description length $n$ denoting the size of the tree, for which the number of actions\footnote{An action is specified by a contingency plan, mapping each information set to an outgoing edge at that  information set.} typically scales as $N=\exp(\Theta(n))$, $\ep$-CCE may be computed in $\mathrm{poly}(n)$ time (e.g., \cite{farina2022kernelized}).  However, prior to this work, the best known algorithms for computing $\ep$-CE took time exponential in $n$. Determining the complexity of $\ep$-CE was a well-known open question in this field; see e.g.~\cite{von2008extensive,farina2023polynomial}.\footnote{To be clear, $\ep$-CE here refers to the notion of $\ep$-approximate \emph{normal-form correlated equilibrium} (sometimes denoted $\ep$-NFCE), as opposed to relaxations of this notion,  such as extensive-form correlated equilibrium, which have been recently proposed, motivated in part by the apparent intractability of $\ep$-NFCE~\cite{von2008extensive}.}

 \end{itemize}

\subsection{Main results: near-optimal upper and lower bounds for swap regret}
\label{sec:results}
Our main upper bound is a new reduction from swap regret to external regret: any no-external regret learning algorithm can be transformed into a no-distributional swap regret learner. (These regret notions are formally defined in Section \ref{sec:prelim-online}.) %
We assume that a learner chooses, in each iteration $t \in [T]$, a distribution $\vxt \in \Delta_\MX$ over a set of actions $\MX$. After observing $\vxt$, an adversary selects a reward function $\vft \colon \MX \to \RR$, and the learner receives the reward $\vft(\vxt) = \EE_{s\^t \sim \vxt} \vft[s\^t]$. We assume the adversary's choices $\vft$ are constrained to lie in some convex function class $\mathcal{F} \subset [0,1]^\MX$.

\begin{theorem}[Informal version of Theorem \ref{thm:treeswap}] \label{thm:swap-to-external}
Let $d, M \in \mathbb{N}$ be given, and  suppose that there is a learner for some function class $\mathcal{F}$ which achieves external regret of $\ep$ after $M$ iterations. Then there is a learner for $\MF$ ($\TreeSwap$; Algorithm \ref{alg:treeswap}) which achieves a swap regret of at most $\ep + \frac{1}{d}$ after $T= M^d$ iterations.

If the per-iteration runtime complexity of the  external-regret learner is $C$, then the swap regret learner $\TreeSwap$ has a per-iteration amortized runtime complexity of $O(C)$.
\end{theorem}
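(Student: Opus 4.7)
The $\TreeSwap$ algorithm will organize the $T = M^d$ rounds into a depth-$d$ balanced tree with branching factor $M$. Each of the $\sum_{k=0}^{d-1} M^k$ internal nodes will host an independent copy of the external-regret learner $\Alg$, which runs for $M$ iterations. The loss fed to the depth-$k$ algorithm on iteration $j$ will be the average of the true losses $\vft$ over the $M^{d-1-k}$ rounds in the $j$-th child's subtree; convexity of $\MF$ ensures that this averaged loss lies in $\MF$ and is therefore a valid feedback for $\Alg$. The play $\vxt$ at round $t$ will combine the outputs of the $d$ algorithms lying on the root-to-leaf path so that each level contributes a known fraction to $\vxt$.

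To bound the swap regret, I would fix an arbitrary swap function $\phi$ and introduce $d+1$ hybrid plays $y_0^{(t)}, \ldots, y_d^{(t)}$ interpolating between $y_0^{(t)} = \vxt$ and $y_d^{(t)} = \phi(\vxt)$, replacing the contribution from one additional tree level by its $\phi$-swapped version at each step. Telescoping $\sum_t \vft(y_d^{(t)}) - \sum_t \vft(y_0^{(t)})$ across levels, the level-$k$ term rewrites as a sum, across the $M^{k-1}$ depth-$(k-1)$ copies of $\Alg$, of external-regret expressions against the block-averaged loss sequences those copies actually received. Each such expression is bounded by $\ep$ by hypothesis, and collapsing with the appropriate mixing weights yields a total contribution of $\ep$, independent of $d$. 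A residual error of $1/d$ arises because a general swap can shift probability mass at a granularity finer than any single level of the tree can capture, and this is the intrinsic rounding loss of the $d$-level decomposition.

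For runtime, the depth-$k$ algorithms together contribute $M^k \cdot M = M^{k+1}$ iterations of $\Alg$, so the total number of $\Alg$ invocations across all levels is $\sum_{k=0}^{d-1} M^{k+1} = O(M^d) = O(T)$. Amortized over the $T$ rounds this gives the claimed $O(C)$ per-round cost, modulo a one-time $O(d)$ overhead per round for walking the path and sampling from $\vxt$, which is absorbed in the amortized bound.

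The main obstacle will be arranging the hybrids so that the level-$k$ telescoping term is exactly of the form required by the external-regret guarantee of the depth-$(k-1)$ $\Alg$-copy. Because that copy sees block-averaged losses rather than per-round losses, convexity of $\MF$ must be exploited to rewrite the hybrid loss differences as inner products of the copy's output with those averaged losses; getting this correspondence clean is the heart of the reduction. A secondary challenge is handling the infinite action space: the swap function $\phi$ may be an arbitrary map on $\MX$, so formalizing the $1/d$ discretization error has to be done in a way that does not introduce any dependence on the cardinality or combinatorial complexity of $\MX$, which is precisely what distinguishes this reduction from Blum--Mansour and Stoltz--Lugosi.
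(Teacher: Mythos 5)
Your algorithm matches the paper's $\TreeSwap$: the depth-$d$ $M$-ary tree of lazy $\Alg$ instances, feeding block-averaged utilities to each instance (valid since $\MF$ is convex), outputting the uniform mixture over the $d$ root-to-leaf instances, and the $O(T)$ total-invocation count for the runtime claim. The per-level telescoping is also the right strategy. However, the pivotal step in your regret bound as stated has a gap. Consider the level-$k$ telescope term. A level-$k$ instance of $\Alg$ holds its distribution fixed over each sub-block of $M^{d-k-1}$ consecutive rounds, so the swap contribution from that level, summed over one such sub-block with fixed play $\bx$ and summed reward $g := \sum_{s} \bff^{(s)}$, is $g(\phi(\bx)) - g(\bx) \leq \max_{a'} g[a'] - g(\bx)$: a gap to the best fixed action over a window of size $M^{d-k-1}$. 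The external-regret guarantee of that same instance, however, only controls its gap to the best fixed action over the full block of size $M^{d-k}$ on which it lives (its $M$ block-averaged feedbacks aggregate to exactly that coarser block). These comparators differ by one level of granularity, so the level-$k$ telescope term does \emph{not} rewrite as an external-regret expression, and you cannot collapse it to $\ep$ directly.

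Closing the argument requires exactly the one-level shift the paper performs. In the notation of the paper's proof sketch, the level-$k$ swap contribution is at most $S_{k+1} - R_k$, where $S_i$ is the average per-block best-in-hindsight reward over blocks of size $M^{d-i}$ and $R_k$ is the level-$k$ algorithms' reward, whereas external regret only gives $S_k - R_k \leq \ep$. Writing $S_{k+1} - R_k = (S_k - R_k) + (S_{k+1} - S_k)$ and summing over $k$ telescopes the second pieces to $(S_d - S_0)/d \leq 1/d$. So the $1/d$ is not a separate ``granularity rounding'' overhead layered on top of a clean per-level external-regret bound; it is precisely the accumulated mismatch between each level's swap comparator and its external-regret comparator. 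Presenting the per-level terms as already being external regrets would yield a claimed bound of $\ep$ alone and would stall once you tried to write the block-averaged comparison explicitly. Once the shift is made explicit, the rest of your plan goes through (the paper's formal Theorem~\ref{thm:treeswap} picks up a slightly weaker $3/d$ from boundary bookkeeping in the general case $M^{d-1} \leq T \leq M^d$).
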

Notice that the swap regret of $\TreeSwap$ depends only on the external regret of the assumed learner, and is \emph{independent of the number of actions} of the learner. In particular, it holds also for exponentially large or even infinite function classes.

\paragraph{Applications: concrete swap regret bounds.}
As applications of Theorem \ref{thm:swap-to-external}, in the setting of constant $\ep$, we are able to close all of the gaps discussed for the regret minimization and equilibrium computation problems in Section \ref{sec:prior-work}. %
We begin with the case that the learner has $N$ actions, also known as \emph{learning with expert advice}. By applying Theorem \ref{thm:swap-to-external} with action set $\MX = [N]$, and reward class given by all $[0,1]$-bounded functions, i.e., $\mathcal{F} = [0,1]^{[N]}$, we obtain:
\begin{corollary}[Upper bound for finite action swap regret; informal version of Corollary \ref{cor:treeswap-finiten}]
  \label{thm:intro-main}
  Fix $N \in \mathbb{N}$ and $\ep \in (0,1)$, and consider the setting of online learning with $N$ actions. Then for any $T$ satisfying $T \geq (\log(N)/\ep^2)^{\Omega(1/\ep)}$, there is an algorithm that, when faced with any adaptive adversary, has swap regret bounded above by $\ep$. Further, the amortized per-iteration runtime of the algorithm is $O(N)$, its worst-iteration runtime is $O(N/\epsilon)$ and its space-complexity is $O(N/\epsilon)$. %
\end{corollary}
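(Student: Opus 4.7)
The plan is to instantiate Theorem~\ref{thm:swap-to-external} with a standard external-regret learner for the experts setting. Take $\MX = [N]$ and $\MF = [0,1]^{[N]}$, and let the base learner be exponential weights (Hedge/MWU), which guarantees external regret at most $\ep/2$ after $M = O(\log(N)/\ep^2)$ rounds with per-iteration complexity $O(N)$ and $O(N)$ space. Choose the depth parameter $d = \lceil 2/\ep \rceil$, so that the $1/d$ additive term from Theorem~\ref{thm:swap-to-external} contributes at most $\ep/2$. Plugging these choices in gives swap regret at most $\ep/2 + 1/d \leq \ep$ after
\[
T \;=\; M^{d} \;=\; \p{\tfrac{\log N}{\ep^2}}^{O(1/\ep)}
\]
rounds, which matches the claimed bound.

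For the computational guarantees, I would read them off directly from the structure of $\TreeSwap$, which by Theorem~\ref{thm:swap-to-external} has amortized per-iteration cost equal to a constant times the base learner's per-iteration cost, i.e.\ $O(N)$. The worst-iteration and space bounds are where a little more care is needed, though they should follow from the tree construction: at any point in time, the algorithm only needs to maintain the $d$ MWU instances lying on the current root-to-leaf path of the depth-$d$ tree (any other instance can be discarded once its subtree is complete and reinstantiated when revisited). Each stored instance has $O(N)$ state, giving space $O(Nd) = O(N/\ep)$; and in the worst round (the transition between leaves, when up to $d$ ancestor instances receive updates and a new chain of $d$ MWU instances is initialized), the work is $O(Nd) = O(N/\ep)$.

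The only step that requires mild verification is that the update/initialization pattern of $\TreeSwap$ really does touch at most $d$ nodes per round and that only $d$ active instances ever need to be stored simultaneously; this is a standard property of path-compressed traversals of a balanced tree of depth $d$. The main conceptual content is all in Theorem~\ref{thm:swap-to-external}, which we are invoking as a black box; everything else is parameter tuning and bookkeeping. Note also that the guarantee automatically holds against adaptive adversaries because Theorem~\ref{thm:swap-to-external} is stated in that generality and MWU is itself valid against adaptive adversaries.
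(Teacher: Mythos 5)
Your proof is correct and takes essentially the same approach as the paper: instantiate $\TreeSwap$ with $\MWU$ as the base learner, set $M = \Theta(\log(N)/\ep^2)$ and $d = \Theta(1/\ep)$, and read off the swap regret, runtime, and space bounds from the tree structure. The only slight imprecisions are cosmetic: the paper states the formal reduction (Theorem~\ref{thm:treeswap}) with an additive $3/d$ rather than $1/d$, and it handles arbitrary $T \geq (\log(N)/\ep^2)^{\Omega(1/\ep)}$ by choosing $d$ so that $M^{d-1} \leq T \leq M^d$ rather than requiring $T = M^d$ exactly — both are absorbed into the constants in the exponent — and an instance at a finished tree node is simply abandoned, never ``reinstantiated when revisited,'' though your accounting of $d$ live instances and $O(Nd)$ work per round is right.
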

In the regime of constant $\ep$, Corollary \ref{thm:intro-main} improves on the previously best-known complexity of $T \ge \tilde\Omega(N/\epsilon^2)$, providing an exponential improvement in the dependence on $N$. 
We note that $N/\epsilon^2$ is tight for all $\epsilon$ in the  \emph{non-distributional} setting, where the learner is allowed to randomize over her actions but has to play a concrete action rather than a probability distribution \cite{Blum07:From}. Thus, Theorem \ref{thm:intro-main} shows that for a constant $\epsilon$, a distributional swap regret of at most $\epsilon$ can be achieved with exponentially fewer rounds.
Another advantage of our result is an improved total runtime of $\tilde O(N)$ for constant $\ep$, compared to the previous $\Omega(N^3)$ runtime of \cite{Blum07:From}, which answers an open question from that paper for constant $\ep$.

Next, we apply Theorem \ref{thm:swap-to-external} to an arbitrary function class $\MF \subset \{0,1\}^\MX$ whose dual has finite Littlestone dimension.  That is, the class of functions indexed by actions of the learner, which, via slight abuse of notation, we denote by $ \MX := \{ f \mapsto f(s) \ : \ s \in \mathcal{X}\} \subset \{0,1\}^{\MF}$, has finite Littlestone dimension.
\footnote{Technically, in order to ensure convexity of $\MF$, we need to apply Theorem \ref{thm:swap-to-external} to the \emph{convex hull} of $\MF$. Doing so does not materially affect the guarantees; see \cref{sec:infinite-games} for a more detailed discussion.}  %
\begin{corollary}[Swap regret for Littlestone classes; informal version of Corollaries \ref{cor:ldim-ce} and \ref{cor:ce-existence}]
  \label{cor:ldim-infinite}
  If the class $\MX$ has Littlestone dimension at most $L$, then for any $T \geq (L/\ep^2)^{\Omega(1/\ep)}$, there is a learner whose swap regret is at most $\ep$. In particular, games with finite Littlestone dimension admit no-swap regret learners and thus have $\ep$-approximate CE for all $\ep > 0$. 
\end{corollary}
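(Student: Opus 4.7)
The plan is to combine Theorem \ref{thm:swap-to-external} with a known external-regret algorithm for classes of bounded Littlestone dimension. By the classical agnostic online learning bounds of Ben-David et al.\ and Rakhlin--Sridharan--Tewari (with refinements by Alon et al.), any class $\MX$ of Littlestone dimension at most $L$ admits an algorithm achieving external regret $\tilde O(\sqrt{L/M})$ against $\MF$ after $M$ rounds. Setting this equal to $\epsilon/2$ gives $M = \tilde O(L/\epsilon^2)$. I would then plug this learner into Theorem \ref{thm:swap-to-external} with depth parameter $d = \lceil 2/\epsilon \rceil$, producing a learner whose swap regret is at most $\epsilon/2 + 1/d \le \epsilon$ after $T = M^d = (L/\epsilon^2)^{O(1/\epsilon)}$ rounds, matching the first part of the corollary.

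Next, the main technical subtlety is that Theorem \ref{thm:swap-to-external} requires $\MF$ to be a convex function class, while the Littlestone assumption is naturally stated for a binary-valued $\MF \subset \{0,1\}^\MX$. I would handle this by running the reduction on the convex hull $\co(\MF) \subset [0,1]^\MX$; the dual class, and hence its Littlestone dimension $L$, is unchanged, but the losses become real-valued. The external-regret guarantee transfers to this setting through sequential Rademacher or sequential fat-shattering arguments, at the cost only of polylogarithmic factors absorbed into the $\tilde O(\cdot)$ notation. This is the step where I expect the most care to be required: one must verify that the complexity measure controlling regret on $\co(\MF)$ is still governed by the Littlestone dimension of the original class, rather than a strictly larger quantity arising from convexification.

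For the second part of the corollary---existence of $\epsilon$-approximate CE in games of finite Littlestone dimension---I would invoke the standard reduction from no-swap-regret learning to correlated equilibrium. Let each player $i$ run the swap-regret algorithm above against the losses induced by the other players' mixed strategies in each round. The empirical distribution of joint play after $T$ rounds is then an $\epsilon$-CE, because each player's incentive to deviate via any swap function is upper bounded by her average swap regret. Applying the first part with $T = (L/\epsilon^2)^{O(1/\epsilon)}$ thus yields existence of $\epsilon$-CE in every game whose action spaces have finite Littlestone dimension, for every $\epsilon > 0$.
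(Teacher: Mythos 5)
Your proposal follows essentially the same route as the paper: you invoke the external-regret bound for Littlestone classes, set $M \approx L/\ep^2$ and $d \approx 1/\ep$, plug into the $\TreeSwap$ reduction, handle the convexity requirement by passing to $\co(\MF)$, and derive the CE-existence claim via the standard no-swap-regret-to-CE argument. The one place where you are slightly more pessimistic than necessary is the convexification step: you anticipate losing polylogarithmic factors when transferring the regret guarantee to $\co(\MF)$, but the paper shows (Corollary~\ref{cor:rad-swap}) that this step is actually lossless --- by Jensen's inequality applied to Definition~\ref{def:seq-rad}, the sequential Rademacher complexity $\Rad_T(\bar\MX)$ of the dual class over the convexified domain equals $\Rad_T(\MX)$, and this in turn is bounded by $\sqrt{\Ldim(\MX)/T}$ via Proposition~\ref{prop:ldim-sfat}. (Also, strictly speaking, after convexification $\bar\MX$ is no longer binary-valued, so its ``Littlestone dimension'' is not directly defined; the clean way to say what you mean is that the sequential Rademacher complexity is unchanged.) Otherwise your decomposition, parameter choices, and the final CE argument match the paper's proof.
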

We remark that even the \emph{existence} of approximate CEs in  games of finite Littlestone dimension was previously unknown. 
We refer the reader to Section \ref{app:dimensions} for a definition of Littlestone dimension and its real-valued generalizations. %

Finally, we prove an upper bound on the swap regret in the bandit setting that is tight up to $\mathrm{poly} \log N$ factors when $\ep = O(1)$. While the result is not a direct consequence of \cref{thm:swap-to-external}, the overall structure of the algorithm and analysis are similar:
\begin{theorem}[Bandit swap regret; Informal version of \cref{thm:bandit-tree-swap}]
  \label{thm:bandit-ts-informal}
Let $N \in \mathbb{N}, \ep \in (0,1)$ be given, and consider any $T \geq N \cdot (\log(N)/\ep)^{O(1/\ep)}$. Then there is an algorithm in the  \emph{adversarial bandit} setting with $N$ actions  ($\BanditTreeSwap$; \cref{alg:bandit-treeswap}) which achieves swap regret bounded above by $\ep$ after $T$ iterations. 
\end{theorem}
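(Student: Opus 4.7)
The plan is to adapt the tree-structured architecture of $\TreeSwap$ from \cref{thm:treeswap} to the bandit setting. Recall that $\TreeSwap$ organizes $O(M^{d-1})$ external-regret subroutines along a depth-$d$ tree of branching factor $M$: at each ambient round $t$, the learner plays by composing the distributions output along a root-to-leaf path, and each subroutine's ``round'' corresponds to an epoch in the ambient time. In the bandit adaptation, at each ambient round we sample a single action $s\^{t} \in [N]$ from the induced distribution $p\^{t} \in \Delta_{[N]}$ and observe only the scalar $f\^{t}[s\^{t}]$, so the central new task is to convert these scalar observations into full-information-style feedback for the internal subroutines of the tree.

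The cleanest approach I would try is to separate \emph{exploration} from \emph{exploitation}: on top of the distribution $p\^{t}$ dictated by the tree, mix in a small uniform-exploration probability $\alpha$, so that the actually played distribution is $q\^{t} = \alpha \cdot \mathrm{Unif}([N]) + (1-\alpha) \cdot p\^{t}$. I would then form the standard importance-weighted unbiased estimator $\hat f\^{t}[s] = f\^{t}[s\^{t}] \cdot \mathbbm{1}[s = s\^{t}]/q\^{t}[s\^{t}]$ and feed each internal subroutine the \emph{epoch-averaged} estimator as the full-information-style loss vector for its corresponding round. Provided each bottom-level epoch is sufficiently long (of order $N \log N / \epsilon^2$), standard Freedman/Bernstein concentration gives that the epoch-averaged estimator is within $O(\epsilon)$ in $\ell_\infty$ of the true epoch-averaged loss, uniformly over the $N$ coordinates, with high probability. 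The internal subroutines then behave as if they had received full-information feedback up to an additive $\epsilon$ term, and the swap-regret analysis of \cref{thm:treeswap} applies almost verbatim to the estimator-perturbed sequence of losses.

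The main obstacle, and the reason the result is not a black-box corollary of \cref{thm:swap-to-external}, is to pay the bandit $N$-factor \emph{only once}, outside the exponent in $d$, rather than at every level of the tree. A naive per-level reduction through Exp3-style bandit learners would require $M \gtrsim N \log N/\epsilon^2$ and thus yield $T = M^d = (N \log N/\epsilon^2)^{O(1/\epsilon)}$, which is inferior to the target. Spending a single exploration budget per bottom-level epoch, while running full-information subroutines inside the tree, lets us instead take $d = O(1/\epsilon)$ and $M = (\log N/\epsilon)^{O(1)}$ for the tree itself, while each bottom-level round costs $N \log N/\epsilon^2$ ambient bandit rounds; the total is $N \cdot (\log N/\epsilon)^{O(1/\epsilon)}$, matching the theorem. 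A secondary technical point is that against an adaptive adversary, one must interpret the ``loss at the subroutine's round'' as the \emph{average} of the adversarial losses over that epoch; since $q\^{t}$ is a measurable function of the history the adversary sees, $\hat f\^{t}$ remains conditionally unbiased for this average, and the concentration argument goes through unchanged.
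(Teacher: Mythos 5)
Your construction is correct in spirit and yields the claimed rate, but it is genuinely different from the one in the paper, so it is worth contrasting them. You separate exploration from the tree by mixing a uniform-exploration probability $\alpha$ into the played distribution, building the standard importance-weighted estimator $\hat{\bff}\^t$, and feeding the \emph{same} epoch-averaged estimator to every level of the tree, with each node running a full-information MWU. The paper's $\BanditTreeSwap$ instead inflates each leaf to cover $N$ ambient rounds, samples a uniformly random level $h\^t \sim [d]$ each round, plays and observes from \emph{only that level's} current distribution, and routes the bandit observation to only that level's node; each node runs $\ExpMulti$, a bespoke variant of \texttt{EXP3-IX} that aggregates a batch of $\Theta(N M^{d-h}/d)$ bandit samples per internal round using implicit-exploration ($\gamma$-smoothed) estimators, rather than a uniform-exploration mixture. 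Both designs pay the $N$ factor exactly once—your version via the exploration budget $\alpha$ and the long bottom-level epoch, the paper's via the $N$ ambient rounds per leaf—and both reach $T = N \cdot (\log N/\ep)^{O(1/\ep)}$ with $d = \Theta(1/\ep)$ and $M = \mathrm{polylog}(N)\cdot\mathrm{poly}(1/\ep)$. Your route is arguably more modular since it treats MWU as a black box and only needs a concentration lemma; the paper's route avoids the explicit $\alpha$-exploration (and its $O(\alpha)$ distortion of the played distribution) at the cost of a more involved regret analysis of $\ExpMulti$ (Lemma~\ref{lem:exp3-full} and its supporting lemmas).

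Two small corrections to your accounting. First, with $q\^t[s]\ge\alpha/N$ the per-round variance of a coordinate of $\hat{\bff}\^t$ is $\Theta(N/\alpha)$, so to get $\ell_\infty$-concentration of the bottom-epoch average to within $O(\ep)$ you need the epoch length $B \gtrsim N\log(NT)/(\alpha\ep^2)$; with $\alpha=\Theta(\ep)$ this is $N\log(NT)/\ep^3$, not $N\log N/\ep^2$. This is absorbed by the $(\log N/\ep)^{O(1/\ep)}$ factor and does not change the final rate, but it matters if you want to optimize constants. Second, the formal theorem (Theorem~\ref{thm:bandit-tree-swap}) is stated and proved for an \emph{oblivious} adversary, which actually simplifies the argument: within a bottom-level epoch both $q\^t$ and the true rewards are fixed, so the epoch-averaged estimator concentrates by plain Bernstein/Hoeffding with no martingale machinery needed; the adaptive-adversary subtlety you raise is an extension beyond what is claimed.
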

Concretely, for $\ep = O(1)$, \cref{thm:bandit-ts-informal} guarantees that $\tilde O(N)$ rounds suffice to achieve swap regret of at most $\ep$. Interestingly, this implies that, for obtaining swap regret bounded by $\ep = O(1)$, there is only a polylogarithmic gap between the number of rounds needed in the adversarial bandit setting and the full-information non-distributional setting \cite{Blum07:From}. This is in stark contrast to the situation for \emph{external regret}, for which there is an \emph{exponential} gap between the full-information non-distributional setting (where $O(\log N)$ rounds suffice) and the adversarial bandit setting (where $\Omega(N)$ rounds are needed) \cite{lattimore2020bandit}. 
Finally, we remark that our algorithm for the bandit setting is readily seen to be computationally efficient.

\paragraph{Applications: equilibrium computation.}   Next, we discuss implications of Corollary \ref{thm:intro-main} for equilibrium computation. By considering the setting where players in a normal-form game run (a slight variant of) the algorithm of Corollary \ref{thm:intro-main}, we may obtain low query and communication protocols for learning in normal-form games.
  \begin{corollary}[Query and communication complexity upper bound; informal version of Corollaries \ref{cor:commce} and \ref{cor:queryce}]
In normal-form games with a constant number of players and $N$ actions per player, the communication complexity of computing an $\ep$-approximate CE is $\log(N)^{\tilde O(1/\ep)}$ and the query complexity of computing an $\ep$-approximate CE is $N \cdot \log(N)^{\tilde O(1/\ep)}$. 
\end{corollary}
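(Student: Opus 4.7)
The plan is to invoke Corollary \ref{thm:intro-main} separately for each player in the game and convert the resulting low-swap-regret guarantees into an $\ep$-approximate CE, then carefully account for the per-round communication and query cost.

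First I would have each player $i \in [n]$ instantiate an independent copy of the $\TreeSwap$ algorithm from Corollary \ref{thm:intro-main} with accuracy parameter $\ep/2$, so that after $T = \log(N)^{\tilde O(1/\ep)}$ rounds the learner's swap regret against any adaptive reward sequence in $[0,1]^{[N]}$ is at most $\ep/2$. In each round $t$, the $i$-th learner produces a distribution $x_i^t \in \Delta_{[N]}$. For the mixed empirical product distribution $\hat D = \frac{1}{T}\sum_t \prod_{j=1}^n x_j^t$ to be an $\ep$-CE, each player $i$ should ideally be fed the expected-reward vector $\bar r_i^t[a] := \EE_{a_{-i} \sim \prod_{j \ne i} x_j^t}[u_i(a, a_{-i})]$; the standard reduction from no-swap-regret to correlated equilibrium then shows that the CE-gap of $\hat D$ for player $i$ equals her swap regret against the sequence $\bar r_i^t$.

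Since $\bar r_i^t$ depends on other players' private strategies, I would approximate it by joint sampling. In each round, using public randomness the players draw $k = \Theta(\log(NT)/\ep^2)$ independent profiles $(a_1^{t,s},\ldots,a_n^{t,s}) \sim \prod_j x_j^t$, each player broadcasts her sampled actions (at cost $\log N$ per sample), and player $i$ forms the estimate $\tilde r_i^t[a] = \frac{1}{k}\sum_s u_i(a, a_{-i}^{t,s})$. A Chernoff bound together with a union bound over the $N \cdot T$ coordinate-round pairs yields $\|\tilde r_i^t - \bar r_i^t\|_\infty \leq \ep/4$ uniformly in $t$ with high probability. Feeding $\tilde r_i^t$ into $\TreeSwap$ therefore gives swap regret at most $\ep/2$ against $\tilde r_i^t$, and the $\ell_\infty$ bound upgrades this to swap regret at most $\ep$ against $\bar r_i^t$, so $\hat D$ is an $\ep$-CE. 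The protocol samples from $\hat D$ by drawing a uniform $t^\star \in [T]$ via public randomness and having each player independently play $a_i \sim x_i^{t^\star}$, requiring no additional communication.

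The bookkeeping is now routine: each round costs $n k \log N = O(\log^2 N / \ep^2)$ bits of communication and $n N k = O(N \log N / \ep^2)$ utility queries (each player looks up her utility at $N$ values of $a$ paired with each of the $k$ sampled profiles $a_{-i}^{t,s}$). Multiplying by $T = \log(N)^{\tilde O(1/\ep)}$ and using $n = O(1)$ yields total communication $\log(N)^{\tilde O(1/\ep)}$ and total queries $N \cdot \log(N)^{\tilde O(1/\ep)}$, matching the statement. The main obstacle is tuning the sampling subroutine correctly: using $k = \tilde\Theta(1/\ep^2)$ samples and exploiting the $\ell_\infty$ concentration of $\tilde r_i^t$ around $\bar r_i^t$---rather than relying on the wasteful $\sqrt{N \log N / T}$ concentration one would obtain by union-bounding over all $N^N$ swap functions on the realized sampled empirical---is the key observation that lets the $(\log N)^{\tilde O(1/\ep)}$ round count of Theorem \ref{thm:swap-to-external} propagate to the final communication and query bounds.
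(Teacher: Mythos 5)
Your approach to the query-complexity half (\cref{cor:queryce}) is essentially the paper's: each player samples from the others' round-$t$ distributions, forms an $\ell_\infty$-accurate estimate $\hat\bu_i\^t$ of the expected reward vector, and feeds it into $\TreeSwap$ run with plain $\MWU$; the observation that $\ell_\infty$-closeness of $\hat\bu_i\^t$ to $\bu_i\^t$ suffices for the swap-regret guarantee is exactly what the paper uses. That part is fine.

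For the communication-complexity half there is a genuine gap, and your route diverges from the paper's in a way that matters. The paper's \cref{def:comm-c} requires the protocol to \emph{terminate with every player knowing the same distribution} that is an $\ep$-CE, not to produce a single joint sample from such a distribution. Your protocol broadcasts only the sampled action profiles; no player ever learns the other players' iterates $\bx_j\^t$, so no player can reconstruct $\hat D = \frac1T\sum_t\prod_j\bx_j\^t$, and ``draw $t^\star$, play $a_i\sim\bx_i\^{t^\star}$'' outputs a single profile rather than a commonly known CE. The natural repair---having the players agree on the empirical distribution over all $Tk$ broadcast profiles---does not close the gap: to certify that this empirical distribution retains an $O(\ep)$ CE-gap you must control, for each player, $\sum_{b\in[N]}\max_{b'}|\E_{\text{emp}}-\E_{\hat D}|$, and even with a Bernstein/Cauchy--Schwarz argument this forces $Tk=\Omega(N/\ep^2)$ samples, whereas $T\cdot k = (\log N)^{\tilde O(1/\ep)}\ll N$ in the regime you need. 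The paper circumvents this by pushing the sampling \emph{inside} the no-external-regret subroutine: $\mathtt{MWUSamp}$ replaces each $\MWU$ iterate $\by\^t$ by an $L$-sparse empirical version $\hat\by\^t$, so that each $\bx_i\^t$ produced by $\TreeSwap_i$ is $O(L/\ep)$-sparse and can be sent in $\mathrm{poly}\log N$ bits. This lets every player compute the exact induced reward vectors $\bu_i\^t$ (no $\ell_\infty$ approximation is needed on this side) and, crucially, lets every player compute the exact common output $\frac1T\sum_t(\bx_1\^t\times\cdots\times\bx_m\^t)$, which is sparse by construction. If you want to keep your cleaner ``sample the rewards'' viewpoint, you still need some mechanism forcing the $\bx_i\^t$ to have polylogarithmic-size, mutually known support; without it the output step fails.
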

\noah{todo add some refs to prior work}

Finally, we remark that our main reduction can be used to obtain efficient algorithms for computing $\ep$-CE when $N$ is exponentially large if there are nevertheless efficient external regret algorithms.
This is the case in particular for the setting of extensive form games \cite{farina2022kernelized,Kroer20:Faster,farina2021better}: %
\begin{corollary}[Extensive form games; informal version of \cref{thm:efg-formal}]
  \label{cor:efg}
  For any constant $\ep$, there is an algorithm which computes an $\epsilon$-approximate CE of any given extensive form game, %
  with runtime polynomial in the representation of the game (i.e., polynomial in the number of nodes in the game tree and in the number of outgoing edges per node).
\end{corollary}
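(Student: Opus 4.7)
The plan is to combine \Cref{thm:swap-to-external} with a known efficient no-external-regret algorithm for EFGs. Let $n$ denote the representation size of the game tree, and let $N = \exp(\Theta(n))$ denote the number of deterministic contingency plans available to a given player. The key observation is that, although the per-player action set $\MX$ is of size $N$, all standard efficient EFG algorithms operate compactly on sequence-form strategies: mixed strategies can be stored and manipulated in $\mathrm{poly}(n)$ space, and expected utilities under the other players' strategies can be evaluated in $\mathrm{poly}(n)$ time per query.

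The first step is to instantiate \Cref{thm:swap-to-external} with an efficient external-regret learner for the sequence-form polytope. Known algorithms, such as the kernelized MWU of \cite{farina2022kernelized}, guarantee external regret at most $\ep/2$ after $M = \mathrm{poly}(n, 1/\ep)$ rounds, with per-iteration runtime $C = \mathrm{poly}(n)$. Choosing the $\TreeSwap$ depth $d = \lceil 2/\ep \rceil$, \Cref{thm:swap-to-external} then yields a swap-regret learner with swap regret at most $\ep$ after $T = M^d$ rounds and amortized per-iteration runtime $O(C)$. For any constant $\ep$, $d = O(1)$, so $T = \mathrm{poly}(n)$ and the total runtime is $\mathrm{poly}(n)$.

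The second step is the standard reduction from no-swap-regret to correlated equilibrium. Each player runs the swap-regret learner constructed above against the utility vector induced each round by the other players' current mixed strategies; after $T$ rounds every player's average swap regret is at most $\ep$, so a standard calculation shows that the empirical joint distribution of play is an $\ep$-approximate normal-form CE. This empirical distribution has support of size at most $T = \mathrm{poly}(n)$, so it is itself representable in polynomial space.

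The main obstacle is verifying that every component of the $\TreeSwap$ construction, when specialized to the exponentially large action set $\MX$ of contingency plans, can in fact be implemented in $\mathrm{poly}(n)$ time and space. The crux is to avoid ever materializing a distribution over $\MX$ explicitly: the base external-regret learner is chosen to act and update on sequence-form strategies, and the swap-regret learner built on top inherits this compact representation because $\TreeSwap$ uses its base learner purely as a black box that consumes utility vectors and emits mixed strategies (all accessible via $\mathrm{poly}(n)$-time expected-utility queries through the sequence form). Hence, for constant $\ep$, the entire pipeline runs in $\mathrm{poly}(n)$ time and outputs an $\ep$-approximate CE, as claimed.
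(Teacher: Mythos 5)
Your high-level route matches the paper's: instantiate $\TreeSwap$ with the kernelized-MWU external-regret learner of \cite{farina2022kernelized} and invoke the standard swap-regret-to-CE reduction, with $d = O(1/\ep)$ giving $T = \mathrm{poly}(n)$ for constant $\ep$. However, the step you flag as ``the main obstacle'' and then resolve via black-box reasoning is exactly where the paper's proof does its real work, and the issue is not purely one of representation efficiency. $\TreeSwap$, under \cref{asm:no-external-reg}, needs a base learner that plays distributions in $\Delta(\Pi_i)$ and consumes utility functions on $\Pi_i$, whereas the learner of \cref{thm:komwu} plays sequence-form vectors $\bq \in \MQ_i$ and consumes sequence-form utility vectors $\bu \in [0,1]^{\Sigma_i}$. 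The paper's proof of \cref{thm:efg-formal} constructs explicit pre-/post-processing to bridge this gap: it maps $\bq \mapsto P(\bq) \in \Delta(\Pi_i)$, averages sequence-form strategies across tree levels (relying on convexity of $\MQ_i$ and on \cref{eq:q-equiv}), and converts feedback functions $\bff_i\^s \in \MF_i$ into sequence-form utility vectors $\bu_i\^s$ so that the inner-product identity \cref{eq:qu-ip} holds. It then verifies that the external regret guarantee of \cref{thm:komwu} (stated against comparators $\bq^\st \in \MQ_i$) actually implies the regret bound against fixed plans $\pi_i^\st \in \Pi_i$ that \cref{asm:no-external-reg} requires, using that pure policies correspond to vertices of $\MQ_i$ and utilities are linear in the sequence form. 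Your proposal identifies the correct obstacle and the correct ingredients, but asserting that the base learner is used ``purely as a black box'' does not by itself establish this correspondence; carrying out that translation and the regret-transfer argument is the substance of the result.
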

\cref{cor:efg} is an immediate consequence of \cref{thm:swap-to-external} (i.e., \cref{thm:treeswap}) and the fact that there are efficient external regret minimization algorithms in extensive-form games. This is classically known as a consequence of the \emph{counterfactual regret minimization} algorithm, i.e., Theorem 4 of \cite{zinkevich2008regret}, or improved recent results, such as Theorem 5.5 of \cite{farina2022kernelized}, as well as \cite{Celli19:Learning,farina2021better,Kroer20:Faster}.

\paragraph{Near-matching lower bounds.} 
Theorem \ref{thm:swap-to-external} and Corollary~\ref{thm:intro-main} require the number of rounds $T$ to be exponential in $1/\ep$, where $\ep$ denotes the desired swap regret.  The following lower bound shows this dependence is necessary, even facing an \emph{oblivious} adversary that is constrained to choose reward vectors with \emph{constant $\ell_1$ norm}: 
\begin{theorem}[Lower bound for swap regret with oblivious adversary; restatement of Corollary \ref{cor:lower-main}]
  \label{thm:lower-intro}
  Fix $N \in \mathbb{N}$, $\ep \in (0,1)$, and let $T$ be any number of rounds satisfying \noah{reworded, can we indeed have $T$ be arbitrary so that the below holds?} \fish{I checked the math again and it indeed holds.  Is the concern that $T$ would have to be a perfect power of 2? Or that the first term is independent of $N$?  The two cases that lead to this min are $T<N$ and $T>N$ respectively.}\noah{thanks, I just wanted to make sure that the statement matches the proof, no particular concern}
  \begin{align}
    T \leq O(1) \cdot \min \left\{\exp(O(\ep^{-1/6})), \frac{N}{\log^{12}(N) \cdot \ep^2} \right\}\label{eq:lower-bound-tmin}.
  \end{align}
  Then, there exists an \emph{oblivious} adversary on the function class $\MF = \set{\vf \in [0,1]^N \middle| \norm{\vf}_1 \leq 1}$ such that any learning algorithm run over $T$ steps will incur swap regret at least $\epsilon$.\footnote{If we replace the requirement that $\norm{\bff}_1 \le 1$ with $\norm{\bff}_\infty \le 1$, then the bounds are slightly improved, with $1/6$ replaced with $1/5$ and $12$ with $10$.}
\end{theorem}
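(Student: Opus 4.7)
The plan is to apply Yao's minimax principle by constructing distributions over oblivious $\ell_1$-bounded loss sequences $\vf^{(1)},\dots,\vf^{(T)}$ and showing that every deterministic learner has expected swap regret at least $\epsilon$. Two separate hard distributions are needed, one for each term in the min of \eqref{eq:lower-bound-tmin}: one forces $T \gtrsim N/(\log^{12}(N)\epsilon^2)$ and the other forces $T \gtrsim \exp(\Omega(\epsilon^{-1/6}))$. The claimed bound is whichever is tighter for the given $(N,\epsilon)$, and the two pieces combine by simply unioning the hard distributions.

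For the $N$-dependent piece I would adapt the classical Blum--Mansour construction to the setting of oblivious adversaries, distributional learners, and the $\ell_1$ constraint. The key idea is to structure the random adversary so that, from the learner's perspective, the problem decomposes into $N$ essentially-independent per-coordinate subproblems: for each input coordinate $i \in [N]$, the swap function $\pi$ must select a target $\pi(i) \in [N]$, and the randomness in the adversary's signals makes each such selection a hard binary-like discrimination with anticoncentration fluctuation $\tilde\Omega(\sqrt{1/T})$. Summing across the $N$ coordinates gives overall swap regret $\tilde\Omega(\sqrt{N/T})$; setting this $\geq \epsilon$ yields $T \lesssim N/(\mathrm{polylog}(N)\epsilon^2)$. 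The $\log^{12}(N)$ slack absorbs union-bound losses over the $N$ coordinates and the extra care needed---relative to the Blum--Mansour argument---to handle a learner who can hedge against each per-coordinate discrimination by distributing her mass arbitrarily rather than committing to a single pure action.

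For the $\epsilon$-dependent piece I would construct a hierarchical random loss sequence structured as a depth-$d$ tree with $d = \Theta(\epsilon^{-1/6})$, mirroring the $\TreeSwap$ upper bound in \Cref{thm:treeswap}. Each level $\ell$ partitions rounds into blocks, within each of which the adversary pre-commits to a random ``signal direction'' carrying the block's $\ell_1$ mass; the signals are coupled across levels so that no single swap function $\pi$ can simultaneously align with all leaf-level signals. A level-by-level charging argument shows that each of the $d$ levels contributes an additive $\Omega(\epsilon/d)$ share to the cumulative swap regret, summing to at least $\epsilon$ overall, while constant branching forces $T \geq \exp(\Omega(d)) = \exp(\Omega(\epsilon^{-1/6}))$. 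The main obstacle is this second bound: against \emph{oblivious} adversaries and \emph{distributional} learners, one must show each level's contribution survives conditioning on the learner's complete view of the losses at all other levels. This requires careful conditional-independence bookkeeping, together with anticoncentration for the learner's per-block distributional play, so that no level is hedged out. Balancing per-level loss magnitudes (to maintain the global $\ell_1 \leq 1$ constraint) against the cumulative regret budget of $\epsilon$ is what produces the unusual exponent $1/6$; the footnote's improved exponent $1/5$ under the weaker $\ell_\infty$ constraint is consistent with this heuristic, since that constraint removes one of the balance conditions.
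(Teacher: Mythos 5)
Your plan misses the two central technical devices that make the paper's construction work, and one of your two proposed hard instances is built on a false premise.

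First, the paper uses a \emph{single} unified adversary for both regimes of \eqref{eq:lower-bound-tmin}, not two constructions glued together by Yao. The adversary iterates DFS over root-to-leaf paths in a complete binary tree of depth $D = \lfloor\log_2\min(T,(N+2)/4)\rfloor$, playing each path for a batch of $B = \lfloor T/2^D\rfloor$ rounds. When $T \lesssim N$ one gets $B=1$ and the $\exp(\Theta(\epsilon^{-1/6}))$ term; when $T \gtrsim N$ one gets $B = \Theta(T/N)$ and the $\tilde\Omega(\sqrt{N/T})$ term. Splitting into separate constructions is not inherently wrong, but it obscures the point that both terms come from the same mechanism.

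Second, and more seriously, your $N$-dependent piece proposes to ``adapt the classical Blum--Mansour construction'' to oblivious, $\ell_1$-constrained, distributional learners. This cannot work as stated: the paper explicitly observes (Section~\ref{sec:results}) that the Blum--Mansour $\Omega(\sqrt{N/T})$ lower bound holds only in the \emph{non-distributional} setting where the learner must commit to a concrete action each round, and that this barrier is exactly what the present lower bound is designed to overcome. A distributional learner can split mass across the paired signal coordinates and zero out the Blum--Mansour fluctuation on every round, so the $N$ ``essentially-independent per-coordinate subproblems'' you describe collapse. Absorbing this into ``$\log^{12}(N)$ slack'' is not a fix; a different anticoncentration argument is needed, and in the paper it comes from the per-timestep coin flips $r_{(\ell,t)}$ on the two leaf actions $a_\ell,\dot a_\ell$, which force the learner's \emph{combined} mass $\bp^{(t)}[a_\ell]+\bp^{(t)}[\dot a_\ell]$ into a martingale with $\tilde\Omega(\sqrt{B})$ absolute-value fluctuation (Lemma~\ref{lem:martingale} / Lemma~\ref{lem:case5}).

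Third, your $\epsilon$-dependent piece gestures at a tree with level-by-level charging but omits the mechanism that actually makes the charging go through: each tree node $v$ carries \emph{two} actions $a_v,\dot a_v$, and a Bernoulli $r_v$ decides whether $a_v$ is replaced by $\dot a_v$ at the halfway point of $v$'s interval. This is precisely what blocks the learner from ``undoing'' previously-accrued swap regret later in the interval, which is the pitfall the paper's own proof sketch identifies and the reason a naive one-action-per-node tree fails. Without specifying that mechanism (and the increasing-with-depth reward scale $d/(2D)$ that makes swaps toward a deeper node always $\Omega(\Delta)$ profitable), ``signals coupled across levels so that no single swap function can simultaneously align'' is not an argument. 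The exponent $1/6$ does not fall out of ``balancing per-level loss magnitudes''; it is the product of a $D^5$ loss in the recursive case analysis (Lemmas~\ref{lem:case1}--\ref{lem:case6}) with one extra factor of $D$ from the $\ell_1$ normalization, giving $\Omega(1/D^6) \geq \epsilon$ and hence $T \leq 2^D \leq \exp(O(\epsilon^{-1/6}))$.

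In short: the tree intuition and the anticoncentration flavor are in the right direction, but the proposal as written would fail at the distributional-learner step (Blum--Mansour does not transfer) and does not contain the two-actions-per-node random-replacement idea that is the actual content of the lower bound.
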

Theorem~\ref{thm:lower-intro} establishes:
\begin{itemize}
    \item The first $\tilde\Omega\p{\min(1,\sqrt{N/T})}$ swap regret lower bound for \emph{distributional swap regret}. 
    \item The first $\tilde\Omega\p{\min(1,\sqrt{N/T})}$ swap regret lower bound achieved by an oblivious adversary. In particular,  the adversary samples  reward functions $\vf\^{1:T}$ from some fixed distribution before the first round of learning, independently of the actions of the learner.  Moreover, this distribution is independent of the description of the learning algorithm. 
    \item The first $\tilde\Omega\p{\min(1,\sqrt{N/T})}$ swap regret lower bound from an adversary that plays distributions over a function class of  \emph{constant} Littlestone dimension (namely, the class of point functions on $[N]$, which has Littlestone dimension 1). \yuval{changed from fat to littlestone, you can change back if you prefer. Honestly, we can remove} \noah{looks good, modified slightly} %
    \end{itemize}
    Finally, while the lower bound of $\exp(\ep^{-1/6})$ rounds necessary (to ensure swap regret is bounded by $\ep$) from \cref{thm:lower-intro} does not quite match the upper bound of $\exp(\ep^{-1})$ (from \cref{thm:intro-main}; ignoring $\log N$ factors), we can improve the lower bound somewhat if we allow the adversary to be adaptive. In particular, in \cref{thm:adaptive-lb}, we give an \emph{entirely different} (and somewhat simpler) construction which shows that $T \geq \exp(\Omega(\ep^{-3}))$ rounds are necessary to ensure that swap regret is bounded above by $\ep$.  
    
    \todo{Does this seemingly contradict our upper bound to the reader?}
    \unfinished
    {\noah{commenting this out for now b/c I think we should add more details}
Thanks to its generality, this result also yields a lower bound for the \emph{polytope swap regret} from \cite{mansour2022strategizing}, which was used for learning in strategic environments (while our algorithm implies improved upper bounds for this notion).}

\noah{discuss BM distributional lower bound}

\paragraph{Concurrent work.}
We have been recently made aware of concurrent work by Peng and Rubinstein \cite{peng2023fast}, which proves similar upper and lower bounds to \cref{thm:swap-to-external,thm:lower-intro}. Moreover, they derive a similar set of applications for equilibrium computation problems. %

\subsection{Proof sketch of the upper bound (\cref{thm:swap-to-external})}
We overview the proof of Theorem \ref{thm:swap-to-external}. Recall that we are given $M,d \in \mathbb{N}$, and will construct a swap regret learner for $T = M^d$ rounds. We assume access to a no-external regret learner ($\Algexternal$) that, over $M$ rounds, produces a sequence of distributions which has an external regret of at most $\epsilon \in (0,1)$. We will show that there is an algorithm $\TreeSwap$ with swap regret at most $O(\epsilon + \frac 1d)$. 

 $\TreeSwap$ is defined formally in Algorithm \ref{alg:treeswap}. The algorithm simulates multiple instances of $\Algexternal$ at \emph{levels} $i = 0, 1, \ldots, d-1$, which are arranged as the nodes a depth-$d$ $M$-ary tree. We traverse the $T = M^d$ leaves of the tree in order, one per round. At each round $t$, the $\TreeSwap$ algorithm outputs the uniform mixture over the $d$ distributions produced by the $\Algexternal$ instances on the root-to-leaf path for the current leaf. %

\paragraph{Updating $\Algexternal$ instances.} Next we describe how the $\Algexternal$ instances at each node of the tree are updated over the course of the $T$ rounds. 
 Notice that the $M^i$ instances of $\Algexternal$ at each level $i$ are used during a disjoint set of $M^{d-i}$ consecutive rounds: the first algorithm in level $i$ is used during rounds $1,\dots,M^{d-i}$, the second during rounds $M^{d-i}+1,\dots,2M^{d-i}$, and so on. Each of these $\Algexternal$ instances will be run in a \emph{lazy} fashion, only producing $M$ different distributions over the corresponding $M^{d-i}$ rounds.  The first algorithm in level $i$ will be called to produce a distribution at round $1$, and then play that distribution repeatedly for rounds $1,\dots,M^{d-i-1}$.  At round $M^{d-i-1}$, we finally update the state of the algorithm based on the average reward over the previous $M^{d-i-1}$ rounds.  The algorithm then produces a new distribution, which it plays for rounds $M^{d-i-1}+1,\dots,2M^{d-i-1}$, and so on.  All algorithms in level $i$ will be run in this way: updating every $M^{d-i-1}$ rounds on an \emph{average reward function} from the previous $M^{d-i-1}$ rounds. 
 According to the guarantee of our external regret algorithm, each of these instances will have external regret bounded above by $\epsilon$ relative to the $M$ distributions it produces and the $M$ average reward functions on which it updates. %

\paragraph{Swap regret bound.} To bound the swap regret of our algorithm, let us first denote by $R_i$ the average reward of all the algorithms $\Algexternal$ in level $i$ over all $T$ rounds. Further, for each $i=0,\dots,d$, we define $S_i$ in the following manner. For each block of rounds of size $M^{d-i}$, consider the average reward of the best fixed action in hindsight; then we define $S_i$ to be the  average of these best-in-hindsight rewards over all blocks at level $i$. By the external regret guarantee of $\Algexternal$, we know that $S_i - R_i \le \epsilon$. This is due to the fact that each level-$i$ algorithm is run during a block of $M^{d-i}$ rounds, and therefore competes with the best fixed action over that block of rounds.  Moreover, the contribution to the swap regret of $\TreeSwap$ from all algorithms at level $i$ is at most $S_{i+1} - R_i$. This is due to the fact that these level-$i$ algorithms repeatedly play actions for blocks of $M^{d-i-1}$ rounds, and so the best swaps of these actions correspond to the best fixed actions over the blocks of that length.  The total swap regret is then bounded by
\[
\frac{1}{d}\sum_{i=0}^{d-1} (S_{i+1} - R_i)
= \frac{1}{d}\sum_{i=0}^{d-1} (S_i - R_i) + \frac{S_{d} - S_0}{d}
\le \epsilon + \frac{1}{d},
\]
where we used that $S_i-R_i \le \epsilon$ and that $S_d - S_0 \le 1$ since the utilities are bounded between $0$ and $1$. This concludes the proof.

\subsection{Proof sketch for the lower bound (\cref{thm:lower-intro})}
To prove \cref{thm:lower-intro}, we consider two cases depending on the values of $N,T$ (which correspond to which of the terms on the right-hand side of \cref{eq:lower-bound-tmin} is larger): 
\paragraph{Case 1: $N \geq 4T$.}
As a warm-up, we present a strategy for the adversary that does not quite work.  Then, we show how to fix it, describing a true strategy that achieves the desired lower bound.
In both the warm-up and true strategies, we will consider an adversary that selects ``point function'' rewards at each time step $t$: one action $u\^t \in [N]$ will receive a reward of $1$, and all other actions $0$. To describe these strategies, we will relate the actions to vertices in a full binary tree.  Assume that $T=2^{D}$ for some $D \in \mathbb{N}$. Consider a full binary tree of depth $D$, containing $2^{D+1}-1$ vertices, and denote its vertex set by $V$. In our warm-up construction, each vertex will correspond to a single-action.\footnote{Eventually, we will consider a construction wherein each vertex corresponds to two actions.} That is, in each round $t$, the learner plays a vertex $v\^t \in V$ and the adversary plays a vertex $u\^t \in V$. The reward of the learner is $\1[v\^t=u\^t]$.
While our lower bound is valid also for the distributional setting, we analyze for simplicity the case where the learner has to play a concrete action in each round. However, the same proof goes through if they are allowed to output a distribution over vertices. Here is the strategy of the adversary: let us order the children of each internal node by `left' and `right'. This will create an ordering over the root-to-leaf paths in the tree: the first path goes left until reaching the leaf, the second path goes left except for the last step that is taken right, etc. Enumerate the paths by indices in $[T]$ according to this ordering, where path $t$ is called $P_t$. For each time step $t$, the adversary will select at random a vertex, out of the $D+1$ vertices in path $P_t$, according to the following distribution: the probability of the vertex at depth $i$ is $(i+1)/(1 + 2 + \cdots + (D+1))$. The important property here, is that vertices get higher weight as we go down the tree. 

Let us analyze the swap regret of any learner facing this adversary. Recall that this approach does not quite work for the adversary, but we will show how to fix it. At a high level, the goal of the adversary strategy is to increase swap regret every round $t$ as follows.  
\begin{itemize}
    \item If the learner plays an internal node on $P_t$, they will incur swap regret to the node at depth one greater on $P_t$, which gets slightly more expected reward.
    \item If the learner plays the leaf node of $P_t$, there is a constant probability that the adversary will not play this leaf. In this case, the learner will incur a swap regret from that leaf.
    \item If the learner plays a node not on $P_t$, they will receive no reward and incur swap regret.
\end{itemize}

However, the problem is that the learner will later have a chance to undo this incurred swap regret.  For an internal node $v$, let $[\tl_v,\tr_v]$ be the interval of time steps for which $v$ is on $P_t$.  Let's say the learner plays $v$ heavily during the first half of these time steps $[\tl_v,(\tl_v+\tr_v)/2]$: the times in which the left child of $v$ is present on $P_t$.  The learner will incur swap regret from $v$ to its left child.  However, let's say the learner continues to play $v$ for much of the second half of the interval $[(\tl_v+\tr_v)/2, \tr_v]$.  During these times, the adversary never plays the left child of $v$, while continuing to play $v$ with some probability, undoing the swap regret of $v$.

To account for this, the adversary actually plays the following ``true'' strategy instead.  In this strategy, each node is associated with two actions: $v, \dot{v}$.  During the first half $[\tl_v,(\tl_v+\tr_v)/2]$, as before, the adversary will choose $v$ with probability $(\text{depth}(v)+1)/(1 + \cdots + (D+1))$.  However, with probability $1/2$, the adversary replaces $v$ with $\dot{v}$ at the halfway point $t=(\tl_v+\tr_v)/2$. That means, with probability $1/2$, for the second half $[(\tl_v+\tr_v)/2, \tr_v]$, the adversary will choose $\dot{v}$ with probability $(\text{depth}(v)+1)/(1 + \cdots + (D+1))$ and never choose $v$.  On the other hand, with probability $1/2$ there is no replacement, and the adversary continues to select $v$ with probability $(\text{depth}(v)+1)/(1 + \cdots + (D+1))$, not $\dot{v}$.  This accomplishes the following. With probability $1/4$, $v$ will get replaced at the halfway point $t=(\tl_v+\tr_v)/2$ but the left child of $v$ will {\it not} get replaced at its halfway point $t=(3/4)\tl_v+(1/4)\tr_v$.
In this event, which happens with constant probability, both $v$ and its left child will be played with non-zero probability over the interval $[\tl_v,(\tl_v+\tr_v)/2]$ and at no other time steps.  Thus, the learner will not have a chance to undo the swap regret.

The formal version of this argument, presented in Section \ref{sec:oblivious-lb}, breaks into case work.  We lower bound the swap regret of action $v$ by considering the reward of swapping $v$ with the best of the 4 actions associated with its 2 children.  In addition, we consider a swap from $v$ to the root of the tree, in the event that $v$ is played on many rounds outside of the interval $[\tl_v,\tr_v]$.  Bounds for each case culminate in the following.  Letting $X_v$ be the total number of rounds the learner plays action $v$, we show that the best swap of action $v$ increases expected total reward by $\tilde{\Omega}(X_v)$.  Thus, the total swap regret of the learner would be $\tilde{\Omega}\p{\sum_v X_v} = \tilde{\Omega}\p{T}$, and her average swap regret would be $\tilde{\Omega}\p{1} = \Omega\p{\frac{1}{\text{polylog}(T)}}$, which is at least $\ep$ for $T \leq \exp(\text{poly }1/\ep)$. 
\paragraph{Case 2:  $N < 4T$.} This case is very similar to the first. In fact, in Section \ref{sec:oblivious-lb}, we define the adversary strategy in a general way that avoids breaking into cases manually.  The key difference in this case is that we don't have enough actions to associate 2 actions with each node of a full binary tree with $T$ leaves.  In this case, we consider a full binary tree with $N/4$ leaves. We again have an adversary that iterates through the root-to-leaf paths in DFS order.  In this case though, each iteration corresponds to a batch in which the adversary plays a distribution over that root-to-leaf path repeatedly for $4T/N$ time steps.

The other key difference here is that we need to associate each leaf with two actions $\ell,\dot{\ell}$. As discussed before, there is a single coin flip for each of the internal nodes $v$ that determines if it gets replaced at time $t=(\tl_v+\tr_v)/2$.  On the other hand, for leaf nodes $\ell$, we have a coin flip at every single time step in its batch, determining which of $\ell,\dot{\ell}$ will be played with non-zero probability.  Letting $X_\ell$ be the total number of rounds the learner plays $\ell$, due to the random deviation in the selection of the adversary, the expected swap regret to $\dot{\ell}$ in $\tilde\Omega\p{\sqrt{X_\ell}}$.  Thus, the total swap regret of a learner that plays only leaf actions over all $N/4$ batches will be $\tilde\Omega\p{\sum_\ell \sqrt{T/N}} = \tilde\Omega\p{\sqrt{NT}}$ and her average swap regret will be $\tilde\Omega\p{\sqrt{N/T}}$, as desired.

\subsection{Discussion}
\label{sec:discussion}
\todo{adjust using adaptive lower bound, which gets better lower bound}
We next compare the guarantees of Corollary~\ref{thm:intro-main}, \cref{thm:lower-intro}, and \cref{thm:adaptive-lb} (recall that \cref{thm:adaptive-lb} yields a quantitatively stronger lower bound than \cref{thm:lower-intro} with the stronger notion of \emph{adaptive} adversary).  Let $\mathscr{M}(N, \ep)$ denote the smallest $T_0 \in \mathbb{N}$ so that, for all $T \geq T_0$, there is a learning algorithm whose action set is $[N]$ and for which the swap regret over $T$ rounds is bounded above by $\ep$. Then by \cref{thm:intro-main,thm:lower-intro,thm:adaptive-lb},\footnote{The $\frac{\log N}{\epsilon^2}$ term in the lower bound of (\ref{eq:minimax-swap}) comes the classic external regret lower bound. The second term in the minimum of the upper bound of (\ref{eq:minimax-swap}) comes from the Blum-Mansour algorithm~\cite{Blum07:From}.}
\begin{align}
&\Omega(1) \cdot \min \left\{  \frac{\log N}{\ep^2}  + 2^{\Omega(\ep^{-1/3})}, \frac{N}{\log^{10}(N) \cdot \ep^2} \right\}\nonumber\\
&\leq \mathscr{M}(N, \ep) \leq O(1) \cdot \min \left\{ (\log(N)/\ep^2)^{O(1/\ep)}, \frac{N \log N}{\ep^2} \right\}\label{eq:minimax-swap}.
\end{align}
The second terms in the upper and lower bounds in (\ref{eq:minimax-swap}) differ by a $\mathrm{poly} \log(N)$ factor, which is insignificant compared to $N$. The first terms differ in that (a) the $\log(N)$ is in the base of the exponent in the upper bound but not the lower bound, and (b) the exponent in the lower bound is $\ep^{-1/3}$ but is $\ep^{-1}$ in the upper bound. We remark that the term $2^{\Omega(\ep^{-1/3})}$ in the lower bound comes from \cref{thm:adaptive-lb}, which is stronger than the bound of $2^{\Omega(\ep^{-1/6})}$ from \cref{thm:lower-intro}.  %

The swap regret bound of Corollary \ref{thm:intro-main} improves upon those of of Stoltz-Lugosi and of Blum-Mansour~\cite{Stoltz05:Internal,Blum07:From} when the accuracy parameter $\ep$ and number of actions $N$ satisfy $\ep \gg \frac{\log \log N}{\log N}$. \todo{is this inequality the wrong way?}\noah{no should be fine?}
In particular, for $\ep = O(1)$, our reduction bounds swap regret above by $\ep$ via an efficient algorithm with $T \leq \mathrm{poly}\log N$ rounds, whereas~\cite{Stoltz05:Internal,Blum07:From} require $T \geq \tilde\Omega(N)$.

\paragraph{Outline of the paper.} After stating preliminaries in \cref{sec:prelim}, we  prove our main reduction from swap to external regret (\cref{thm:swap-to-external}) in \cref{sec:reduction}. Then, in \cref{sec:nexperts,sec:infinite-games,sec:qc-cc,sec:bandit-swap}, we provide applications of this reduction. In \cref{sec:oblivious-lb} we prove our main lower bound (\cref{thm:lower-intro}); part of the proof is deferred to \cref{app:lower-bound}. Finally, in \cref{sec:adaptive-lb}, we prove our alternative adaptive lower bound with superior rates.

\section{Preliminaries}
\label{sec:prelim}

\subsection{Online Learning}
\label{sec:prelim-online}

The setting of online learning entails a repeated interaction between a learner and adversary over $T$ rounds.  For each time step, $t \in [T]$, the learner, whose action space is denoted by $\MX$, selects a distribution $\vxt \in \Delta_\MX$, and the adversary responds with a reward function $\vect{f}\^t \in \mathcal{F}$ where $\mathcal{F} \subseteq [0,1]^\mathcal{X}$.  The learner receives reward $\vect{f}\^t(\vxt) := \E_{s \sim \vxt}[\mathbf{f}\^t[s]]$. (For $f \in \MF$, $s \in \mathcal{S}$, and $\bx \in \Delta_\MX$, we use square brackets to denote $f[s]$ and parentheses to denote $f(\bx)$.) To avoid measurability issues, we assume that $\MX$ is countable and equipped with the discrete sigma algebra.

As an example, consider the classical ``experts'' setting, in which  the learner selects a distribution $\bx\^t \in \Delta_N$ over $N$ actions and the adversary selects an arbitrary reward vector $\mathbf{f}\^t \in [0,1]^N$. Here we have $\mathcal{X} = [N]$ and $\MF = [0,1]^N$. The learner's reward at each round $t$ is given by $\langle \mathbf{f}\^t, \bx\^t \rangle$. %

The learner's performance over the $T$ rounds of learning is typically evaluated via ``regret'': a comparison between the total reward obtained by the learner and some benchmark.  There are several notions of regret, depending on the particular benchmark. 
The \emph{external regret} of the learner is the difference between her total reward and the best reward obtained by a fixed action $s^\st \in \MX$ in hindsight:
\begin{definition}\label{def:ER}
  Given sequences $\bx\^{1:T} = (\bx\^1, \ldots, \bx\^T)$ and $\bff\^{1:T} = (\bff\^1, \ldots, \bff\^T)$ of play by the learner and adversary, the \emph{external regret} corresponding to these sequences is
    \begin{equation}\label{eq:ER}
        \ExtRegret\p{\vx\^{1:T},\vect{f}\^{1:T}} := \sup_{s^\st \in \MX} \frac{1}{T} \sum_{t=1}^T \p{\vft[s^\st] - \vft(\vxt)}.
    \end{equation}
\end{definition}
We say that a learner has regret $\epsilon$ if her regret against any adversary is bounded above by $\epsilon$ after $T$ time steps of learning. The \emph{distributional swap regret} of a learner is the difference between her total reward and the reward obtained by swapping each of the learner's played actions with the best action that could have been played in its place. \noah{slightly changed the def here so we have to avoid thinking about what a linear function is on an infinite-dimensional space} %
\begin{definition}\label{def:SR}
    Given sequences $\bx\^{1:T} = (\bx\^1, \ldots, \bx\^T)$ and $\bff\^{1:T} = (\bff\^1, \ldots, \bff\^T)$ of play by the learner and adversary, the \emph{swap regret} corresponding to these sequences is
    {\upshape
    \begin{align}
        \SR\p{\vx\^{1:T},\bff\^{1:T}} &= \sup_{\pi : \MX \to \MX} \frac 1T \sum_{i \in \MX} \sum_{t=1}^T \bx\^t[i] \cdot \left( \bff\^t[\pi(i)] - \bff\^t[i] \right) \nonumber\\
        &= \frac{1}{T}\sum_{i \in \MX} \sup_{j \in \MX} \p{\sum_{t=1}^T \vxti \p{\vft[j]-\vft[i]}}\label{eq:SR}
    \end{align}
    }
  \end{definition}
  Notice that in the case that $\MX$ is finite, we may also write the swap regret as
  \[
  \SR\p{\vx\^{1:T},\bff\^{1:T}} =   
    \max_{\substack{\phi:\Delta_\MX \to \Delta_\MX\\ \text{$\phi$ linear}}} \frac{1}{T} \sum_{t=1}^T \vft\p{\phi(\vxt)} - \vft\p{\vxt}.
  \]
  We note that \emph{distributional} swap-regret relates to a setting in which the learner is allowed to play a distribution over actions. We remark that, for learning with $N$ experts, \cite{Blum07:From} established a lower bound of $\Omega(\sqrt{NT})$ in the setting where the learner must play a concrete action at each time step (possibly in a probabilistic way).  Not constraining the learner in this way, we are able to break this lower bound. 

  When the sequences $\bx\^{1:T}, \bff\^{1:T}$ are understood from context, we will at times abbreviate $\ExtRegret(\bx\^{1:T}, \bff\^{1:T})$ by $\ExtRegret(T)$ and $\SwapRegret(\bx\^{1:T}, \bff\^{1:T})$ by $\SwapRegret(T)$.
\noah{For technical reasons, we assume throughout the paper that $\mathcal{F}$ is closed under convex combinations, namely, for any $f_1,\dots,f_m \in \mathcal{F}$, $\frac{1}{m} \sum_{i=1}^m f_i \in \mathcal{F}$.}\noah{do we want to do this or assume existence of ext regret learning algorithms for $\Delta_\MF$?} \fish{I vote we assume convex combos. We don't wanna think of the standard $\mathcal{F}=[0,1]^N$ as $\Delta_{\set{0,1}^N}$}

\subsection{Games}

An $m$-player \emph{normal-form game} is a pair $(S,A)$ where $S = S_1 \times \cdots \times S_m$ and $A=(A_1,\cdots,A_m)$ with each $A_j: S \to \RR$.  Each $S_j$ is the set of actions (i.e., pure strategies) available to player $j$ and each $A_j$ is the reward function, or payoff matrix, of player $j$, which maps the set of action profiles $S$ to the real numbers.  Each player $j \in [m]$ selects a distribution over actions $\vx_j \in \Delta_{S_j}$ with the goal of maximizing their reward $\EE_{s \sim \prod_j \vx_j} \ps{A_j(s_1,\cdots,s_m)}$.  We are interested in the following equilibrium notions: %

\begin{definition}\label{def:CCE}
    An \emph{$\epsilon$-coarse correlated equilibrium (CCE)} is a distribution $\mu \in \Delta_S$ so that, for every player $j$ and every deviation $d_j \in S_j$,
    \begin{equation}\label{eq:CCE}
        \EE_{s \sim \mu}\ps{A_j(d_j,s_{-j})}  \leq \EE_{s \sim \mu}\ps{A_j(s_j,s_{-j})} +\epsilon.
    \end{equation}
\end{definition}

\begin{definition}\label{def:CE}
    An \emph{$\epsilon$-correlated equilibrium (CE)} is a distribution $\mu \in \Delta_S$ so that, for every player $j$ and every deviation map $\pi_j: S_j \to S_j$
    \begin{equation}\label{eq:CE}
        \EE_{s \sim \mu}\ps{A_j(\pi(s_j),s_{-j})}  \leq \EE_{s \sim \mu}\ps{A_j(s_j,s_{-j})} +\epsilon.
    \end{equation}
  \end{definition}

  Next, we define the notion of a sparse (C)CE:
  \begin{definition}[Sparse equilibrium]\label{def:sparse-eq}
Let $k \in \mathbb{N}$. An $\ep$-CE (or $\ep$-CCE) $\mu \in \Delta_S$ is \emph{$k$-sparse} if $\mu$ is can be written as a mixture over at most $k$ product distributions, i.e., distributions in $\Delta(S_1) \times \cdots \times \Delta(S_m)$. %
  \end{definition}
Typically we will consider the case that $S_j = [n]$ for each $j \in [m]$. Such a normal-form game is defined by its payoff matrices $A_j : [n]^m \to \BR$ for $j \in [m]$. 
  
  \subsection{Alternate models of computation}
  First, we consider the \emph{query complexity} model of computation. Here, a normal-form game with payoff matrices $(A_1, \ldots, A_m)$ is fixed, but is \emph{unknown} to the learning algorithm. The algorithm is allowed to make adaptive randomized queries to single elements of $A_1, \ldots, A_m$.
  \begin{definition}[Query complexity]\label{def:query-c}
Given a confidence level $\delta \in (0,1)$, the \emph{query complexity} of an equilibrium concept (e.g., $\ep$-CCE or $\ep$-CE) is the minimal $Q$ such that there exists an algorithm which, on any input, outputs the specified notion of equilibrium with probability at least $1-\delta$, using only $Q$ queries.
\end{definition}

Next, we consider the \emph{communication complexity} model of computation. Here, each of the $m$ players $j \in [m]$ is given its own payoff matrix $A_j$, but does not know the payoff matrices of other agents. The agents are allowed to communicate in an adaptive randomized manner, according to some \emph{communication protocol}, over an arbitrary number of rounds.
\begin{definition}[Communication complexity]
  \label{def:comm-c}
Given a confidence level $\delta \in (0,1)$, the \emph{communication complexity} of an equilibrium concept (e.g., $\ep$-CCE or $\ep$-CE) is the minimal $C$ such that there exists a communication protocol which, on any input, exchanges at most $C$ bits of communication between the players and terminates with each player having agreed upon the \emph{same} distribution satisfying the equilibrium concept. 
\end{definition}

Typically, the scaling of the query and communication complexities of equilibrium concepts with respect to $\delta$ is  $\log^{O(1)}(1/\delta)$; thus we often omit the parameter $\delta$ in our discussions.

\paragraph{Miscellaneous notation.} We use the letter $C$ to denote absolute constants in our proofs. To avoid cluttering notation, we use the convention that the value of $C$ may change from line to line.

\section{A new reduction from no-swap regret to no-external regret}
\label{sec:reduction}
In this section, we prove our main upper bound (\cref{thm:swap-to-external}, formally stated as \cref{thm:treeswap} below), which gives a reduction from no-swap regret learning to no-external regret learning with no dependence on the number of the learner's actions. Following, we discuss several applications, including faster rates for swap regret in the experts setting with many experts (\cref{sec:nexperts}) and for infinite function classes (\cref{sec:infinite-games}), upper bounds on the query and communication complexities of computing a correlated equilibrium in normal-form games (\cref{sec:qc-cc}), and a nearly-tight bandit $\ep$-swap regret algorithm for constant $\ep$ (\cref{sec:bandit-swap}). 

Let $\MX$ be a set representing the set of actions of a learning algorithm, and let $\MF \subset [0,1]^\MX$ denote a class of utility functions for the learning algorithm, which is closed under convex combinations: for all $\bff_1,\bff_2 \in \MF$ and all $\lambda \in [0,1]$, $\lambda \bff_1 + (1-\lambda)\bff_2 \in \MF$. We assume that $\MF$ admits a no-external regret learning algorithm:%
\begin{assumption}[No-external regret algorithm]
  \label{asm:no-external-reg}
  For any $T \in \mathbb{N}$, there is an algorithm $\Alg = \Alg(T)$, together with functions $\Algact, \Algupdate$,  which perform the following updates with an adaptive adversary over $T$ rounds. In each round $t \in [T]$:
  \begin{itemize}
  \item  $\Alg$ produces an action $\bx\^t := \Algact()$, where $\bx\^t \in \Delta(\MX)$;
  \item The adversary observes $\bx\^t$ and then chooses a function $\bff\^t \in \MF$;
  \item $\Alg$ updates its internal state according to $\Algupdate(\bff\^t)$. 
  \end{itemize}
  We assume that the external regret of $\Alg$ with respect to the functions $\bff\^t$ is bounded by $R_{\Alg}(T)$:
  \begin{align}
    \ExtRegret\p{\vx\^{1:T},\vect{f}\^{1:T}} \leq R_{\Alg}(T)\nonumber.
  \end{align}
\end{assumption}

\begin{algorithm}
  \caption{\texttt{TreeSwap}$(\MF, \MX, \Alg, T, M, d)$}
	\label{alg:treeswap}
	\begin{algorithmic}[1]%
      \Require Action set $\MX$, utility class $\MF$, no-external regret algorithm $\Alg$, time horizon $T$, parameters $M, d$ with $T \leq M^d$. 
      \State \label{line:treeswap-params} For each sequence $\sigma \in \bigcup_{h=1}^d \{0, 1, \ldots, M-1\}^{h-1}$, initialize an instance of $\Alg$ with time horizon $M$, denoted $\Alg_\sigma$.
      \For{$1 \leq t \leq T$}
      \State Let $\sigma = (\sigma_1, \ldots, \sigma_d)$ denote the base-$M$ representation of $t-1$.\label{line:base-M}
      \For{$1 \leq h \leq d$}
      \If{$\sigma_{h+1} = \cdots = \sigma_d = 0$ or $h=d$}
      \State If $\sigma_h > 0$, %
      call  $\Alg_{\sigma_{1:h-1}}.\mathtt{update}\left(\frac{1}{M^{d-h}} \cdot \sum_{s=t-M^{d-h}}^{t-1} \bff\^s \right)$.\label{line:alg-update}
      \State $\Alg_{\sigma_{1:h-1}}.\mathtt{curAction} \gets \Alg_{\sigma_{1:h-1}}.\mathtt{act()}$.
      \EndIf
      \EndFor
      \State Output the uniform mixture $\vx\^t := 
      \frac{1}{d} \sum_{h=1}^d
       \Alg_{\sigma_{1:h-1}}.\mathtt{curAction}$, and observe $\bff\^t$.\label{line:play-xt}
      \EndFor
    \end{algorithmic}
  \end{algorithm}

  \begin{theorem}
    \label{thm:treeswap}
    Suppose that an action set $\MX$ and a utility function class $\MF$ are given as above, together with an algorithm $\Alg$ satisfying the conditions of Assumption \ref{asm:no-external-reg}. Suppose that $T, M, d \in \mathbb{N}$ are given for which $M \geq 2$ and $M^{d-1} \leq T \leq M^d$. Then given an adversarial sequence $\bff\^1, \ldots, \bff\^T \in \MF$,  $\texttt{TreeSwap}(\MF, \MX, \Alg, T)$ (Algorithm \ref{alg:treeswap}) produces a sequence of iterates $\bx\^1, \ldots, \bx\^T$ satisfying the following swap regret bound:
    \begin{align}
\SwapRegret(\bx\^{1:T}, \bff\^{1:T}) \leq {R_{\Alg}(M)} + \frac{3}{d}.\nonumber
  \end{align}
\end{theorem}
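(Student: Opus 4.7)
My plan is to analyze $\TreeSwap$ level by level and then telescope. For each $h\in\{0,\ldots,d-1\}$, let $\bx_h\^t\in\Delta_\MX$ denote the current action of the instance $\Alg_{\sigma_{1:h}}$ at round $t$, so that $\TreeSwap$ plays $\bx\^t=\tfrac{1}{d}\sum_{h=0}^{d-1}\bx_h\^t$. Partition the $T$ rounds into level-$h$ blocks of size $M^{d-h}$; the key structural observation is that $\bx_h\^t$ is constant over each level-$(h+1)$ sub-block $B$, taking some value $\bx_h^B$, because $\Alg_{\sigma_{1:h}}.\mathtt{act}()$ is only called at the start of each such sub-block. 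Define
\begin{align*}
    R_h := \frac{1}{T}\sum_{t=1}^T\bff\^t(\bx_h\^t), \qquad S_h := \frac{1}{T}\sum_{B}\sup_{j\in\MX}\sum_{t\in B}\bff\^t[j],
\end{align*}
where the second sum is over level-$h$ blocks. Then the average reward of $\TreeSwap$ equals $\tfrac{1}{d}\sum_{h} R_h$; also $S_0\geq 0$ and $S_d\leq 1$.

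Next I apply \cref{asm:no-external-reg} to each level-$h$ instance $\Alg_\sigma$: its $M$ virtual rounds receive the averaged reward functions $\bar\bff_{\sigma,k}:=\frac{1}{M^{d-h-1}}\sum_{t\in B_k}\bff\^t$, which lie in $\MF$ by convexity. Summing the resulting external-regret inequality over all $M^h$ instances at level $h$ and reweighting by $M^{d-h-1}/T$ yields $S_{h+1}-R_h\leq R_{\Alg}(M)$. For the swap regret, linearity of the objective in the player's distribution gives $\SwapRegret\leq \tfrac{1}{d}\sum_{h=0}^{d-1}\mathrm{SR}_h$, where $\mathrm{SR}_h:=\sup_{\pi:\MX\to\MX}\frac{1}{T}\sum_t\sum_i\bx_h\^t[i]\p{\bff\^t[\pi(i)]-\bff\^t[i]}$. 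Because $\bx_h\^t$ is constant on each level-$(h+1)$ sub-block $B$, the inner supremum decomposes over sub-blocks, and on each $B$ it equals $|B|\p{\sup_j\bar\bff_B[j]-\bar\bff_B(\bx_h^B)}$ via the instantaneous identity $\sup_\pi\sum_i\bx[i]\bff[\pi(i)]=\sup_j\bff[j]$. Summing over sub-blocks yields $\mathrm{SR}_h\leq S_{h+1}-R_h$.

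Combining the two bounds via a telescoping identity gives
\begin{align*}
    \SwapRegret\leq\frac{1}{d}\sum_{h=0}^{d-1}(S_{h+1}-R_h) = \frac{S_d-S_0}{d}+\frac{1}{d}\sum_{h=0}^{d-1}(S_h-R_h) \leq \frac{1}{d}+R_{\Alg}(M),
\end{align*}
which establishes the claim when $T=M^d$. The main remaining obstacle, accounting for the extra $2/d$ in the theorem, is the case $M^{d-1}\leq T<M^d$: the final sub-block at each level may be truncated, and instances $\Alg_\sigma$ lying entirely within such a truncated sub-block may never receive a complete set of $M$ updates, so \cref{asm:no-external-reg} does not directly cover their last virtual round. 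I would absorb this slack by bounding each truncated sub-block trivially: at level $h$ a partial sub-block has length at most $M^{d-h-1}$ and contributes at most $M^{d-h-1}/T$ to $\mathrm{SR}_h$; since $\sum_{h=0}^{d-1} M^{d-h-1}/T \leq 2M^{d-1}/T \leq 2$ (using $T\geq M^{d-1}$ and $M\geq 2$), this adds at most $2/d$ and yields the claimed $R_{\Alg}(M)+3/d$.
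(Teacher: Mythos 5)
Your proof is correct and takes essentially the same route as the paper: you partition rounds into level blocks, observe that each level-$h$ iterate is constant over level-$(h+1)$ sub-blocks, use this to bound the per-level swap regret $\mathrm{SR}_h$ by the ``best-in-hindsight versus actual reward'' gap $S_{h+1}-R_h$, bound $S_h-R_h$ by the external-regret guarantee, and close by the telescope $\sum_h(S_{h+1}-R_h)=(S_d-S_0)+\sum_h(S_h-R_h)$. This matches the paper's own proof sketch verbatim and, after unpacking the $\Sigma_{h-1},\btau,\etau$ bookkeeping, the paper's full proof as well. Two small points worth tightening: (i) the sentence ``yields $S_{h+1}-R_h\le R_\Alg(M)$'' is a typo for $S_h-R_h\le R_\Alg(M)$ (each level-$h$ instance is compared against the best fixed action over its own level-$h$ block, not a level-$(h+1)$ sub-block; this is the quantity your telescoping step actually uses); (ii) in the truncation accounting, the object whose regret must be bounded crudely is the last (possibly incomplete) level-$h$ \emph{block}, of size at most $M^{d-h}$, not the last sub-block of size $M^{d-h-1}$ --- the external-regret guarantee is unavailable for the entire truncated instance, not just its final virtual round. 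Since the level-$0$ block is $[1,T]$ and is never truncated, the correct sum is $\sum_{h=1}^{d-1}M^{d-h}\le 2M^{d-1}\le 2T$, which is the same geometric series shifted by one index and still gives the additive $2/d$, so your final bound is unaffected.
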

$\TreeSwap$ (\cref{alg:treeswap}) simulates multiple instances of the no-external regret algorithm $\Alg$. These instances of $\Alg$ are arranged in a depth-$d$, $M$-ary tree. For simplicity, suppose that $T = M^d$, so that there is one leaf of this tree for each time step. At each time step $t \in [T]$, consider the leaf of the tree corresponding to $t$. The root-to-leaf path for this leaf may be identified with the base-$M$ representation of $t$, which we denote by $\sigma_{1:d} = (\sigma_1, \ldots, \sigma_d) \in \{0, 1, \ldots, M-1\}^d$ (\cref{line:base-M}). In particular, for $h \in [d]$, $\sigma_h$ indexes the child taken at the $h$th step in this root-to-leaf path. Each node along this root-to-leaf path may therefore be identified with some prefix of $\sigma_{1:d}$, namely $\sigma_{1:h-1}$ for each $h \in [d]$. 

For each $t \in [T]$, the distribution $\bx\^t$ played by $\TreeSwap$ (\cref{line:play-xt}) is the uniform average over the distributions played by the instances of $\Alg$ (denoted by $\Alg_{\sigma_{1:h-1}}$ for $h \in [d]$) at each node along the root-to-leaf path at step $t$. The instances $\Alg_{\sigma_{1:h-1}}$ are updated in a lazy fashion, as follows: every $M^{d-h}$ rounds $t$ when $\sigma_{1:h-1}$ lies on the current root-to-leaf path, the utility functions $\bff\^t$ are averaged and fed to the $\mathtt{update}$ procedure of $\Alg_{\sigma_{1:h-1}}$ (\cref{line:alg-update}). Thus, each instance $\Alg_{\sigma_{1:h-1}}$ is updated a total of $M$ times in the course of $\TreeSwap$.

While the above discussion assumed that $T = M^d$ for simplicity, the proof below considers the setting of general values of $T$ treated by the statement of \cref{thm:treeswap}. In particular, the theorem will typically be applied in the following manner (see \cref{sec:nexperts,sec:infinite-games}): given some value of $T$, we choose $M$ as a function of $T$ and then let $d$ be chosen so as to satisfy $M^{d-1} \leq T \leq M^d$. As long as $T$ is sufficiently large, the resulting value of $3/d$ will be sufficiently small, which will ensure that $\SwapRegret(T)$ is small.

\begin{proof}[Proof of Theorem \ref{thm:treeswap}]
  For each $1 \leq h \leq d$, let $\Sigma_{h-1} := \{ \sigma_{1:h-1} = (\sigma_1, \ldots, \sigma_{h-1}) :\ 1 + \sum_{g=1}^{h-1} \sigma_g \cdot M^{d-g} \leq T \}$ denote the set of sequence prefixes of length $h-1$ encountered over the course of $T$ rounds of $\TreeSwap$. (Note that $\Sigma_{h-1}$ depends on $T$.)  Consider any  sequence $\sigma_{1:h-1} = (\sigma_1, \ldots, \sigma_{h-1}) \in \Sigma_{h-1}$ representing a node in the tree, and define $M'(\sigma_{1:h-1}) := \max \{ \sigma_h \in \{0, \ldots, M-1\} \ : \ \sigma_{1:h} \in \Sigma_h \}$. Then let  $\bx_{\sigma_{1:h-1}}\^0, \ldots, \bx_{\sigma_{1:h-1}}\^{M'(\sigma_{1:h-1})}$ denote the $M'$ actions taken by the algorithm $\Alg_{\sigma_{1:h-1}}$ over the course of the $T$ rounds. Moreover, we let $\btau(\sigma_{1:h-1}) := 1 + \sum_{g=1}^{h-1} \sigma_g \cdot M^{d-g}$ denote the first round when $\sigma_{1:h-1}$ is encountered, and $\etau(\sigma_{1:h-1}) := \max\{T,  \btau(\sigma_{1:h-1}) + (M^{d-h+1}-1)\}$ denote the last round when $\sigma_{1:h-1}$ is encountered.
    
    We may then bound the total (unnormalized) utility of the learner over the $T$ rounds, as follows:
    \begin{align}
 \sum_{t=1}^T\bff\^t(\bx\^t) = \frac{1}{d} \sum_{h=1}^d  \sum_{\sigma_{1:h-1} \in \Sigma_{h-1}} \sum_{\sigma_{h} = 0}^{M'(\sigma_{1:h-1})}    \sum_{s=\btau(\sigma_{1:h})}^{\etau(\sigma_{1:h})} \bff\^s (\bx_{\sigma_{1:h-1}}\^{\sigma_{h}})\label{eq:learner-utility}.
    \end{align}
    Now consider any function $\pi : \MX \to \MX$. We define $f \circ \pi : \MX \to \BR$ to be the function $(f \circ \pi)[s] = f[\pi(s)]$, for $s \in \MX$. Then for $\bx \in \Delta(\MX)$, we have $(f \circ \pi)(\bx) = \E_{s \sim \bx}[f[\pi(s)]]$.   
    The learner's utility under the swap function $\pi$ is given by
    \begin{align}
 \sum_{t=1}^T (\bff\^t \circ \pi)(\bx\^t) =& \frac 1d \sum_{h=1}^d \sum_{\sigma_{1:h-1} \in \Sigma_{h-1}} \sum_{\sigma_{h} = 0}^{M'(\sigma_{1:h-1})} \sum_{s = \btau(\sigma_{1:h})}^{\etau(\sigma_{1:h})} (\bff\^s\circ\pi)(\bx_{\sigma_{1:h-1}}\^{\sigma_{h}}))\nonumber\\
      \leq &  \frac{1}{d} \sum_{h=1}^d \sum_{\sigma_{1:h}\in \Sigma_h} \max_{\bx^\st \in \MX} \sum_{s \in \btau(\sigma_{1:h})}^{\etau(\sigma_{1:h})} \bff\^s(\bx^\st)\label{eq:learner-swap}.
    \end{align}
    Subtracting (\ref{eq:learner-utility}) from (\ref{eq:learner-swap}) and using the fact that $\bff(\bx) \in [0,1]$ for all $\bff,\bx$, we see that
    \begin{align}
      & \sum_{t=1}^T (\bff\^t\circ\pi)(\bx\^t) - \bff\^t(\bx\^t) \nonumber\\
     \leq  &  \frac{1}{d} \sum_{h=2}^{d-1} \sum_{\sigma_{1:h-1}\in \Sigma_{h-1}} \max_{\bx^\st \in \MX} \left( \sum_{s \in\btau(\sigma_{1:h-1})}^{\etau(\sigma_{1:h-1})} \bff\^s(\bx^\st) - \sum_{\sigma_h=0}^{M'(\sigma_{1:h-1})} \sum_{s = \btau(\sigma_{1:h})}^{\etau(\sigma_{1:h})} \bff\^s(\bx_{\sigma_{1:h-1}}\^{\sigma_h})\right) + \frac 1d \sum_{s=1}^T \max_{\bx^\st \in \MX} \bff\^s(\bx\^\st) \nonumber\\
      \leq & \frac 1d \sum_{h=2}^{d-1} \left( M^{d-h+1} +  \sum_{\scriptsize\substack{\sigma_{1:h-1} \in \Sigma_{h-1}:\\ \etau(\sigma_{1:h-1}) - \btau(\sigma_{1:h-1}) = M^{d-h+1}-1}} M \cdot M^{d-h} \cdot R_{\Alg_{\sigma_{1:h-1}}}(M)\right)+  \frac{1}{d} \sum_{s=1}^T \max_{\bx^\st \in \MX} \bff\^s(\bx\^s)\nonumber\\
      \leq & T \cdot R_{\Alg}(M) +   \frac{1}{d} \sum_{h=2}^{d-1} M^{d-h+1} + \frac{T}{d} \leq T \cdot R_{\Alg}(M) + \frac{3T}{d}\label{eq:final-reg-ub},
    \end{align}
    where the second inequality above uses the external regret assumption of $\Alg_{\sigma_{1:h-1}}$ for each possible choice of $\sigma_{1:h-1}$ (Assumption \ref{asm:no-external-reg}), as well as the following observation: for each $2 \leq h \leq d-1$, there is at most a single sequence $\sigma_{1:h-1} \in \Sigma_{h-1}$ for which $\etau(\sigma_{1:h-1}) - \btau(\sigma_{1:h-1}) < M^{d-h+1} - 1$. For such a sequence $\sigma_{1:h-1}$, we may trivially upper bound 
    \begin{align}
\max_{\bx^\st \in \MX} \left( \sum_{s \in\btau(\sigma_{1:h-1})}^{\etau(\sigma_{1:h-1})} \bff\^s(\bx^\st) - \sum_{\sigma_h=0}^{M'(\sigma_{1:h-1})} \sum_{s = \btau(\sigma_{1:h})}^{\etau(\sigma_{1:h})} \bff\^s(\bx_{\sigma_{1:h-1}}\^{\sigma_h})\right)\leq M^{d-h+1}\nonumber.
    \end{align}
    The final inequality in the display (\ref{eq:final-reg-ub}) uses the fact that, since $M \geq 2$ and $T \geq M^{d-1}$ by assumption, $\sum_{h=2}^{d-1} M^{d-h+1} \leq 2M^{d-1} \leq 2T$. 
    
    Dividing the display (\ref{eq:final-reg-ub}) by $T$ gives the desired regret bound.
  \end{proof}

  \subsection{Application: swap regret for $N$ experts}
  \label{sec:nexperts}
  First, we show how \cref{thm:treeswap} can be used to bound the swap regret in the ``finite experts'' setting: the number of rounds required to obtain sublinear swap regret is only polylogarithmic in the number of experts $N$, which improves upon the bound of $\Theta(N \log N)$ rounds from \cite{Blum07:From,Stoltz05:Internal}. 
  \begin{corollary}
    \label{cor:treeswap-finiten}
    Fix $N \in \mathbb{N}$. Then, letting $\MX = [N]$ and $\MF = [0,1]^{[N]}$, for any $\ep \in (0,1)$, there is an algorithm for which the swap regret is bounded above as
$
\SwapRegret(T) \leq \ep$
for any $T$ which satisfies $T \geq (\log(N)/\ep^2)^{O(1/\ep)}$.

Each iteration takes at most $O(N/\epsilon)$ time, and the amortized runtime over the $T$ iterations is $O(NT)$. The total space complexity is $O(N/\epsilon)$.
  \end{corollary}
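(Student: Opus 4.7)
The plan is to apply Theorem \ref{thm:treeswap} directly, taking $\Alg$ to be the classical multiplicative-weights update algorithm on $\MX = [N]$, for which $R_{\MWU}(M) \le \sqrt{2\log(N)/M}$ with per-step cost $O(N)$ and state size $O(N)$. To push the bound $R_{\Alg}(M) + 3/d$ from Theorem \ref{thm:treeswap} below $\epsilon$, I would balance the two terms by setting $d := \lceil 6/\epsilon \rceil$ (which forces $3/d \le \epsilon/2$) and $M := \lceil C \log(N)/\epsilon^2 \rceil$ for a sufficiently large absolute constant $C$ (which forces $R_{\MWU}(M) \le \epsilon/2$). For any $T$ with $M^{d-1} \le T \le M^d$, Theorem \ref{thm:treeswap} then yields $\SwapRegret(T) \le \epsilon$; the lower threshold $T \ge M^{d-1}$ simplifies to $T \ge (\log(N)/\epsilon^2)^{O(1/\epsilon)}$, matching the claim. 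For horizons $T$ exceeding the current $M^d$, I would simply enlarge $d$ while keeping $M$ fixed, which only shrinks $3/d$ and so preserves the guarantee.

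The rest of the argument is resource accounting. The key structural observation is that at any round $t$ only the $d$ instances $\Alg_{\sigma_{1:h-1}}$ on the \emph{current} root-to-leaf path need to be held in memory: once $\TreeSwap$ advances out of the sub-tree rooted at some node, the corresponding MWU instance has exhausted its $M$ allotted actions and can be discarded. Thus, despite the tree containing $\Theta(T)$ nodes in total, at most $d$ MWU instances coexist, giving total space $O(Nd) = O(N/\epsilon)$. The worst-case per-iteration runtime is bounded by at most one $\mathtt{act}$ and one $\mathtt{update}$ call on each of the $d$ live instances plus a refresh of the output mixture, each costing $O(N)$, for a total of $O(Nd) = O(N/\epsilon)$.

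For the amortized total, I would count MWU operations by level: at level $h$ there are $M^{h-1}$ instances, each issuing $M$ acts and at most $M$ updates, so the total number of MWU calls across the whole run is $\sum_{h=1}^d M^h \le 2M^d = O(T)$, contributing $O(NT)$ work. The single delicate point — and the thing I expect to be the main obstacle — is the maintenance of the output mixture $\bx\^t = \tfrac{1}{d}\sum_{h=1}^d \Alg_{\sigma_{1:h-1}}.\mathtt{curAction}$: a naive recomputation each round would cost $\Omega(Nd) = \Omega(N/\epsilon)$ and inflate the total to $O(NT/\epsilon)$, losing the claimed $O(NT)$ bound. The fix is to maintain $\bx\^t$ \emph{incrementally} — whenever any level's $\mathtt{curAction}$ changes, subtract its old contribution and add the new one in $O(N)$ time. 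Since the number of such changes equals the number of $\mathtt{act}$ calls, the cumulative mixing cost is $O(NT)$, matching the claimed amortized runtime.
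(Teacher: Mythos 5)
Your parameter choice, regret bound, worst-case per-round runtime, and space accounting all track the paper's proof, and you correctly flag and resolve the incremental maintenance of the output mixture. However, your amortized-runtime analysis has a gap that the paper explicitly addresses. Each $\Algupdate$ call on $\Alg_{\sigma_{1:h-1}}$ is fed the average $\frac{1}{M^{d-h}}\sum_{s=t-M^{d-h}}^{t-1}\bff^{(s)}$ over the last $M^{d-h}$ rounds (\cref{line:alg-update}). You charge each update $O(N)$, but that is only the cost of the MWU step itself, not of forming its input. Recomputing the averaged reward vector from scratch costs $O(NM^{d-h})$ per update, totaling $\Theta(NTd)=\Theta(NT/\epsilon)$; maintaining a separate running partial sum for each of the $d$ live instances (adding $\bff^{(t)}$ to all of them each round) also costs $\Omega(Nd)$ per round, again $\Theta(NT/\epsilon)$. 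So the $O(NT)$ claim needs a further idea. The paper's fix is to maintain a single global cumulative sum $\sum_{s\le t}\bff^{(s)}$ with $O(N)$ work per round, and to store, for each of the $d$ live instances, a snapshot of this sum at the time of its last update; the averaged input is then a single $O(N)$ vector subtraction, incurred only at the $O(T)$ update events, for a total of $O(NT)$.

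A smaller bookkeeping slip: you count $M^{h-1}$ instances at level $h$, conclude $\sum_{h=1}^d M^h\le 2M^d$, and equate $M^d$ with $O(T)$. But the constraint is only $M^{d-1}\le T\le M^d$, so $M^d$ can be as large as $MT$. The correct statement is that only the nodes on visited root-to-leaf paths are instantiated — $O(T/M^{d-h+1})$ at level $h$ — which sums to $O(T/M)$ instances and hence $O(T)$ total acts and updates. Your final conclusion is right, but the intermediate counting is off by a factor of $M$.
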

  \begin{proof}
    Given $\ep \in (0,1)$ and $T$ as in the statement of the corollary, we may choose $M = \lceil\log(N)/\ep^2\rceil$, and $d \geq \lceil 1/\ep \rceil$, so that $M^{d-1} \leq T \leq M^d$.
    
    We then call $\texttt{TreeSwap}(\MF, \MX, \Alg, T, M, d)$, where $\Alg$ is the multiplicative weights algorithm (denoted $\MWU$), which obtains a regret bound of $R_{\Alg}(M) \leq C \sqrt{\log(N)/M}$, for some constant $C$. %
 Then Theorem \ref{thm:treeswap} guarantees that
    \begin{align}
\SwapRegret(T) \leq O \left( \sqrt{\frac{\log N}{\log(N)/\ep^2}} + \frac{1}{d}\right) \leq O(\ep)\nonumber.
    \end{align}
    To get the desired bound of $\epsilon$, we can scale $\ep$ down by a constant factor.

    To bound the time complexity, we need to consider two types of operations:
    \begin{itemize}
        \item \emph{Maintaining the cumulative reward vector:} at any iteration $t \in [T]$ the algorithm would maintain the sum $\sum_{s=1}^t \bff\^s$. Since each $\bff\^s$ is an $N$-dimensional vector, this takes time $O(N)$ per iteration and $O(NT)$ total.
        \item \emph{Number of MWU updates:} altogether, $\TreeSwap$ has $O(T/M)$ instances of $\Alg=\MWU$, each making $M$ updates, so there are $O(T)$ updates to all instances of $\MWU$ in total. We will show that each update takes time at most $O(N)$ below. To do so, note that an update consists of the following operations:
        \begin{itemize}
        \item \emph{Computing the average reward vector that is fed into the $\MWU$ instance (\cref{line:alg-update}):} In order to update each $\MWU$ instance $\MWU_{\sigma_{1:h-1}}$ in $\TreeSwap$, we have to average the reward vectors over the $M^{d-h}$ preceding  iterations. We say that an instance $\MWU_{\sigma_{1:h-1}}$ is \emph{live at round $t$} if the base-$M$ representation of $t-1$ beings with $\sigma_{1:h-1}$ (i.e., if $\sigma_{1:h-1}$ is on the root-to-leaf path corresponding to the current iteration $t$). Note that at each round $t$, there are $d$ live instances of $\MWU$. Since we maintain the current cumulative reward vector at each iteration, it suffices to save in memory, for each \emph{live} execution of the MWU, the cumulative loss up to the time of its last update. %
          Fix any such live instance, and let $t_0$ be the time of its last update: then for this instance, we  store $\sum_{s=1}^{t_0} \bff\^s$.
          Then, the cumulative loss since the last update is just $\sum_{s=1}^t \bff^s - \sum_{s=1}^{t_0} \bff^s$. Since each of these summands is stored in memory as an $N$-dimensional vector, computing the difference takes $O(N)$ time.
            \item \emph{Computing the next action to take in a single $\MWU.\mathtt{update}$ call:} Each update step of $\MWU$ takes time $O(N)$.
            \item \emph{Computing the uniform mixture over the actions suggested by the $d$ live $\MWU$ instances at each round (\cref{line:play-xt}):}  %
              To compute this change, we account for the change to each of the live $\MWU$ instances which is updated for each iteration $t$: for each such update, we need to perform a $O(N)$-time operation. %
        \end{itemize}
        There are at most $d = O(1/\epsilon)$ updates in each iteration, which implies a worst-case runtime of $O(Nd)=O(N/\epsilon)$ and a total runtime of $O(NT)$, since there are at most $O(T)$ updates in total, as argued above.
    \end{itemize} 
    This sums up to an $O(NT)$ runtime total and a maximum of $O(N/\epsilon)$ per-iteration runtime. To bound the space complexity, notice that, for each live execution of an $\MWU$ instance, we need to store the cumulative loss since its last update. There are $d$ such instances, one for each level. Each cumulative loss is an $N$-dimensional vector, which yields a total space of $O(Nd)$. Additionally, we need to store the current distribution over actions for each $\MWU$ instances. This takes space $O(N)$ per $\MWU$ instance and $O(Nd)= O(N/\epsilon)$ space overall. 
  \end{proof}

  \subsection{Application: Swap regret and approximate CE for infinite games}
  \label{sec:infinite-games}
  In abstract function classes, the existence of no-external regret learners depend on various \emph{dimensions} of the class (see Appendix~\ref{app:dimensions}).  In general, for a bounded real-valued function class $\MH$, there exist online learners with regret approaching 0 as $T \to \infty$ if (and only if) its \emph{sequential Rademacher complexity} $\Rad_T(\MH)$ converges to 0 as $T \to \infty$ (\cref{thm:rakhlin-online}). In the special case that $\MH$ is  binary-valued, the condition that $\Rad_T(\MH) \to 0$ is equivalent to finiteness of the  \emph{Littlestone dimension}, $\Ldim(\MH)$, of $\MH$. In the real-valued setting, $\Rad_T(\MH) \to 0$ if and only if the \emph{sequential $\delta$-fat shattering dimension} of $\MH$, $\esfsd{\MH}$, is finite for all $\delta>0$ (see \cref{prop:ldim-sfat}).
  
  Combining \cref{thm:treeswap} and \cref{thm:rakhlin-online}, we can show the existence of a learning algorithm whose swap regret converges to 0 as long as the learner's action set has vanishing sequential Rademacher complexity. Since we denote the learner's action space by $\MX$ and the space of reward functions by $\MF \subset [0,1]^\MX$, the relevant function class is the class $\{ f \mapsto f(x) \ : \ x \in \MX \}$; we denote this class by $\MX$, with slight abuse of notation.
  \begin{corollary}
    \label{cor:rad-swap}
There is a constant $C > 0$ so that the following holds. Suppose that $\MX, \MF$ are given as above, and let $\ep \in (0,1)$ be given. Define $\tau_\MX(\ep) := \inf_{\tau \in \BN} \{ \tau \ : \ \Rad_\tau(\MX) \leq \ep \}$. Then for any $T \geq (\tau_{\ep/C}(\MX))^{C/\ep}$ there is an algorithm for which the swap regret is bounded as $\SwapRegret(T) \leq \ep$. 
\end{corollary}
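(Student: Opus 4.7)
The plan is to instantiate \cref{thm:treeswap} with an external-regret learner whose regret bound is controlled by the sequential Rademacher complexity of the dual class $\MX \subset [0,1]^\MF$ (i.e., $\MX$ viewed via $x \mapsto (f \mapsto f[x])$), which is exactly what \cref{thm:rakhlin-online} provides. Specifically, that theorem guarantees that there is an absolute constant $C_0$ and an algorithm $\Alg$ operating over any number of rounds $M$ whose external regret (against adversaries choosing $\bff\^t \in \MF$ while the learner produces distributions over $\MX$) satisfies $R_{\Alg}(M) \leq C_0 \cdot \Rad_M(\MX)$.

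Given $\ep \in (0,1)$, I would choose $M := \tau_{\ep/C}(\MX)$ for a sufficiently large constant $C$ (to be fixed below), so that $\Rad_M(\MX) \leq \ep/C$ and consequently $R_{\Alg}(M) \leq C_0 \ep / C$. Next, I would set $d := \lceil C/\ep \rceil$ and note that the hypothesis $T \geq (\tau_{\ep/C}(\MX))^{C/\ep} \geq M^d$ (after absorbing constants into $C$) together with $M \geq 2$ allows us to pick a $d' \leq d$ with $M^{d'-1} \leq T \leq M^{d'}$. Feeding $\Alg$, $M$, and $d'$ into \cref{thm:treeswap} yields a learner $\TreeSwap$ whose swap regret is bounded by
\[
R_{\Alg}(M) + \frac{3}{d'} \;\leq\; \frac{C_0 \ep}{C} + \frac{3\ep}{C} \;\leq\; \ep,
\]
once $C$ is chosen to exceed $C_0 + 3$. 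Composing the two theorems in this way closes the argument.

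The one subtlety I would check carefully is the convexity requirement on $\MF$ used by \cref{thm:treeswap}: if the given class $\MF$ is not already convex, I would apply the reduction to its convex hull $\co(\MF)$. This does not affect the hypothesis, because any $x \in \MX$ extends to an affine (hence linear-in-expectation) functional on $\co(\MF)$, so the sequential Rademacher complexity $\Rad_M(\MX)$ is unchanged when $\MX$ is viewed as a function class over $\co(\MF)$ instead of $\MF$. The hardest part of the proof is therefore not really technical manipulation but merely verifying that the external-regret-to-Rademacher-complexity correspondence supplied by \cref{thm:rakhlin-online} is phrased in terms of the \emph{dual} class $\MX$ (the actions), not $\MF$ (the rewards), which is indeed the direction in which the minimax external regret is controlled in the standard online-learning framework.
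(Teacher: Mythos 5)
Your proof matches the paper's own approach: instantiate \cref{thm:treeswap} with the external-regret learner furnished by \cref{thm:rakhlin-online}, set $M := \tau_{\MX}(\ep/C)$, pass to the convex hull of $\MF$, and observe that the sequential Rademacher complexity of $\MX$ (viewed over $\co(\MF)$) is unchanged. One small slip: you want $d' \geq d$, not $d' \leq d$. Since $T \geq M^d$ and $M \geq 2$, the $d'$ determined by $M^{d'-1} \leq T \leq M^{d'}$ necessarily satisfies $d' \geq d$, and indeed this is the direction your subsequent bound $3/d' \leq 3\ep/C$ relies on; as written, no $d' \leq d$ can satisfy $T \leq M^{d'}$ when $T > M^d$.
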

\begin{proof}
  We use \cref{thm:treeswap} with action set given by $\MX$ and the function class given by the convex hull of $\MF$, denoted $\co(\MF)$. Accordingly, define $\bar\MX \subset [0,1]^{\co(\MF)}$ by $\bar \MX := \{ \bar f \mapsto \bar f (x) \ : \ x \in \MX\}$, where the domain for $\bar \MX$ is $\co(\MF)$. 
  
  \cref{thm:rakhlin-online} gives that \cref{asm:no-external-reg} is satisfied by some external-regret algorithm $\Alg$ with $R_{\Alg}(T) \leq 2 \Rad_T(\bar\MX)$.  As a straightforward consequence of Jensen's inequality and \cref{def:seq-rad}, we have that $\Rad_T(\bar\MX) \leq \Rad_T(\MX)$ (in fact, equality holds). Thus, applyying \cref{thm:treeswap} with $M = \tau_\MX(\ep/C)$ a choice of $d$ satisfying $M^{d-1} \leq T \leq M^d$, we obtain that, as long as $C$ is sufficiently large, $\TreeSwap(\co(\MF), \MX, \Alg, T)$ obtains $\SwapRegret(T) \leq \ep$. 
\end{proof}

By \cref{prop:ldim-sfat}, we obtain the following immediate corollary.
\begin{corollary}
  \label{cor:ldim-ce}
  Consider $\MX, \MF$ as above, and suppose $\ep \in (0,1)$ is given. Then we have the following:
  \begin{itemize}
  \item If $\MF$ is binary-valued and $\Ldim(\MX) = L$, then for all $T$ satisfying $T \geq (L/\ep)^{O(1/\ep)}$, there is an algorithm certifying $\SwapRegret(T) \leq \ep$.
  \item If $\esfsd{\MX} \leq O(\delta^{-p})$ for some $p \in [0,2)$, then letting $I = \int_0^1 \sqrt{\esfsd{\MX}} d\delta$, for $T \geq (I/\ep)^{O(1/\ep)}$, there is an algorithm satisfying $\SwapRegret(T) \leq \ep$.   
  \end{itemize}
\end{corollary}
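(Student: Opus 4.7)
The plan is to derive Corollary \ref{cor:ldim-ce} as a direct consequence of Corollary \ref{cor:rad-swap} by substituting known sequential-Rademacher bounds for the two classes of interest. Recall that Corollary \ref{cor:rad-swap} reduces the question to controlling $\tau_\MX(\ep/C) = \inf\{\tau : \Rad_\tau(\MX) \leq \ep/C\}$, after which the swap-regret guarantee follows for any $T \geq (\tau_\MX(\ep/C))^{C/\ep}$. So the only task is, for each of the two settings, to show $\tau_\MX(\ep/C) \leq (L/\ep)^{O(1)}$ (respectively $(I/\ep)^{O(1)}$), which when raised to the $C/\ep$ exponent yields the stated $(L/\ep)^{O(1/\ep)}$ (respectively $(I/\ep)^{O(1/\ep)}$) bound.

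For the binary-valued case, I would invoke the part of \cref{prop:ldim-sfat} that relates Littlestone dimension to sequential Rademacher complexity, giving a bound of the form $\Rad_\tau(\MX) \leq O(\sqrt{L \log \tau / \tau})$ for classes of Littlestone dimension $L$. Inverting this yields $\tau_\MX(\ep/C) \leq O\bigl((L/\ep^2)\log(L/\ep)\bigr)$, which is at most $(L/\ep)^{O(1)}$. Substituting into Corollary \ref{cor:rad-swap} gives the threshold $T \geq (L/\ep)^{O(1/\ep)}$.

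For the real-valued case, I would invoke the Dudley-type integral bound inside \cref{prop:ldim-sfat}, namely $\Rad_\tau(\MX) \leq O(I/\sqrt{\tau})$ (absorbing any mild $\log$ factors that arise, using the assumption $\esfsd{\MX} \leq O(\delta^{-p})$ with $p<2$ which makes $I$ finite and controls sub-polynomial slack). Inverting gives $\tau_\MX(\ep/C) \leq O((I/\ep)^2)$, and plugging into Corollary \ref{cor:rad-swap} gives $T \geq (I/\ep)^{O(1/\ep)}$, as claimed.

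The only point requiring care, and the place where the argument is not literally a one-line substitution, is checking that the auxiliary $\log$ factors appearing in standard sequential Rademacher bounds (a $\log \tau$ factor in the Littlestone bound, and possibly a $\sqrt{\log(1/\delta)}$ inside the Dudley integral) are absorbed into the $O(1/\ep)$ exponent without inflating the base beyond $L/\ep$ or $I/\ep$. This is routine: any polylogarithmic slack $\log^{O(1)}(L/\ep)$ or $\log^{O(1)}(I/\ep)$ in $\tau_\MX(\ep/C)$ is swallowed when the whole quantity is raised to the power $C/\ep$, since $\log(L/\ep) \leq (L/\ep)^{o(1)}$. Thus no genuine obstacle arises, and the corollary follows.
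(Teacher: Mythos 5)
Your proposal is correct and matches the paper's own (unspelled-out) derivation: \cref{cor:ldim-ce} is obtained by plugging the sequential Rademacher bounds of \cref{prop:ldim-sfat} into \cref{cor:rad-swap} and inverting, exactly as you describe. The only deviation is cosmetic: the Littlestone bound the paper actually cites (first bullet of \cref{prop:ldim-sfat}, from \cite{block2021majorizing}) is $\Rad_\tau(\MX) \le \sqrt{L/\tau}$ with no $\log\tau$ factor, so the inversion gives $\tau_\MX(\ep/C)=O(L/\ep^2)$ directly, but your slightly looser $O(\sqrt{L\log\tau/\tau})$ version is also adequate once, as you observe, the polylogarithmic slack is absorbed into the $O(1/\ep)$ exponent.
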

\begin{remark}
    Recall that our algorithm plays a \emph{distribution} over actions in each round. In infinite action-spaces, distributions can be infinitely-supported. Yet, for the classes considered in Corollary~\ref{cor:ldim-ce}, \emph{uniform convergence} holds. This means that any infinitely-supported distribution can be replaced by a uniform distribution over a sufficiently-large i.i.d. sample.
\end{remark}

Next, we utilize the fact that if each player uses a learning algorithm with swap regret bounded by $\ep$, then the empirical average of their joint strategy profiles is an $\ep$-approximate CE. Since there is a learning algorithm with respect to $\MX, \MF$ as above with swap regret converging to 0 as long as $\Rad_T(\MX) \to 0$ as $T \to \infty$ by \cref{cor:rad-swap} (which, in turn, by \cref{prop:ldim-sfat} holds if $\esfsd{\MX} < \infty$ for all $\delta > 0$), we obtain the following as a further corollary:
\begin{corollary}\label{cor:ce-existence}
  Let $(S,A)$ be an $m$-player game. Suppose either of the below conditions hold:
  \begin{itemize}
  \item $\Rad_T(S,A) \to 0$ as $T \to \infty$ (which, in turn, holds if $\esfsd{S,A} < \infty$ for all $\delta > 0$);
  \item $(S,A)$ is binary-valued and $\Ldim(S,A) < \infty$.
  \end{itemize}
  Then $(S,A)$ has an $\ep$-CE for all $\ep > 0$. %
\end{corollary}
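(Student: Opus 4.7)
The plan is to deduce \Cref{cor:ce-existence} from \Cref{cor:ldim-ce}/\Cref{cor:rad-swap} via the standard self-play argument: if every player runs a no-swap-regret learner against the reward functions induced by the other players' plays, then the empirical distribution of the joint strategy profile over sufficiently many rounds is an $\epsilon$-CE. Fix $\epsilon>0$. I would set up, for each player $j\in[m]$, an instance of the learning problem of \Cref{sec:infinite-games} where the action set is $\MX_j := S_j$ and the reward class is $\MF_j := \co(\{ s_j \mapsto A_j(s_j, s_{-j}) \ :\ s_{-j}\in S_{-j}\})\subset [0,1]^{S_j}$ (taking the convex hull to ensure closure under convex combinations, as required by \Cref{thm:treeswap}; passing to $\co$ does not change the relevant complexity measures). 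The ``dual'' class indexed by actions is $\bar\MX_j := \{ f\mapsto f(s_j) : s_j\in S_j\}$, and the hypothesis of the corollary provides bounds on $\Rad_T(\bar\MX_j)$ (respectively on $\Ldim(\bar\MX_j)$ in the binary-valued case) that are uniform in $j$, since both are controlled by the complexity measures of $(S,A)$ stated in the corollary.

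Next, I would invoke \Cref{cor:rad-swap} (in the first case) or the first bullet of \Cref{cor:ldim-ce} (in the binary-valued case) to obtain, for each $j\in[m]$, a learning algorithm with action set $S_j$ and reward class $\MF_j$ whose swap regret after $T$ rounds is bounded by $\epsilon$, provided $T$ is chosen at least as large as the corresponding threshold (e.g.\ $(\tau_{\bar\MX_j}(\epsilon/C))^{C/\epsilon}$ or $(L/\epsilon)^{O(1/\epsilon)}$); take $T$ large enough to work for all $m$ players simultaneously. Play the $m$ learners against each other in self-play: at each round $t$, player $j$ produces $\vx_j\^t\in\Delta_{S_j}$, and then the reward function $A_j(\cdot, \vx_{-j}\^t) = \E_{s_{-j}\sim\prod_{j'\ne j}\vx_{j'}\^t}[A_j(\cdot, s_{-j})]\in\MF_j$ is revealed to player $j$ (it lies in $\MF_j$ by the convex-hull construction).

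Then define $\mu\in\Delta_S$ to be the empirical joint distribution $\mu := \frac{1}{T}\sum_{t=1}^T \prod_{j=1}^m \vx_j\^t$. For any player $j$ and any deviation map $\pi_j:S_j\to S_j$, expanding the expectations and using the swap-regret bound for player $j$ yields
\begin{align*}
\E_{s\sim\mu}[A_j(\pi_j(s_j), s_{-j})] - \E_{s\sim\mu}[A_j(s_j,s_{-j})]
&= \frac{1}{T}\sum_{t=1}^T \sum_{i\in S_j} \vx_j\^t[i]\bigl( A_j(\cdot,\vx_{-j}\^t)[\pi_j(i)] - A_j(\cdot,\vx_{-j}\^t)[i]\bigr) \\
&\le \SR(\vx_j\^{1:T}, A_j(\cdot,\vx_{-j}\^{1:T})) \le \epsilon,
\end{align*}
which is precisely the $\epsilon$-CE condition of \Cref{def:CE}. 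Existence of an $\epsilon$-CE follows for all $\epsilon>0$.

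The main conceptual step to pin down carefully is the reduction of the game-theoretic complexity assumption (on $(S,A)$ as a multi-player object) to the per-player complexity bounds on the dual classes $\bar\MX_j$ that \Cref{cor:rad-swap}/\Cref{cor:ldim-ce} actually need, along with the verification that the convex-hull enlargement $\co(\MF_j)$ does not blow up the relevant Rademacher or Littlestone quantities (the first follows from Jensen, as already used in the proof of \Cref{cor:rad-swap}; the second requires noting the standard fact that closing under convex combinations of real-valued classes preserves vanishing sequential fat-shattering at every scale). Once that is in place, the swap-regret-to-CE argument is the classical one and essentially immediate, and no new online-learning machinery is required beyond what is already proved.
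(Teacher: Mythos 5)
Your proposal follows exactly the route the paper intends for this corollary: instantiate each player's online learning problem on action set $S_j$ with reward class $\co(\{A_j(\cdot,s_{-j})\})$, invoke \cref{cor:rad-swap} (resp.\ \cref{cor:ldim-ce}) to obtain per-player no-swap-regret learners, run them in self-play, and read off the $\ep$-CE condition of \cref{def:CE} from the swap-regret bound applied to the empirical joint distribution. The one point you correctly flag as needing care --- that \cref{cor:rad-swap}/\cref{cor:ldim-ce} are stated in terms of the \emph{dual} class $\{f\mapsto f(s_j):s_j\in S_j\}$ whereas \cref{def:mat-form} defines $\Rad_T(S,A)$ and $\Ldim(S,A)$ via the \emph{primal} classes $\MX_j=\{f_{s_{-j}}\}$ --- is indeed a genuine subtlety, but it is one that the paper itself leaves implicit (it is even flagged as unresolved in the authors' own TODO notes), so your treatment is faithful to the paper's argument rather than a deviation from it; closing it requires the standard fact that finiteness of (sequential) Littlestone/fat-shattering dimension of a class implies finiteness of the same quantity for its dual.
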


\subsection{Application: query and communication complexities of computing correlated equilibria}
\label{sec:qc-cc}

\begin{algorithm}
  \caption{\texttt{CommCE}$(A_1, \ldots, A_m, \ep,\delta)$}
	\label{alg:commce}
	\begin{algorithmic}[1]%
      \Require $m$-player, $N$-action normal-form game specified by payoff matrices $A_1, \ldots, A_m : [N]^m \to [0,1]$, real numbers $\ep,\delta \in (0,1)$.
      \State Set $L \gets \frac{C\log(TNm/\delta)}{\ep^2}$, $M = \lceil \log(N)/\ep^2 \rceil$, $d = \lceil 1/\ep \rceil$, and $T = M^{d}$, for a sufficiently large constant $C$. \label{line:set-params-commce}
      \State Each player $i$ initializes an instance $\TreeSwap_i$  of $\TreeSwap(\MF_i, \MX, \MWU, T,M,d)$, where $\MX = [N]$, $\MF_i := \{ a_i \mapsto  \E_{a_{-i} \sim \bp}[ A_i(a_i, a_{-i})]  \ : \ \bp \in \Delta_{[N]^{m-1}} \}$. %
      \For{$1 \leq t \leq T$}
      \For{Player $1 \leq i \leq m$}
      \State Player $i$ receives a distribution over actions at round $t$, $\bx_i\^t \in \Delta_{\MX}$, from $\TreeSwap_i$.
      \State Sample $a_{i,1}\^t, \ldots, a_{i,L}\^t \sim \bx_i\^t$. %
      \State %
      Send $a_{i,1}\^t, \ldots, a_{i,L}\^t$ to all other players.
      \EndFor
      \State For $i \in [m]$, define $\hat\bu_i\^t \in [0,1]^N$ by, for $a \in [N]$, $ \hat\bu_i\^t[a] \gets \frac 1L \sum_{\ell=1}^L[A_i(a, a_{-i,\ell}\^t)]$.\label{line:compute-uhat}
      \State For $i \in [m]$, update $\TreeSwap_i$ with the function $\bx \mapsto \lng \bx,  \hat\bu_i\^t\rng$.
      \EndFor
      \State \Return the distribution $\frac 1T \sum_{t=1}^T( \bx_1\^t \times \cdots \times \bx_m\^t) \in \Delta_{[N]^m}$. 
    \end{algorithmic}

  \end{algorithm}
In this section, we discuss corollaries of \cref{thm:treeswap} for communication and query-efficient protocols for computing correlated equilibria in normal-form games. We begin with communication complexity (\cref{def:comm-c}): we show that for $\ep = O(1)$ and $m = O(1)$, the communication complexity of computing an $\ep$-CE in a $m$-player normal-form game is $\mathrm{poly} \log N$, where $N$ denotes the number of actions per player. In contrast, the best known prior bound was $\tilde O(N)$, using the sublinear swap regret algorithm of \cite{Blum07:From} (see also \cite{babichenko2020informational}).
  \begin{corollary}[Communication complexity of CE]
    \label{cor:commce}
    Let $N,m \in \mathbb{N}$ be fixed and suppose $A_1, \ldots, A_m : [N]^m \to [0,1]$  are the payoff matrices of an $m$-player $N$-action normal-form game $G$. Then for any $\ep,\delta \in (0,1)$, there is an algorithm which %
    outputs an $\ep$-CE of $G$ with probability $1-\delta$, for which the  communication complexity is $(\log(N)/\ep)^{O(1/\ep)} \cdot O(m\log(m/\delta))$.
  \end{corollary}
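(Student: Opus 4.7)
My plan is to verify that the algorithm of \cref{alg:commce} produces an $\epsilon$-CE with high probability by controlling each player's swap regret via \cref{thm:treeswap}, and then to bound the total communication via the sparsity of the sampled distributions $\bx_i^t$. I would first invoke the standard equivalence: the empirical joint distribution $\mu := \frac{1}{T}\sum_{t=1}^T \prod_{i=1}^m \bx_i^t$ is an $\epsilon$-CE whenever every player's swap regret on the sequence $(\bx_i^{1:T}, \bu_i^{1:T})$ is at most $\epsilon$, because the expected gain of player $i$ from any deviation $\pi_i$ under $\mu$ equals $\frac{1}{T}\sum_t \E_{a_i \sim \bx_i^t}[\bu_i^t[\pi_i(a_i)] - \bu_i^t[a_i]]$, which is exactly $\SR_i$. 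A crucial observation is that, since $\bu_i^t$ is computed exactly by player $i$ from the received sampled marginals $\bx_{-i}^t$ (\cref{line:compute-uhat}), no additional sampling error enters the definition of $\mu$ or of the reward functions driving each $\TreeSwap_i$; the only stochasticity in the regret analysis arises from the internal sampling of $\mathtt{MWUSamp}$.

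Next, I would bound each player's swap regret by applying \cref{thm:treeswap} to $\TreeSwap_i$, reducing the task to bounding $R_{\mathtt{MWUSamp}}(M) + 3/d$. For this, I would decompose $\mathtt{MWUSamp}$'s external regret over any $M$-round block as (i)~the deterministic $O(\sqrt{\log(N)/M})$ regret of the underlying $\MWU$ against its own iterates $\by^t$, plus (ii)~the sampling-error martingale $\frac{1}{M}\sum_t \left[\bff^t(\by^t) - \bff^t(\hat\by^t)\right]$. Since each $\hat\by^t$ is an average of independent samples from $\by^t$ and $\bff^t \in [0,1]^N$, the increments lie in $[-1,1]$ with conditional variance $O(1/L)$ (taking $L$ to be the number of samples per round), and Bernstein's inequality yields an error of $O(\sqrt{\log(1/\delta')/(ML)} + \log(1/\delta')/M)$ with probability $1-\delta'$. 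Taking a union bound over the at most $mT/M$ instances of $\mathtt{MWUSamp}$ (one per node of each player's tree, across all $m$ players) with $\delta' = \delta/(mT/M)$, and substituting the parameter choices of \cref{line:set-params-commce}, I would conclude that $R_{\mathtt{MWUSamp}}(M) = O(\epsilon)$ simultaneously for all instances with probability $1-\delta$. Hence $\SR_i \leq O(\epsilon)$ for every $i$, and rescaling $\epsilon$ by a constant yields the $\epsilon$-CE guarantee.

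Finally, to bound the communication I would note that in each of the $T$ rounds each player broadcasts one distribution $\bx_i^t$; by construction it is the uniform mixture of $d$ independent samples $\hat\by$, each supported on at most $L$ indices from $[N]$, so $\bx_i^t$ has support size at most $dL$ and can be encoded in $O(dL \log N)$ bits. Summing over $T$ rounds, $m$ senders and $m-1$ recipients per broadcast gives total communication $T \cdot m^2 \cdot O(dL \log N)$; substituting $T = M^d$, $M = O(\log(N)/\epsilon^2)$, $d = O(1/\epsilon)$ and $L = O(m \log(TNm/\delta)/\epsilon^2)$, and absorbing the $\mathrm{poly}(\log N, 1/\epsilon)$ factors into the exponent, yields the claimed bound $(\log(N)/\epsilon)^{O(1/\epsilon)} \cdot O(m^2 \log(m/\delta))$. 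The main obstacle will be the joint concentration argument: \emph{a priori} the reward functions $\bu_i^t$ depend on prior sampling in \emph{other} players' $\mathtt{MWUSamp}$ instances, but because, conditional on the shared history up to round $t$, each $\bu_i^t$ is deterministic and $[0,1]$-bounded, the per-instance Bernstein bound combines cleanly into a single union bound over all $mT/M$ instances.
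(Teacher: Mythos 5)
Your reduction to swap regret via \cref{thm:treeswap} is correct and matches the paper; the communication-cost accounting is also essentially the paper's (modulo a harmless modeling choice of whether a broadcast is charged once or $m-1$ times). But the key concentration step is flawed. You decompose $R_{\mathtt{MWUSamp}}(M)$ into the deterministic $\MWU$ regret against its own iterates plus a ``sampling-error martingale'' $\frac{1}{M}\sum_t \bigl[\bff^{(t)}(\by^{(t)}) - \bff^{(t)}(\hat\by^{(t)})\bigr]$, and apply Bernstein with conditional variance $O(1/L)$. This is not a zero-mean martingale: inside $\TreeSwap$ the sample $\hat\by^{(t)}$ is held fixed for an entire block of iterations, and $\bff^{(t)}$ is the \emph{average} of $\bu_i^{(s)}$ over that block. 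Every $\bu_i^{(s)}$ after the first iteration of the block depends on the other players' distributions $\bx_{i'}^{(s)}$, which in turn react to $\bx_i^{(s')}$ for $s'<s$---all of which contain $\hat\by^{(t)}$. So the ``adversary'' observed by this $\mathtt{MWUSamp}$ instance has seen $\hat\by^{(t)}$ before most of the reward mass in $\bff^{(t)}$ is determined, and can pick rewards correlated with the sampling noise. Your own caveat (``conditional on the shared history up to round $t$, each $\bu_i^t$ is deterministic'') only addresses the first iteration of each block, not the rest.

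The correct resolution---and the one the paper uses---is to not rely on unbiasedness at all. Because $\bu_i^{(t)}$ always lies in the convex hull of the finitely many vectors $\{A_i(\cdot, a_{-i}) : a_{-i} \in [N]^{m-1}\}$, one instead applies Hoeffding's inequality \emph{for each fixed $a_{-i}$} (where $\hat\by^{(t)}$ genuinely is an unbiased sample of $\by^{(t)}$) and takes a union bound over all $N^{m-1}$ extreme points and all $t$. This yields a deterministic per-round bound $\max_{a_{-i}} |\lng \hat\by^{(t)} - \by^{(t)}, A_i(\cdot, a_{-i}) \rng| \le \ep$ that holds regardless of which $\bu^{(t)} \in \MF_i$ the adaptive adversary selects. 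Note that this union bound over $N^{m-1}$ vectors is precisely the source of the extra factor of $m$ in the choice $L = \Theta(m\log(TNm/\delta)/\ep^2)$ in \cref{line:set-params-commce}: your Bernstein route would produce the (incorrectly optimistic) conclusion that $L$ needs no $m$-dependence, which is a tell-tale sign that the adaptivity of the reward functions to the sampling has been lost.
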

  \begin{proof}
    We show that the algorithm $\CommCe(A_1, \ldots, A_m, \ep, \delta)$ (Algorithm \ref{alg:commce}) has the desired properties. \cref{alg:commce} works by having each player $i \in [m]$ run an instance $\TreeSwap_i$ of $\TreeSwap$ (\cref{alg:treeswap}). Each instance $\TreeSwap_i$ is used with action set equal to $\MX = \Delta_N$, and with function class $\MF_i = \left\{ a_i \mapsto  \E_{a_{-i} \sim \bp}[A_i(a_i, a_{-i})] \ : \ \bp \in \Delta_{[N]^{m-1}} \right\}$. Note that $\MF_i$ is a convex hull of at most $N^{m-1}$ vectors. The external regret algorithm used in the $\TreeSwap_i$ instance is simply $\MWU$.     As in \cref{alg:commce}, we let the distribution played by agent $i$'s instance $\TreeSwap_i$ be denoted by $\bx_i\^t \in \Delta_\MX$.  Instead of transmitting $\bx_i\^t$ to each other player each round (which would take at least $N$ bits), each player $i$ samples $L= O \left( \frac{ \log(TNm/\delta)}{\ep^2} \right)$ actions from $\bx_i\^t$ and transmits those actions to all other players, which is then used in other players' approximation of their utility, denoted by $\hat \bu_{i'}\^t$ in \cref{alg:commce}.

    Fix $\ep,\delta \in (0,1)$. We define, for each $i \in [m]$ and $t \in [T]$, $\bu_i\^t \in [0,1]^N$ by $\bu_i\^t[a] := \E_{a_{i'} \sim \hat\bx_{i'}\^t\  \forall i' \neq i} [A_i(a, a_{-i})]$ for $a \in [N]$. %
    We choose $M = \lceil \log(N)/\ep^2 \rceil$, $d = \lceil 1/\ep \rceil$, and $T = M^{d}$. Recall that each algorithm $\TreeSwap_i$ used in $\CommCe$ is an instance of $\TreeSwap(\MF_i, \MX, \MWU, T, M, d)$. 

    \paragraph{Step 1: correctness.} Note that the instances of $\MWU$ %
    satisfy Assumption \ref{asm:no-external-reg} with external regret bound $O(\ep)$.

    Hoeffding's inequality together with the choice of $L$ in Line \ref{line:set-params-commce} of \cref{alg:commce} (as long as $C$ is sufficiently large) and a union bound over $t \in [T], i \in [m]$ ensures that under some event $\ME$ which occurs with probability at least $1-\delta$, %
    \begin{align}
      \max_{t \in [M]} \max_{i \in [m]} \left\| \hat \bu_i\^t - \bu_i\^t \right\|_\infty \leq \ep\nonumber.
    \end{align}
    
    Note that the utility vectors used to update $\TreeSwap_i$ for each player $i$ are given by $\hat\bu_i\^t$; thus, by \cref{thm:treeswap}, the swap regret of $\TreeSwap_i$ may be bounded above as follows:
    \begin{align}
\max_{\pi : [N] \to [N]} \frac 1T \sum_{t=1}^T \sum_{a=1}^N \bx_i\^t[a] \cdot (\hat\bu_i\^t[\pi(a)] - \hat\bu_i\^t[a]) \leq O(\ep)\nonumber.
    \end{align}
    Under $\ME$, it follows that, for each $i \in [m]$,
        \begin{align}
      & \max_{\pi : [N] \to [N]}  \frac 1T \sum_{t=1}^T \left(\E_{a_{i'} \sim \bx_{i'}\^t\ \forall i' \in [m]}(A_i(\pi(a_i), a_{-i}) - A_i(a_1, \ldots, a_m)) \right)\nonumber\\
          =& \max_{\pi : [N] \to [N]} \frac 1T \sum_{t=1}^T \sum_{a=1}^N \bx_i\^t[a] \cdot \left( \bu_i\^t[\pi(a)] - \bu_i\^t[a] \right)\nonumber\\
          \leq & \max_{\pi : [N] \to [N]} \frac 1T \sum_{t=1}^T \sum_{a=1}^N \bx_i\^t[a] \cdot \left( \hat\bu_i\^t[\pi(a)] - \hat\bu_i\^t[a] \right) + O(\ep)\leq O(\ep)\nonumber,
    \end{align}
    which implies that $\frac 1T \sum_{t=1}^T (\bx_1\^t \times \cdots \times \bx_m\^t)$ is an $O(\ep)$-CE of $G$. The claimed result follows by rescaling $\ep$ by a constant factor.

    \paragraph{Step 2: communication cost.} %
For each $t \in [T]$, each player $i \in [m]$ must communicate $L$ actions, which takes $O(L \log n)$ bits. 
    Hence the total communication cost is $O(Tm L \log(N))$, which may be bounded as follows:
    \begin{align}
TmL \log(N) \leq O \left( \frac{m \log(N/\ep) \log(TNm/\delta)}{\ep^2} \cdot \left( \frac{\log N}{\ep^2} \right)^{C/\ep} \right) \leq O(m \log(m/\delta)) \cdot \left( \frac{\log N}{\ep} \right)^{O(1/\ep)}.\nonumber
    \end{align}
  \end{proof}

  As a corollary of \cref{cor:commce}, we obtain that there is a computationally efficient algorithm  to compute a $\mathrm{poly} \log N$-sparse $\ep$-CE in $O(1)$-player normal-form games. Prior to the present work, the smallest sparsity obtained by any efficient algorithm was $\tilde O(N)$ \cite{babichenko2014simple}. 
  \begin{corollary}[Efficient computation of sparse CE]
    \label{cor:sparse-ce}
    For any $N$-action $m$-player game and $\ep, \delta \in (0,1)$, there is an algorithm which computes a $(\log(N)/\ep)^{O(1/\ep)}$-%
    sparse CE in time $\mathrm{poly}(N^m, (\log(N/\delta)/\ep)^{O(1/\ep)})$ with probability at least $1-\delta$.
\end{corollary}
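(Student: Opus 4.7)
The plan is to show that the algorithm $\CommCe$ from \cref{cor:commce} already produces a sparse $\ep$-CE; the only remaining work is to (i) bound the support size of its output, and (ii) verify that it can be simulated centrally in $\mathrm{poly}(N^m)$ time. First, I would run $\CommCe(A_1,\ldots,A_m,\ep,\delta)$ by having a single machine simulate all $m$ players. By \cref{cor:commce}, the returned distribution
\[
\mu \;=\; \frac{1}{T}\sum_{t=1}^T \bigl(\bx_1\^t \times \cdots \times \bx_m\^t\bigr) \;\in\; \Delta_{[N]^m}
\]
is an $\ep$-CE of the game with probability at least $1-\delta$.

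Next, I would bound the sparsity of $\mu$. By inspection of $\TreeSwap$ (\cref{line:play-xt} of \cref{alg:treeswap}), each $\bx_i\^t$ is the uniform average of $d = \lceil 1/\ep\rceil$ distributions output by instances of $\mathtt{MWUSamp}$, and each such output is supported on at most $L$ points (\cref{line:mwusamp} of \cref{alg:commce}). Therefore each $\bx_i\^t$ has support of size at most $Ld = O(L/\ep)$, so each product distribution $\bx_1\^t\times\cdots\times \bx_m\^t$ is supported on at most $(O(L/\ep))^m$ elements of $[N]^m$, and the whole mixture $\mu$ has support at most $T \cdot (O(L/\ep))^m$. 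Substituting the parameters set in \cref{line:set-params-commce}, namely $T = M^d$ with $M = \lceil\log(N)/\ep^2\rceil$ and $d=\lceil 1/\ep\rceil$, and $L = O(m\log(TNm/\delta)/\ep^2) = O(m\log(Nm/\delta)/\ep^2)$ (after absorbing $\log T$), this evaluates to
\[
T \cdot (O(L/\ep))^m \;\leq\; \left(\tfrac{\log N}{\ep^2}\right)^{O(1/\ep)} \cdot \left(\tfrac{m\log(N/\delta)}{\ep^3}\right)^{O(m)} \;\leq\; \left(\tfrac{m\log(N/\delta)}{\ep}\right)^{O(m+1/\ep)},
\]
as desired.

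Finally, I would verify the runtime. The only nontrivial per-round operation is the computation of the utility vector $\bu_i\^t$ in \cref{line:compute-uhat}, which requires marginalizing $A_i$ against the product distribution $\prod_{i'\neq i} \bx_{i'}\^t$; this can be done in time $O(N^m)$ per player per round, and dominates the cost of the per-round $\mathtt{MWUSamp}$ updates. Summing over $m$ players and $T = (\log N/\ep)^{O(1/\ep)}$ rounds yields a total runtime of $\mathrm{poly}\bigl(N^m, (\log(N)/\ep)^{O(1/\ep)}\bigr)$. The main (and only) obstacle here is careful bookkeeping of the parameter settings, so that the product $T \cdot (L/\ep)^m$ telescopes into the clean exponent $O(m + 1/\ep)$ claimed in the statement; conceptually, the result is essentially a direct readout of the sparsity of iterates produced by $\CommCe$.
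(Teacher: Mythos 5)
Your proof is correct and follows essentially the same route as the paper's: run $\CommCe$ centrally, read off that each iterate $\bx_i\^t$ is $O(L/\ep)$-sparse so the output mixture has support at most $T\cdot(O(L/\ep))^m$, substitute the parameter settings, and verify the simulation runs in $\mathrm{poly}(N^m, T)$ time. The paper's own proof is terser but identical in substance, bounding the sparsity by $T\cdot(L/\ep)^m$ and plugging in the same values of $T,M,d,L$.
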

\begin{proof}
  We simply run the algorithm $\CommCe$, which clearly runs in the claimed time. The sparsity of the returned solution is $T$.
\end{proof}

Next, we proceed to our corollary for query complexity (\cref{def:query-c}): we show that for $\ep = O(1)$ and $m = O(1)$, the query complexity of computing an $\ep$-CE in an $m$-player normal-form game is $\mathrm{poly} \log N$; the best known prior bound was $\tilde O(N^2)$, using the sublinear swap regret algorithm of \cite{Blum07:From} (see also \cite{babichenko2020informational}). 
  \begin{corollary}[Query complexity of CE]
    \label{cor:queryce}
        Let $N,m \in \mathbb{N}$ be fixed and suppose $A_1, \ldots, A_m : [N]^m \to [0,1]$  are the payoff matrices of an $m$-player $N$-action normal-form game $G$. Then for any $\ep,\delta \in (0,1)$, there is an algorithm which %
        outputs an $\ep$-CE of $G$ with probability $1-\delta$, which queries  at most  $(\log(N)/\ep)^{O(1/\ep)} \cdot O(mN\log(m/\delta))$ 
    entries of the payoff matrices $A_i$.
  \end{corollary}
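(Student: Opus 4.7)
The plan is to mirror the structure of \cref{cor:commce}, but replace the exact computation of the utility vector $\bu_i\^t$ in \cref{line:compute-uhat} of \cref{alg:commce} with a sampling-based estimate. Concretely, we define $\QueryCe$ to centrally simulate $\CommCe(A_1,\ldots,A_m,\ep,\delta)$ with the same parameters $M=\lceil\log(N)/\ep^2\rceil$, $d=\lceil 1/\ep\rceil$, $T=M^d$, and $L=O(m\log(TNm/\delta)/\ep^2)$, maintaining each player's instance $\TreeSwap_i$. At each round $t$ and for each $i\in[m]$, since the opponents' distributions $\bx_j\^t$ are each $O(L/\ep)$-sparse and stored explicitly, we draw $K=C\log(TNm/\delta)/\ep^2$ i.i.d.\ opponent profiles $a_{-i}^{(1)},\ldots,a_{-i}^{(K)}$ by independently sampling one coordinate from each $\bx_j\^t$, $j\neq i$. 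For every such profile and every $a\in[N]$ we query $A_i(a,a_{-i}^{(k)})$ and set $\hat{\bu}_i\^t[a]:=\frac{1}{K}\sum_k A_i(a,a_{-i}^{(k)})$. This estimate is then fed to $\TreeSwap_i$ in place of $\bu_i\^t$.

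The correctness argument proceeds in two steps. First, by Hoeffding's inequality applied conditionally on the history (so that the fresh samples $a_{-i}^{(k)}$ are i.i.d.\ given $\bx_{-i}\^t$) and a union bound over the $TmN$ coordinates, choosing $C$ sufficiently large gives $\|\hat{\bu}_i\^t-\bu_i\^t\|_\infty\leq\ep$ for all $(i,t)$ with probability at least $1-\delta/2$; call this event $\ME_0$. Second, under $\ME_0\cap\ME_1\cap\cdots\cap\ME_m$ (with $\ME_i$ as in Step~1 of the proof of \cref{cor:commce}, each having probability at least $1-\delta/(2m)$), the swap-regret analysis of $\CommCe$ applies verbatim with $\hat{\bu}_i\^t$ in place of $\bu_i\^t$: note that each $\hat{\bu}_i\^t$ is a convex combination of rows of $A_i$ and hence lies in $\MF_i$, so $\mathtt{MWUSamp}$'s regret guarantee is unchanged. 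This gives $\TreeSwap_i$ swap regret $O(\ep)$ against the sequence $(\bx_i\^t,\hat{\bu}_i\^t)_t$, which by the $\ep$-accuracy of $\hat{\bu}_i\^t$ translates to swap regret $O(\ep)$ against the true sequence $(\bx_i\^t,\bu_i\^t)_t$. Thus the empirical distribution $\frac{1}{T}\sum_t\bx_1\^t\times\cdots\times\bx_m\^t$ is an $O(\ep)$-CE of $G$, and rescaling $\ep$ by a constant factor yields the claimed approximation.

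For the query bound, the only queries are those incurred when forming each $\hat{\bu}_i\^t$, totaling $NK$ queries per round per player, for a grand total of $TmNK$. Substituting $T\leq(\log(N)/\ep^2)^{O(1/\ep)}$ and $K=O(\log(TNm/\delta)/\ep^2)$, and absorbing the factor $\log T=O((1/\ep)\log(\log(N)/\ep))$ and the factor $\log N$ into $(\log(N)/\ep)^{O(1/\ep)}$, yields $TmNK\leq(\log(N)/\ep)^{O(1/\ep)}\cdot O(mN\log(m/\delta))$, as desired. The main technical obstacle is the adaptive dependence between the samples and the play: the distributions $\bx_{-i}\^t$ depend on all previous $\hat{\bu}_{i'}\^{t'}$, which in turn depend on earlier samples. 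This is handled cleanly by drawing fresh independent samples at round $t$, so that conditional Hoeffding together with a union bound over $(i,t,a)$ gives uniform error control that composes with the (independently randomized) internal sampling of $\mathtt{MWUSamp}$ inside each $\TreeSwap_i$.
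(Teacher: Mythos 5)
Your proposal is correct, but it takes a genuinely different route from the paper. The paper's $\QueryCe$ abandons the $\CommCe$ structure entirely: since the algorithm runs centrally, sparsity is irrelevant, so the paper plugs plain $\MWU$ into $\TreeSwap$ with the trivial reward class $\MF=[0,1]^N$ and invokes \cref{cor:treeswap-finiten} directly. The only source of randomness is the estimation of $\hat\bu_i\^t$ by sampling opponent profiles, and the regret analysis reduces to a single Hoeffding step. You instead keep $\mathtt{MWUSamp}$ and the class $\MF_i$ from $\CommCe$, and thus inherit the union bound over $a_{-i}\in[N]^{m-1}$ in $\mathtt{MWUSamp}$'s internal analysis (the source of the extra $m$ in $L$). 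The crucial move in your argument—and the reason you still land on the claimed query bound—is the decoupling of $L$ (the $\mathtt{MWUSamp}$ sparsity parameter, which costs no queries) from the fresh sample size $K=O(\log(TNm/\delta)/\ep^2)$ used to estimate $\hat\bu_i\^t$, which is exactly the parameter (called $L$) used in the paper's $\QueryCe$. This is what the paper alludes to with ``a minor modification of the $\CommCe$ algorithm ... however, we can save a factor of $m$'': the naive modification would reuse $\CommCe$'s $L$ for the query-sampling and pay $m^2$; you avoid that. Your observation that $\hat\bu_i\^t$ is a convex combination of payoff rows and hence lies in $\MF_i$ is the right and necessary point to make $\mathtt{MWUSamp}$'s guarantee apply verbatim. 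The paper's approach is strictly simpler—one fewer layer of randomness and no appeal to the $\mathtt{MWUSamp}$ high-probability machinery—but yours is a valid alternative yielding the same bound, provided you make the ``composes with'' remark precise as a union bound of $\ME_0$ with $\ME_1\cap\cdots\cap\ME_m$.
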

  The proof of \cref{cor:queryce} proceeds via essentially the same approach as for \cref{cor:commce}; for completeness, we write out the details.  
  \begin{proof}[Proof of Corollary \ref{cor:queryce}]
    Fix $\ep, \delta \in (0,1)$. 
    We show that the algorithm $\QueryCe(A_1, \ldots, A_m, \ep, \delta)$ (Algorithm \ref{alg:queryce}) has the desired properties. The algorithm $\QueryCe$ operates by having each player $i \in [m]$ run an instance of $\TreeSwap$, denoted by $\TreeSwap_i$. Each instance $\TreeSwap_i$ is used with action set $\MX = \Delta_N$,  with function class $\MF = [0,1]^{[N]} = \{ a \mapsto \bu[a] \ : \ \bu \in [0,1]^N \}$, and with the external regret algorithm set to multiplicative weights, $\MWU$. Let $\bx_i\^t\in\Delta_N$ denote the distribution played by each $\TreeSwap_i$ instance at round $t$.

    Define, for each $i \in [m]$ and $t \in [T]$, $\bu_i\^t \in [0,1]^N$ by $\bu_i\^t[a] := \E_{a_{i'} \sim \bx_{i'}\^t\  \forall i' \neq i} [A_i(a, a_{-i})]$ for $a \in [N]$. $\bu_i\^t$ represents the ``ideal'' utility vector that would be passed to each $\TreeSwap_i$ instance at round $t$. Computing $\bu_i\^t$ exactly would take too many queries to the payoffs $A_i$, so instead $\QueryCe$ computes an approximation of them: 
    in particular, it samples $L = O(\frac{\log(TNm/\delta)}{\ep^2})$ actions from each $\bx_i\^t$ and uses these samples to compute an empirical estimate $\hat \bu_i\^t$ of $\bu_i\^t$ (\cref{line:compute-uhat}).

    Hoeffding's inequality together with the choice of $L$ in Line \ref{line:set-params-q} of \cref{alg:queryce} (as long as $C$ is chosen sufficiently large) and a union bound over $t \in [T], i \in [m]$, and the $N$ coordinates of each utility vector, ensure that, under some event $\ME$ occurring with probability $1-\delta$ over the execution of $\CommCe$, for all $i \in [m]$ and $t \in [T]$, it holds that $\| \bu_i\^t - \hat \bu_i\^t \|_\infty \leq \ep/4$. The guarantee for $\TreeSwap_i$ (\cref{thm:treeswap}, which with the settings of the parameters used in $\QueryCe$ becomes exactly Corollary \ref{cor:treeswap-finiten})  together with the choice of $T,M,d$ in Line \ref{line:set-params-q} ensures that, for each $i \in [m]$,
    \begin{align}
\max_{\pi : [N] \to [N]}\frac{1}{T} \sum_{t=1}^T\sum_{j=1}^N  \bx_i\^t[j] \cdot \left( \hat \bu_i\^t[\pi(j)] - \hat \bu_i\^t[j]\right) \leq \ep/2\nonumber.
    \end{align}
    Under the event $\ME$, it then  follows that, for each $i \in [m]$,
    \begin{align}
      & \max_{\pi : [N] \to [N]}  \frac 1T \sum_{t=1}^T \left(\E_{a_{i'} \sim \bx_{i'}\^t\ \forall i' \in [m]}(A_i(\pi(a_i), a_{-i}) - A_i(a_1, \ldots, a_m)) \right)\nonumber\\
      =& \max_{\pi : [N] \to [N]} \frac 1T \sum_{t=1}^T \sum_{j=1}^N \bx_i\^t[j] \cdot \left( \bu_i\^t[\pi(j)] - \bu_i\^t[j] \right) \leq \ep\nonumber,
    \end{align}
    which implies that $\frac 1T \sum_{t=1}^T (\bx_1\^t \times \cdots \times \bx_m\^t)$ is an $\ep$-CE of $G$.

    The total number of queries made in the course of $\QueryCe$ (in Line \ref{line:compute-uhat-q}) is 
    \[
    NL \cdot Tm  \leq O \left( \frac{Nm \log(TNm/\delta)}{\ep^2} \cdot \left( \frac{\log N}{\ep^2} \right)^{C/\ep} \right) \leq O(Nm \log(m/\delta)) \cdot \left( \frac{\log N}{\ep}\right)^{O(1/\ep)}.
   \]
 \end{proof}

  \begin{algorithm}
  \caption{\texttt{QueryCE}$(A_1, \ldots, A_m, \ep,\delta)$}
	\label{alg:queryce}
	\begin{algorithmic}[1]%
      \Require $m$-player, $N$-action normal-form game specified by payoff matrices $A_1, \ldots, A_m : [N]^m \to [0,1]$, real numbers $\ep,\delta \in (0,1)$.
      \State Set $M \gets \frac{C \log N}{\ep^2}$, $d \gets \lceil C/\ep \rceil$, $T \gets M^d$, and $L \gets \frac{C \log(TNm/\delta)}{\ep^2}$, for a sufficiently large constant $C$. \label{line:set-params-q}
      \State Each player $i$ initializes an instance $\TreeSwap_i$  of $\TreeSwap(\MF, \MX, \MWU, T, M, d)$, where $\MX = \Delta_N$, $\MF := \{ a \mapsto \bu[a] \ : \ \bu \in [0,1]^N \}$, and $\MWU$ denotes the multiplicative weights algorithm.
      \For{$1 \leq t \leq T$}
      \For{Player $1 \leq i \leq m$}
      \State Player $i$ receives a distribution over actions at round $t$, $\bx_i\^t \in \Delta_{\MX}$, from $\TreeSwap_i$. 
      \State Sample $L$ actions from $\bx_i\^t$, denoted $a_{i,1}, \ldots, a_{i,L}$. %
      \State Define $\hat \bu_i\^t \in [0,1]^N$ by, $\hat \bu_i\^t[j] \gets \frac 1N \sum_{\ell=1}^L A_i(j, (a_{i', \ell})_{i' \neq i})$ by making $NL$ queries to $A_i$. \label{line:compute-uhat-q}
      \State Update $\TreeSwap_i$ with the function $a_i \mapsto \hat\bu_i\^t[a_i]$.
      \EndFor
      \EndFor
      \State \Return the distribution $\frac 1T \sum_{t=1}^T( \bx_1\^t \times \cdots \times \bx_m\^t) \in \Delta_{[N]^m}$. 
    \end{algorithmic}

  \end{algorithm}

  \subsection{Application: extensive-form games}
  \label{sec:efg}
  In this section, we use \cref{thm:treeswap} to efficiently compute $\ep$-CE in extensive-form games in polynomial time when $\ep = O(1)$. We begin by briefly introducing the terminology and notation for extensive-form games. An \emph{$m$-player extensive-form game (EFG)}
  $G$ is specified by a tree $\MT$, where the children of each node $h$ of $\MT$ are indexed by a set of \emph{actions} $A(h)$ at the node $h$. To each non-leaf node $h$ of $\MT$ is associated some player in $[m] \cup \{ \ch \}$, where $\ch$ denotes the \emph{chance player}, which plays actions at each of its nodes according to some fixed probabilities. For each $i \in [m]$, player $i$'s nodes are partitioned into \emph{information sets}: nodes in the same information set cannot be distinguished by player $i$. We assume \emph{perfect recall}, which means that each player does not forget any information (i.e., distinct information sets cannot have as descendents two nodes in the same information set). Letting the set of leaves of $\MT$ be denoted by $Z$, the specification of the EFG is completed by functions $u_i : Z \to \BR$ for each $i \in [m]$, which describe each player's utility received upon reaching each leaf.

  Let $\MI_i$ denote the set of information sets of player $i$; for an information set $I \in \MI_i$, let $A(I)$ denote the set of actions available at each node of $I$.  A \emph{policy} or (\emph{normal-form plan}) for player $i$ in the EFG $G$ is a function $\pi_i$ which maps each $I \in \MI_i$ to some element of $A(I)$. We denote the set of policies of player $i$ by $\Pi_i$. An EFG may be viewed as a normal form game where player $i$'s action set is $\Pi_i$, and her value function is $V_i(\pi_1, \ldots, \pi_m) := \sum_{z \in \MZ} u_i(z) \cdot p^{\ch}(z) \cdot\prod_{j \in [m]} p^{\pi_j}(z)$, where $p^{\ch}(z)$ denotes the probability that the chance player takes a sequence of actions consistent with reaching $z$, and $p^{\pi_j}(z) \in \{0,1\}$  denotes the indicator of whether $\pi_j$ takes a sequence of actions consistent with reaching $z$.   %

  \paragraph{Sequence-form polytope.}   A \emph{sequence} for player $i$ consists of either (a) a pair $(I, a)$, where $I \in \MI_i$ and $a \in A(I)$; or (b) the empty sequence $\emptyset$. The set of sequences for player $i$ is denoted by $\Sigma_i$. For an information set $I \in \MI_i$, we let $\sigma_i(I) \in \Sigma_i$ denote the unique sequence for player $i$ which leads to $I$ (which is unique by perfect recall).  Similarly, for a leaf $z \in Z$, let $\sigma_i(z) \in \Sigma_i$ denote the unique sequence of player $i$ which leads to $z$. 
  Existing external regret algorithms for learning in EFGs make use of the \emph{sequence-form polytope} $\MQ_i$ for each player $i$, defined below:
  \begin{align}
\MQ_i := \left\{ \bq \in \BR^{\Sigma_i} \ : \ \bq[\emptyset] = 1, \  \bq[\sigma_i(I)] = \sum_{a \in A(I)} \bq[(I, a)] \ \forall I \in \MI_i \right\}.\nonumber
  \end{align}
  Each element $\bq \in \MQ_i$ corresponds to the following distribution $P(\bq)$ over policies $\Pi_i$: at each information set $I$, sample action $a \in A(I)$ with probability $\frac{\bq[(I,a)]}{\bq(\sigma(I))}$, independently at each information set.\footnote{This distribution over policies may have support which is exponential in $|\MI_i|$; there is also an equivalent distribution over policies  which is guaranteed to have polynomial-size support, which can be computed efficiently: see Theorem 4 of \cite{Celli19:Learning}.}
  The key property of $P(\bq)$ is that for all $z \in Z$,\noah{cite?}
  \begin{align}\label{eq:q-equiv}
\bq[\sigma_i(z)] = \E_{\pi_i \sim P(\bq)}[p^{\pi_i}(z)].
  \end{align}
  
We let $A_i := \max_{I \in \MI_i} |A(I)|$. 
  The following result establishes a guarantee for efficient external regret minimization in EFGs, thus verifying \cref{asm:no-external-reg}:
  \begin{theorem}[Theorem 5.5 of \cite{farina2022kernelized}]
    \label{thm:komwu}
    There is an algorithm running in time $\mathrm{poly}(|\MI_i|, A_i)$ which, given sequentially an adversarial sequence $\bu\^1, \ldots, \bu\^T \in [0,1]^{\Sigma_i}$, produces a sequence $\bq\^1, \ldots, \bq\^T \in \MQ_i$ satisfying the following external regret guarantee:
    \begin{align}
\max_{\bq^\st \in \MQ_i} \sum_{t=1}^T \left( \lng \bq^\st, \bu\^t \rng - \lng \bq\^t, \bu\^t \rng \right) \leq O \left( \sqrt{|\MI_i| \log(A_i) / T}\right)\nonumber.
    \end{align}
  \end{theorem}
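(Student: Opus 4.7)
The plan is to reduce to vanilla multiplicative weights on the exponentially large set of normal-form plans $\Pi_i$, and then to argue that all of the updates can be carried out implicitly in $\mathrm{poly}(|\MI_i|, A_i)$ time by exploiting the product structure of the MWU weights on the game tree. First I would observe that $\MQ_i$ is exactly the image of $\Delta_{\Pi_i}$ under the linear map $p \mapsto \sum_{\pi \in \Pi_i} p[\pi]\cdot \bq(\pi)$, where $\bq(\pi) \in \{0,1\}^{\Sigma_i}$ is the indicator of the sequences visited by policy $\pi$. Since $\langle \bq, \bu \rangle$ is linear in $\bq$, any distribution $p\^t \in \Delta_{\Pi_i}$ with low external regret on $\Pi_i$ (against the lifted losses $\tilde \bu\^t[\pi] := \langle \bq(\pi), \bu\^t\rangle \in [0,|\MI_i|]$) induces $\bq\^t := \sum_\pi p\^t[\pi] \cdot \bq(\pi) \in \MQ_i$ with the same regret against $\bu\^t$ on $\MQ_i$.

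Next, I would instantiate MWU on $\Pi_i$ with an appropriate learning rate $\eta$. Since $|\Pi_i| \le A_i^{|\MI_i|}$, the standard MWU bound gives average regret $O(\sqrt{|\MI_i|\log(A_i)/T})$ after rescaling the losses to $[0,1]$. The MWU weight on plan $\pi$ at round $t$ factors as
\begin{equation*}
w\^t[\pi] \;=\; \exp\!\left(\eta \sum_{s<t} \langle \bq(\pi), \bu\^s\rangle\right) \;=\; \prod_{I \in \MI_i} \exp\!\left(\eta \sum_{s<t} \bu\^s[(I,\pi(I))] \cdot p^\pi(I)\right),
\end{equation*}
where $p^\pi(I) \in \{0,1\}$ indicates whether $\pi$ plays consistently with reaching information set $I$. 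Because policies are chosen independently at each information set, for any sequence $\sigma$ the sum $\sum_{\pi \,:\, p^\pi(\sigma) = 1} w\^t[\pi]$ factors along the sub-tree rooted at $\sigma$, which is the engine that drives the efficient implementation.

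The core step is to compute, for each round $t$, the sequence-form marginal $\bq\^t[\sigma] = \sum_\pi p\^t[\pi] \cdot p^\pi(\sigma)$ directly, without ever materializing $p\^t$ as a vector over $\Pi_i$. I would do this with two passes over the tree analogous to the forward/backward recursions used in hidden Markov models. In the bottom-up pass, I compute at each information set $I$ a ``sub-partition function'' $Z_I\^t$ by recursion: at a leaf information set, $Z_I\^t = \sum_{a \in A(I)} \exp(\eta \sum_{s<t} \bu\^s[(I,a)])$; at an internal information set, one multiplies the local weight at each action by the product of the $Z_{I'}\^t$ for each child information set $I'$ reachable after that action, then sums over $a \in A(I)$. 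In the top-down pass, the marginals $\bq\^t[(I,a)]$ are obtained by multiplying $\bq\^t[\sigma(I)]$ by the local conditional $\exp(\eta\sum_{s<t}\bu\^s[(I,a)]) \prod_{I' \text{ child of }(I,a)} Z_{I'}\^t / Z_I\^t$. This yields $\bq\^t \in \MQ_i$ in time $O(|\MI_i| \cdot A_i)$ per round, and combined with the MWU regret bound gives the claimed guarantee.

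The main obstacle is rigorously justifying that the claimed factorization of the partition function respects the perfect-recall structure: one has to argue that the set of plans consistent with reaching an information set $I$ is precisely the product over all information sets not ruled out by $I$, and that weights assigned to information sets off the path to $I$ integrate out to constants that cancel correctly in the normalization. Perfect recall is exactly what makes this factorization valid, because it guarantees that whether $\pi$ reaches $I$ is determined only by $\pi$'s choices on the ancestor information sets of $I$, so the partition structure over the remaining information sets is the same at every leaf of the ``reached $I$'' event. Once this bookkeeping is done, scaling $\eta$ optimally and converting MWU's cumulative regret to normalized regret gives the stated $O(\sqrt{|\MI_i|\log(A_i)/T})$ bound.
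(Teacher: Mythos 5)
This statement is cited verbatim from the KOMWU paper \cite{farina2022kernelized} and is not proved in the present paper, so there is no "paper proof" to compare against; I will instead assess your proposal on its own terms. Your high-level plan (run MWU over the vertices $\Pi_i$ of the sequence-form polytope, and use the product structure of the weights to compute sequence-form marginals via a bottom-up/top-down tree recursion) is exactly the KOMWU idea, and your implementation sketch of the kernel trick is essentially correct.

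However, there is a genuine gap in your regret analysis. You correctly note that the lifted gains satisfy $\tilde{\bu}\^t[\pi] = \langle \bq(\pi), \bu\^t\rangle \in [0, |\MI_i|]$, since a deterministic plan can reach up to $|\MI_i|$ information sets and $\|\bu\^t\|_\infty \le 1$. You then assert that "the standard MWU bound gives average regret $O(\sqrt{|\MI_i|\log(A_i)/T})$ after rescaling the losses to $[0,1]$," but this does not follow. Rescaling by $1/|\MI_i|$ and applying Hedge with $N = |\Pi_i| \le A_i^{|\MI_i|}$ experts yields cumulative regret $O(\sqrt{T\log N}) = O(\sqrt{T\,|\MI_i|\log A_i})$ \emph{in rescaled units}; converting back multiplies by $|\MI_i|$, giving cumulative regret $O(|\MI_i|^{3/2}\sqrt{T\log A_i})$ and hence \emph{average} regret $O(|\MI_i|^{3/2}\sqrt{\log(A_i)/T})$ — a factor of $|\MI_i|$ worse than the stated bound. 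The standard first/second-order MWU inequalities give the same loss: the term $\eta\sum_t \E_{\pi\sim p\^t}[(\tilde{\bu}\^t[\pi])^2]$ is only bounded by $\eta T (1+|\MI_i|)^2$ under $\|\bu\^t\|_\infty\le 1$, and this is tight (take $\bu\^t \equiv \mathbf{1}$). Obtaining a bound free of the extra $\mathrm{poly}(|\MI_i|)$ range factor is precisely the analytic content of the KOMWU theorem: it requires a refined "local norm" control of the log-partition increment (using the multiplicative/product structure of $\exp(\eta\langle \bq(\pi),\bu\^t\rangle)$ over sequences), replacing $\E_{\pi}[(\langle\bq(\pi),\bu\^t\rangle)^2]$ by the much smaller quantity $\langle \bq\^t, (\bu\^t)^2\rangle$. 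Your proposal supplies the computational reduction but treats the statistical bound as if it were a black-box consequence of vanilla MWU plus rescaling, which it is not; this step of the argument is missing.
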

  \cref{thm:komwu} improves the classical guarantee of \emph{counterfactual regret minimization} \cite[Theorem 4]{zinkevich2008regret}, which obtains a regret guarantee of $O(|\MI_i| \sqrt{A_i/T})$. We may now combine \cref{thm:komwu,thm:treeswap}, as follows:
  \begin{theorem}
    \label{thm:efg-formal}
    Given an $m$-player extensive-form game $G$, write $I^\st := \max_{i \in [m]} |\MI_i|$ and $A^\st := \max_{i \in [m]} A_i$. 
    For any $\ep \in (0,1)$, there is an algorithm running in time $\mathrm{poly}(m, A^\st, (I^\st \log A^\st / \ep)^{1/\ep})$ which outputs an $\ep$-approximate (normal-form) CE of $G$. 
\end{theorem}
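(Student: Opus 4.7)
The plan is to have each player $i\in[m]$ independently run an instance $\TreeSwap_i$ of \cref{alg:treeswap} with action set $\MX_i=\Pi_i$, using the KOMWU algorithm of \cref{thm:komwu} as the no-external-regret subroutine $\Alg$. Each player $i$'s function class $\MF_i\subset[0,1]^{\Pi_i}$ is the collection of expected-utility functions $\pi_i\mapsto\E_{\pi_{-i}\sim\nu_{-i}}[V_i(\pi_i,\pi_{-i})]$ induced by product distributions $\nu_{-i}$ over the opponents' plans; by \cref{eq:q-equiv} applied to the opponents, each such function is linear in player $i$'s sequence-form coordinates, equivalent to a linear functional $\bq\mapsto\lng\bq,\bu\rng$ on $\MQ_i$ for some $\bu\in[0,1]^{\Sigma_i}$, so $\MF_i$ is closed under convex combinations. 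We never enumerate $\Pi_i$: every pure plan $\pi_i$ is a vertex of $\MQ_i$, and every KOMWU iterate $\bq\in\MQ_i$ encodes the distribution $P(\bq)\in\Delta(\Pi_i)$.

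I would set $M=\lceil CI^\st\log(A^\st)/\ep^2\rceil$ and $d=\lceil C/\ep\rceil$ for a sufficiently large constant $C$, giving $T=M^d\le(I^\st\log(A^\st)/\ep)^{O(1/\ep)}$. At each round $t$, $\TreeSwap_i$'s play is $\mu_i\^t=\frac{1}{d}\sum_{h=1}^d P(\bq_{i,h}\^t)\in\Delta(\Pi_i)$, where $\bq_{i,h}\^t\in\MQ_i$ is the current action of the live KOMWU subinstance at level $h$. Because $V_i$ is multilinear across players and the only opponent quantity that matters for expected utility is the averaged sequence-form $\bar\bq_j\^t=\frac{1}{d}\sum_h\bq_{j,h}\^t\in\MQ_j$, a single bottom-up traversal of the game tree using $\{\bar\bq_j\^t\}_{j\ne i}$ produces the sequence-form utility vector $\bu_i\^t\in[0,1]^{\Sigma_i}$ satisfying $\E_{\pi_{-i}\sim\mu_{-i}\^t}[V_i(\pi_i,\pi_{-i})]=\lng\bq,\bu_i\^t\rng$ for every $\bq\in\MQ_i$ and $\pi_i\sim P(\bq)$; this $\bu_i\^t$ is then fed back (with the appropriate averaging) into $\TreeSwap_i$'s KOMWU subinstances.

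By \cref{thm:komwu}, each KOMWU subinstance run for $M$ steps attains external regret $O(\sqrt{I^\st\log(A^\st)/M})\le O(\ep)$ against $\max_{\bq^\st\in\MQ_i}$; because a linear functional on $\MQ_i$ is maximized at a vertex (i.e., at some pure $\pi_i^\st\in\Pi_i$), this precisely meets \cref{asm:no-external-reg} for $\MF_i$. Applying \cref{thm:treeswap} bounds each player's swap regret by $O(\ep+1/d)=O(\ep)$, and the standard swap-regret-to-CE conversion (identical to the proof of \cref{cor:commce}) shows the empirical product distribution $\bar\mu:=\frac{1}{T}\sum_{t=1}^T(\mu_1\^t\times\cdots\times\mu_m\^t)$ is an $O(\ep)$-approximate normal-form CE; rescaling $\ep$ by a constant yields the theorem.

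The runtime is dominated by $O(T)$ KOMWU $\mathtt{act}/\mathtt{update}$ calls per player, each of cost $\mathrm{poly}(I^\st,A^\st)$ by \cref{thm:komwu}, together with one game-tree traversal of cost $\mathrm{poly}(m,I^\st,A^\st)$ per round, totalling $\mathrm{poly}(m,A^\st,(I^\st\log A^\st/\ep)^{1/\ep})$. The main subtlety, and what I expect to require the most care in the formal writeup, is that $\mu_i\^t$ is a genuine mixture over $\Pi_i$ rather than the distribution $P(\bar\bq_i\^t)$ induced by the averaged sequence-form; however, because both utility computations and CE deviation values depend on $\bq_i$ only through linear functionals and on opponent plays only through per-information-set marginals, this distinction never affects the analysis, and the output $\ep$-CE can be written explicitly as a polynomial-size convex combination of joint policy profiles via the polynomial-support conversion of \cite{Celli19:Learning}.
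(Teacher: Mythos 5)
Your proposal is correct and follows essentially the same route as the paper's proof of \cref{thm:efg-formal}: running $\TreeSwap$ with KOMWU as the external-regret subroutine, identifying $\MF_i$ with linear functionals on the sequence-form polytope $\MQ_i$, and handling the distinction between the uniform mixture $\frac{1}{d}\sum_h P(\bq_{i,h}\^t)$ and the product-form $P(\bar\bq_i\^t)$ via linearity of leaf-reach probabilities (\cref{eq:q-equiv}). The parameter choices, regret accounting, and runtime bound all match the paper's argument up to constants.
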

\begin{proof}
  Fix $T = (I^\st \log A^\st / \ep)^{C/\ep}$ for a sufficiently large constant $C$. 
  For each $i \in [m]$, write $\MF_i :=\mathrm{co}( \{ \pi_i \mapsto V_i(\pi_i, \pi_{-i}) \ : \ \pi_{-i} \in \prod_{i' \neq i} \Pi_{i'} \})$, where $\mathrm{co}(\cdot)$ denotes the convex hull. We let each player $i$ run the algorithm $\TreeSwap(\MF_i, \Pi_i, \Alg_i, T)$, where $\Alg_i$ is the algorithm of \cref{thm:komwu}, with appropriate pre-processing and post-processing modifications (due to the fact that the algorithm of \cref{thm:komwu} does not technically speaking play actions in $\Delta(\Pi_i)$ nor does it receive as feedback functions in $\MF_i$). After describing these modifications, we will apply the guarantee of \cref{thm:treeswap} with $M = \frac{\max_i \{ |\MI_i| \log(A_i)\}}{\ep^2}$, and $d$ chosen so that $M^{d-1} \leq T \leq M^d$, so that $d \geq 1/\ep$. 

  To avoid confusion, we denote rounds of execution for each $\TreeSwap_i$ instance by $s \in [T]$, and rounds of execution for each $\Alg_i$ instance (used within $\TreeSwap_i$) by $t \in [M]$. 
  We next describe the pre-processing and post-processing modifications for $\Alg_i$:
  \begin{itemize}
      \item \emph{Post-processing for $\Alg_i$:} At each round $t$ of execution of $\Alg_i$, it produces a vector $\bq_i\^t \in \Delta(\MQ_i)$; it passes $P(\bq_i\^t) \in \Delta(\Pi_i)$ for use in $\TreeSwap$. %

      \item \emph{Post-processing for $\TreeSwap_i$:} At each round $s$ of execution of $\TreeSwap_i$, the algorithm $\TreeSwap_i$ produces a distribution $P_i\^s \in \Delta(\Pi_i)$, which, by the post-processing for $\Alg_i$ described above and \cref{line:play-xt} of \cref{alg:treeswap}, can be expressed as a uniform mixture of the form $P_i\^s = \frac 1d \sum_{h=1}^d P(\bq_i\^{s,h}) \in \Delta(\Pi_i)$, for vectors $\bq_i\^{s,h} \in \MQ_i$. Given the distributions $P_i\^s$ for each $i \in [m]$, we need to produce a function $\bff_i\^s \in \MF_i$ to give as feedback for each $\TreeSwap_i$ instance. To do so, we define $\bar \bq_i\^s := \frac 1d \sum_{h=1}^d \bq_i\^{s,h}$, and then define
        \begin{align}
\bff_i\^s(\pi_i) := \E_{\pi_j \sim P(\bar\bq_j\^s)\ \forall j \neq i}[V_i(\pi_i, \pi_{-i})].\label{eq:fs-structure}
        \end{align}
      \item \emph{Pre-processing for $\Alg_i$:} Each $\Alg_i.\mathtt{update}$ procedure in $\TreeSwap$  is passed as input an average $\bar\bff$ of elements $\bff\^s \in \MF_i$ (in \cref{line:alg-update}). 
For each of these elements $\bff_i\^s$, we compute some vector $\bu_i\^s \in [0,1]^{\Sigma_i}$, as defined below, and then average these vectors $\bu_i\^s$, producing some $\bar\bu \in [0,1]^{\Sigma_i}$. The resulting average vector $\bar\bu$ is then passed to the algorithm of \cref{thm:komwu}. 

By construction, each $\bff_i\^s$ may be written of the form in \cref{eq:fs-structure}. For each such $\bff_i\^s$,  
$\Alg_i$ computes the utility vector $\bu_i\^s \in [0,1]^{\Sigma_i}$ defined by $\bu_i\^s[\sigma_i(z)] := u_i(z) p^{\ch}(z) \prod_{j \neq i} \E_{\pi_j \sim P(\bar\bq_j\^s)}[p^{\pi_j}(z)]$ for leaves $z$, and $\bu_i\^s[\sigma_i] = 0$ for all other sequences $\sigma_i \in \Sigma_i$. Since each distribution $P(\bar\bq_j\^s)$ randomizes independently at each information set, it is clear that $\E_{\pi_j \sim P(\bar\bq_j\^s)}[p^{\pi_j}(z)]$, and thus $\bu_i\^s$, can be computed in polynomial time.

Note that, for each $\bq_i \in \MQ_i$, we have
    \begin{align}
      \lng \bq_i, \bu_i\^s \rng =&  \sum_{z \in \MZ} u_i(z) p^{\ch}(z) \E_{\pi_i \sim P(\bq_i)}[p^{\pi_i}(z)] \cdot \prod_{j \neq i} \E_{\pi_j \sim P(\bar\bq_j\^s)}[p^{\pi_j}(z)] \nonumber\\
      =&  \E_{\pi_i \sim P(\bq_i)}\E_{\pi_j \sim P(\bar\bq_j\^s) \ \forall j \neq i}[V_i(\pi_1, \ldots, \pi_m)]\label{eq:qu-ip}.
    \end{align}
  \end{itemize}
  We claim that $\Alg_i$, as defined above, satisfies the external regret bound of \cref{asm:no-external-reg}. To prove this, consider any instance of $\Alg_i$ in $\TreeSwap$, and %
  consider an adversarial sequence $\bar \bff_i\^1, \ldots, \bar\bff_i\^M \in \MF_i$ passed to $\Alg_i$. Each $\bar \bff_i\^t$ may be written as an average of functions $\bar\bff_i\^{t,h}$, of the form  $\bar \bff_i\^{t,h}(\pi_i) = \E_{\pi_j \sim P(\bar\bq_i\^{t,h})\ \forall j \neq i}[V_i(\pi_i, \pi_{-i})]$, where $\bar\bq_i\^{t,h} \in \MQ_i$ (this is exactly the form of \cref{eq:fs-structure}).  %
  Accordingly, let $\bar\bu\^1, \ldots, \bar\bu\^M \in [0,1]^{\Sigma_i}$ denote the vectors produced by the above pre-processing procedure for $\Alg_i$, and recall that, for $t \in [M]$, $P(\bq_i\^t) \in \Delta(\Pi_i)$ denotes the post-processed action distribution produced by $\Alg_i$. Then we have
  \begin{align}
    \max_{\pi_i^\st \in \Pi_i} \sum_{t=1}^M \left( \bar \bff\^t(\pi_i^\st) - \E_{\pi_i \sim P(\bq_i\^t)} \bar\bff\^t(\pi_i) \right)= \max_{\bq^\st \in \MQ_i} \sum_{t=1}^M \left(\lng \bq^\st, \bar\bu\^t \rng - \lng \bq_i\^t, \bar\bu\^t \right) \leq O \left( \sqrt{|\MI_i| \log(A_i) \cdot M}\right)\nonumber,
  \end{align}
  where the equality uses the fact that each $\pi_i\^s \in \Pi_i$ can be expressed as $P(\bq^\st)$ for some $\bq^\st \in \MQ_i$, as well as \cref{eq:fs-structure,eq:qu-ip}, and the inequality uses \cref{thm:komwu}. Thus, each $\Alg_i$ satisfies \cref{asm:no-external-reg} with $R_{\Alg_i}(M) = O \left( \sqrt{|\MI_i| \log(A_i)/M}\right)$.

 Then by \cref{thm:treeswap}, each player's swap regret may be bounded as follows:
  \begin{align}
    & \max_{\phi : \Pi_i \to \Pi_i} \frac 1T \sum_{s=1}^T \sum_{\pi_i \in \Pi_i} P_i\^s(\pi_i) \cdot \left( \E_{\pi_j \sim P_j\^s \ \forall j \neq i}[V_i(\phi(\pi_i), \pi_{-i}) - V_i(\pi_i, \pi_{-i})] \right)  \nonumber\\
    =&  \max_{\phi : \Pi_i \to \Pi_i} \frac 1T \sum_{s=1}^T \sum_{\pi_i \in \Pi_i} P_i\^s(\pi_i) \cdot \left( \E_{\pi_j \sim P(\bar\bq_j\^s) \ \forall j \neq i}[V_i(\phi(\pi_i), \pi_{-i}) - V_i(\pi_i, \pi_{-i})] \right) \nonumber\\
    =& \max_{\phi : \Pi_i \to \Pi_i} \frac 1T \sum_{s=1}^T \sum_{\pi_i \in \Pi_i} P_i\^s(\pi_i) \cdot \left( \bff_i\^s[\phi(\pi_i)] - \bff_i\^s[\pi_i] \right) \leq \frac{3}{d} + O \left( \max_i \sqrt{|\MI_i|\log(A_i)/M} \right) \leq O(\ep)\nonumber,
  \end{align}
  where the first equality uses the definition of $\bar \bq_j\^s$ and the fact that $\E_{\pi_j \sim P(\bar\bq_j\^s)}[p^{\pi_j}(z)] = \E_{\pi_j \sim P_j\^s}[p^{\pi_j}(z)]$ for all leaves $z \in Z$ (by \cref{eq:q-equiv}), the second equality uses \cref{eq:fs-structure}, and the first inequality uses the guarantee of \cref{thm:treeswap}.  By rescaling $\ep$ by a constant factor, we see that each player obtains swap regret of at most $\ep$, meaning that $\frac{1}{T} \sum_{s=1}^T P_1\^s \times \cdots \times P_m\^s \in \Delta(\Pi_1 \times \cdots \times \Pi_m)$ is a (normal-form) $\ep$-approximate CE of the EFG. 
\end{proof}
  
  {
    \subsection{Application: bandit no-swap regret algorithm}
    \label{sec:bandit-swap}

  \begin{algorithm}
  \caption{\texttt{BanditTreeSwap}$(N, T, M, d)$}
	\label{alg:bandit-treeswap}
	\begin{algorithmic}[1]%
      \Require Action set $[N]$, time horizon $T$, parameters $M, d$ with $T/N \leq M^d$. 
      \State \label{line:bandit-params} For each $h \in [d]$ and sequence  $\sigma \in  \{0, 1, \ldots, M-1\}^{h-1}$, initialize an instance of $\ExpMulti(N, M, M^{-1/2}, K^{-1} T^{-1/6}, K)$ with $K = \frac{2NM^{d-h}}{d}$ and time horizon $M$, denoted $\ExpMulti_\sigma$. ({\emph{See \cref{alg:exp3-multi}.}})
      \For{$1 \leq t \leq T$} %
      \State Let $\sigma = (\sigma_1, \ldots, \sigma_d)$ denote the base-$M$ representation of $\lfloor \frac{t-1}{N} \rfloor$.
      \For{$1 \leq h \leq d$}
      \If{$\sigma_{h+1} = \cdots = \sigma_d = 0$ or $h=d$}
      \State If $\sigma_h > 0$, set $\ExpMulti_{\sigma_{1:h-1}}.\mathtt{curDistribution} \gets \ExpMulti_{\sigma_{1:h-1}}.\mathtt{update}()$.\label{line:curdistribution-bandit} %
      \EndIf
      \EndFor
      \State \label{line:sample-ht} Sample $h\^t \sim [d]$ uniformly at random.
      \State \label{line:sample-at} Output an action $a\^t \sim \ExpMulti_{\sigma_{1:h\^t-1}}.\mathtt{curDistribution}$.
      \State Observe reward $\bu\^t[a\^t]$, and call $\ExpMulti_{\sigma_{1:h\^t-1}}.\mathtt{storeSample}(a\^t, \bu\^t[a\^t])$. 
      \EndFor
    \end{algorithmic}
  \end{algorithm}

  In this section, we discuss an application of our techniques in $\TreeSwap$ to the \emph{bandit} setting. The bandit setting with a finite number of arms is identical to the ``$N$ experts" setting of \cref{sec:nexperts}, with the exception that the learning algorithm has to choose a single action $a\^t \in [N]$ at each round $t$. As feedback, the learner only sees the coordinate of the utility vector which it selected at round $t$: in particular, if the adversary plays $\bu\^t \in [0,1]^N$, the learner observes only $\bu\^t[a\^t]$ (as opposed to the entire vector $\bu\^t$). In the bandit setting, the swap regret is defined exactly as in \cref{def:SR} with the distributions $\bx\^t$ interpreted as singletons on $a\^t$: explicitly, we have
  \[
\SwapRegret(a\^{1:T}, \bu\^{1:T}) = \sup_{\pi : [N] \to [N]} \frac 1T \sum_{t=1}^T \left( \bu\^t[\pi(a\^t)] - \bu\^t[a\^t] \right).
\]
When the context is clear, we abbreviate $\SwapRegret(T) = \SwapRegret(a\^{1:T}, \bu\^{1:T})$. 
  \cite{Blum07:From} showed that any algorithm in the bandit setting which achieves $\SwapRegret(T) \leq \ep$ requires $T \geq \Omega\left(\frac{N}{\ep^2}\right)$ rounds; this bound was improved to $T \geq \Omega\left( \frac{N \log N}{\ep^2} \right)$ by \cite{ito2020tight}. The best upper bounds were larger by a polynomial factor: \cite{Blum07:From} showed that it suffices to have $T \leq O\left( \frac{N^3 \log N}{\ep^2} \right)$ which was improved to $T \leq O \left( \frac{N^2 \log N}{\ep^2} \right)$ by \cite{ito2020tight,jin2022vlearning}, still leaving a quadratic gap from the lower bound of \cite{Blum07:From}.\footnote{We remark that the upper bound of \cite{ito2020tight} bounds only the weaker notion of pseudo-swap regret.} Our upper bound in \cref{thm:bandit-tree-swap} below closes this quadratic gap up to $\mathrm{poly} \log N$ factors in the setting of constant $\ep$. Finally, for simplicity, we state and prove \cref{thm:bandit-tree-swap} for an \emph{oblivious} adversary, as is somewhat standard in the adversarial bandit setting \cite{lattimore2020bandit}. However, our techniques extend readily (with some more cumbersome notation) to the adaptive adversary setting. 
  \begin{theorem}
    \label{thm:bandit-tree-swap}
    Let $N \in \mathbb{N}$, $\ep \in (0,1)$ be given, and consider any $T \geq N \cdot (\log(N)/\ep)^{O(1/\ep)}$. Let $\bu\^1, \ldots, \bu\^T$ be a fixed (deterministic) sequence of reward vectors (i.e., produced by an oblivious adversary). Then there is a bandit algorithm which, at each time step $t$, plays an action $a\^t$ and observes only $\bu\^t[a\^t]$, for which the expected swap regret may be bounded by
    \[
      \E[\SwapRegret(a\^{1:T}, \bu\^{1:T})] \leq \ep.
      \]
    \end{theorem}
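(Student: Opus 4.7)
The plan is to mimic the proof of \cref{thm:treeswap} for the full-information $\TreeSwap$, adapting each step to handle bandit feedback and the single-arm play per round. The central structural change from $\TreeSwap$ is that, instead of outputting the uniform mixture over the $d$ distributions on the current root-to-leaf path, $\BanditTreeSwap$ samples a level $h\^t \sim [d]$ uniformly and plays a single action from the active $\ExpMulti$ instance at that level (\cref{line:sample-ht,line:sample-at}). In expectation over $h\^t$, the marginal action distribution exactly matches the uniform mixture used in the full-information case, so the level-by-level accounting from the proof of \cref{thm:treeswap} will carry over to bandit feedback in expectation.

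First I would set up notation paralleling the proof of \cref{thm:treeswap}. For each level $h \in [d]$ and prefix $\sigma_{1:h-1}$, let $B(\sigma_{1:h-1}) \subseteq [T]$ denote the set of rounds in which $\sigma_{1:h-1}$ lies on the current root-to-leaf path, so that $|B(\sigma_{1:h-1})| = N M^{d-h+1}$ up to boundary truncations. Let $p_{\sigma_{1:h-1}}\^0, \ldots, p_{\sigma_{1:h-1}}\^{M-1} \in \Delta_{[N]}$ denote the $M$ distributions produced by $\ExpMulti_{\sigma_{1:h-1}}$ over its lifetime, and define the batch-averaged reward vector $\bar\bu_{\sigma_{1:h}} := \frac{1}{|B(\sigma_{1:h})|} \sum_{t \in B(\sigma_{1:h})} \bu\^t \in [0,1]^N$. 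Writing $R_h, S_h$ as in the proof of \cref{thm:treeswap} but in terms of these batch averages and the distributions $p_{\sigma_{1:h-1}}\^{\sigma_h}$, the uniform-over-$h\^t$ sampling gives, for any swap function $\pi:[N]\to[N]$,
\[
\E \sum_{t=1}^T \bu\^t[a\^t] = \frac{T}{d}\sum_{h=1}^d R_h, \qquad \E \sum_{t=1}^T \bu\^t[\pi(a\^t)] \leq \frac{T}{d}\sum_{h=1}^d S_{h+1}.
\]

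Next I would establish the key bandit external-regret lemma: for each $\sigma_{1:h-1}$, the $\ExpMulti$ instance there satisfies
\[
\E \max_{a^\st \in [N]} \frac{1}{M} \sum_{\sigma_h = 0}^{M-1} \left( \bar\bu_{\sigma_{1:h}}[a^\st] - \lng p_{\sigma_{1:h-1}}\^{\sigma_h}, \bar\bu_{\sigma_{1:h}} \rng \right) \leq \epsilon/2.
\]
This is the bandit analogue of \cref{asm:no-external-reg} on the sequence of averaged reward vectors. I would argue it in three substeps: (a)~a multiplicative Chernoff bound shows that each batch $\sigma_h$ receives $\Theta(K)$ bandit samples, where $K = 2NM^{d-h}/d$ is the expected per-batch count under the $\mathrm{Bernoulli}(1/d)$ selection of $h\^t$; (b)~importance-weighted estimators built from these samples are unbiased for $\bar\bu_{\sigma_{1:h}}$ with variance controlled by $K$ and the exploration parameter $K^{-1} T^{-1/6}$; (c)~a standard Exp3-style analysis with learning rate $M^{-1/2}$ yields expected regret $O(\sqrt{N\log N/M} + \sqrt{N/K})$ on the batch averages, which is $O(\epsilon)$ for the chosen $M = \Theta(\log N/\epsilon^2)$ and $d = \Theta(1/\epsilon)$. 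With this in hand, telescoping exactly as in the proof of \cref{thm:treeswap} gives
\[
\E[\SwapRegret(T)] \leq \frac{1}{d}\sum_{h=1}^d \E[S_{h+1} - R_h] = \frac{1}{d}\sum_{h=1}^d \E[S_h - R_h] + \frac{S_{d+1} - S_1}{d} \leq \frac{\epsilon}{2} + \frac{1}{d} \leq \epsilon,
\]
using $\|\bu\^t\|_\infty \leq 1$ to bound $S_{d+1} - S_1$. The total round count is then $T \geq N \cdot M^d \geq N \cdot (\log N/\epsilon)^{O(1/\epsilon)}$ as claimed.

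The main obstacle is substep (c) of the regret lemma: correctly handling the \emph{random} number of samples per batch and the importance-weighting-induced variance in $\ExpMulti$. Unlike classical Exp3, each batch here receives a stochastic number of samples (roughly $K$ in expectation, concentrated but not exact), and the importance-weighted estimators must couple the algorithm's current distribution to the reward estimates without blowing up the regret. The parameter choices of $K$, the learning rate $M^{-1/2}$, and the mixing rate $K^{-1} T^{-1/6}$ in \cref{line:bandit-params} are precisely tuned so that the exploration overhead is $O(\epsilon)$, the estimator variance averages out over $M$ batches, and the deviation between actual and expected sample counts is negligible. Beyond that one technical lemma, the remaining structure of the proof is a direct in-expectation translation of the argument for $\TreeSwap$, with the uniform-level sampling replacing the uniform-distribution mixing.
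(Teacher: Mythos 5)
Your high-level plan correctly mirrors the paper's strategy: arrange $\ExpMulti$ instances in an $M$-ary tree, sample a level $h\^t$ uniformly and play from the active instance, and telescope a level-by-level accounting of the swap regret as in the proof of \cref{thm:treeswap}. However, the proposal has two real gaps, the first of which is the crux of the theorem.

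\textbf{The claimed rate for $\ExpMulti$ is not proved and is stronger than what the paper establishes.} You assert in substep~(c) that a ``standard Exp3-style analysis with learning rate $M^{-1/2}$ yields expected regret $O(\sqrt{N\log N/M} + \sqrt{N/K})$'' per $\ExpMulti$ instance, and you set $M = \Theta(\log N/\ep^2)$ accordingly. The paper's \cref{lem:exp3-full} instead proves a per-instance bound whose normalized form is $O\bigl(\log^2(TN/\delta)\cdot M^{-1/6} + \tfrac{N}{K} M^{-1/6}\bigr)$, i.e., an $M^{-1/6}$ rate rather than $M^{-1/2}$. This degradation is forced by the need to handle a \emph{high-probability} regret bound (so that one can later union-bound over all $\ExpMulti$ instances and all swap functions $\pi$), which leads to an EXP3-IX-style analysis: the importance-weighting mixing parameter $\gamma = K^{-1}T^{-1/6}$ creates a bias/variance trade-off captured in \cref{lem:hedge-term,lem:concentration-var,lem:concentration-bias} that blows up the rate to $M^{-1/6}$. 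As a consequence the paper must take $M = \Theta(\log^{12}(N)/\ep^6)$ rather than your $\Theta(\log N/\ep^2)$; the final $T \geq N(\log N/\ep)^{O(1/\ep)}$ happens to absorb this, but your $M$ would not make the telescoping close. Whether a sharper, in-expectation $\ExpMulti$ lemma achieving your rate is possible is unclear; regardless, it is not a ``standard'' bound, and proving something like it is the heart of the theorem, which your proposal treats as an aside.

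\textbf{The in-expectation telescoping does not immediately compose with a global $\max_\pi$.} You write $\E\sum_t \bu\^t[a\^t] = \tfrac{T}{d}\sum_h R_h$ and $\E\sum_t \bu\^t[\pi(a\^t)] \leq \tfrac{T}{d}\sum_h S_{h+1}$, then plug these into the $\TreeSwap$ telescoping. But $R_h, S_h$ are themselves random (they depend on the realized $h\^t, a\^t$), the swap function $\pi$ is chosen after the fact (so you need $\E\max_\pi$, not $\max_\pi\E$), and the per-batch sample counts are random. The paper handles all of this jointly: \cref{eq:ah-1,eq:ah-2} use Azuma--Hoeffding (with a union bound over $\pi$) to replace the realized-action sums by their conditional means, a Chernoff bound controls the sample counts $|\MT(\sigma_{1:h})|$, and only then is the high-probability $\ExpMulti$ guarantee invoked and union-bounded over all nodes of the tree. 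Your proposal gestures at the Chernoff bound in substep~(a) but does not address the $\E\max$ vs.\ $\max\E$ issue nor the martingale concentration that makes the level-by-level decomposition valid in expectation. (A minor omission: the paper also falls back to the known $O(N^2\log N/\ep^2)$ algorithm to handle the regime where $\ep$ is very small relative to $\log N$, which your derivation silently assumes away.)
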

    \cref{thm:bandit-tree-swap} is proved using a variant of $\TreeSwap$, namely $\BanditTreeSwap$ (\cref{alg:bandit-treeswap}). $\BanditTreeSwap$ operates in a similar manner to $\TreeSwap$, namely by choosing $M,d$ so that $T \approx M^d$ and then constructing a $M$-ary tree of depth $d$, at each node of which lies a bandit no-external regret algorithm, which we instantiate as  $\ExpMulti$ (\cref{alg:exp3-multi}), discussed below. Due to the challenges of the bandit setting, the semantics of the algorithm $\ExpMulti$ are slightly different from those of the external regret minimizer $\Alg$ used in $\TreeSwap$. In particular, each instance $\ExpMulti$ operates over multiple rounds (when $T = M^d$ in the context of $\BanditTreeSwap$, the number of rounds for each $\ExpMulti$ instance will be $M$). Within each round $s$, $\ExpMulti$ fixes a distribution $\bp\^s \in \Delta_N$ and draws several actions $a\^{s,k} \sim \bp\^s$. It is then given samples of the form $(a\^{k,s}, u\^{k,s})$, for scalars $u\^{k,s} \in \BR$. 

    To process these samples, we assume that $\ExpMulti$ has the following subprocedures:
    \begin{itemize}
    \item A procedure $\ExpMulti.\mathtt{storeSample}(a, u)$, which stores the sample $(a,u) \in [N] \times \BR$ in a memory buffer for the current round.
    \item A procedure $\ExpMulti.\mathtt{update}(s)$, which takes as input the current round index $s$ (so that $1 \leq s \leq M$ in the context of $\BanditTreeSwap$) and uses the samples stored in the current round $s$ to compute a distribution over actions $\bp\^{s+1} \in [N]$ to be played in the subsequent round. $\ExpMulti.\mathtt{update}(s)$ then returns this distribution $\bp\^{s+1}$. Note that $\ExpMulti.\mathtt{update}(s)$ always marks the end of the current round $s$ and the beginning of round $s+1$.  
    \end{itemize}

    \paragraph{Instantiation of $\ExpMulti$.}%
    The algorithm $\ExpMulti$ (\cref{alg:exp3-multi}) is a variant of the algorithm \texttt{EXP3-IX} which obtains sublinear regret for the adversarial bandit problem (see Chapter 12 of \cite{lattimore2020bandit}).\footnote{We use \texttt{EXP3-IX} instead of the more well-known algorithm $\mathtt{EXP3}$ because we wish to obtain high-probability bounds, which $\mathtt{EXP3}$ does not guarantee \cite[Exercise 11.6]{lattimore2020bandit}.} The $\mathtt{update}$ procedure of $\ExpMulti$ is similar to that of \texttt{EXP3-IX}: suppose $\ExpMulti$ was given samples for round $s$ of the form $(a\^{s,k}, u\^{s,k})$ for $1 \leq k \leq K\^s$, where $K\^s$ denotes the total number of samples in round $s$. Each scalar $u\^{s,k}$ should be interpreted as the $a\^{s,k}$-th entry of some reward vector $\bu\^{s,k} \in [0,1]^N$, namely $u\^{s,k} = \bu\^{s,k}[a\^{s,k}]$. Then $\ExpMulti.\mathtt{update}(s)$  constructs an importance-weighted estimator of $\sum_{k=1}^{K\^s} \bu\^{s,k}$ (\cref{line:importance-weighted}), and uses this importance weighted estimator to compute a multiplicative weights update to produce $\bp\^{s+1}$ (\cref{line:exp3-ew-update}).

    We will show that, for all instantiations of $\ExpMulti$ in $\BanditTreeSwap$, the value of $K\^s$, for all rounds $s$, will be bounded below by $\Omega(N)$ with high probability. Thus, at a high level, one can think of each round of $\ExpMulti$ as an attempt to ``simulate'' a full-information update of the exponential weights algorithm: the $K\^s = \Omega(N)$ steps within round $s$ allow one to construct an estimator $\hat \bu\^s$ of the utility vector $\sum_{k=1}^{K\^s} \bu\^{k,s}$ whose entries generally have variance $O(1)$. A key challenge in analyzing $\ExpMulti$ is that the distribution $\bp\^s$ (from which the actions $a\^{k,s}$ are drawn) is not uniform; thus, one must carefully account for the fact that some entries of the estimator $\hat \bu\^s$ may have large variance.

    Below we state the main technical lemma in the proof of \cref{thm:bandit-tree-swap}. In turn, its proof makes use of an external regret bound for $\ExpMulti$, which is stated below in \cref{lem:exp3-full}.
      \begin{lemma}
    \label{lem:bandit-tree-swap}
  There is a constant $C > 0$ so that the following holds.  Let $N, T, M,d \in \mathbb{N}$ be given so that $T$ is a multiple of $N$, and $M^{d-1} \leq T/N \leq M^d$. 
  Let $\bu\^1, \ldots, \bu\^T \in [0,1]^N$ be a fixed (deterministic) sequence of reward vectors (i.e., produced by an oblivious adversary). %

  Then the expected swap regret of $\BanditTreeSwap(N, T, M, d)$ (\cref{alg:bandit-treeswap}), which produces a sequence of actions $a\^1, \ldots, a\^T \in [N]$, may be bounded as follows:
    \begin{align}
T \cdot \E[\SwapRegret(a\^{1:T}, \bu\^{1:T})] \leq \frac{3T}{d} + C \sqrt{TN \log(NT)} + C \cdot T \cdot \frac{\log^2(N)}{M^{1/6}} + \frac{CdT^2}{N} e^{-N/(3d)}\nonumber. %
    \end{align}
  \end{lemma}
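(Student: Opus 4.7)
The plan is to reduce to an ``expected game'' analogous to the \TreeSwap\ analysis, then handle the bandit-specific deviations via concentration. For each round $t \in [T]$ and level $h \in [d]$, let $\bp_h^t$ denote $\ExpMulti_{\sigma_{1:h-1}}.\mathtt{curDistribution}$ at the time round $t$ is executed, and set $\bp_{\mathrm{mix}}^t := \tfrac{1}{d}\sum_{h=1}^d \bp_h^t$. Since $h^t$ is drawn uniformly from $[d]$ on \cref{line:sample-ht} independently of the past, conditional on the history prior to \cref{line:sample-at} the action $a^t$ is distributed exactly as a sample from $\bp_{\mathrm{mix}}^t$. The starting decomposition is
\[
T\cdot\E[\SR(a^{1:T},\bu^{1:T})] \;\leq\; \E[T\cdot\SR_{\mathrm{mix}}] \;+\; \E\!\left[T\cdot\SR - T\cdot\SR_{\mathrm{mix}}\right],
\]
where $\SR_{\mathrm{mix}}$ denotes the full-information distributional swap regret of the sequence $\bp_{\mathrm{mix}}^{1:T}$ against $\bu^{1:T}$.

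For the deviation term, expand $T\cdot\SR = \sum_{i\in[N]}\max_{j\in[N]}\sum_{t:a^t=i}(\bu^t[j]-\bu^t[i])$ and similarly for $\SR_{\mathrm{mix}}$. For each $(i,j)$, the sequence $X_t(i,j) := (\One[a^t=i]-\bp_{\mathrm{mix}}^t[i])(\bu^t[j]-\bu^t[i])$ is a bounded martingale-difference sequence (with respect to the natural filtration) whose conditional variance is at most $\bp_{\mathrm{mix}}^t[i]$, so its total conditional variance is bounded by $n_i := \sum_t \bp_{\mathrm{mix}}^t[i]$. Freedman's inequality together with a union bound over the $N^2$ pairs $(i,j)$ yields $|\sum_t X_t(i,j)|\leq C\sqrt{n_i\log(NT)} + C\log(NT)$ uniformly in $(i,j)$ with high probability. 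Using $\max_j A_j - \max_j B_j \leq \max_j(A_j-B_j)$, summing over $i$, and applying Cauchy--Schwarz ($\sum_i\sqrt{n_i}\leq\sqrt{N\sum_i n_i}=\sqrt{NT}$) produces the deviation bound $C\sqrt{TN\log(NT)}$ in expectation.

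For $\E[T\cdot\SR_{\mathrm{mix}}]$ I would apply the telescoping argument in the proof of \cref{thm:treeswap} essentially verbatim, treating each $\ExpMulti_{\sigma_{1:h-1}}$ as the external-regret subroutine at node $\sigma_{1:h-1}$ of the depth-$d$, $M$-ary tree, where one ``round'' of the subroutine covers $NM^{d-h}$ real rounds. The bandit complication is that during each such block, $\ExpMulti_{\sigma_{1:h-1}}$ receives $K^s$ samples with $\E[K^s]=NM^{d-h}/d$ and $K^s$ random. By a Chernoff bound, $K^s\in[\tfrac{1}{2}NM^{d-h}/d,\,K]$ with probability at least $1-2\exp(-NM^{d-h}/(12d))$; a union bound over the at most $2T/N$ pairs $(\sigma_{1:h-1},s)$ gives total failure probability at most $(CT/N)\cdot\exp(-N/(3d))$. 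On the good event, \cref{lem:exp3-full} delivers per-round external regret at most $C\log^2(N)/M^{1/6}$ for each $\ExpMulti$ instance; telescoping as in \cref{thm:treeswap} then gives $T\cdot\SR_{\mathrm{mix}}\leq 3T/d + CT\log^2(N)/M^{1/6}$. On the bad event, $T\cdot\SR_{\mathrm{mix}}$ is trivially bounded by $T$, so its contribution to the expectation is at most $(CdT^2/N)\exp(-N/(3d))$, which accounts for the final term.

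The main obstacle will be aligning \cref{lem:exp3-full}'s external-regret guarantee---stated with respect to the rounds and importance-weighted estimators of $\ExpMulti$---with the averaged-utility-vector interpretation required by the \TreeSwap\ telescoping. Concretely, one must verify that what $\ExpMulti_{\sigma_{1:h-1}}$ effectively optimizes over its $K^s$ observations equals, up to error absorbed into the step-two concentration, the sum of $\bu^t$ over the $NM^{d-h}$ real rounds of its $s$-th block. Once this alignment is in place, combining the four contributions---$3T/d$ from the telescoping, $C\sqrt{TN\log(NT)}$ from mixed-to-bandit concentration, $CT\log^2(N)/M^{1/6}$ from $\ExpMulti$'s regret on the good event, and $(CdT^2/N)\exp(-N/(3d))$ from the $K^s$ failure event---yields the stated bound.
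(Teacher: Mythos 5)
The proposal assembles the right building blocks (the mixture view of $a\^t \sim \bp_{\mathrm{mix}}\^t$, the $\TreeSwap$ telescoping, a Chernoff bound for $K\^s$, and \cref{lem:exp3-full}), but the overall decomposition leaves a genuine gap exactly where you flag your ``main obstacle,'' and the fix you gesture at does not actually work. Your plan is to bound $T\cdot\SR_{\mathrm{mix}}$ by running the \cref{thm:treeswap} telescoping on the distributions $\bp_{\mathrm{mix}}\^t$, which requires a bound on the \emph{distributional} external regret of each $\ExpMulti_{\sigma_{1:h-1}}$ instance with respect to the utilities $\bu\^s$ averaged over its \emph{entire} $NM^{d-h}$-round block. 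But \cref{lem:exp3-full} bounds something different: the \emph{realized} regret $\max_{a^\st}\sum_{s,k}\bu_k\^s[a^\st]-\bu_k\^s[a_k\^s]$ over only the $K\^s$ sub-sampled rounds (those with $h\^s = h$). Bridging these requires two additional martingale-concentration steps, one to pass from realized to distributional play of $\ExpMulti$, and one to pass from the random $\approx 1/d$ sub-sample of each block to the full block. You cannot absorb either of these into your ``step-two concentration,'' since that step concerns $\SR - \SR_{\mathrm{mix}}$, a deviation of the outer $\BanditTreeSwap$ action $a\^t$ from $\bp_{\mathrm{mix}}\^t$, not a deviation internal to the $\ExpMulti$ instances.

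The paper's proof avoids this issue by never introducing $\SR_{\mathrm{mix}}$ as an intermediate quantity. It works with the sampled actions $a\^t$ throughout: Azuma is applied once to replace $\sum_t \bu\^t[\pi(a\^t)]$ by $\sum_t\lng\bp\^t,\pi\circ\bu\^t\rng$ (\cref{eq:ah-1}), and a second time to replace the full-block sums $\sum_{s\in\text{block}}$ by the indicator-restricted sums $\sum_{s}\One\{h\^s-1=h\}$ (\cref{eq:ah-2}), after which the per-node terms in \cref{eq:swap-combine} are \emph{exactly} the realized regrets $\max_{a^\st}\sum_{s\in\ol\MT(\sigma_{1:h-1})}\bu\^s[a^\st]-\bu\^s[a\^s]$ that \cref{lem:exp3-full} controls. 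So the two concentration arguments the paper runs are precisely the two you would need for your bridge, just organized so that the target matches \cref{lem:exp3-full} without remainder. Your decomposition could likely be made to work, but it requires carrying out those two extra concentration steps explicitly, and the $\sqrt{TN\log(NT)}$ budget is already fully spent on the Freedman/union-bound step, so the accounting needs care. A side note: using Freedman over the $N^2$ pairs $(i,j)$ plus Cauchy–Schwarz gives the same leading $\sqrt{TN\log(NT)}$ term as the paper's Azuma plus union bound over all $N^N$ swap functions, but produces an additional $N\log(NT)$ additive term that you would need to verify is subsumed by the stated bound under the lemma's hypotheses.
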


        \begin{algorithm}
    \caption{\texttt{Exp3Multi}$(N, T, \eta, \gamma, K)$}
    \label{alg:exp3-multi}
    \begin{algorithmic}[1]
      \Require Action set $[N]$, time horizon $T$, step size $\eta > 0$, parameters $\gamma > 0$ and $K \in \mathbb{N}$. 
      \State Initialize $\hat L\^0_a = 0$ for all $a \in [N]$ and $\bp\^1 \gets \mathsf{Unif}([N])$.
      \For{round $1 \leq t \leq T$}
      \State Let $K\^t \in \mathbb{N}$ denote the number of samples in round $t$. \Comment{\emph{$K\^t$ need not be known prior to the round.}}
      \For{step $1 \leq k \leq K\^t$}
      \State Play action $a_k\^t \sim \bp\^t$, and receive $u_k\^t = \bu_k\^t[a_k\^t]$. %
      \State Call $\mathtt{Exp3Multi.storeSample}(a_k\^t, u_k\^t)$.
      \EndFor
      \State Set $\bp\^{t+1} \gets \mathtt{Exp3Multi.update}(t)$.
      \EndFor

      \Function{$\mathtt{Exp3Multi.update}$}{$t$}
      \State \label{line:importance-weighted} For $a \in [N]$, define $\hat Y_a\^t := \frac{1}{K} \cdot \sum_{k=1}^{K\^t} \left( \frac{\mathbbm{1}\{a_k\^t = a\} \cdot (1-u_k\^t)}{ \bp\^t[a] + \gamma} \right)$.
      \State Define $\hat L_a\^{t} \gets \hat L_a\^{t-1} + \hat Y_a\^t$ for all $a \in [N]$.
      \State \label{line:exp3-ew-update} \Return the distribution $\bp \in \Delta_N$ defined by, for $a \in [N]$,
      \begin{align}
\bp[a] = \frac{\exp \left(-\eta \hat L\^{t}_a\right)}{\sum_{b = 1}^N \exp \left(-\eta \hat L\^{t}_b\right)}.\nonumber
      \end{align}
      \EndFunction
    \Function{$\mathtt{storeSample}$}{$a,u$}
    \State Store $(a,u)$ in memory.
    \EndFunction
  \end{algorithmic}
\end{algorithm}

    First, we prove \cref{thm:bandit-tree-swap}, assuming \cref{lem:bandit-tree-swap}. 
    \begin{proof}[Proof of \cref{thm:bandit-tree-swap}]
Fix $\ep > 0$, and let $T = N \cdot (\log(N)/\ep)^{C_0/\ep}$,  for a constant $C_0$ to be specified below.   By increasing $T$ by at most $N$, we may assume without loss of generality that $T$ is a multiple of $N$. 
      Moreover, note that \cite[Corollary 25]{jin2022vlearning}\footnote{In particular, set $H=1$ in the statement of Corollary 25.}  establishes that there is an algorithm achieving expected swap regret $\E[\SwapRegret(T)] \leq O(N\sqrt{\log(N)/T})$; in particular, given $\ep \in (0,1)$, if we take $T \geq C_1 \cdot N^2 \log(N)/\ep^2$, for a sufficiently large constant $C_1 > 0$, we obtain $\E[\SwapRegret(T)] \leq \ep$. Thus, we may assume from here on that $\ep$ is chosen so that $T= N \cdot (\log(N)/\ep)^{C_0/\ep} < C_1 N^2 \log(N)/\ep^2$. As long as $C_0$ is sufficiently large, this inequality only holds when $\ep \geq 1/\log(N)$.

      Let $C$ denote the constant in the statement of \cref{lem:bandit-tree-swap}, define $M := \lceil C^6 \log^{12}(N) \ep^{-6} \rceil$, and choose $d \geq \lceil \ep^{-1} \rceil$ so that $M^{d-1} \leq T/N \leq M^d$ (such a choice of $d$ is possible  by our choice of $T = N \cdot (\log(N)/\ep)^{C_0/\ep}$, as long as $C_0$ is chosen sufficiently large). Note that $d \leq 1 + \log(T/N)/\log(M) \leq 2C_0/\ep \cdot \log(\log(N)/\ep) \leq 4C_0 \log(N) \cdot \log\log(N) < C_2\sqrt{N}/3 \leq C_2 N$ for a sufficiently large constant $C_2$. 
      
      Then by \cref{lem:bandit-tree-swap} with the chosen values of $M, d$, we obtain a (normalized) swap regret of
      \begin{align}
        & \frac{3}{d} + C \sqrt{N \log(NT) / T} + \frac{C \log^2(N)}{M^{1/6}} + \frac{CdT^2}{N} e^{-N/(3d)} \nonumber\\
        \leq & 4 \ep + C \sqrt{\log(NT)} \cdot (T/N)^{-1/4} + CC_2 (C_1 N^2 \log(N)/\ep^2)^2 e^{-N/(3d)}\nonumber\\
        \leq & 4 \ep + C \sqrt{\log(NT)} \cdot (T/N)^{-1/4} + CC_2 (C_1 N^2 \log^3(N))^2 e^{-C_2 \sqrt{N}} \leq C_3 \ep,\nonumber
      \end{align}
      where the last inequality holds for a sufficiently large constant $C_3$ and it uses the fact that $1/\log(N) \leq \ep$. 
The theorem statement follows by rescaling $\ep$ by a factor of $C_3$. 
    \end{proof}

    The proof of \cref{lem:bandit-tree-swap} proceeds in a similar manner to that of \cref{thm:treeswap}, with the added complication that we need to ensure that the empirical estimates derived from the sampled actions $a\^t$ in $\BanditTreeSwap$ concentrate to their means. 
  \begin{proof}[Proof of \cref{lem:bandit-tree-swap}]
    For each $1 \leq h \leq d$, let $\Sigma_{h-1} := \{ \sigma_{1:h-1} = (\sigma_1, \ldots, \sigma_{h-1}) :\ 1 + N \cdot \sum_{g=1}^{h-1} \sigma_g \cdot M^{d-g} \leq T \}$ denote the set of prefixes of sequences of length $h-1$ encountered over the course of $T$ rounds of $\BanditTreeSwap$. (Note that $\Sigma_{h-1}$ depends on $T$.)  Consider any  sequence $\sigma_{1:h-1} = (\sigma_1, \ldots, \sigma_{h-1}) \in \Sigma_{h-1}$ representing a node in the tree, and define $M'(\sigma_{1:h-1}) := \max \{ \sigma_h \in \{0, \ldots, M-1\} \ : \ \sigma_{1:h} \in \Sigma_h \}$. Then let  $\bx_{\sigma_{1:h-1}}\^0, \ldots, \bx_{\sigma_{1:h-1}}\^{M'(\sigma_{1:h-1})}$ denote the $M'(\sigma_{1:h-1})$ distributions chosen by the algorithm $\ExpMulti_{\sigma_{1:h-1}}$ over the course of the $T$ rounds (i.e., the $M'(\sigma_{1:h-1})$ different values taken by $\ExpMulti_{\sigma_{1:h-1}}.\mathtt{curDistribution}$). For $h \geq 0$, we let $\btau(\sigma_{1:h}) := 1 + N \cdot \sum_{g=1}^{h} \sigma_g \cdot M^{d-g}$ denote the first round when $\sigma_{1:h}$ is encountered, and $\etau(\sigma_{1:h}) := \max\{T,  \btau(\sigma_{1:h}) +  N \cdot M^{d-h}-1\}$ denote the last round when $\sigma_{1:h}$ is encountered. Finally, write
    \begin{align}
      \MT(\sigma_{1:h}) =& \left\{ t \in [\btau(\sigma_{1:h}), \etau(\sigma_{1:h})] \ : \ h\^t-1 = h-1 \right\}\nonumber\\
      \ol{\MT}(\sigma_{1:h-1}) =&\bigcup_{\sigma_h = 0}^{M'(\sigma_{1:h-1})} \MT(\sigma_{1:h}) = \left\{ t \in [\btau(\sigma_{1:h-1}), \etau(\sigma_{1:h-1})] \ : \ h\^t - 1 = h-1 \right\}\nonumber.%
    \end{align}

    Notice that the only randomness used in $\BanditTreeSwap$ is the draws of $h\^t, a\^t$ in Lines \ref{line:sample-ht} and \ref{line:sample-at}. Accordingly, let $\MF\^t$ denote the sigma algebra generated by $\{ (h\^s, a\^s) \}_{s=1}^t$. We let $\E_t[\cdot]$ denote expectation conditioned on $\MF\^t$.    Given $t \in [T]$ for which the binary representation of $\lfloor \frac{t-1}{N} \rfloor$ is $\sigma = (\sigma_1, \ldots, \sigma_d)$, let $\bp\^t := \frac{1}{d} \sum_{h=1}^d \bx_{\sigma_{1:h-1}}\^{\sigma_h}$, so that, conditioned on $\MF\^{t-1}$, $a\^t \sim \bp\^t$. (This uses the sampling procedure for $a\^t$ on \cref{line:sample-at} as well as the definition of $\ExpMulti_{\sigma_{1:h-1}}.\mathtt{curDistribution}$ on \cref{line:curdistribution-bandit}.)

    First, we expand the total (unnormalized) reward of the learner over the $T$ rounds, as follows: 
    \begin{align}
      \sum_{t=1}^T \bu\^t[a\^t] =  \sum_{h=1}^d  \sum_{\sigma_{1:h-1} \in \Sigma_{h-1}} \sum_{\sigma_{h} = 0}^{M'(\sigma_{1:h-1})}   \sum_{s \in \MT(\sigma_{1:h})} \bu\^s[a\^s] %
      \label{eq:learner-utility-bandit}.
    \end{align}
    Now consider any function $\pi : [N] \to [N]$ and $\delta \in (0,1)$. For $\bu \in [0,1]^N$, define $(\pi \circ \bu) \in [0,1]^N$ by $(\pi \circ \bu)[a] := \bu[\pi(a)]$.  The learner's total reward under the swap function $\pi$ is given by $\sum_{t=1}^T \bu\^t[\pi(a\^t)]$. For each $t \in [T]$, we have that, $\E_{t-1} \left[ \bu\^t[\pi(a\^t)]\right] = \lng \bp\^t, \pi \circ \bu\^t\rng$. Thus, by the Azuma-Hoeffding inequality, we have that, for any $\delta \in (0,1)$, with probability $1-\delta$ over the randomness in $\BanditTreeSwap$,
    \begin{align}
\left| \sum_{t=1}^T  \bu\^t[\pi(a\^t)] - \lng \bp\^t, \pi \circ \bu\^t \rng \right| \leq C\sqrt{T \log(1/\delta)}\label{eq:ah-1},
    \end{align}
    for a sufficiently large constant $C$. \noah{check this}

    Next, we expand
    \begin{align}
      \sum_{t=1}^T \lng \bp\^t, \pi \circ \bu\^t \rng =& \frac 1d \sum_{h=1}^d \sum_{\sigma_{1:h-1} \in \Sigma_{h-1}} \sum_{\sigma_{h} = 0}^{M'(\sigma_{1:h-1})} \sum_{s = \btau(\sigma_{1:h})}^{\etau(\sigma_{1:h})} \lng \bx_{\sigma_{1:h-1}}\^{\sigma_h}, \pi \circ \bu\^s \rng\nonumber\\
      =&  \frac 1d \sum_{h=1}^d \sum_{\sigma_{1:h} \in \Sigma_h} \sum_{s = \btau(\sigma_{1:h})}^{\etau(\sigma_{1:h})} \lng \bx_{\sigma_{1:h-1}}\^{\sigma_h}, \pi \circ \bu\^s \rng \nonumber\\
      =& \sum_{\substack{s \in [T] \\ \sigma_{1:d} := \lfloor \frac{s-1}{N} \rfloor}} \frac 1d \sum_{h=1}^d  \lng \bx_{\sigma_{1:h-1}}\^{\sigma_h}, \pi \circ \bu\^s \rng\label{eq:banditpi-1}.
    \end{align}
    For each $\sigma_{1:h}$ and $s \in [\btau(\sigma_{1:h}), \etau(\sigma_{1:h})]$, we have that
    \begin{align}
\frac 1d \lng \bx_{\sigma_{1:h-1}}\^{\sigma_h}, \pi \circ \bu\^s \rng = \E_{s-1} \left[ \One\{h\^s - 1 \equiv h\} \cdot \lng \bx_{\sigma_{1:h-1}}\^{\sigma_h}, \pi \circ \bu\^s \rng \right]\nonumber,
    \end{align}
    where the statement $h\^s - 1 \equiv h$ is to be interpreted modulo $d$ (i.e., we have $0 \equiv d$). Thus, for each $s \in [T]$, letting the binary representation of $\lfloor \frac{s-1}{N} \rfloor$ be $\sigma_{1:d}$, we have
    \begin{align}
\frac 1d \sum_{h=1}^{d-1} \lng \bx_{\sigma_{1:h-1}}\^{\sigma_h}, \pi \circ \bu\^s \rng = \sum_{h=1}^{d-1}  \E_{s-1} \left[ \One\{h\^s - 1 \equiv h\} \cdot \lng \bx_{\sigma_{1:h-1}}\^{\sigma_h}, \pi \circ \bu\^s \rng \right]\nonumber.
    \end{align}
    Thus, by the Azuma-Hoeffding inequality, with probability $1-\delta$, \noah{check}
    \begin{align}
\left| \sum_{\substack{s \in [T]\\ \sigma_{1:d}:= \lfloor \frac{s-1}{N} \rfloor}} \frac 1d \sum_{h=1}^{d-1} \lng \bx_{\sigma_{1:h-1}}\^{\sigma_h}, \pi \circ \bu\^s \rng - \sum_{\substack{s \in [T]\\ \sigma_{1:d}:= \lfloor \frac{s-1}{N} \rfloor}} \sum_{h=1}^{d-1} \One\{ h\^s -1 = h \} \cdot \lng \bx_{\sigma_{1:h-1}}\^{\sigma_h}, \pi \circ \bu\^s \rng \right| \leq C \sqrt{T \log(1/\delta)}\label{eq:ah-2},
    \end{align}
    for a sufficiently large constant $C$. 
    We moreover have that
    \begin{align}
      &  \sum_{\substack{s \in [T]\\ \sigma_{1:d}:= \lfloor \frac{s-1}{N} \rfloor}} \sum_{h=1}^{d-1} \One\{ h\^s -1 = h \} \cdot \lng \bx_{\sigma_{1:h-1}}\^{\sigma_h}, \pi \circ \bu\^s \rng\nonumber\\
      =& \sum_{h=1}^{d-1} \sum_{\sigma_{1:h} \in \Sigma_h} \sum_{s = \btau(\sigma_{1:h})}^{\etau(\sigma_{1:h})} \One\{ h\^s - 1 = h\} \cdot \lng \bx_{\sigma_{1:h-1}}\^{\sigma_h}, \pi \circ \bu\^s \rng\nonumber\\
      = & \sum_{h=2}^d \sum_{\sigma_{1:h-1} \in \Sigma_{h-1}} \sum_{s = \btau(\sigma_{1:h-1})}^{\etau(\sigma_{1:h-1})} \One\{ h\^s - 1 = h-1\} \cdot \lng \bx_{\sigma_{1:h-2}}\^{\sigma_{h-1}}, \pi \circ \bu\^s \rng\nonumber\\
      =&  \sum_{h=2}^d \sum_{\sigma_{1:h-1} \in \Sigma_{h-1}} \sum_{s \in \ol\MT(\sigma_{1:h-1})}\lng \bx_{\sigma_{1:h-2}}\^{\sigma_{h-1}}, \pi \circ \bu\^s \rng \nonumber\\
      \leq & \sum_{h=2}^d \sum_{\sigma_{1:h-1} \in \Sigma_{h-1}} \max_{a^\st \in [N]} \sum_{s \in \ol\MT(\sigma_{1:h-1})} \bu\^s[a^\st]\label{eq:banditpi-2}.
    \end{align}

Fix any $\delta \in (0,1)$. It follows  by a union bound over all $\pi : [N] \to [N]$ in (\ref{eq:ah-1}) and (\ref{eq:ah-2}) as well as (\ref{eq:learner-utility-bandit}), (\ref{eq:banditpi-1}), and (\ref{eq:banditpi-2}) that under some event $\ME_1$ occurring with probability $1-\delta/3$, for all $\pi : [N] \to [N]$,
    \begin{align}
      & \sum_{t=1}^T \left( \bu\^t[\pi(a\^t)] - \bu\^t[a\^t] \right)\nonumber\\
      \leq & \sum_{t=1}^t \lng \bp\^t, \pi \circ \bu\^t \rng - \sum_{h=1}^d  \sum_{\sigma_{1:h-1} \in \Sigma_{h-1}} \sum_{\sigma_{h} = 0}^{M'(\sigma_{1:h-1})}   \sum_{s \in \MT(\sigma_{1:h})} \bu\^s[a\^s] + C\sqrt{TN \log(N/\delta)}\nonumber\\
      \leq & \frac{T}{d} + \sum_{h=2}^d \sum_{\sigma_{1:h-1} \in \Sigma_{h-1}} \left( \max_{a^\st \in [N]} \sum_{s \in \ol\MT(\sigma_{1:h-1})} \bu\^s[a^\st] - \sum_{s \in \ol\MT(\sigma_{1:h-1})} \bu\^s[a\^s] \right) + C\sqrt{TN \log(N/\delta)}\label{eq:swap-combine},
    \end{align}
    where we have also used that $\bu\^t \in [0,1]^N$ for all $t \in [T]$. (In particular, the first inequality above uses \cref{eq:learner-utility-bandit,eq:ah-1}, and the second inequality uses \cref{eq:banditpi-1,eq:ah-2,eq:banditpi-2}.)

    Note that the draws $h\^t \sim [d]$ in \cref{line:sample-ht} are all independent. Thus, for each sequence $\sigma_{1:h} \in \Sigma_{h}$, %
    we have that $|\MT(\sigma_{1:h})| \sim \Bin(NM^{d-h}, 1/d)$. Thus, for any such sequence $\sigma_{1:h}$, it holds by a Chernoff bound that
    \begin{align}
\Pr\left( |\MT(\sigma_{1:h})| \leq  \frac{2NM^{d-h}}{d} \right) \geq 1 - e^{-NM^{d-h}/(3d)} \geq 1-e^{-N/(3d)}\nonumber,
    \end{align}
    and in the event that $\etau(\sigma_{1:h}) - \btau(\sigma_{1:h}) = N \cdot M^{d-h} - 1$,
    \begin{align}
\Pr\left( |\MT(\sigma_{1:h})| \in \left[ \frac{NM^{d-h}}{2d}, \frac{2NM^{d-h}}{d} \right]\right) \geq 1 - 2e^{-NM^{d-h}/(3d)} \geq 1-2e^{-N/(3d)}\nonumber,
    \end{align}
    By a union bound over $h \in [d]$ and $\sigma_{h} \in \Sigma_{h}$ for which $\etau(\sigma_{1:h}) - \btau(\sigma_{1:h}) = N \cdot M^{d-h} - 1$, we have that, under some event $\ME_2$ occurring with probability at least $1-\frac{2dT}{N} e^{-N/(3d)}$, for all such sequences $\sigma_{1:h}$, $|\MT(\sigma_{1:h})| \leq \frac{2NM^{d-h}}{d}$, and for $\sigma_{1:h}$ satisfying $\etau(\sigma_{1:h}) - \btau(\sigma_{1:h}) = N \cdot M^{d-h} - 1$,  $|\MT(\sigma_{1:h})| \in [ \frac{NM^{d-h}}{2d} , \frac{2NM^{d-h}}{d}]$. 

    Then by Lemma \ref{lem:exp3-full}\footnote{Note that technically, \cref{lem:exp3-full} applies to the main procedure in $\ExpMulti$ (\cref{alg:exp3-multi}), which is not directly called in  $\BanditTreeSwap$; however, note that this procedure is simulated exactly by the instances $\ExpMulti_{\sigma_{1:h-1}}$ in $\BanditTreeSwap$, where the quantities $K\^s$ in $\ExpMulti$ correspond to the number of time steps during each round of each $\ExpMulti_{\sigma_{1:h-1}}$ instance that $h\^t =h$.} with $T = M$ and $K = \frac{2NM^{d-h}}{d}$  (which are the parameters that the instance $\ExpMulti_{\sigma_{1:h-1}}$ was initialized with on \cref{line:bandit-params} of \cref{alg:bandit-treeswap}) \noah{todo say that it's ok to apply the lemma since each exp3 alg receives a random oblivious sample} and a union bound, there is some event $\ME_3$ with probability at least $1-\delta/3$ so that, under $\ME_2 \cap \ME_3$, for all $h \in [d]$ and $\sigma_{1:h-1} \in \Sigma_{h-1}$  for which $\etau(\sigma_{1:h-1}) - \btau(\sigma_{1:h-1}) = N \cdot M^{d-h+1} - 1$, 
    \begin{align}
\max_{a^\st \in [N]} \sum_{s \in \ol\MT(\sigma_{1:h-1})} \bu\^s[a^\st] - \sum_{s \in \ol\MT(\sigma_{1:h-1})} \bu\^s[a\^s] \leq &C \cdot \frac{N M^{d-h}}{d} \cdot M^{5/6} \cdot \log^2(TN/\delta) + N \cdot M^{5/6}\label{eq:use-exp}.
    \end{align}
    Moreover, note that for each $2 \leq h \leq d-1$, there is at most a single sequence $\sigma_{1:h-1} \in \Sigma_{h-1}$ for which $\etau(\sigma_{1:h-1}) - \btau(\sigma_{1:h-1}) <  N \cdot M^{d-h+1} - 1$. For any such sequence $\sigma_{1:h-1}$, we may bound
    \begin{align}
      \label{eq:extra-mass-bound}
\max_{a^\st \in [N]} \sum_{s \in \ol\MT(\sigma_{1:h-1})} \bu\^s[a^\st] - \sum_{s \in \ol\MT(\sigma_{1:h-1})} \bu\^s[a\^s] \leq |\ol\MT(\sigma_{1:h-1})| \leq \frac{2NM^{d-h+1}}{d},
    \end{align}
    where the final inequality holds under $\ME_2$. 
    
    Combining (\ref{eq:swap-combine}), (\ref{eq:use-exp}), and \cref{eq:extra-mass-bound}, we see that, under $\ME_1 \cap \ME_2 \cap \ME_3$ (which occurs with probability at least $1-\delta - \frac{2dT}{N} e^{-N/(3d)}$),
    \begin{align}
      & \max_{\pi : [N] \to [N]} \sum_{t=1}^T \left( \bu\^t[\pi(a\^t)] - \bu\^t[a\^t] \right)\nonumber\\
      \leq & \frac{T}{d} +C \sqrt{TN \log(TN/\delta)}\nonumber\\
      & +  \sum_{h=2}^d \left( \frac{2NM^{d-h+1}}{d} +  \sum_{\scriptsize\substack{\sigma_{1:h-1} \in \Sigma_{h-1}:\\ \etau(\sigma_{1:h-1}) - \btau(\sigma_{1:h-1}) = NM^{d-h+1}}} \left(C \cdot \frac{NM^{d-h}}{d} \cdot M^{5/6} \log^2(TN/\delta) + N \cdot M^{5/6}\right)\right)\nonumber\\
      \leq  &  \frac{T}{d} +C \sqrt{TN \log(N/\delta)} + \frac{2NM^{d-1}}{d} +  C \cdot T \cdot M^{-1/6} \log^2(TN/\delta) + TM^{-1/6} \cdot \sum_{h=2}^d M^{h-d}\nonumber\\
      \leq & \frac{3T}{d} + C \sqrt{TN \log(TN/\delta)} + C \cdot T \cdot \frac{\log^2(TN/\delta)}{M^{1/6}}\nonumber,
    \end{align}
    where the second inequality uses that $\sum_{h=2}^d M^{d-h+1} \leq M^{d-1}$, and the final inequality uses that $NM^{d-1} \leq T$ by assumption and that $\sum_{h=2}^d M^{h-2} \leq 2$. Choosing $\delta = 1/T$ yields that
    \[
T \cdot \left( \delta + \frac{2dT}{N} e^{-N/(3d)} \right) \leq 1 + \frac{2dT^2}{N} e^{-N/(3d)},
    \]
    and thus the claimed expected swap regret bound holds. 
    
  \end{proof}

  \paragraph{Analysis of $\ExpMulti$. }
  Next we state the main external regret guarantee for $\ExpMulti$. It states that when each $K\^t$ is within a constant factor of some parameter $K$, then the external regret of $\ExpMulti$ is bounded by the product of $K$ and a quantity which is sublinear in the number of rounds $T$. 
  \begin{lemma}
    \label{lem:exp3-full}
    Let $N, T, K \in \mathbb{N}$ be given, and let $K\^1, \ldots, K\^T$ be fixed so that for all $t\in [T]$, $K\^t \in [K/4, K]$. let $(\bu_k\^t \in [0,1]^N)_{t \in [T], k \in [K\^t]}$ be a fixed sequence of reward vectors. Then for any $\delta \in (0,1)$, with probability at least $1-\delta$, the actions $a_k\^t$ of $\mathtt{Exp3Multi}(N, T, T^{-1/2}, K^{-1} T^{-1/6}, K)$ (\cref{alg:exp3-multi}) satisfy
    \begin{align}
\max_{a^\st \in [N]} \sum_{t=1}^T \sum_{k=1}^{K\^t} \bu_k\^t[a^\st] - \bu_k\^t[a_k\^t] \leq K \cdot O \left( \log^2(TN/\delta) \cdot T^{5/6}\right) + N \cdot T^{5/6} \nonumber. %
    \end{align}
  \end{lemma}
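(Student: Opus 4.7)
I will prove \cref{lem:exp3-full} by analyzing $\ExpMulti$ as a mini-batched variant of Neu's EXP3-IX algorithm: in each round $t$ the algorithm draws $K\^t$ i.i.d.\ actions from a fixed distribution $\bp\^t$ and then performs a single exponential-weights update using the averaged IX estimator $\hat Y\^t$. Writing $\ell_k\^t[a] := 1 - \bu_k\^t[a] \in [0,1]$ and $\ell\^t[a] := \sum_{k=1}^{K\^t} \ell_k\^t[a] \in [0,K]$, the proof will combine three standard ingredients: a deterministic exponential-weights telescoping inequality, the high-probability IX concentration lemma of Neu, and two applications of Azuma--Hoeffding to account for the gap between the learner's actual loss and its conditional expectation.

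The first step is the exponential-weights step. Since $\hat Y_a\^t \ge 0$, I can apply the elementary inequality $e^{-x}\le 1-x+x^2/2$ on $[0,\infty)$ together with $\log(1+y)\le y$ in the standard potential argument, yielding, for every fixed $a^\st \in [N]$, the deterministic inequality
$$\sum_{t=1}^T \langle \bp\^t, \hat Y\^t \rangle - \sum_{t=1}^T \hat Y_{a^\st}\^t \le \frac{\log N}{\eta} + \frac{\eta}{2} \sum_{t=1}^T \sum_a \bp\^t[a] (\hat Y_a\^t)^2,$$
with no step-size restriction. The key IX identity $\hat Y_a\^t \le K\^t/(K(\bp\^t[a]+\gamma)) \le 1/(\bp\^t[a]+\gamma)$ then gives $\bp\^t[a] (\hat Y_a\^t)^2 \le \hat Y_a\^t \cdot \bp\^t[a]/(\bp\^t[a]+\gamma) \le \hat Y_a\^t$, collapsing the second-order term to $\sum_{t,a} \hat Y_a\^t$.

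Next, I invoke Neu's IX concentration lemma applied to the per-sample importance-weighted estimators $\tilde Y^{t,k}[a] := \One\{a_k\^t = a\}\ell_k\^t[a]/(\bp\^t[a]+\gamma)$; since $K \hat Y_a\^t = \sum_k \tilde Y^{t,k}[a]$, a union bound over $a \in [N]$ gives, with probability at least $1-\delta/3$, that for all $a$,
$$K \sum_{t=1}^T \hat Y_a\^t \le \sum_{t,k} \ell_k\^t[a] + \frac{\log(3N/\delta)}{2\gamma}.$$
Applied at $a = a^\st$ this bounds the comparator term in the EW inequality, and summed over $a$ it controls the variance term through $\sum_{t,a} \hat Y_a\^t \le NT + N\log(3N/\delta)/(2K\gamma)$. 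To close the loop to the learner's actual loss, I use that $\E_{t-1}[\sum_k \ell_k\^t[a_k\^t]] = \langle \bp\^t, \ell\^t \rangle$ together with the IX bias bound $\langle \bp\^t, \ell\^t \rangle - \E_{t-1}[K\langle \bp\^t, \hat Y\^t \rangle] = \sum_a \bp\^t[a]\ell\^t[a] \cdot \gamma/(\bp\^t[a]+\gamma) \le \gamma K N$, and two Azuma--Hoeffding bounds with per-round increments bounded by $K$ to obtain, with probability at least $1-\delta/3$,
$$\sum_{t,k} \ell_k\^t[a_k\^t] \le K \sum_t \langle \bp\^t, \hat Y\^t \rangle + \gamma KNT + CK\sqrt{T \log(3/\delta)}.$$

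Combining the three steps, plugging in $\eta = T^{-1/2}$ and $\gamma = K^{-1} T^{-1/6}$, and collecting, the regret is bounded by the sum of $K\log N/\eta$, $K\eta \cdot (NT + N\log(3N/\delta)/(2K\gamma))/2$, $\log(3N/\delta)/(2\gamma)$, $\gamma KNT$, and an $O(K\sqrt{T\log(3/\delta)})$ Azuma term. The IX-bias term $\gamma KNT$ is exactly $NT^{5/6}$, and each of the remaining terms is bounded by $K \cdot T^{5/6} \cdot \log^2(TN/\delta)$ (after noting that $\eta NT \cdot K/2 = KN\sqrt T/2 \le K T^{5/6}\log^2 + N T^{5/6}$ by splitting on whether $N \le T^{1/3}$), yielding the stated bound. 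The main technical obstacle is avoiding a $1/\bp\^t[a]$ blow-up in the second-moment term $\sum_a \bp\^t[a](\hat Y_a\^t)^2$, which I sidestep using the deterministic IX inequality above that replaces second-moment concentration by a linear-in-$\hat Y$ sum already controlled by Neu's lemma; the remaining care is in verifying that the high-probability constants combine across the three events at the choice $\delta = 1/T$ used downstream in \cref{lem:bandit-tree-swap}.
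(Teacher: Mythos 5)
Your proposal follows the standard single-sample \texttt{EXP3-IX} template of Neu (deterministic potential bound, then the inequality $\bp\^t[a](\hat Y_a\^t)^2 \le \hat Y_a\^t$, then Neu's IX concentration to control both the comparator term and the variance term $\sum_{t,a}\hat Y_a\^t$), which is a genuinely different route from the paper. The paper instead controls the variance term by first proving a per-round concentration bound $\hat Y_a\^s - \by\^s[a] \le \frac{\log(TN/\delta)}{2K\gamma}$ (Lemma~\ref{lem:haty-y-conc}, which has no analogue in your argument and is specific to having $\Theta(K)$ samples per round), and then bounds $\bp\^s[a](\hat Y_a\^s)^2 \le \bp\^s[a]\bigl(2(\by\^s[a])^2 + 2\log^2(TN/\delta)/(K\gamma)^2\bigr)$. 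The crucial feature of this bound is that it keeps the factor $\bp\^s[a]$ intact, so summing over $a$ costs only $\sum_a \bp\^s[a] = 1$; the resulting variance contribution is $O\bigl(\eta T(1 + \log^2/(K\gamma)^2)\bigr) = O(T^{5/6}\log^2)$, with \emph{no} linear dependence on $N$.

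This is where your version has a genuine gap. After the step $\bp\^t[a](\hat Y_a\^t)^2 \le \hat Y_a\^t$ the sum over $a$ no longer benefits from the normalization $\sum_a\bp\^t[a]=1$: on Neu's concentration event, $\sum_{t,a}\hat Y_a\^t$ scales as $\sum_{t,a}\by\^t[a] + N\log(N/\delta)/(2K\gamma)$, which is $\Theta(NT)$. The resulting contribution to the regret (after multiplying through by $K$) is of order $K\eta NT = KN\sqrt{T}$. Your proposed case split on $N \lessgtr T^{1/3}$ does not rescue this: when $N > T^{1/3}$, absorbing $KN\sqrt{T}/2$ into $NT^{5/6}$ requires $K \le 2T^{1/3}$, and absorbing it into $KT^{5/6}\log^2$ requires $N \le O(T^{1/3}\log^2)$. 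Neither is assumed by the lemma, and both fail in the actual application inside \cref{lem:bandit-tree-swap}, where the instance $\ExpMulti_{\sigma_{1:h-1}}$ at level $h$ is run with horizon $T = M$ and batch size $K = 2NM^{d-h}/d$ — so for small $h$ one has $K \gg M^{1/3} = T^{1/3}$, and $N$ is also unconstrained relative to $M$. The same issue affects your second variance sub-term $\frac{\eta N\log(N/\delta)}{4\gamma} = \Theta(KN T^{-1/3}\log)$, which again exceeds the target bound when $N$ and $K$ are both large relative to the exp3 horizon. In short, the Neu-style bound $\bp[a]\hat Y_a^2 \le \hat Y_a$ works when $\eta$ is as small as $\sqrt{\log N/(NT)}$ (absorbing the $NT$ into $\eta$), but the parameter choice $\eta = T^{-1/2}$ in this lemma is too aggressive for that, and the paper's multi-sample concentration argument is what makes the larger $\eta$ admissible. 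Your steps 1 and 3 (the EW telescoping, the IX bias identity, and the Azuma terms, including the observation that $K\langle\bp\^t,\hat Y\^t\rangle \le \sum_k \ell_k\^t[a_k\^t] \le K$ which keeps the martingale increments bounded) are correct and in fact a bit cleaner than the paper's truncation-indicator argument in Lemma~\ref{lem:concentration-var}; the obstacle is solely the variance term.
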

  \begin{proof}
Let $\eta = \frac{1}{\sqrt{T}}, \gamma = \frac{1}{KT^{1/6}}$ be the parameters passed to $\mathtt{Exp3Multi}$. 
    
    Recall from the definitions in Algorithm \ref{alg:exp3-multi} that $\hat L_a\^t = \sum_{s=1}^t \hat Y_a\^s$ and $\hat Y_a\^t := \frac{1}{K} \cdot \sum_{k=1}^{K\^t} \left( \frac{\mathbbm{1}\{a_k\^t = a\} \cdot (1-u_k\^t)}{ \bp\^t[a] + \gamma} \right)$ for each $t \in [T]$. We also define $\hat L\^t =\sum_{s=1}^t \sum_{a=1}^N  \bp\^s[a] \cdot \hat Y_a\^s$. Additionally, we define $\by\^t_k = \mathbf{1} - \bu\^t_k$, $y_k\^t = 1-u_k\^t$, and 
    \begin{align}
\by\^t := \frac 1K \sum_{k=1}^{K\^t} \by_k\^t, \qquad y\^t =\frac 1K  \sum_{k=1}^{K\^t} y_k\^t\nonumber
    \end{align}
    to denote the total \emph{loss} vectors and realized \emph{losses} in each time step $t \in [T]$, as well as
    \begin{align}
\tilde L\^t = \sum_{s=1}^t y\^s, \qquad L\^t_{a} = \sum_{s=1}^t \by\^s[a], \qquad  R\^t_{a} = \sum_{s=1}^t y\^s - \sum_{s=1}^t \by\^s[a] = \tilde L\^t - L\^t_{a}\nonumber
    \end{align}
    to denote, respectively, the cumulative loss up to $t$ experienced by the learner, the cumulative loss up to $t$ experienced by taking action $a$, and the cumulative regret associated with action $a$ up to time step $t$.  Finally, for each $k \in [K\^t]$, write $\hat Y_a\^{t,k} := \frac{\One\{a_k\^t = a\} \cdot (1-u_k\^t)}{\bp\^t[a] + \gamma}$, so that $\hat Y_a\^t = \frac{1}{K} \sum_{k=1}^{K\^t} \hat Y_a\^{t,k}$.

\paragraph{Step 1: Regret decomposition.}    We consider the following decomposition of $\hat R\^t_{a}$:
    \begin{align}
      \label{eq:reg-decomposition}
\hat R\^t_{a} = (\tilde L\^t - \hat L\^t) + (\hat L\^t - \hat L\^t_a) + (\hat L\^t_a - L\^t_a).
    \end{align}
    We bound each of the terms in (\ref{eq:reg-decomposition}) in the below lemmas, whose proofs are provided in \cref{sec:bandit-lemmas-proofs}. 
    \begin{lemma}
      \label{lem:hedge-term}
Let $\delta \in (0,1)$ be given. Then with probability at least $1-\delta$,  for each $t \in [T]$ and $a \in [N]$, 
      \begin{align}
\hat L\^t - \hat L\^t_a \leq \frac{\log N}{\eta} + 2\eta t + \frac{2\eta t \log^2(TN/\delta)}{(K\gamma)^2}\nonumber. %
      \end{align}
    \end{lemma}

    \begin{lemma}
      \label{lem:concentration-var}
      There is a sufficiently large constant $C$ so that for any $\delta \in (0,1)$, with probability at least $1-\delta$, %
      \begin{align}
\max_{a \in [N]} (\hat L\^T_a - L\^T_a) \leq  \frac{C\log(TN/\delta)}{K\gamma} \cdot \sqrt{T \log(N/\delta)}\nonumber. %
      \end{align}
    \end{lemma}

    \begin{lemma}
      \label{lem:concentration-bias}
For all $t \in [T]$, it holds that $\tilde L\^t - \hat L\^t = \gamma \sum_{a=1}^N \hat L\^t_a$. 
\end{lemma}

\paragraph{Step 2: putting it all together.} 
Using \cref{lem:hedge-term,lem:concentration-var,lem:concentration-bias}, we may now bound $\hat R\^t_a$ using (\ref{eq:reg-decomposition}) as follows: with probability at least $1-2\delta$, we have that for all $a \in [N]$, 
\begin{align}
  R\^T_a\leq & \frac{\log N}{\eta} + 2\eta T + \frac{2\eta T \log^2(TN/\delta)}{(K\gamma)^2} + \frac{C \log(TN/\delta)}{K\gamma} \cdot \sqrt{T \log(N/\delta)} + \gamma \sum_{a=1}^N \hat L_a\^T\nonumber\\
  \leq & \frac{\log N}{\eta} + 2\eta T + \frac{2\eta T \log^2(TN/\delta)}{(K\gamma)^2} + \frac{C \log(TN/\delta)}{K\gamma} \cdot \sqrt{T \log(N/\delta)} +\frac{ C \log(TN/\delta) \sqrt{T \log(N/\delta)}}{K} + \gamma NT\nonumber,
\end{align}
where the final inequality uses Lemma \ref{lem:concentration-var} again and the fact that $L_a\^T \leq T$ for all $a \in [N]$. By our choices of $\eta = 1/\sqrt{T}$ and $\gamma = \frac{1}{KT^{1/6}}$, we obtain that, with probability $1-2\delta$, 
\begin{align}
R_a\^t \leq C \log^2(TN/\delta) \cdot T^{5/6} + T^{5/6} \cdot \frac{N}{K}\nonumber.
\end{align}
The proof is completed by noting that $R_a\^t = \frac 1K \cdot \sum_{t=1}^T \sum_{k=1}^{K\^t} \bu_k\^t[a] - \bu_k\^t[a_k\^t]$.  
\end{proof}

\subsubsection{Proofs of lemmas}
\label{sec:bandit-lemmas-proofs}
In this section, we prove \cref{lem:hedge-term,lem:concentration-var,lem:concentration-bias}. We begin with the following lemma establishing concentration for the $\hat Y_a\^t$ values in each round $t$.
\begin{lemma}
  \label{lem:haty-y-conc}
  Fix $\delta \in (0,1)$. Then with probability at least $1-\delta$, for all $s \in [T]$ and $a \in [N]$,
  \begin{align}
\hat Y\^s_a  - \by\^s[a] \leq \frac{\log(TN/\delta)}{2K \gamma}\nonumber.
  \end{align}
\end{lemma}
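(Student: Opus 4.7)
The plan is to prove the bound pointwise in $s$ and $a$ with a Chernoff-style argument using the implicit-exploration trick (cf.\ Neu 2015), and then union bound over $s \in [T]$ and $a \in [N]$ with failure budget $\delta/(TN)$ per event.

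Fix $s \in [T]$ and $a \in [N]$ and condition on $\mathcal{F}^{s-1}$. Under this conditioning the distribution $\bp^s$, the count $K^s$, and the deterministic loss vectors $\bu_k^s$ (oblivious adversary) are all fixed, while $a_1^s,\ldots,a_{K^s}^s \stackrel{\text{iid}}{\sim} \bp^s$. The key algebraic observation I would use is that the summands of $\hat Y_a^s$,
$$Z_k := \frac{\One\{a_k^s = a\}(1-u_k^s)}{\bp^s[a]+\gamma} = \frac{\One\{a_k^s = a\}\,(1-\bu_k^s[a])}{\bp^s[a]+\gamma},$$
simplify because $u_k^s = \bu_k^s[a_k^s]$ agrees with $\bu_k^s[a]$ on the event $\{a_k^s = a\}$. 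Hence, conditional on $\mathcal{F}^{s-1}$, the $Z_k$ are independent, lie in $[0,1/\gamma]$, and satisfy $\E[Z_k\mid\mathcal{F}^{s-1}] = \frac{\bp^s[a]}{\bp^s[a]+\gamma}\,\by_k^s[a] \le \by_k^s[a]$.

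The heart of the argument is the IX moment-generating-function bound: for $\lambda = 2\gamma$,
$$\log \E\!\left[\exp(\lambda Z_k)\,\middle|\,\mathcal{F}^{s-1}\right] \;\le\; \lambda\, \by_k^s[a].$$
Writing $p = \bp^s[a]$ and $q_k = 1-\bu_k^s[a]$, the left-hand side equals $\log\!\bigl(1 + p(e^{2\gamma q_k/(p+\gamma)} - 1)\bigr)$, and I would derive the inequality from the elementary estimate $e^y \le 1 + y/(1 - y/2)$ (valid for $y \in [0,2)$) applied with $y = 2\gamma q_k/(p+\gamma) \in [0,2)$, combined with $\log(1+x)\le x$; after simplification the right-hand side collapses to $2\gamma q_k = \lambda \by_k^s[a]$. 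This is exactly the implicit-exploration step whose proof I regard as the main technical obstacle; it is the place where the clever choice $\lambda = 2\gamma$ matched to the additive bias $\gamma$ in the denominator of the estimator is essential.

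Given the MGF bound, independence gives $\E\bigl[\exp(2\gamma \sum_k Z_k) \mid \mathcal{F}^{s-1}\bigr] \le \exp(2\gamma \sum_k \by_k^s[a])$, and Markov's inequality yields, for any $t \ge 0$,
$$\Pr\!\left(\sum_{k=1}^{K^s} Z_k - \sum_{k=1}^{K^s} \by_k^s[a] \ge t \,\middle|\, \mathcal{F}^{s-1}\right) \;\le\; e^{-2\gamma t}.$$
Setting $t = \log(TN/\delta)/(2\gamma)$ and dividing through by $K$ gives $\hat Y_a^s - \by^s[a] \le \log(TN/\delta)/(2K\gamma)$ with conditional failure probability $\delta/(TN)$. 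Taking expectations removes the conditioning, and a union bound over $s \in [T]$ and $a \in [N]$ yields the lemma.
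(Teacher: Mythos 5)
Your proof is correct and follows essentially the same route as the paper: both bound $\hat Y_a^s - \by^s[a]$ via a Chernoff argument built on the implicit-exploration MGF trick with $\lambda = 2\gamma$ matched to the additive bias $\gamma$ in the estimator's denominator, then union bound over $(s,a)$. The only difference is that the paper invokes a general martingale concentration lemma as a black box (Lemma~\ref{lem:concentration-gamma}, i.e.\ Lemma~12.2 of \cite{lattimore2020bandit}), whereas you re-derive the required MGF inequality $\log\E[e^{2\gamma Z_k}\mid\MF^{s-1}]\le 2\gamma\,\by_k^s[a]$ from scratch using the i.i.d.\ structure within a round, which is a self-contained but mathematically equivalent presentation of the same argument.
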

\begin{proof}
For $s \in [T]$ and $k \in [K\^s]$, let $\MF\^{s,k}$ denote the sigma-algebra generated by all random variables prior to the beginning of step $k$ of round $s$ in $\ExpMulti$ (\cref{alg:exp3-multi}; so in particular, $\bp\^s$ is $\MF\^{s,1}$-measurable). Consider any fixed $s \in [T]$ and $a \in [N]$. Then for each $k \in [K\^s]$,
    \begin{align}
\E\left[ \frac{\One\{a_k\^s = a\} \cdot y_k\^s}{\bp\^s[a]} \mid \MF\^{s,k}\right] = \frac{\bp\^s[a] \cdot \by\^s[a]}{\bp\^s[a]} = \by\^s[a]\label{eq:tily-unbiased}.
    \end{align}
    We next apply Lemma \ref{lem:concentration-gamma} with $T = K\^s$ to the sequence of random variables $\left(\frac{\One\{a_k\^s = a\} \cdot \by\^s_k[a]}{\bp\^s[a]}\right)_{k=1}^{K\^s}$. The values of the parameters are defined as follows: all values of $y\^t$ are set to $\by\^s[a]$, all parameters $\lambda\^t$ are set to $\frac{\gamma}{\bp\^s[a]}$, and all parameters $\alpha\^t$ are set to $2\gamma$. The precondition is satisfied by (\ref{eq:tily-unbiased}). 
  Then  under some event $\ME_{a,s}$ that occurs with probability $1-\delta/(NT)$, %
    \begin{align}
      \sum_{k=1}^{K\^s} (\hat Y_a\^{s,k} - \by_k\^s[a]) =& \sum_{k=1}^{K\^s} \left(\frac{1}{1 + \frac{\gamma}{\bp\^s[a]}} \cdot \frac{\One\{a_k\^s = a\} \cdot \by_k\^s[a]}{\bp\^s[a]} - \by_k\^s[a] \right) \nonumber\\
      \leq &  \frac{\log(TN/\delta)}{2\gamma}\nonumber.
    \end{align}
The lemma statement follows by a union bound over $s$ and $a$, as well as the fact that $\hat Y_a\^s = \frac 1K \sum_{k=1}^K \hat Y_a\^{s,k}$ and $\by\^s[a] = \frac 1K \sum_{k=1}^K \by_k\^s[a]$. 
\end{proof}

Next we prove \cref{lem:concentration-var}, which establishes concentration of the values $\hat L_a\^t$ to $L_a\^t$.
\begin{proof}[Proof of Lemma \ref{lem:concentration-var}]
  Write $R :=  \frac{\log(TN/\delta)}{2K\gamma} $. 
  We have $\hat L_a\^T = \sum_{t=1}^T \hat Y_a\^t$ and $L_a\^T = \sum_{t=1}^T \by\^t[a]$. Let $\BI\^t_a := \One \left\{ \hat Y_a\^t - \by\^t[a] \leq R \right\}$. Note that $\hat Y_a\^t \leq K\^t/\gamma$ for each $a \in [N], t \in [T]$. The Azuma-Hoeffding inequality gives that, with probability at least $1-\delta/N$, 
  \begin{align}
    \hat L_a\^t - L_a\^t =& \sum_{t=1}^T (\hat Y_a\^t - \by\^t[a]) \nonumber\\
    \leq & \sum_{t=1}^T \BI_a\^t \cdot (\hat Y_a\^t- \by\^t[a]) + \sum_{t=1}^T \frac{K\^t}{\gamma} \cdot (1-\BI_a\^t)\nonumber\\
    \leq & CR \sqrt{T \log(N/\delta)} + \sum_{t=1}^T \frac{K\^t}{\gamma} \cdot (1- \BI_a\^t)\label{eq:hatla-la},
  \end{align}
  for a sufficiently large constant $C$. Azuma-Hoeffding is applied in the following manner: let $\MF\^t$ denote the sigma-algebra generated by all random variables up to the end of round $t$. Then $\BI_a\^t \cdot (\hat Y_a\^t - \by\^t[a]) \in [-1,R]$ almost surely, and for each $t$,
  \begin{align}
    \E\left[ \BI_a\^t \cdot (\hat Y_a\^t - \by\^t[a]) \mid \MF\^{t-1} \right] \leq &  \E\left[ \hat Y_a\^t - \by\^t[a] \mid \MF\^{t-1} \right]\nonumber\\
    \leq &  \frac 1K \sum_{k=1}^{K\^t}\left( \E \left[ \frac{\One\{a_k\^t = a\} \cdot \by_k\^t[a]}{\bp\^t[a]} \mid \MF\^{t-1}\right] - \by\^t_k[a]\right)=0\nonumber.
  \end{align}
  The above verifies that the assumptions of Azuma-Hoeffing are satisfied for the random variables $\BI_a\^t \cdot (\hat Y_a\^t - \by\^t[a])$ (i.e., they form a supermartingale with respect to the filtration $\MF\^t$). Taking a union bound in (\ref{eq:hatla-la}) over all $a \in [N]$ and using the fact that, by Lemma \ref{lem:haty-y-conc}, $\BI_a\^t = 1$ for all $a$ and $t$ with probability at least $1-\delta$, we conclude that with probability at least $1-2\delta$, $\max_a (\hat L_a\^t - L_a\^t) \leq CR \sqrt{T \log(N/\delta)}$, concluding the proof of the lemma. 
\end{proof}

\begin{proof}[Proof of Lemma \ref{lem:concentration-bias}]
     We compute
     \begin{align}
       \tilde L\^t - \hat L\^t =&  \sum_{s=1}^t y\^s - \sum_{s=1}^t \sum_{a=1}^N \hat \bp\^s[a] \cdot \hat Y_a\^s \nonumber\\
       =& \frac 1K \sum_{s=1}^t \left( \sum_{k=1}^{K\^s}  y_k\^s - \sum_{a=1}^N \hat \bp\^s[a] \cdot \sum_{k=1}^{K\^s} \left( \frac{\One\{a_k\^s = a\} \cdot y_k\^s}{\bp\^s[a] + \gamma}\right)\right)\nonumber\\
       = & \frac 1K \sum_{s=1}^t \left(\sum_{k=1}^{K\^s} y_k\^s - \sum_{a=1}^N \left( 1 - \frac{\gamma}{\bp\^s[a] + \gamma} \right) \cdot \sum_{k=1}^{K\^s}  \left( \One\{a_k\^s = a\} \cdot y_k\^s\right)\right)\nonumber\\
       = & \frac{\gamma}{K} \sum_{s=1}^t  \sum_{a=1}^N \sum_{k=1}^{K\^s} \frac{\One\{a_k\^s = a\} \cdot y_k\^s}{\bp\^s[a] + \gamma}\nonumber\\
       =& \gamma \sum_{s=1}^t \sum_{a=1}^N \hat Y_a\^s = \gamma \sum_{a=1}^N \hat L_a\^t, \nonumber
     \end{align}
     as desired. 
  \end{proof}

  Finally, we prove \cref{lem:hedge-term}, which uses the definition of the exponential weights updates to establish that $\hat L\^t- \hat L_a\^t$ is bounded for each $a,t$. 
  \begin{proof}[Proof of Lemma \ref{lem:hedge-term}]
    For each $a \in [N], t \in [T]$, define $\hat X_a\^t = 1-\hat Y_a\^t$, $\hat S_a\^t = \sum_{s=1}^t \hat X_a\^s$, and $\hat S\^t = \sum_{s=1}^t \sum_{a=1}^N \bp\^s[a] \cdot \hat X_a\^s$. Note that, for all $t \in [T]$ and $a \in [N]$, $\hat S_a\^t = t - \hat L_a\^t$. 
    Then from \cref{line:exp3-ew-update} of \cref{alg:exp3-multi}, we have
    \begin{align}
\bp\^t[a] = \frac{\exp \left( \eta \hat S_a\^{t-1}\right)}{\sum_{b=1}^N \exp \left( \eta \hat S_b\^{t-1} \right)}\nonumber.
    \end{align}
    Define $W\^t := \sum_{a=1}^N \exp(\eta \hat S_a\^t)$, so that $W\^0 = N$. For each $t \in [T]$, we have
    \begin{align}
      \frac{W\^t}{W\^{t-1}} =& \sum_{a=1}^N \frac{\exp(\eta \hat S_a\^{t-1}) \cdot \exp(\eta \hat X_a\^t)}{W\^{t-1}}\nonumber\\
      =& \sum_{a=1}^N \bp\^t[a] \cdot \exp(\eta) \cdot \exp(\eta (\hat X_a\^t-1))\nonumber\\
      \leq & \exp(\eta) \cdot \sum_{a=1}^N \bp\^t[a]\cdot \left( 1 + \eta (\hat X_a\^t-1) + \frac{\eta^2}{2}(\hat X_a\^t-1)^2 \right)\nonumber\\
      \leq & \exp(\eta) \cdot \exp \left( \sum_{a=1}^N \eta \bp\^t[a] (\hat X_a\^t-1) + \sum_{a=1}^N \frac{\eta^2}{2} \bp\^t[a] (\hat X_a\^t-1)^2 \right)\nonumber\\
      =&  \exp \left( \sum_{a=1}^N \eta \bp\^t[a] X_a\^t + \sum_{a=1}^N \frac{\eta^2}{2} \bp\^t[a] (\hat X_a\^t-1)^2 \right)\nonumber
    \end{align}
    where the first inequality uses that $\eta (\hat X_a\^t-1) \leq 0$ since $\hat X_a\^t = 1-\hat Y_a\^t \leq 1$, and the fact that $\exp(x) \leq 1 + x + \frac{x^2}{2}$ for $x \leq 0$. Then
    \begin{align}
      \exp \left( \eta \hat S_a\^t \right) \leq W\^t =&  W\^0 \cdot \frac{W\^1}{W\^0} \cdots \frac{W\^t}{W\^{t-1}} \leq N \cdot \exp \left(  \eta \sum_{s=1}^t \sum_{a=1}^N \bp\^s[a] \hat X_a\^s + \eta^2 \sum_{s=1}^t \sum_{a=1}^N \bp\^s[a] (\hat X_a\^s-1)^2 \right)\nonumber\\
      =& N \cdot \exp \left(  \eta \hat S\^t + \eta^2 \sum_{s=1}^t \sum_{a=1}^N \bp\^s[a] (\hat X_a\^s-1)^2 \right)\nonumber.
    \end{align}
    Taking the logarithm and rearranging gives that
    \begin{align}
\hat L\^t - \hat L\^t_a = \hat S\^t_a - \hat S\^t \leq \frac{\log N}{\eta} + \eta \sum_{s=1}^t \sum_{a=1}^N \bp\^s[a] (\hat Y_a\^s)^2\label{eq:lhat-lhata}.
    \end{align}

    Note that $\hat Y_a\^{s,k} \geq 0 $ for all $a,s,k$ and thus $\hat Y_a\^s \geq 0$ for all $a,s$. Then under the event $\ME$ of Lemma \ref{lem:haty-y-conc}, which occurs with probability at least $1-\delta$, for all $s \in [T]$ and $a \in [N]$, %
    \begin{align}
      \bp\^s[a] \cdot (\hat Y_a\^s)^2 \leq & %
      \bp\^s[a] \cdot \left( 2 (\by\^s[a])^2 + \frac{2 \log^2(TN/\delta)}{(K\gamma)^2}\right)\nonumber.
    \end{align}
    Using that $\by\^s[a] \in [0,1]$ for all $s,a$ and combining the above display with (\ref{eq:lhat-lhata}) yields that, under $\ME$, 
    \begin{align}
\hat L\^t - \hat L\^t_a \leq \frac{\log N}{\eta} + \eta \sum_{s=1}^t \sum_{a=1}^N \bp\^s[a] \cdot \left( 2 + \frac{2 \log^2(TN/\delta)}{(K\gamma)^2}\right)\leq \frac{\log N}{\eta} + 2\eta t + \frac{2\eta t \log^2(TN/\delta)}{(K\gamma)^2}\nonumber,
    \end{align}
    as desired.
  \end{proof}
  
The following lemma establishes concentration of a martingale with potentially heavy tails. 
\begin{lemma}[Lemma 12.2 of \cite{lattimore2020bandit}]
  \label{lem:concentration-gamma}
  Let $(y\^t)_{t=1}^T$ denote a fixed sequence of real numbers, let $(\MF\^t)_{t=1}^T$ be a filtration, and let $\tilde Y\^t$ be a real-valued sequence adapted to the filtration $\MF\^t$. Suppose that $\E[\tilde Y\^t \mid \MF\^{t-1}] = y\^t$ for all $T \in [T]$. 

  Moreover, let $(\alpha\^t)_{t \in [T]}$ and $(\lambda\^t)_{t \in [T]}$ be real-valued $\MF\^t$-predictable sequences of random variables so that for all $a,t$, it holds that $0 \leq \alpha\^t \tilde Y\^t \leq 2 \lambda\^t$. Then for all $\delta \in (0,1)$,
  \begin{align}
\mathbb{P} \left( \sum_{t=1}^T  \alpha\^t \cdot \left(\frac{\tilde Y\^t}{1 + \lambda\^t}- y\^t \right) \geq \log(1/\delta) \right) \leq \delta\nonumber.
  \end{align}
\end{lemma}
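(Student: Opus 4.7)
The plan is to use the standard exponential-supermartingale (Chernoff-style) argument. The main input is the real-analytic inequality
\[
\log(1+x) \;\geq\; \frac{x}{1+x/2} \qquad \text{for all } x \geq 0,
\]
which can be verified by checking the two sides agree at $x=0$ and comparing derivatives (or equivalently by noting $(2+x)\log(1+x) \geq 2x$). Combined with the hypothesis $\alpha\^t \tilde Y\^t \leq 2\lambda\^t$, which gives $1+\lambda\^t \geq 1 + \alpha\^t \tilde Y\^t/2$, this yields the pointwise bound
\[
\frac{\alpha\^t \tilde Y\^t}{1+\lambda\^t} \;\leq\; \frac{\alpha\^t \tilde Y\^t}{1 + \alpha\^t \tilde Y\^t/2} \;\leq\; \log\bigl(1 + \alpha\^t \tilde Y\^t\bigr).
\]
Exponentiating gives $\exp\!\left(\frac{\alpha\^t \tilde Y\^t}{1+\lambda\^t}\right) \leq 1 + \alpha\^t \tilde Y\^t$ almost surely, which is the key per-step control.

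Next, define $M\^t := \exp\!\left(\sum_{s=1}^{t} \alpha\^s \left(\frac{\tilde Y\^s}{1+\lambda\^s} - y\^s \right)\right)$, with $M\^0 = 1$. Since $\alpha\^t$ and $\lambda\^t$ are $\MF\^{t-1}$-measurable, and since $\E[\tilde Y\^t \mid \MF\^{t-1}] = y\^t$, we may take conditional expectation to obtain
\[
\E[M\^t \mid \MF\^{t-1}] \;=\; M\^{t-1} \cdot e^{-\alpha\^t y\^t} \cdot \E\!\left[\exp\!\left(\tfrac{\alpha\^t \tilde Y\^t}{1+\lambda\^t}\right) \,\middle|\, \MF\^{t-1}\right] \;\leq\; M\^{t-1}\cdot e^{-\alpha\^t y\^t}\cdot (1 + \alpha\^t y\^t) \;\leq\; M\^{t-1},
\]
using $1 + x \leq e^x$ in the last step. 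So $(M\^t)$ is a nonnegative supermartingale with $\E[M\^T] \leq 1$.

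Finally, apply Markov's inequality:
\[
\mathbb{P}\!\left( \sum_{t=1}^T \alpha\^t \left( \tfrac{\tilde Y\^t}{1+\lambda\^t} - y\^t\right) \geq \log(1/\delta) \right) \;=\; \mathbb{P}\bigl( M\^T \geq 1/\delta \bigr) \;\leq\; \delta \cdot \E[M\^T] \;\leq\; \delta,
\]
which is the desired concentration bound. The only subtle step is the choice of the right pointwise inequality relating $x/(1+\lambda)$ to $\log(1+x)$; everything else is routine, so I do not expect a significant obstacle. One minor caveat is that the lemma statement contains a typo (``for all $a,t$'' where there is no $a$); the argument applies to the intended reading that $0 \leq \alpha\^t \tilde Y\^t \leq 2\lambda\^t$ holds almost surely for every $t$.
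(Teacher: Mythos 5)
Your proof is correct, and it is precisely the standard argument for Lemma 12.2 of Lattimore--Szepesv\'ari: the pointwise bound $\frac{x}{1+x/2}\le\log(1+x)$ for $x\ge 0$ combined with $0\le\alpha\^t\tilde Y\^t\le 2\lambda\^t$, followed by the exponential supermartingale and Markov's inequality. Note that the paper does not reprove this lemma---it simply cites it---so there is no internal proof to compare against; your write-up matches the cited source's proof. You are also right that the ``for all $a,t$'' in the hypothesis is a stray typo (the $a$ index plays no role) and the intended reading is the one you used.
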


  }

\clearpage

\section{An oblivious lower bound on swap regret}
\label{sec:oblivious-lb}

In this section, we make a slight adjustment to notation, considering an adversary that selects, at each time step $t$, a reward vector $\vut \in [0,1]^N$ as opposed to a reward function $\vft: [N] \to [0,1]$.  These are functionally identical, and it's a change we make only to match the notational choice of classic results in this space.

\begin{theorem}\label{thm:lower-main}
    Given $T$ rounds of online learning over $N$ actions, there exists a randomized oblivious adversarial strategy $\mathcal{D} \in \Delta \p{[0,1]_N^{T}}$ that forces all learners to incur $\tilde \Omega\p{\min\set{1, \sqrt{N/T}}}$ swap regret, in expectation over the sampled adversarial actions $\vu\^{1:T} \sim \mathcal{D}$.  Any learner strategy in this setting can be viewed as a collection of maps that, at each time step $t$, maps the observed $\vu\^{1:t-1}$ to an action.  That is, for all $t \in [T]$, and all $\vxt: [0,1]_N^{t-1} \to \Delta_N$, we have
    \begin{equation*}
        \EE_{\vu\^{1:T} \sim \mathcal{D}}\ps{\DSR\p{\vx\^{t \in 1:T}\p{\vu\^{1:t-1}},\vu\^{1:T}}} = \Omega \p{\min\p{\frac{1}{\log^5(T)},\frac{\sqrt{N/T}}{\log^5(N)}}}
    \end{equation*}
\end{theorem}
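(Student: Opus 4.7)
I would follow the two-case construction outlined in the proof sketch, handling $N \ge 4T$ (``large $N$'') and $N < 4T$ (``small $N$'') with the same tree-of-paths adversary, differing only in how many leaves the tree has and whether each ``path'' corresponds to one round or to a batch of rounds. For large $N$, take a full binary tree with $T$ leaves, associate two actions $v,\dot v$ to every node, and enumerate its $T$ root-to-leaf paths $P_1,\dots,P_T$ in DFS order. At round $t$, the adversary samples one vertex $w$ on $P_t$ with probability proportional to $\dep(w)+1$ and then plays the point-reward vector $e_w$ or $e_{\dot w}$ depending on a coin flip associated with $w$: for each node $v$, this coin is flipped exactly once at time $(\tl_v+\tr_v)/2$ and, if it comes up heads, the adversary uses $\dot v$ in place of $v$ for the remainder of $v$'s interval $[\tl_v,\tr_v]$. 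The resulting distribution is oblivious (all coins and all vertex selections are independent of the learner's behaviour), has $\ell_1$-norm bounded by $1$ on every round, and can be realised as a mixture of point functions on $[N]$ (hence is supported on a Littlestone-dimension-$1$ class).

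The heart of the argument is a per-action lower bound: for every node $v$, if $X_v$ denotes the (random) number of rounds on which the learner places mass on action $v$, then
\[
\EE\bigl[\text{best swap out of }v\bigr] \;\ge\; \tilde\Omega(X_v)/T,
\]
where ``best swap out of $v$'' is the maximum, over the four actions associated with $v$'s two children together with the root action, of the gain obtained by redirecting $v$-mass to that action. I would prove this by cases depending on where the $v$-mass is placed:
\begin{itemize}
\item If a constant fraction of the $v$-mass lies outside $[\tl_v,\tr_v]$, then on those rounds the adversary never plays the point function at $v$, so swapping $v$ to the root (which is played with probability $\Theta(1/D)$ every round) yields $\tilde\Omega(X_v)$ expected gain.
\item Otherwise, most $v$-mass lies inside $[\tl_v,\tr_v]$. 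Split that interval into its left and right halves (corresponding to the left and right child of $v$); WLOG the learner places $\Omega(X_v)$ mass on $v$ during the left half. The crucial observation is that the coin flip at the midpoint of $v$'s interval is independent of the coin at the midpoint of the left child's interval. With probability $1/4$, $v$ is replaced by $\dot v$ after the midpoint of $[\tl_v,\tr_v]$ while its left child is not replaced over the earlier half in question. Conditioning on this event, both $v$ and the left child are played by the adversary only during that half of $[\tl_v,\tr_v]$, so swapping $v\mapsto \text{left child}$ gains $\Omega(1/D)$ on each of those rounds, and no later rounds can undo this gain. This yields $\tilde\Omega(X_v/D)$ expected gain from the swap.
\end{itemize}
Summing over $v$ and using $\sum_v X_v = T$ gives total expected swap regret $\tilde\Omega(T/\mathrm{polylog}(T))$, i.e.\ average swap regret $\tilde\Omega(1/\mathrm{polylog}(T))$; rearranging the polylog in $T$ yields the stated $\exp(\Omega(\epsilon^{-1/6}))$ lower bound on $T$.

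For the small-$N$ regime, I would instantiate the same adversary on a binary tree with only $N/4$ leaves, so each root-to-leaf path $P_t$ is now a \emph{batch} of length $B:=4T/N$ during which the adversary replays the same distribution. For internal nodes $v$ the analysis is essentially unchanged and yields $\tilde\Omega(X_v/D)$ regret, but at the leaves the coin flipped at the midpoint of $v$'s interval is replaced by an \emph{independent} coin flip on every round of the batch, choosing between playing the leaf action $\ell$ or $\dot\ell$. By standard anti-concentration of a random walk (or Khintchine), when the learner places $X_\ell$ total mass on $\ell$ during its batch, the gap between the realised reward of $\ell$ and of $\dot\ell$ is $\tilde\Omega(\sqrt{X_\ell})$ in expectation, so the swap $\ell\mapsto \dot\ell$ recovers $\tilde\Omega(\sqrt{X_\ell})$ regret. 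Summing Cauchy--Schwarz-optimally across the $N/4$ leaves subject to $\sum_\ell X_\ell \le T$ gives total swap regret $\tilde\Omega(\sqrt{NT})$ from the leaf-swaps alone, i.e.\ normalised swap regret $\tilde\Omega(\sqrt{N/T})$ up to polylog factors.

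\textbf{Main obstacle.} The delicate step is the case analysis for internal nodes: I need to argue that the learner \emph{cannot undo} the swap-regret gained during the first half of $[\tl_v,\tr_v]$ by strategically playing $v$ later. The $v\leftrightarrow \dot v$ substitution is exactly what rules this out, but formalising ``the conditional distribution of future $v$-play given the first-half history does not allow cancellation'' requires careful bookkeeping of the independence between the coin at $v$ and the coins at descendants of $v$, and of the fact that, conditional on the adversary's coins along the left subtree of $v$ being ``unreplaced'' while $v$'s own coin is ``replaced,'' the relevant regret contributions are non-negative and accumulate across rounds without telescoping cancellations. I expect this bookkeeping, rather than any single inequality, to be where most of the technical work lives.
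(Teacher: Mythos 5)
Your proposal follows the paper's proof \emph{sketch} faithfully — same binary tree of paths, same $v/\dot v$ doubling to defeat the ``undo'' problem, same anti-concentration idea at the leaves. The formal proof uses a slightly different adversary than the sketch (a deterministic, given the coins, reward vector spread over the whole active path with reward $d/(2D)$ at depth $d$, instead of a single random point-mass), but that difference is cosmetic. The real issues are in how the per-node lower bounds are assembled.

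For internal nodes, you condition on $\{r_v=1,\ r_{vL}=0\}$ and claim the swap $a_v\mapsto a_{vL}$ gains $\tilde\Omega(X_v^{\mathrm{left\ half}}/D)$. But $X_v^{\mathrm{left\ half}}$ decomposes into the first and second quarters of $[\tl_v,\tr_v]$, and by the second quarter the learner has already \emph{observed} $r_{vL}$ (it is flipped at $\tr_{vLL}$). So $\E[X_v^{Q_2}\mid r_{vL}=0]$ need not be comparable to $\E[X_v^{Q_2}]$: a learner who only plays $a_v$ in $Q_2$ when $r_{vL}=1$ makes the conditional expectation zero, and your single-swap bound vanishes even when $\Omega(X_v)$ mass sits in the left half. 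The paper's Lemma~\ref{lem:case1} gets around this by proving five separate lower bounds — for swaps to $a_{vL},\dot a_{vL},a_{vR},\dot a_{vR}$, and the root — where each bound has a positive term covering one quarter/region and a penalty term from earlier quarters, and then takes a weighted sum with geometrically-growing coefficients (powers of $D$) so all the penalties telescope. That linear combination is not bookkeeping you can defer; without it the per-node bound does not close.

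At the leaves, the martingale anti-concentration does not give $\tilde\Omega(\sqrt{X_\ell})$ for distributional learners — a learner spreading mass $X_\ell$ uniformly over the batch makes $\sum_t x_t^2 = X_\ell^2/B$, so the bound from Lemma~\ref{lem:martingale} is $\tilde\Omega(X_\ell/\sqrt{B})$, not $\tilde\Omega(\sqrt{X_\ell})$. Also note your ``Cauchy--Schwarz'' aggregation goes the wrong way: the minimum of $\sum_\ell\sqrt{X_\ell}$ subject to $\sum_\ell X_\ell\le T$ over $N/4$ leaves is $\sqrt{T}$, not $\sqrt{NT}$. The correct aggregation (as in the paper) is linear — $\sum_\ell \tilde\Omega(X_\ell/\sqrt{B}) - O(\sqrt{B}/D)$ — and the $\sqrt{NT}$ emerges from $\sum_\ell X_\ell\ge T'/2$ together with $B=\Theta(T/N)$, after the per-leaf penalties are absorbed.
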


\subsection{Construction of the adversary}
The adversary will produce rewards in $2^D$ batches of size $B$.
Each batch of rewards will correspond to a root-to-leaf path in a complete binary tree as follows.  Consider a complete binary tree containing $2^D$ leaf nodes.  That is, a tree of depth $D$, 
having $2^{D+1}-1$ total nodes.
We will associate each node $v$ of this tree with 2 unique actions: $a_v,\dot{a}_{v} \in [N]$.  This necessitates $2(2^{D+1}-1) \leq N$.
We label the children of each internal node as `left child' and `right child'.  This will yield an enumeration of the leaves of the tree (according to the natural DFS from the root, which explores the left child first). According to this enumeration, we denote the leaves by $\ell_1,\cdots,\ell_{2^D}$.  In each batch $b \in [2^D]$, the rewards $\vut$ will be supported on actions $a_v,\dot{a}_v$ with $v$ on the root-to-$\ell_b$ path.  We denote this path by $P_b = (v_{(b,0)}-v_{(b,1)}-\cdots-v_{(b,D)})$, where $v_{(b,0)}$ is the root and $v_{(b,D)}$ is the leaf $\ell_b$.  The reward of all other actions will be $0$.  In fact, for each internal node $v\in P_b$, $v \ne \ell_b$, only one of the two actions $a_{v},\dot{a}_{v}$ will have non-zero reward during the batch. We denote this rewarded action $\mathring{a}_{v}(b) \in \set{a_{v},\dot{a}_{v}}$, and it will be determined in terms of a Bernoulli random variable $r_v$ as follows:
\begin{equation*}
    \mathring{a}_{v}(b) = \begin{cases}
        a_{v} &\text{ if $\ell_b$ is a left descendant of $v$}\\
        a_{v} &\text{ if $\ell_b$ is a right descendant of $v$ and $r_{v}=0$}\\
        \dot{a}_{v} &\text{ if $\ell_b$ is a right descendant of $v$ and $r_{v}=1$}
    \end{cases}
\end{equation*}
where $\Pr[r_{v}=1] = 1/2$ i.i.d. for all internal nodes $v$. Informally, the rewarded action will be $a_v$ throughout all batches $b$ where the leaf $\ell_b$ is a left descendant of $v$, and it will switch to $\dot{a}_v$ with probability $1/2$ once $\ell_b$ becomes a right descendant of $v$. 

Similarly for the leaves, at each time step $t$, only one of the actions $\mathring{a}_{\ell_b}(t) \in \set{a_{\ell_b},\dot{a}_{\ell_b}}$ receives reward.  However, unlike the internal nodes, this rewarded action is chosen independently at each time step $t$, rather than being constant for the entirety of batch $b$.  $\mathring{a}_{\ell_b}(t)$ is determined by a Bernoulli random variable $r_{(\ell_{b},t)}$ with $\Pr[r_{(\ell_{b},t)}=1]=1/2$ i.i.d. as follows 
\begin{equation*}
    \mathring{a}_{\ell_b}(t) = \begin{cases}
        {a}_{\ell_b} &\text{ if $r_{(\ell_{b},t)}=0$}\\
        \dot{a}_{\ell_b} &\text{ if $r_{(\ell_{b},t)}=1$} 
    \end{cases}
\end{equation*}

All of these Bernoulli random variables will be sampled before the first round of learning; their values will initially be unknown to the learner; and they will be the only source of randomness the adversary relies on.

Now, we are ready to define the rewards chosen by the adversary.  For all batches $b$, for each time step $t$ in batch $b$, we have

\begin{equation}\label{eq:adv}
    \begin{aligned}
        &\vut[\mathring{a}_{v_{(b,d)}}(b)] &&= \frac{d}{2D} &&\quad\text{for $d<D$ (internal nodes in $P_b$)}\\
        &\vut[\mathring{a}_{\ell_b}(t)] &&= 1 &&\quad\text{(the leaf of $P_b$)}\\
        &\vut[a] &&= 0 &&\quad\text{(for all other actions)}  
    \end{aligned}
\end{equation}

Lastly, we must have $4\cdot 2^D-2 \leq N$ and also $2^D \leq B\cdot 2^D \leq T$.  So, we choose 
\begin{align*}
    D &= \left \lfloor \log_2 \min\p{T,\frac{N+2}{4}}\right\rfloor & B &= \left \lfloor \frac{T}{2^D} \right\rfloor
\end{align*}
Note, $B=1$ for $T \leq 4N-2$.  Additionally, for $t> T' = B2^D$, we simply have $\vut = \vect{0}$.  Note, $T' \geq \frac{B}{B+1}T \geq T/2$, so not utilizing these rounds will only impact the swap regret by a constant factor.

Also, importantly, at every single time step $t\leq T'$,
$\norm{\vut}_1 = \sum_{d=1}^{D-1} \frac{d}{2D} + 1 = \frac{D+3}{4}$.  Running the same algorithm scaling each reward by $\frac{4}{D+3}$ gives an algorithm for which $\norm{\vut}_1 \leq 1$ for all $t$ that achieves the following:

\begin{corollary}\label{cor:lower-main}
    Given $T$ rounds of online learning over $N$ actions, there exists a randomized oblivious adversarial strategy $\mathcal{D} \in \Delta \p{S_N^{T}}$ where $S_N = \set{\vu \in [0,1]^N \middle| \norm{\vu}_1 \leq 1}$ that forces all learners to incur $\tilde \Omega\p{\min\set{1, \sqrt{N/T}}}$ swap regret, in expectation over the sampled adversarial actions $\vu\^{1:T} \sim \mathcal{D}$.  That is, for all $t \in [T]$, $\vxt: S_N^{t-1} \to \Delta_N$, we have
    \begin{equation*}
        \EE_{\vu\^{1:T} \sim \mathcal{D}}\ps{\DSR\p{\vx\^{t \in 1:T}\p{\vu\^{1:t-1}},\vu\^{1:T}}} = \Omega \p{\min\p{\frac{1}{\log^6(T)},\frac{\sqrt{N/T}}{\log^6(N)}}}
    \end{equation*}
\end{corollary}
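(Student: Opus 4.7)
The plan is to derive the corollary as an essentially immediate rescaling of Theorem~\ref{thm:lower-main}, using the observation already made in the paragraph preceding the corollary statement: for the adversary $\mathcal{D}$ of Theorem~\ref{thm:lower-main}, at every active round $t \le T'$ we have $\norm{\vut}_1 = (D+3)/4$, and for $t > T'$ we have $\vut = \vect{0}$. Let $c := 4/(D+3)$, and define $\mathcal{D}' \in \Delta(S_N^T)$ to be the law of $c \cdot \vu\^{1:T}$ when $\vu\^{1:T} \sim \mathcal{D}$. Then every realized sequence under $\mathcal{D}'$ lies in $S_N^T$, so $\mathcal{D}'$ is an oblivious adversarial strategy against learners that observe $\ell_1$-constrained rewards.

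Next, I would exploit the positive homogeneity of swap regret in the rewards: directly from Definition~\ref{def:SR}, $\DSR(\vx\^{1:T}, c\vu\^{1:T}) = c \cdot \DSR(\vx\^{1:T}, \vu\^{1:T})$ for any fixed sequence $\vx\^{1:T}$. To transfer the lower bound between the two adversaries, given any learner $\vx'\^{t}: S_N^{t-1} \to \Delta_N$ against $\mathcal{D}'$, I would construct a companion learner $\vx\^{t}: [0,1]^{t-1}_N \to \Delta_N$ against $\mathcal{D}$ by the composition $\vx\^{t}(\vu\^{1:t-1}) := \vx'\^{t}(c\vu\^{1:t-1})$; this is well defined because $c \vu\^{s} \in S_N$ for every $s$ by the norm calculation above. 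Combining homogeneity with Theorem~\ref{thm:lower-main} applied to $\vx\^{1:T}$ yields
\begin{equation*}
\EE_{\vu\^{1:T} \sim \mathcal{D}}\ps{\DSR\p{\vx'\^{t}(c\vu\^{1:t-1}),\, c\vu\^{1:T}}}
\;=\; c \cdot \EE_{\vu\^{1:T}\sim \mathcal{D}}\ps{\DSR\p{\vx\^{t}(\vu\^{1:t-1}),\, \vu\^{1:T}}}
\;=\; \Omega\!\p{c\cdot \min\!\p{\tfrac{1}{\log^5 T},\,\tfrac{\sqrt{N/T}}{\log^5 N}}}.
\end{equation*}

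Finally, I would absorb the factor $c$ into the logarithmic denominators via a brief case split on which branch of the $\min$ in the definition $D = \lfloor \log_2 \min(T, (N+2)/4)\rfloor$ is active. In the regime $T \le (N+2)/4$ one has $D = \Theta(\log T)$, so $c = \Theta(1/\log T)$; since $\log T \le \log N$ here, multiplying the first term in the min by $c$ gives $\Omega(1/\log^6 T)$ and multiplying the second gives $\Omega(\sqrt{N/T}/(\log^5 N \log T)) \ge \Omega(\sqrt{N/T}/\log^6 N)$. In the regime $T > (N+2)/4$, $D = \Theta(\log N)$ so $c = \Theta(1/\log N)$; the second term becomes exactly $\Omega(\sqrt{N/T}/\log^6 N)$, and the first becomes $\Omega(1/(\log^5 T \cdot \log N))$, which is at least $\Omega(1/\log^6 T)$ since in this regime $\log N \le O(\log T)$ when $N$ is comparable to $T$, and otherwise $\log N$ is at most a constant times $\log T$ trivially or else the bound is dominated by the second term anyway. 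In either regime the lower bound is $\Omega\p{\min\p{1/\log^6 T,\, \sqrt{N/T}/\log^6 N}}$, matching the corollary.

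The only thing that requires any real care is the last case analysis for the extra logarithmic factor; everything else is a one-line rescaling. There is no conceptual obstacle, which is why this is stated as a corollary rather than a separate theorem.
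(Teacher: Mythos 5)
Your proposal is correct and takes exactly the approach the paper intends: the paper's own proof is the one-sentence remark preceding the corollary (rescale every reward by $4/(D+3)$ so $\|\vut\|_1\le 1$), and you correctly flesh out the three things that remark leaves implicit — the companion-learner reparametrization, positive homogeneity of $\SR$ in the reward sequence, and the case split on whether $D=\Theta(\log T)$ or $D=\Theta(\log N)$ to absorb the extra factor of $1/D$ into the sixth power of $\log$. The bookkeeping in the case analysis is right (using $T\le(N+2)/4\Rightarrow\log T=O(\log N)$ in one branch and $N<4T\Rightarrow\log N=O(\log T)$ in the other), though the parenthetical hedging in your last sentence of that paragraph is unnecessary since $N<4T$ alone already gives the needed inequality.
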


\subsection{Proof of \cref{thm:lower-main}}
\subsubsection{Outline}
    We introduce the following notation for the time steps of learning.  For a node $v$, define $\tl_v$ to be the first time step of the first batch in which $v$ appears on the root-to-$\ell_b$ path.  If that is batch $b$, then $\tl_v = (b-1)B+1$.  Similarly, let $\tr_v$ be the last time step of the last batch in which $v$ appears on the root-to-$\ell_b$ path.  If that is batch $b$, then $\tr_v = bB$.\\

    We also introduce the following string-concatenation notation to index the nodes of the tree. For a non-leaf node $v$, let $vL$ and $vR$ denote the left and right child of $v$ respectively.  This enables us to index grandchildren as such: $vLL,vLR,vRL,vRR$.  Under this notation, we also refer to the root by the empty string: $\roo$.  As a sanity check, realize $\tl_{v}=\tl_{vL}$ and $\tr_{v}=\tr_{vR}$.  Also observe that, for $t \leq \tr_{vL}$, the reward $\vut$ is independent from Bernoulli random variable $r_v$.  $\tr_{vL}$ is the final time step before we switch to a batch $b$ where $\ell_b$ is a right descendant of $v$, and $\mathring{a}_v(b)$ is dependent on $r_v$.  
    At every time step $t$, the learner selects $\vxt$ as a function of the observed rewards $\vu\^{1:t-1}$. The rewards $\vu\^{1:t-1}$ only depend on the set of Bernoulli random variables that have been observed before time $t$.  %
    The set $r^{(t)}$ contains all these variables, defined formally as:
    \begin{equation*}
        r\^t = \set{r_v \middle | \text{ non-leaf $v$ s.t. } \tr_{vL} < t} \cup \set{r_{(\ell,\tau)} \middle | \text{ leaves $\ell$ and time $\tau<t$}}
    \end{equation*}
    Fixing the strategy of the learner, for every $t$, we can view the learner's action $\vxtr = \vxt(\vu\^{1:t-1}(r\^t))$ as a function of the random variables $r\^t$.  %
    This function is deterministic since there is no reason for the learner to randomize against an oblivious adversary.

    We have
    \begin{align*}
        \EE_{r\^T}\ps{\DSR\p{\vx\^{t \in 1:T}_{r\^{t}},\vu\^{1:T}}} &= \frac{1}{T}\sum_{a \in [N]} \EE_{r\^T}\ps{\max_{a' \in [N]} \Swap{a}{a'}}
    \end{align*}
    where
    \begin{equation*}
        \Swap{a}{a'}=\sum_{t=1}^T \vxtr[a]\p{\vut[a']-\vut[a]} ~.
    \end{equation*}
    For each action $a \in [N]$, we establish a lower bound on $\EE_{r\^T}\ps{\max_{a' \in [N]} \Swap{a}{a'}}$ in terms of the amount of time that $a$ is played in rounds $[1:T']$ (recall $T'=2^DB$ is the last iteration where the adversary gives rewards).
    We split into 6 cases: 
    \begin{enumerate}
        \item $a = a_v$ where $\dep(v) \leq D-2$ (Lemma~\ref{lem:case1})
        \item $a = \dot{a}_v$ where $\dep(v) \leq D-2$ (Lemma~\ref{lem:case2})
        \item $a = a_v$ where $\dep(v) = D-1$ (Lemma~\ref{lem:case3}) 
        \item $a = \dot{a}_v$ where $\dep(v) = D-1$ (Lemma~\ref{lem:case4})
        \item $a = a_v$ or $a = \dot{a}_v$ where $\dep(v) = D$ (Lemma~\ref{lem:case5})
        \item $a \in [N]$ such that $a\ne a_v,\dot{a}_v$ for all nodes $v$ in the tree --- these actions receive no reward (Lemma~\ref{lem:case6})
    \end{enumerate}
    For cases 1,2,3,4, and 6, we prove
    \begin{align} \label{eq:bnd-one-v2}
    \EE_{r\^T}\ps{\max_{a' \in [N]} \Swap{a}{a'}} \ge \Omega\p{\frac{1}{D^5}} \EE_{r\^T}\ps{\sum_{t=1}^{T'} \vxtr[a]}
    \end{align}
    For case 5 where $v$ is a leaf, we prove
    \begin{align}
    \EE_{r\^T}\ps{\max_{a' \in [N]} \Swap{a_v}{a'}} &+ \EE_{r\^T}\ps{\max_{a' \in [N]} \Swap{\dot{a}_v}{a'}} \nonumber\\
    &\ge \Omega\p{\frac{1}{D^4\sqrt{B}}} \p{\EE_{r\^T}\ps{\sum_{t=1}^{T'} \vxtr[a_{v}]+\vxtr[\dot{a}_{v}]}-\frac{B}{2}} \label{eq:bnd-one-v3}
    \end{align}
    Summing these bounds over all $a \in [N]$ gives
    \begin{align*}
    \EE_{r\^T}\ps{\DSR\p{\vx\^{t \in 1:T}_{r\^{t}},\vu\^{1:T}}} &\ge \Omega\p{\frac{1}{D^5 \sqrt{B} T}} \p{\EE_{r\^T}\ps{\sum_{t=1}^{T'} \sum_{a\in [N]} \vxtr[a]} - (2^D)\frac{B}{2} }\\
    \intertext{subtracting a $B/2$ term for each of the $2^D$ leaves}
    &=\Omega\p{\frac{1}{D^5 \sqrt{B} T}} \p{T'/2}\\
    \intertext{since $\sum_{a\in[N]} \vxtr[a] = 1$ in each iteration}
    &= \Omega\p{\frac{1}{D^5\sqrt{B}}} = \Omega \p{\min\p{\frac{1}{\log^5(T)},\frac{\sqrt{N/T}}{\log^5(N)}}}
    \end{align*}
    since $T' \geq T/2$, as desired. In Appendix \ref{app:lower-bound}, we establish lemmas proving equations \eqref{eq:bnd-one-v2} and \eqref{eq:bnd-one-v3} for each of these 6 cases respectively. Below, we will outline the proof for these cases, while getting into more detail in only some of them.
    \subsubsection{Case 1: $v$ is an internal node and $a=a_v$}
    Let's consider the first case where $a = a_v$ with $v$ an internal node. 
    The interval of time over which this action is potentially rewarded, $[\tl_v,\tr_v]$, can be broken into 4 intervals: $[\tl_{vLL},\tr_{vLL}],[\tl_{vLR},\tr_{vLR}],[\tl_{vRL},\tr_{vRL}]$ and $[\tl_{vRR},\tr_{vRR}]$, also referred to as first interval, second interval etc. Denote by $z_1$ the total weight put on $a_v$ during the first interval (a random variable in $r\^{t}$):
    \[
    z_1 = \sum_{t=\tl_{vLL}}^{\tr_{vLL}}\vxtr[a_v].
    \]
    Similarly, denote by $z_2,z_3$ and $z_4$ the weight in the second, third and fourth intervals. Additionally, denote by $z_5$ the weight on $a_v$ outside of these intervals, namely, when $v \notin P_b$:
    \[
    z_5 = \sum_{t=1}^{\tl_{v}-1}\vxtr[a_v] + 
                \sum_{t=\tr_{v}+1}^{T}\vxtr[a_v].
    \]
    We will prove the following claims:
    \begin{itemize}
        \item The regret for swapping $a_v$ to $a_{vL}$ will be large, if $a_v$ is played a lot during the first interval:
        \begin{equation}\label{eq:internal-q1}
        \EE\ps{\max_{a'\in [N]}\Swap{a_v}{a'}} \ge \EE\ps{\max\p{\Swap{a_v}{a_{vL}},0}} \ge \EE\ps{\frac{z_1}{C_1' D}}\enspace,
        \end{equation}
        where $C_1'>0$ is a bounded constant.
        Notice that the first inequality is due to that fact that the regret for swapping from $a_v$ to itself is always zero. The second inequality is elaborated below.
        \item The regret for swapping to $a_{vL}$ or $\dot{a}_{vL}$ is large if $a_v$ is played a lot during the second quarter but not during the first quarter:
        \begin{align} \label{eq:internal-q2}
        \EE\ps{\max_{a'\in [N]}\Swap{a_v}{a'}} \ge \EE\ps{\max\p{\Swap{a_v}{a_{vL}},\Swap{a_v}{\dot{a}_{vL}},0}} \\\ge \EE\ps{\frac{z_2}{C_2' D} - C_2 z_1}\enspace, \notag
        \end{align}
        where $C_2$ and $C_2'$ are bounded constants (and similarly for the constants in the other cases).
        \item Similarly, we lower bound the swap regret for swapping to $a_{vR}$:
        \begin{equation} \label{eq:internal-q3}
        \EE\ps{\max_{a'\in [N]}\Swap{a_v}{a'}} \ge \EE\ps{\max\p{\Swap{a_v}{a_{vR}},0}} \ge \EE\ps{\frac{z_3}{C_3' D} - C_3(z_1+z_2)}\enspace,
        \end{equation}
        \item The swap regret to $\dot{a}_{vR}$:
        \begin{align}\label{eq:internal-q4}
        \EE\ps{\max_{a'\in [N]}\Swap{a_v}{a'}} \ge \EE\ps{\max\p{\Swap{a_v}{a_{vR}},\Swap{a_v}{\dot{a}_{vR}},0}} \\
        \ge \EE\ps{\frac{z_4}{C_4' D} - C_4(z_1+z_2+z_3)}\enspace, \notag
        \end{align}
        \item Lastly, the swap regret to the root, denoted here as $\text{root}$:
        \begin{align}\label{eq:internal-q5}
        \EE\ps{\max_{a'\in [N]}\Swap{a_v}{a'}} \ge \EE\ps{\max\p{\Swap{a_v}{a_{\text{root}}}, \Swap{a_v}{\dot{a}_{\text{root}}} ,0}} \\
        \notag \ge \EE\ps{\frac{z_5}{C_5' D} - C_5(z_1+z_2+z_3+z_4)}\enspace,
        \end{align}
    \end{itemize}

    Next, we add the above equations with appropriate coefficients:
    \begin{align*}
        \eqref{eq:internal-q1} + \frac{1}{5C_1'C_2 D}\eqref{eq:internal-q2} + \frac{1}{5C_1'C_2C_2'C_3 D^2}\eqref{eq:internal-q3} + \frac{1}{5C_1'C_2C_2'C_3C'_3 C_4 D^3}\eqref{eq:internal-q4}\\ + \frac{1}{5C_1'C_2C_2'C_3C'_3 C_4 C_4' C_5 D^4}\eqref{eq:internal-q5}
    \end{align*}
    The left hand side will be a constant times the maximal swap regret from $a_v$ to any other action. The right hand side will be $\Omega(\EE\ps{z_1+\cdots+z_5}/D^5)$. This yields Eq.~\eqref{eq:bnd-one-v2} as desired. Below, we outline the first two inequalities, Eq.~\eqref{eq:internal-q1} and Eq.~\eqref{eq:internal-q2}, in more depth, and the other inequalities will be discussed more lightly. 
    \paragraph{Proving Eq.~\eqref{eq:internal-q1}.}

    To compute $\Swap{a_v}{a_{vL}}$, we need to understand how much reward is given to each of $a_v$ and $a_{vL}$, as a function $r\^T$, in each of the intervals $[\tl_{vLL},\tr_{vLL}],[\tl_{vLR},\tr_{vLR}],[\tl_{vRL},\tr_{vRL}]$ and $[\tl_{vRR},\tr_{vRR}]$. Assume that $v$ is of depth $d$, and notice:
    \begin{itemize}
        \item In the first interval, the reward of action $a_v$ is $d/(2D)$, while the reward of $a_{vL}$ is $(d+1)/2D$.
        \item In the second interval, if $r_{vL} = 0$ then the reward of $a_v$ is $d/(2D)$ and the reward of $a_{vL}$ is $(d+1)/2D$. We will not care about what happens when $r_{vL}=1$.
        \item In the third and fourth intervals, if $r_v=1$ then both actions get a reward of zero. We will not care what happens when $r_v=0$.
        \item In iterations outside these intervals, when $v,vL \notin P_b$, both actions receive a reward of zero.
    \end{itemize}
    This implies that if $r_{vL}=0$ and $r_{v}=1$ then the total reward for action $r_v$ over all the $T$ rounds is $\frac{d}{2D} (z_1+z_2)$, since $z_1+z_2$ is the total weight put on action $a_v$ in these two intervals. The reward that would be obtained from playing $a_{vL}$ instead is $\frac{d+1}{2D}(z_1+z_2)$. Therefore, if $r_{vL}=1$ and $r_v=0$:
    \[
    \Swap{a_v}{a_{vL}} \ge \frac{d+1}{2D} (z_1+z_2)-\frac{d}{2D} (z_1+z_2)
    = \frac{z_1+z_2}{2D} \ge \frac{z_1}{2D}.
    \]
    Next, we use the fact that $z_1$ is independent from $r_{vL}$ and $r_v$, since $z_1$ is the weight on action $a_v$ in the first interval, before the learner observes $r_v$ and $r_{vL}$. We derive that:
    \begin{align*}
    \EE_{r\^T}\ps{\max\p{\Swap{a_v}{a_{vL}},0}}
    &\ge \EE_{r\^T}\ps{\max\p{\Swap{a_v}{a_{vL}},0} \1\ps{r_v=1,r_{vL}=0}}\\
    &\ge \EE_{r\^T}\ps{\Swap{a_v}{a_{vL}}\1\ps{r_v=1,r_{vL}=0}}\\
    &\ge \EE_{r\^T}\ps{\frac{z_1}{2D}\1\ps{r_v=1,r_{vL}=0}} \\
    &\ge \EE_{r\^T}\ps{\frac{z_1}{2D}} \EE\ps{\1\ps{r_v=1,r_{vL}=0}}\\
    &\ge \frac{\EE[z_1]}{8D}.
    \end{align*}
    This is what we wanted to prove.

    \paragraph{Proving Equation~\eqref{eq:internal-q2}}
    We divide into cases according to $r_{vL}$. As before, we will condition on $r_v=1$. First, if $r_{vL}=0$, then, as argued in the proof of Eq.~\eqref{eq:internal-q1},
    \[
    \Swap{a_v}{a_{vL}} \ge \frac{z_1+z_2}{2D} \ge \frac{z_2}{2D}.
    \]
    For the case that $r_{vL}=1$, we lower bound the swap regret to $\dot{a}_{vL}$. We notice that:
    \begin{itemize}
        \item In the first interval, the reward of $a_v$ is $d/(2D)$ and the reward of $\dot a_{vL} = 0$.
        \item In the second interval, the reward of $a_{v}$ is still $d/(2D)$ while the reward of $\dot a_{vL}$ is $(d+1)/2D$.
        \item In the remaining iterations, both $a_v$ and $\dot a_{vL}$ have reward of $0$.
    \end{itemize}
    Consequently, the reward of $a_v$ is $\frac{d}{2D}(z_1+z_2)$, whereas the reward of swapping $a_v$ to $\dot a_{vL}$ is $\frac{d+1}{2d}z_2$. Hence,
    \[
    \Swap{a_v}{\dot a_{vL}} \ge \frac{z_2}{2D} - \frac{dz_1}{2D}.
    \]
    By combining the two cases above, we obtain that whenever $a_v = 1$,
    \[
    \max_{a' \in [N]} \Swap{a_v}{a'} \ge \frac{z_2}{2D} - \frac{dz_1}{2D}~.
    \]
Hence, we obtain that    
    \begin{align*}
    \EE_{r\^T}\ps{\max_{a'\in[N]}\Swap{a_v}{a'}}
    &\ge \EE_{r\^T}\ps{\max_{a'\in[N]}\Swap{a_v}{a'} \1[r_v=1]}\\
    &\ge \EE_{r\^T}\ps{\p{\frac{z_2}{2D}-\frac{dz_1}{2D}} \1[r_v=1]}\\
    &= \EE_{r\^T}\ps{\p{\frac{z_2}{2D}-\frac{dz_1}{2D}}}\EE_{r\^T}\ps{\1[r_v=1]}\\
    &= \frac{1}{4D}\EE_{r\^T}\ps{z_2-dz_1},
    \end{align*}
    again, using independence of $r_v$ with $z_1$ and $z_2$. This is what we wanted to prove.

    \paragraph{Proving Equations~\eqref{eq:internal-q3}~to~\eqref{eq:internal-q5}:}
    Eq.~\eqref{eq:internal-q3} and Eq.~\eqref{eq:internal-q4} are proved similarly to Eq.~\eqref{eq:internal-q1} and Eq.~\eqref{eq:internal-q2}, with the difference that now, we also need to consider the case that $r_v=0$. Eq.~\eqref{eq:internal-q5} is also proved similarly. Here, we consider weight put on $a_v$ during iterations where $a_v$ is not rewarded at all, and we bound the regret of swapping to the root.

    \subsubsection{Cases 2,3,4,6: other cases where $v$ is not a leaf}
    For the case that $v$ is an internal node of depth at most $D-2$, and $a=\dot{a}_v$, the proof is very similar to the case that $a=a_v$. Similarly, the cases where $v$ is an internal node of depth $D-1$ and $a=a_v,\dot{a}_v$ are very similar to the cases of depth at most $D-2$.  They are separate cases simply because the child nodes in this case are leaves, altering the equations.  For the case when $a$ is associated with no nodes at all and receives 0 reward, we obtain our lower bound by simply considering the swap to the root.
    
    \subsubsection{Case 5: $v$ is a leaf and $a=a_v$ or $\dot{a}_v$}
     \todo{Replace $z$ for $\sum x$ to match style of other subsections}
    For Case 5, when $v=\ell_b$ is a leaf of the tree, there are Bernoulli random variables $r_{(v,t)}$ for every time step $t$ in $[\tl_{v},\tr_{v}]$.  By definition,
    \begin{align*}
        \vut[a_{v}] &= \1 \ps{r_{(v,t)}=0} & \vut[\dot{a}_{v}] &= \1 \ps{r_{(v,t)}=1}
    \end{align*}
    for $t \in [\tl_{v},\tr_{v}]$.  We have,
    \begin{align*}
        & \EE_{r\^T}\ps{\max_{a' \in [N]} \Swap{a_v}{a'}} + \EE_{r\^T}\ps{\max_{a' \in [N]} \Swap{\dot{a}_v}{a'}}\\
        & \geq \EE_{r\^T}\ps{\max\set{\sum_{t=\tl_v}^{\tr_v} \p{\vxtr[a_v]+\vxtr[\dot{a}_v]}\1 \ps{r_{(v,t)}=0},\sum_{t=\tl_v}^{\tr_v} \p{\vxtr[a_v]+\vxtr[\dot{a}_v]}\1 \ps{r_{(v,t)}=1}}}\\
        &- \frac{1}{2} \EE_{r\^T}\ps{\sum_{t=\tl_v}^{\tr_v} \p{\vxtr[a_v]+\vxtr[\dot{a}_v]}}\\
        &=\frac{1}{2} \EE_{r\^T}\ps{\card{\sum_{t=\tl_v}^{\tr_v} \p{\vxtr[a_v]+\vxtr[\dot{a}_v]} Z\^{t}}} \qquad \text{where $Z\^t =\begin{cases}
        1 &\text{ if }r_{(v,t)}=1\\
        -1 &\text{ if }r_{(v,t)}=0
    \end{cases}$}
    \end{align*}
    If we denote by $X\^t = \sum_{\tau = \tl_v}^{t} \p{\vxtr[a_v]+\vxtr[\dot{a}_v]} Z\^{t}$, then $X\^{\tl_v},\cdots,X\^{\tr_v}$ is a martingale due to the independence of $\vxtr[a_v],\vxtr[\dot{a}_v]$ and $Z\^{t}$ for all $t$.  We use the following lemma which applies Azuma's inequality,
    \begin{lemma}
        Consider an algorithm which, at any round $t=1,\dots,B$, selects a parameter $x_t \in [0,1]$, possibly at random. Then, it observes the outcome of a random variable $Z_t \sim \mathrm{Uniform}(\{-1,1\})$. Assume that $\EE\ps{\sum_{t=1}^B x_t^2} \ge \epsilon B$ for some $\epsilon>0$. Then,
        \[
        \EE\ps{\card{ \sum_{t=1}^B x_t Z_t }} \ge \frac{\epsilon \sqrt{B}}{4\sqrt{\log(1/\epsilon)}} ~.
        \]
    \end{lemma}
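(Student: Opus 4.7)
My plan is a standard truncation argument based on the orthogonality of martingale increments combined with Azuma--Hoeffding tail bounds. Set $S = \sum_{t=1}^B x_t Z_t$ and let $\MF_t = \sigma(Z_1,\ldots,Z_t)$, so that $x_t$ is $\MF_{t-1}$-measurable and $Z_t$ is independent of $\MF_{t-1}$ with mean zero. The first step is to compute the second moment: the cross terms vanish because $\E[x_s x_t Z_s Z_t] = \E[x_s x_t Z_s\, \E[Z_t \mid \MF_{t-1}]] = 0$ for $s<t$, yielding
\[
\E[S^2] \;=\; \E\!\left[\sum_{t=1}^B x_t^2 Z_t^2\right] \;=\; \E\!\left[\sum_{t=1}^B x_t^2\right] \;\ge\; \epsilon B,
\]
by the hypothesis.

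Next I will bound the upper tail of $|S|$ via Azuma--Hoeffding. Since the martingale differences $x_t Z_t$ lie in $[-1,1]$, we get $\Pr(|S| > s) \le 2\exp(-s^2/(2B))$ for every $s \geq 0$. I will then choose a truncation level $L := C\sqrt{B\log(1/\epsilon)}$ with $C$ an absolute constant, and split
\[
\E[S^2] \;=\; \E\bigl[S^2\,\1_{|S|\le L}\bigr] + \E\bigl[S^2\,\1_{|S|>L}\bigr].
\]
A direct calculation using $\E[S^2 \1_{|S|>L}] = L^2\Pr(|S|>L) + \int_L^\infty 2s\,\Pr(|S|>s)\,ds$ together with the Gaussian-type tail bound shows that the second term is at most $(4B + 2L^2)e^{-L^2/(2B)}$, and for $C$ chosen appropriately (e.g., so that $8\log(1/\epsilon)/C^2 \le 1/2$ whenever $\epsilon$ is small enough to make the statement nontrivial), this is bounded by $\epsilon B/2$. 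Consequently $\E[S^2\,\1_{|S|\le L}] \ge \epsilon B/2$.

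The final step is to convert this lower bound on a truncated second moment into a lower bound on $\E|S|$. Since on the event $\{|S| \le L\}$ we have $S^2 \le L \cdot |S|$, it follows that
\[
\frac{\epsilon B}{2} \;\le\; \E\bigl[S^2\,\1_{|S|\le L}\bigr] \;\le\; L\,\E[|S|],
\]
which gives $\E[|S|] \ge \epsilon B / (2L) = \epsilon\sqrt{B}/(2C\sqrt{\log(1/\epsilon)})$. Choosing $C$ so that the constant in the denominator matches $4$ yields the stated bound. The main (minor) obstacle is pinning down the truncation constant $C$ and verifying that the tail integral absorbs the logarithmic factor correctly; everything else is routine, and the argument requires no structural assumption beyond the predictability of $x_t$ and the independence and symmetry of the $Z_t$.
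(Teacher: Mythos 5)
Your proposal follows exactly the paper's argument: compute $\E[S^2]=\E[\sum_t x_t^2]$ by martingale orthogonality, apply the Azuma--Hoeffding tail bound, truncate at level $\Theta(\sqrt{B\log(1/\epsilon)})$ to control the tail contribution $\E[S^2\1_{|S|>L}]$, and then pass to $\E|S|$ via $S^2\le L|S|$ on the truncated event. The paper writes this step in the algebraically equivalent form $\E|S|\ge(\epsilon B-\E[X^2\1_{|X|>R}])/R$ and then substitutes $R=\lambda\sqrt{B}$ with $\lambda=2\sqrt{\log(1/\epsilon)}$; your bookkeeping on the tail integral is if anything slightly more careful, and the remaining work of fixing the constant $C$ (and restricting to $\epsilon$ small enough, as the paper also does implicitly with $\epsilon\le 1/4$) is routine as you note.
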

    Using this lemma, we establish
    \begin{align*}
        \EE_{r\^T}\ps{\max_{a' \in [N]} \Swap{a_v}{a'}} &+ \EE_{r\^T}\ps{\max_{a' \in [N]} \Swap{\dot{a}_v}{a'}}\nonumber\\
        &\geq \frac{1}{1024 D^3 \sqrt{B}} \p{\sum_{t=\tl_{v}}^{\tr_{v}} \EE_{r\^T}\ps{\vxtr[a_{v}]+\vxtr[\dot{a}_{v}]}}-\frac{\sqrt{B}}{8192D^4}
    \end{align*}

    \section*{Acknowledgements}
    We are grateful to Binghui Peng for helpful comments on the manuscript. 
    
    \appendix
    \section{Proofs remaining for Theorem~\ref{thm:lower-main}}\label{app:lower-bound}

    \begin{lemma}[Case 1]\label{lem:case1}
        Let $v$ be a node in the tree of depth $d \leq D-2$.  Then,
        {\upshape
        \begin{align}
            184D^5\EE_{r\^T}\ps{\max_{a' \in [N]} \Swap{a_v}{a'}} \geq \sum_{t=1}^{T'} \EE_{r\^T}\ps{\vxtr[a_v]} \label{eq:case1sum}
        \end{align}
        }
    \end{lemma}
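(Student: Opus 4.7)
The plan is to follow exactly the outline already sketched in Case 1 of Section~\ref{sec:oblivious-lb}, filling in the bookkeeping. Fix $v$ of depth $d \leq D-2$, and split the interval $[\tl_v, \tr_v]$ over which $a_v$ can be rewarded into the four sub-intervals corresponding to the grandchildren $vLL, vLR, vRL, vRR$. Define
\begin{align*}
z_k &= \sum_{t = \tl_{w_k}}^{\tr_{w_k}} \vxtr[a_v], \quad k=1,2,3,4, \text{ with } (w_1,w_2,w_3,w_4) = (vLL, vLR, vRL, vRR),\\
z_5 &= \sum_{t=1}^{\tl_v - 1} \vxtr[a_v] \;+\; \sum_{t=\tr_v+1}^{T'} \vxtr[a_v].
\end{align*}
Since these five quantities partition the play on $a_v$ over $[1, T']$, one has $z_1+\dots+z_5 = \sum_{t=1}^{T'} \vxtr[a_v]$, so it suffices to lower bound $\EE[\max_{a'} \Swap{a_v}{a'}]$ by $\Omega(\EE[z_1+\cdots+z_5]/D^5)$.

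The heart of the argument is to prove the five inequalities \eqref{eq:internal-q1}--\eqref{eq:internal-q5}. For each $k$, I pick a candidate swap target $a'_k$ and condition on a constant-probability event involving the Bernoullis $r_v, r_{vL}, r_{vR}$ chosen to make the target $a'_k$ strictly better than $a_v$ on the $k$-th interval, while not being any better on previous intervals. Concretely, for \eqref{eq:internal-q1}, swap to $a_{vL}$ and condition on $\{r_v = 1, r_{vL} = 0\}$, so that on the first interval $a_{vL}$ earns $(d+1)/(2D)$ vs $d/(2D)$ for $a_v$, giving a gain of $z_1/(2D)$, and on the third and fourth intervals both yield zero reward; using independence of $z_1$ from $(r_v, r_{vL})$ the expected regret exceeds $\EE[z_1]/(8D)$. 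For \eqref{eq:internal-q2}, I take the max over $\{a_{vL}, \dot a_{vL}\}$: if $r_{vL}=0$ use the calculation from \eqref{eq:internal-q1} on $z_1+z_2$; if $r_{vL}=1$, $\dot a_{vL}$ gets reward $(d+1)/(2D)$ on interval 2 and $0$ elsewhere, so the swap regret is at least $z_2/(2D) - d z_1/(2D)$. Conditioning on $r_v=1$ and averaging over $r_{vL}$, one gets the required bound with a small ``leak'' term in $z_1$. Inequalities \eqref{eq:internal-q3} and \eqref{eq:internal-q4} are structurally identical, using targets $a_{vR}, \dot a_{vR}$ and conditioning on $r_v=0$ rather than $r_v=1$ to make the third and fourth intervals live; the leak terms now involve $z_1, z_2$ (resp. $z_1,z_2,z_3$) since the reward picture for those intervals differs only by a multiplicative factor. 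Finally, for \eqref{eq:internal-q5} I swap to $\max(a_\roo, \dot a_\roo)$; since the root is always rewarded at rate $0$ in our choice of adversary setup (depth $0$ gives reward $0/(2D)=0$), actually one must swap to a depth-$1$ target or directly account — more carefully, we swap to whichever of $\{a_\roo,\dot a_\roo\}$ receives positive reward in each batch; by the construction this yields reward $1/(2D)$ on every $t \in [1,T']\setminus[\tl_v,\tr_v]$, while $a_v$ earns $0$ off its own interval. The leak terms in $z_1,\dots,z_4$ come from the possibility that on $[\tl_v,\tr_v]$ the root-swap is worse than $a_v$; these are bounded by at most $(d/(2D))(z_1+\cdots+z_4)$.

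The five inequalities are then combined linearly with geometrically decreasing weights $1, (CD)^{-1}, (CD)^{-2}, (CD)^{-3}, (CD)^{-4}$ (for a sufficiently large absolute constant $C$ absorbing $C_1', C_2, C_2', \ldots$) so that on the right hand side each $\EE[z_k]$ appears with a positive coefficient of order $\Omega(D^{-5})$: the positive contribution from the $k$-th line dominates the ``leak'' cancellations introduced by lines $k+1,\ldots,5$ precisely because each subsequent line is downweighted by at least one factor of $CD$. This gives
\begin{equation*}
\EE[\max_{a'} \Swap{a_v}{a'}] \;\geq\; \frac{1}{184 D^5}\,\EE[z_1 + z_2 + z_3 + z_4 + z_5] \;=\; \frac{1}{184 D^5}\sum_{t=1}^{T'}\EE[\vxtr[a_v]],
\end{equation*}
where the numerical constant $184$ is obtained by tracking the specific constants $C_k, C_k'$ from each of the five sub-lemmas.

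The main obstacle is the bookkeeping around the ``leak'' terms in \eqref{eq:internal-q2}--\eqref{eq:internal-q5}: one has to verify that on the conditioning event (e.g.\ $\{r_v=1,r_{vL}=1\}$ for \eqref{eq:internal-q2}) the chosen swap target $\dot a_{vL}$ does not accidentally lose to $a_v$ by more than $O(d z_1/D) = O(z_1)$, and that $z_1,\ldots,z_{k-1}$ are indeed independent of the new Bernoullis being averaged over at step $k$. The independence holds because $z_j$ depends only on $r\^{\tr_{w_j}+1}$, which does not contain $r_v$ when $j \leq 4$ and does not contain $r_{vR}$ when $j \leq 2$. Once this independence structure is in place, the geometric-weighting combination is mechanical.
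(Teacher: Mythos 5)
Your high-level plan mirrors the paper's Appendix proof exactly (the same five sub-inequalities \eqref{eq:internal-q1}--\eqref{eq:internal-q5} swapping to $a_{vL}$, $\dot a_{vL}$, $a_{vR}$, $\dot a_{vR}$, and the root, then a geometric-weight combination), but there is a genuine error in the independence claim that breaks the middle of the argument.

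You assert that ``$z_j$ depends only on $r\^{\tr_{w_j}+1}$, which does not contain $r_v$ when $j \leq 4$,'' and you plan to condition on $r_v=0$ in \eqref{eq:internal-q3}--\eqref{eq:internal-q4} and factor out $\EE[\1[r_v=0]]$ from the terms involving $z_3, z_4$. This is false. By the definition of $r\^t$, the bit $r_v$ enters the learner's observable information at the first time $t$ with $\tr_{vL} < t$, i.e.\ at $t = \tl_{vRL}$. Since $z_3 = \sum_{t=\tl_{vRL}}^{\tr_{vRL}} \vxtr[a_v]$ and $z_4 = \sum_{t=\tl_{vRR}}^{\tr_{vRR}} \vxtr[a_v]$ range over $t > \tr_{vL}$, both $z_3$ and $z_4$ can depend on $r_v$. (An adaptive learner will typically dump weight on $a_v$ in the third and fourth intervals precisely because it has just seen $r_v=0$, so $\EE[\1[r_v=0]z_3]$ can be far from $\frac12 \EE[z_3]$.) Consequently, your conditioning and factoring step for Cases~3 and~4 is unsound. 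The paper handles this differently: in \eqref{eq:case13} it multiplies by $\1[r_{vR}=0]$ only (not $\1[r_v=0]$), which \emph{is} independent of $z_1,z_2,z_3$ since $r_{vR}$ is not revealed until after $\tr_{vRL}$, and it simply upper-bounds the unfavorable indicator $\1[r_v=0] \leq 1$ to enlarge the negative term rather than conditioning on it. Likewise \eqref{eq:case14} uses no conditioning at all, just the crude $\1[r_v=0] \leq 1$ bound. Making that substitution is necessary for your argument to close.

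A minor note on \eqref{eq:internal-q5}: you correctly flag that the root at depth $0$ nominally earns $0/(2D)=0$ under \cref{eq:adv}; the paper's Lemma~\ref{lem:case1} nevertheless credits the root a coefficient $\tfrac{1}{2D}$. You do not need to resolve that wrinkle here, but your stated remedy (``swap to a depth-1 target or directly account'') is not what the paper does, so be explicit about whichever convention you adopt and carry it through the normalization $\norm{\vut}_1$.
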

    \begin{proof}[Proof of Lemma \ref{lem:case1}]
    We have
    \begin{align*}
        \Swap{a_v}{a_{vL}}&=\sum_{t=1}^T \vxtr[a_v]\p{\vut[a_{vL}]-\vut[a_v]}\\
        &= \frac{d+1}{2D} \p{\sum_{t=\tl_{vLL}}^{\tr_{vLL}} \vxtr[a_v] + \1 \ps{r_{vL}=0} \sum_{t=\tl_{vLR}}^{\tr_{vLR}} \vxtr[a_v]}\\
        &- \frac{d}{2D} \p{\sum_{t=\tl_{vL}}^{\tr_{vL}} \vxtr[a_v] + \1 \ps{r_{v}=0} \sum_{t=\tl_{vR}}^{\tr_{vR}} \vxtr[a_v]}
    \end{align*}
    Therefore,
    \begin{align}
        \1[r_{vL}=0 \wedge r_v = 1] \cdot \Swap{a_v}{a_{vL}} &= \frac{1}{2D} \1[r_{vL}=0 \wedge r_v = 1] \sum_{t=\tl_{vL}}^{\tr_{vL}} \vxtr[a_v] \nonumber\\
        &= \frac{1}{2D} \1[r_{vL}=0 \wedge r_v = 1] \sum_{t=\tl_{vLL}}^{\tr_{vLL}} \vxtr[a_v] \label{eq:independence11}\\
        &+ \frac{1}{2D} \1[r_v = 1] \sum_{t=\tl_{vLR}}^{\tr_{vLR}} \vxtr[a_v]\1[r_{vL}=0] \label{eq:independence12}
    \end{align}
    and
    \begin{align}
        \EE_{r\^T}\ps{\1[r_{vL}=0 \wedge r_v = 1]  \sum_{t=\tl_{vLL}}^{\tr_{vLL}} \vxtr[a_v]} &= \EE_{r\^T}\ps{\1[r_{vL}=0 \wedge r_v = 1]}\EE_{r\^T}\ps{ \sum_{t=\tl_{vLL}}^{\tr_{vLL}} \vxtr[a_v]} \nonumber\\
        &= \frac{1}{4} \sum_{t=\tl_{vLL}}^{\tr_{vLL}} \EE_{r\^T}\ps{ \vxtr[a_v]} \label{eq:independence13}
    \end{align}
    due to the independence of $r_{vL},r_v$ from $r\^t$ for $t \leq \tr_{vLL}$. And,
    \begin{align}
        \EE_{r\^T}\ps{\1[r_v = 1] \sum_{t=\tl_{vLR}}^{\tr_{vLR}} \vxtr[a_v]\1[r_{vL}=0]} &= \EE_{r\^T}\ps{\1[r_v = 1]}\EE_{r\^T}\ps{\sum_{t=\tl_{vLR}}^{\tr_{vLR}} \vxtr[a_v]\1[r_{vL}=0]} \nonumber\\
        &= \frac{1}{2} \sum_{t=\tl_{vLR}}^{\tr_{vLR}} \EE_{r\^T}\ps{\vxtr[a_v]\1[r_{vL}=0]} \label{eq:independence14}
    \end{align}
    due to the independence of $r_v$ from $r_{vL},r\^t$ for $t \leq \tr_{vLR}$. So, combining equations \eqref{eq:independence11}, \eqref{eq:independence12}, \eqref{eq:independence13}, and \eqref{eq:independence14}, we have
    \begin{align*}
        &\EE_{r\^T}\ps{\1[r_{vL}=0 \wedge r_v = 1]\cdot \Swap{a_v}{a_{vL}}}\\
        &= \frac{1}{8D} \sum_{t=\tl_{vLL}}^{\tr_{vLL}} \EE_{r\^T}\ps{ \vxtr[a_v]} + \frac{1}{4D} \sum_{t=\tl_{vLR}}^{\tr_{vLR}} \EE_{r\^T}\ps{\vxtr[a_v]\1[r_{vL}=0]}
    \end{align*}
    Lastly, since $\Swap{a_v}{a_v} = 0$,
    \begin{align}
        \EE_{r\^T}\ps{\max_{a' \in [N]} \Swap{a_v}{a'}}&\geq \EE_{r\^T}\ps{\max\set{\Swap{a_v}{a_{vL}},0}}\nonumber\\
        &\geq \EE_{r\^T}\ps{\1[r_{vL}=0 \wedge r_v = 1]\cdot\max\set{\Swap{a_v}{a_{vL}},0}}\nonumber\\
        &\geq \EE_{r\^T}\ps{\1[r_{vL}=0 \wedge r_v = 1]\cdot\Swap{a_v}{a_{vL}}}\nonumber\\
        &= \frac{1}{8D} \sum_{t=\tl_{vLL}}^{\tr_{vLL}} \EE_{r\^T}\ps{ \vxtr[a_v]} + \frac{1}{4D} \sum_{t=\tl_{vLR}}^{\tr_{vLR}} \EE_{r\^T}\ps{\vxtr[a_v]\1[r_{vL}=0]} \label{eq:case11}
    \end{align}
    
    Similarly,
    \begin{align*}
        \Swap{a_v}{\dot{a}_{vL}}&=\sum_{t=1}^T \vxtr[a_v]\p{\vut[\dot{a}_{vL}]-\vut[a_v]}\\
        &= \frac{d+1}{2D} \p{\1 \ps{r_{vL}=1} \sum_{t=\tl_{vLR}}^{\tr_{vLR}} \vxtr[a_v]}- \frac{d}{2D} \p{\sum_{t=\tl_{vL}}^{\tr_{vL}} \vxtr[a_v] + \1 \ps{r_{v}=0} \sum_{t=\tl_{vR}}^{\tr_{vR}} \vxtr[a_v]}
    \end{align*}
    Therefore,
    \begin{align}
        \1[r_v = 1] \cdot \Swap{a_v}{\dot{a}_{vL}} &= \frac{1}{2D} \1[r_v=1] \sum_{t=\tl_{vLR}}^{\tr_{vLR}} \vxtr[a_v] \1[r_{vL}=1] \label{eq:independence21}\\
        &- \frac{d}{2D} \1[r_v=1] \p{\sum_{t=\tl_{vLL}}^{\tr_{vLL}} \vxtr[a_v] + \sum_{t=\tl_{vLR}}^{\tr_{vLR}} \vxtr[a_v] \1[r_{vL}=0]}\label{eq:independence22}
    \end{align}
    and
    \begin{align}
        \EE_{r\^T}\ps{\1[r_v=1] \sum_{t=\tl_{vLR}}^{\tr_{vLR}} \vxtr[a_v] \1[r_{vL}=1]} &=\EE_{r\^T}\ps{\1[r_v=1]}\EE_{r\^T}\ps{ \sum_{t=\tl_{vLR}}^{\tr_{vLR}} \vxtr[a_v] \1[r_{vL}=1]}\nonumber \\
        &=\frac{1}{2}\sum_{t=\tl_{vLR}}^{\tr_{vLR}}\EE_{r\^T}\ps{\vxtr[a_v] \1[r_{vL}=1]}\label{eq:independence23}
    \end{align}
    due to the independence of $r_v$ from $r_{vL},r\^t$ for $t \leq \tr_{vLR}$. And,
    \begin{align}
        &\EE_{r\^T}\ps{\1[r_v=1] \p{\sum_{t=\tl_{vLL}}^{\tr_{vLL}} \vxtr[a_v] + \sum_{t=\tl_{vLR}}^{\tr_{vLR}} \vxtr[a_v] \1[r_{vL}=0]}} \nonumber\\
        &=\EE_{r\^T}\ps{\1[r_v=1]}\EE_{r\^T}\ps{\sum_{t=\tl_{vLL}}^{\tr_{vLL}} \vxtr[a_v] + \sum_{t=\tl_{vLR}}^{\tr_{vLR}} \vxtr[a_v] \1[r_{vL}=0]} \nonumber\\
        &=\frac{1}{2}\sum_{t=\tl_{vLL}}^{\tr_{vLL}} \EE_{r\^T}\ps{\vxtr[a_v]} + \frac{1}{2}\sum_{t=\tl_{vLR}}^{\tr_{vLR}} \EE_{r\^T}\ps{\vxtr[a_v] \1[r_{vL}=0]} \label{eq:independence24}
    \end{align}
    due to the independence of $r_v$ from $r\^t$ for $t \leq \tr_{vLL}$.  So, combining equations \eqref{eq:independence21}, \eqref{eq:independence22}, \eqref{eq:independence23}, and \eqref{eq:independence24}, we have
    \begin{align*}
        &\EE_{r\^T}\ps{\1[r_v = 1] \cdot \Swap{a_v}{\dot{a}_{vL}}}\\
        &=\frac{1}{4D}\sum_{t=\tl_{vLR}}^{\tr_{vLR}}\EE_{r\^T}\ps{\vxtr[a_v] \1[r_{vL}=1]}-\frac{d}{4D}\sum_{t=\tl_{vLL}}^{\tr_{vLL}}\EE_{r\^T}\ps{\vxtr[a_v]}-\frac{d}{4D}\sum_{t=\tl_{vLR}}^{\tr_{vLR}} \EE_{r\^T}\ps{\vxtr[a_v] \1[r_{vL}=0]}\\
        &\geq \frac{1}{4D}\sum_{t=\tl_{vLR}}^{\tr_{vLR}}\EE_{r\^T}\ps{\vxtr[a_v] \1[r_{vL}=1]}-\frac{1}{4}\sum_{t=\tl_{vLL}}^{\tr_{vLL}}\EE_{r\^T}\ps{\vxtr[a_v]}-\frac{1}{4}\sum_{t=\tl_{vLR}}^{\tr_{vLR}} \EE_{r\^T}\ps{\vxtr[a_v] \1[r_{vL}=0]}
    \end{align*}
    since $d \leq D$.  Lastly, since $\Swap{a_v}{a_v} = 0$,
    \begin{align}
        &\EE_{r\^T}\ps{\max_{a' \in [N]} \Swap{a_v}{a'}}\nonumber\\
        &\geq \EE_{r\^T}\ps{\max\set{\Swap{a_v}{\dot{a}_{vL}},0}}\nonumber\\
        &\geq \EE_{r\^T}\ps{\1[r_v = 1]\cdot\max\set{\Swap{a_v}{\dot{a}_{vL}},0}}\nonumber\\
        &\geq \EE_{r\^T}\ps{\1[r_v = 1]\cdot\Swap{a_v}{\dot{a}_{vL}}}\nonumber\\
        &\geq \frac{1}{4D}\sum_{t=\tl_{vLR}}^{\tr_{vLR}}\EE_{r\^T}\ps{\vxtr[a_v] \1[r_{vL}=1]}-\frac{1}{4}\sum_{t=\tl_{vLL}}^{\tr_{vLL}}\EE_{r\^T}\ps{\vxtr[a_v]}-\frac{1}{4}\sum_{t=\tl_{vLR}}^{\tr_{vLR}} \EE_{r\^T}\ps{\vxtr[a_v] \1[r_{vL}=0]}\label{eq:case12}
    \end{align}

    Similarly,
    \begin{align*}
        &\Swap{a_v}{a_{vR}}=\sum_{t=1}^T \vxtr[a_v]\p{\vut[a_{vR}]-\vut[a_v]}\\
        &= \frac{d+1}{2D} \p{\sum_{t=\tl_{vRL}}^{\tr_{vRL}} \vxtr[a_v] + \1 \ps{r_{vR}=0} \sum_{t=\tl_{vRR}}^{\tr_{vRR}} \vxtr[a_v]}- \frac{d}{2D} \p{\sum_{t=\tl_{vL}}^{\tr_{vL}} \vxtr[a_v] + \1 \ps{r_{v}=0} \sum_{t=\tl_{vR}}^{\tr_{vR}} \vxtr[a_v]}\\
        &\geq \frac{d+1}{2D} \p{\sum_{t=\tl_{vRL}}^{\tr_{vRL}} \vxtr[a_v] + \1 \ps{r_{vR}=0} \sum_{t=\tl_{vRR}}^{\tr_{vRR}} \vxtr[a_v]}- \frac{d}{2D} \p{\sum_{t=\tl_{v}}^{\tr_{v}} \vxtr[a_v]}
    \end{align*}
    bounding $\1[r_v = 0] \leq 1$.  Therefore,
    \begin{align}
        \1[r_{vR}=0] \cdot \Swap{a_v}{a_{vR}} &\geq \frac{1}{2D} \1[r_{vR}=0] \sum_{t=\tl_{vRL}}^{\tr_{vRL}} \vxtr[a_v] \label{eq:independence31}\\
        &+ \frac{1}{2D} \sum_{t=\tl_{vRR}}^{\tr_{vRR}} \vxtr[a_v]\1[r_{vR}=0] \label{eq:independence32}\\
        &- \frac{d}{2D} \1[r_{vR}=0] \sum_{t=\tl_{vL}}^{\tr_{vL}} \vxtr[a_v] \label{eq:independence33}
    \end{align}
    and
    \begin{align}
        \EE_{r\^T}\ps{\1[r_{vR}=0] \sum_{t=\tl_{vRL}}^{\tr_{vRL}} \vxtr[a_v]} &= \EE_{r\^T}\ps{\1[r_{vR}=0]}\EE_{r\^T}\ps{ \sum_{t=\tl_{vRL}}^{\tr_{vRL}} \vxtr[a_v]} \nonumber\\
        &= \frac{1}{2} \sum_{t=\tl_{vRL}}^{\tr_{vRL}} \EE_{r\^T}\ps{\vxtr[a_v]} \label{eq:independence34}
    \end{align}
    due to the independence of $r_{vR}$ from $r\^t$ for $t \leq \tr_{vRL}$. And,
    \begin{align}
        \EE_{r\^T}\ps{\1[r_{vR}=0] \sum_{t=\tl_{vL}}^{\tr_{vL}} \vxtr[a_v]} &= \EE_{r\^T}\ps{\1[r_{vR}=0]}\EE_{r\^T}\ps{ \sum_{t=\tl_{vL}}^{\tr_{vL}} \vxtr[a_v]} \nonumber\\
        &= \frac{1}{2} \sum_{t=\tl_{vL}}^{\tr_{vL}} \EE_{r\^T}\ps{\vxtr[a_v]} \label{eq:independence35}
    \end{align}
    due to the independence of $r_{vR}$ from $r\^t$ for $t \leq \tr_{vL}$. So, combining equations \eqref{eq:independence31}, \eqref{eq:independence32}, \eqref{eq:independence33}, \eqref{eq:independence34}, and \eqref{eq:independence35}, we have
    \begin{align*}
        &\EE_{r\^T}\ps{\1[r_{vR}=0] \cdot \Swap{a_v}{a_{vR}}}\\
        &\geq \frac{1}{4D} \sum_{t=\tl_{vRL}}^{\tr_{vRL}} \EE_{r\^T}\ps{\vxtr[a_v]} + \frac{1}{2D} \sum_{t=\tl_{vRR}}^{\tr_{vRR}} \EE_{r\^T}\ps{\vxtr[a_v]\1[r_{vR}=0]} - \frac{d}{4D} \sum_{t=\tl_{vL}}^{\tr_{vL}} \EE_{r\^T}\ps{\vxtr[a_v]}\\
        &\geq \frac{1}{4D} \sum_{t=\tl_{vRL}}^{\tr_{vRL}} \EE_{r\^T}\ps{\vxtr[a_v]} + \frac{1}{2D} \sum_{t=\tl_{vRR}}^{\tr_{vRR}} \EE_{r\^T}\ps{\vxtr[a_v]\1[r_{vR}=0]} - \frac{1}{4} \sum_{t=\tl_{vL}}^{\tr_{vL}} \EE_{r\^T}\ps{\vxtr[a_v]}
    \end{align*}
    since $d \leq D$.  Lastly, since $\Swap{a_v}{a_v} = 0$,
    \begin{align}
        &\EE_{r\^T}\ps{\max_{a' \in [N]} \Swap{a_v}{a'}}\nonumber\\
        &\geq \EE_{r\^T}\ps{\max\set{\Swap{a_v}{a_{vR}},0}}\nonumber\\
        &\geq \EE_{r\^T}\ps{\1[r_{vR}=0]\cdot\max\set{\Swap{a_v}{a_{vR}},0}}\nonumber\\
        &\geq \EE_{r\^T}\ps{\1[r_{vR}=0]\cdot\Swap{a_v}{a_{vR}}}\nonumber\\
        &\geq \frac{1}{4D} \sum_{t=\tl_{vRL}}^{\tr_{vRL}} \EE_{r\^T}\ps{\vxtr[a_v]} + \frac{1}{2D} \sum_{t=\tl_{vRR}}^{\tr_{vRR}} \EE_{r\^T}\ps{\vxtr[a_v]\1[r_{vR}=0]} - \frac{1}{4} \sum_{t=\tl_{vL}}^{\tr_{vL}} \EE_{r\^T}\ps{\vxtr[a_v]}\label{eq:case13}
    \end{align}

    Similarly,
    \begin{align*}
        &\Swap{a_v}{\dot{a}_{vR}}=\sum_{t=1}^T \vxtr[a_v]\p{\vut[\dot{a}_{vR}]-\vut[a_v]}\\
        &= \frac{d+1}{2D} \p{\1 \ps{r_{vR}=1} \sum_{t=\tl_{vRR}}^{\tr_{vRR}} \vxtr[a_v]}- \frac{d}{2D} \p{\sum_{t=\tl_{vL}}^{\tr_{vL}} \vxtr[a_v] + \1 \ps{r_{v}=0} \sum_{t=\tl_{vR}}^{\tr_{vR}} \vxtr[a_v]}\\
        &\geq \frac{d+1}{2D} \p{\1 \ps{r_{vR}=1} \sum_{t=\tl_{vRR}}^{\tr_{vRR}} \vxtr[a_v]}- \frac{d}{2D} \p{\sum_{t=\tl_{v}}^{\tr_{v}} \vxtr[a_v]}\\
        \intertext{bounding $\1[r_v = 0] \leq 1$}
        &= \frac{1}{2D} \sum_{t=\tl_{vRR}}^{\tr_{vRR}} \vxtr[a_v] \1 \ps{r_{vR}=1} - \frac{d}{2D} \p{\sum_{t=\tl_{vL}}^{\tr_{vRL}} \vxtr[a_v] + \sum_{t=\tl_{vRR}}^{\tr_{vRR}} \vxtr[a_v]\1 \ps{r_{vR}=0}}
    \end{align*}
    So, we have
    \begin{align}
        &\EE_{r\^T}\ps{\max_{a' \in [N]} \Swap{a_v}{a'}} \geq \EE_{r\^T}\ps{\Swap{a_v}{\dot{a}_{vR}}}\nonumber\\
        &\geq \frac{1}{2D} \sum_{t=\tl_{vRR}}^{\tr_{vRR}} \EE_{r\^T}\ps{\vxtr[a_v] \1 \ps{r_{vR}=1}} - \frac{d}{2D} \sum_{t=\tl_{vL}}^{\tr_{vRL}} \EE_{r\^T}\ps{\vxtr[a_v]} - \frac{d}{2D} \sum_{t=\tl_{vRR}}^{\tr_{vRR}} \EE_{r\^T}\ps{\vxtr[a_v]\1 \ps{r_{vR}=0}}\nonumber\\
        &\geq \frac{1}{2D} \sum_{t=\tl_{vRR}}^{\tr_{vRR}} \EE_{r\^T}\ps{\vxtr[a_v] \1 \ps{r_{vR}=1}} - \frac{1}{2} \sum_{t=\tl_{vL}}^{\tr_{vRL}} \EE_{r\^T}\ps{\vxtr[a_v]} - \frac{1}{2} \sum_{t=\tl_{vRR}}^{\tr_{vRR}} \EE_{r\^T}\ps{\vxtr[a_v]\1 \ps{r_{vR}=0}}\label{eq:case14}
    \end{align}
    since $d \leq D$.  %

    Similarly, recalling our notation that the empty string $\emptyset$ represents the root of the tree,
    \begin{align*}
        &\Swap{a_v}{a_{\emptyset}}=\sum_{t=1}^T \vxtr[a_v]\p{\vut[a_{\emptyset}]-\vut[a_v]}\\
        &= \frac{1}{2D} \p{\sum_{t=\tl_{\emptyset L}}^{\tr_{\emptyset L}} \vxtr[a_v] + \1 \ps{r_{\emptyset}=0} \sum_{t=\tl_{\emptyset R}}^{\tr_{\emptyset R}} \vxtr[a_v]}- \frac{d}{2D} \p{\sum_{t=\tl_{vL}}^{\tr_{vL}} \vxtr[a_v] + \1 \ps{r_{v}=0} \sum_{t=\tl_{vR}}^{\tr_{vR}} \vxtr[a_v]}\\
        &\geq \frac{1}{2D} \p{\sum_{t=\tl_{\emptyset L}}^{\tr_{\emptyset L}} \vxtr[a_v] + \1 \ps{r_{\emptyset}=0} \sum_{t=\tl_{\emptyset R}}^{\tr_{\emptyset R}} \vxtr[a_v]}- \frac{d}{2D} \p{\sum_{t=\tl_{v}}^{\tr_{v}} \vxtr[a_v]}\\
    \end{align*}
    bounding $\1[r_v = 0] \leq 1$.  Also,
    \begin{align*}
        &\Swap{a_v}{\dot{a}_{\emptyset}}=\sum_{t=1}^T \vxtr[a_v]\p{\vut[\dot{a}_{\emptyset}]-\vut[a_v]}\\
        &= \frac{1}{2D} \p{\1 \ps{r_{\emptyset}=1} \sum_{t=\tl_{\emptyset R}}^{\tr_{\emptyset R}} \vxtr[a_v]}- \frac{d}{2D} \p{\sum_{t=\tl_{vL}}^{\tr_{vL}} \vxtr[a_v] + \1 \ps{r_{v}=0} \sum_{t=\tl_{vR}}^{\tr_{vR}} \vxtr[a_v]}\\
        &\geq \frac{1}{2D} \p{\1 \ps{r_{\emptyset}=1} \sum_{t=\tl_{\emptyset R}}^{\tr_{\emptyset R}} \vxtr[a_v]}- \frac{d}{2D} \p{\sum_{t=\tl_{v}}^{\tr_{v}} \vxtr[a_v]}
    \end{align*}
    So,
    \begin{align*}
        \Swap{a_v}{a_{\emptyset}}+ \Swap{a_v}{\dot{a}_{\emptyset}} &\geq \frac{1}{2D} \p{\sum_{t=1}^{T'} \vxtr[a_v]}- \frac{d}{D} \p{\sum_{t=\tl_{v}}^{\tr_{v}} \vxtr[a_v]}\\
        &\geq \frac{1}{2D} \p{\sum_{t=1}^{T'} \vxtr[a_v]}- \p{\sum_{t=\tl_{v}}^{\tr_{v}} \vxtr[a_v]}
    \end{align*}
    since $d \leq D$. Thus,
    \begin{align}
        \EE_{r\^T}\ps{\max_{a' \in [N]} \Swap{a_v}{a'}} &\geq \frac{1}{2}\EE_{r\^t}\ps{\Swap{a_v}{a_{\emptyset}}+ \Swap{a_v}{\dot{a}_{\emptyset}}}\nonumber \\
        &\geq \frac{1}{4D} \sum_{t=1}^{T'} \EE_{r\^T}\ps{\vxtr[a_v]}- \frac{1}{2}\sum_{t=\tl_{v}}^{\tr_{v}} \EE_{r\^T}\ps{\vxtr[a_v]}\label{eq:case15}
    \end{align}

    Collecting equations \eqref{eq:case11}, \eqref{eq:case12}, \eqref{eq:case13}, \eqref{eq:case14}, and \eqref{eq:case15},

    \begin{align*}
        &\EE_{r\^T}\ps{\max_{a' \in [N]} \Swap{a_v}{a'}} \nonumber\\
        &\geq \frac{1}{8D} \sum_{t=\tl_{vLL}}^{\tr_{vLL}} \EE_{r\^T}\ps{ \vxtr[a_v]} + \frac{1}{4D} \sum_{t=\tl_{vLR}}^{\tr_{vLR}} \EE_{r\^T}\ps{\vxtr[a_v]\1[r_{vL}=0]} &\text{\eqref{eq:case11}}\\
        &\EE_{r\^T}\ps{\max_{a' \in [N]} \Swap{a_v}{a'}} \nonumber\\
        &\geq \frac{1}{4D}\sum_{t=\tl_{vLR}}^{\tr_{vLR}}\EE_{r\^T}\ps{\vxtr[a_v] \1[r_{vL}=1]}-\frac{1}{4}\sum_{t=\tl_{vLL}}^{\tr_{vLL}}\EE_{r\^T}\ps{\vxtr[a_v]}-\frac{1}{4}\sum_{t=\tl_{vLR}}^{\tr_{vLR}} \EE_{r\^T}\ps{\vxtr[a_v] \1[r_{vL}=0]}&\text{\eqref{eq:case12}}\\
        &\EE_{r\^T}\ps{\max_{a' \in [N]} \Swap{a_v}{a'}} \nonumber\\
        &\geq \frac{1}{4D} \sum_{t=\tl_{vRL}}^{\tr_{vRL}} \EE_{r\^T}\ps{\vxtr[a_v]} + \frac{1}{2D} \sum_{t=\tl_{vRR}}^{\tr_{vRR}} \EE_{r\^T}\ps{\vxtr[a_v]\1[r_{vR}=0]} - \frac{1}{4} \sum_{t=\tl_{vL}}^{\tr_{vL}} \EE_{r\^T}\ps{\vxtr[a_v]}&\text{\eqref{eq:case13}}\\
        &\EE_{r\^T}\ps{\max_{a' \in [N]} \Swap{a_v}{a'}} \nonumber\\
        &\geq \frac{1}{2D} \sum_{t=\tl_{vRR}}^{\tr_{vRR}} \EE_{r\^T}\ps{\vxtr[a_v] \1 \ps{r_{vR}=1}} - \frac{1}{2} \sum_{t=\tl_{vL}}^{\tr_{vRL}} \EE_{r\^T}\ps{\vxtr[a_v]} - \frac{1}{2} \sum_{t=\tl_{vRR}}^{\tr_{vRR}} \EE_{r\^T}\ps{\vxtr[a_v]\1 \ps{r_{vR}=0}}&\text{\eqref{eq:case14}}\\
        &\EE_{r\^T}\ps{\max_{a' \in [N]} \Swap{a_v}{a'}} \nonumber\\
        &\geq \frac{1}{4D} \sum_{t=1}^{T'} \EE_{r\^T}\ps{\vxtr[a_v]}- \frac{1}{2}\sum_{t=\tl_{v}}^{\tr_{v}} \EE_{r\^T}\ps{\vxtr[a_v]}&\text{\eqref{eq:case15}}
    \end{align*}

    Summing the inequalities $$4D\text{\eqref{eq:case15}}+4D^2\text{\eqref{eq:case14}}+16D^3\text{\eqref{eq:case13}}+32D^4\text{\eqref{eq:case12}}+128D^5\text{\eqref{eq:case11}}$$ gives
    \begin{align*}
        &184D^5\EE_{r\^T}\ps{\max_{a' \in [N]} \Swap{a_v}{a'}}\nonumber\\
        &\geq \sum_{t=1}^{T'} \EE_{r\^T}\ps{\vxtr[a_v]}\nonumber\\
        &+(-2D-2D^2-4D^3-8D^4+16D^4)\sum_{t=\tl_{vLL}}^{\tr_{vLL}} \EE_{r\^T}\ps{ \vxtr[a_v]}\nonumber\\
        &+(-2D-2D^2-4D^3-8D^4+32D^4)\sum_{t=\tl_{vLR}}^{\tr_{vLR}} \EE_{r\^T}\ps{\vxtr[a_v]\1[r_{vL}=0]}\nonumber\\
        &+(-2D-2D^2-4D^3+8D^3)\sum_{t=\tl_{vLR}}^{\tr_{vLR}} \EE_{r\^T}\ps{\vxtr[a_v]\1[r_{vL}=1]}\nonumber\\
        &+(-2D-2D^2+4D^2)\sum_{t=\tl_{vRL}}^{\tr_{vRL}} \EE_{r\^T}\ps{\vxtr[a_v]}\nonumber\\
        &+(-2D-2D^2+8D^2)\sum_{t=\tl_{vRR}}^{\tr_{vRR}} \EE_{r\^T}\ps{\vxtr[a_v]\1[r_{vR}=0]}\nonumber\\
        &+(-2D+2D)\sum_{t=\tl_{vRR}}^{\tr_{vRR}} \EE_{r\^T}\ps{\vxtr[a_v]\1[r_{vR}=1]}\nonumber\\
        & \geq \sum_{t=1}^{T'} \EE_{r\^T}\ps{\vxtr[a_v]}
    \end{align*}
    which gives \eqref{eq:case1sum}, as desired.
\end{proof}

\begin{lemma}[Case 2]\label{lem:case2}
        Let $v$ be a node in the tree of depth $d \leq D-2$.  Then,
        {\upshape
        \begin{align}
            24D^3\EE_{r\^T}\ps{\max_{a' \in [N]} \Swap{\dot{a}_v}{a'}} \geq \sum_{t=1}^{T'} \EE_{r\^T}\ps{\vxtr[\dot{a}_v]} \label{eq:case2sum}
        \end{align}
        }
    \end{lemma}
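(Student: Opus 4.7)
My approach is to adapt the argument of Lemma~\ref{lem:case1} but use only three swap inequalities instead of five, which is the source of the improvement from $D^5$ to $D^3$. The key observation is that $\dot{a}_v$ is much more ``passive'' than $a_v$: since $\vut[\dot{a}_v] = (d/(2D)) \cdot \mathbbm{1}\{t \in [\tl_{vR},\tr_{vR}]\} \cdot \mathbbm{1}\{r_v = 1\}$, the action $\dot{a}_v$ only receives positive reward during the right-hand interval $[\tl_{vR},\tr_{vR}]$, and only when the coin $r_v = 1$. I will write $S := \sum_{t=1}^{T'} \EE[\vxtr[\dot{a}_v]]$ for the target sum, and split the weight placed on $\dot a_v$ during $[\tl_{vR},\tr_{vR}]$ into $A := \sum_{t=\tl_{vRL}}^{\tr_{vRL}} \EE[\vxtr[\dot{a}_v]]$ and $B_b := \EE[\mathbbm{1}\{r_{vR}=b\}\sum_{t=\tl_{vRR}}^{\tr_{vRR}} \vxtr[\dot{a}_v]]$ for $b \in \{0,1\}$, so that the total expected weight in $[\tl_{vR},\tr_{vR}]$ equals $A + B_0 + B_1$.

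The first inequality comes from swapping to the root. Since $\vut[a_\emptyset] + \vut[\dot{a}_\emptyset] = 1/(2D)$ identically on $[1,T']$ and $d/D \leq 1$, the combined root swap satisfies $\Swap{\dot{a}_v}{a_\emptyset} + \Swap{\dot{a}_v}{\dot{a}_\emptyset} \geq (1/(2D))\sum_{t=1}^{T'} \vxtr[\dot{a}_v] - \sum_{t=\tl_{vR}}^{\tr_{vR}} \vxtr[\dot{a}_v]$. Using $\max_{a'}\Swap{\dot{a}_v}{a'} \geq (1/2)(\Swap{\dot{a}_v}{a_\emptyset} + \Swap{\dot{a}_v}{\dot{a}_\emptyset})$ and taking expectations yields the bound $\EE[\max_{a'}\Swap{\dot{a}_v}{a'}] \geq S/(4D) - (A + B_0 + B_1)/2$.

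The second and third inequalities control $A, B_0, B_1$ via swaps to the children of $vR$. For the swap to $a_{vR}$, multiplying $\Swap{\dot{a}_v}{a_{vR}}$ by $\mathbbm{1}\{r_{vR}=0\}$: under this event, $a_{vR}$ earns $(d+1)/(2D)$ throughout $[\tl_{vR},\tr_{vR}]$, beating $\dot{a}_v$'s at-most $d/(2D)$ by $1/(2D)$. Using independence of $r_{vR}$ from $\vxtr$ for $t \leq \tr_{vRL}$ and positivity of the omitted $[\tl_{vRR},\tr_{vRR}]$ contribution when $r_{vR}=1$, this gives $\EE[\max_{a'}\Swap{\dot{a}_v}{a'}] \geq (1/(2D))\cdot(A/2 + B_0)$, hence $A \leq 4D \cdot \EE[\max_{a'}\Swap{\dot{a}_v}{a'}]$ and $B_0 \leq 2D \cdot \EE[\max_{a'}\Swap{\dot{a}_v}{a'}]$. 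For the swap to $\dot{a}_{vR}$, multiplying by $\mathbbm{1}\{r_{vR}=1\}$, expanding, and using $d \leq D$ on the cross-term produces $\EE[\max_{a'}\Swap{\dot{a}_v}{a'}] \geq (1/(2D)) B_1 - (1/4) A$, which after substituting the bound on $A$ yields $B_1 \leq 4D^2 \cdot \EE[\max_{a'}\Swap{\dot{a}_v}{a'}]$.

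Adding the three bounds gives $A + B_0 + B_1 \leq 10 D^2 \cdot \EE[\max_{a'}\Swap{\dot{a}_v}{a'}]$, and plugging this into the first inequality gives $(1 + 5D^2) \EE[\max_{a'}\Swap{\dot{a}_v}{a'}] \geq S/(4D)$, hence $24 D^3 \EE[\max_{a'}\Swap{\dot{a}_v}{a'}] \geq S$, as required. The main obstacle is the dependence between $\vxtr$ and the Bernoulli variables $r_v, r_{vR}$: the learner's action $\vxtr$ at time $t$ is only independent of $r_{vR}$ when $t \leq \tr_{vRL}$, and in particular for $t \in [\tl_{vRR},\tr_{vRR}]$ the expectations involving $\mathbbm{1}\{r_{vR}=b\}$ do not factor as products. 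This is precisely why I separate the $[\tl_{vRR},\tr_{vRR}]$ contribution into $B_0$ and $B_1$ rather than bounding a single joint sum, and positivity of $\vxtr[\dot{a}_v]$ is what lets me drop the unneeded indicator-conditional terms without losing the required lower bounds.
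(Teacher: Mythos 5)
Your proof is correct and follows essentially the same route as the paper's: it uses the same three swap targets ($a_{vR}$, $\dot{a}_{vR}$, and the root), and your first two inequalities coincide exactly with the paper's \eqref{eq:case21} and \eqref{eq:case23}. The only genuine variation is in the $\dot{a}_{vR}$ step, where you multiply by $\1[r_{vR}=1]$ and exploit the independence of $r_{vR}$ from $\vxtr$ over $[\tl_{vRL},\tr_{vRL}]$ to obtain the tail $-\tfrac14 A$, whereas the paper's \eqref{eq:case22} simply bounds $\1[r_v=1]\le 1$ without that conditioning and gets the slightly looser tail $-\tfrac12 A - \tfrac12 B_0$; after your sequential substitution (vs.\ the paper's weighted sum $4D\cdot\eqref{eq:case23}+4D^2\cdot\eqref{eq:case22}+16D^3\cdot\eqref{eq:case21}$) both land on the same constant $24D^3$.
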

\begin{proof}[Proof of Lemma \ref{lem:case2}]
    We have
    \begin{align*}
        &\Swap{\dot{a}_v}{a_{vR}}=\sum_{t=1}^T \vxtr[\dot{a}_v]\p{\vut[a_{vR}]-\vut[\dot{a}_v]}\\
        &= \frac{d+1}{2D} \p{\sum_{t=\tl_{vRL}}^{\tr_{vRL}} \vxtr[\dot{a}_v] + \1 \ps{r_{vR}=0} \sum_{t=\tl_{vRR}}^{\tr_{vRR}} \vxtr[\dot{a}_v]}- \frac{d}{2D} \p{\1 \ps{r_{v}=1} \sum_{t=\tl_{vR}}^{\tr_{vR}} \vxtr[\dot{a}_v]}\\
        &= \frac{d+1}{2D} \p{\sum_{t=\tl_{vRL}}^{\tr_{vRL}} \vxtr[\dot{a}_v] + \1 \ps{r_{vR}=0} \sum_{t=\tl_{vRR}}^{\tr_{vRR}} \vxtr[\dot{a}_v]}- \frac{d}{2D} \p{\sum_{t=\tl_{vR}}^{\tr_{vR}} \vxtr[\dot{a}_v]}
    \end{align*}
    bounding $\1[r_v = 1] \leq 1$.  Therefore,
    \begin{align}
        \1[r_{vR}=0] \cdot \Swap{\dot{a}_v}{a_{vR}} \geq \frac{1}{2D} \1[r_{vR}=0] \sum_{t=\tl_{vRL}}^{\tr_{vRL}} \vxtr[\dot{a}_v] + \frac{1}{2D} \sum_{t=\tl_{vRR}}^{\tr_{vRR}} \vxtr[\dot{a}_v]\1[r_{vR}=0] \label{eq:independence61}
    \end{align}
    and
    \begin{align}
        \EE_{r\^T}\ps{\1[r_{vR}=0] \sum_{t=\tl_{vRL}}^{\tr_{vRL}} \vxtr[\dot{a}_v]} &= \EE_{r\^T}\ps{\1[r_{vR}=0]}\EE_{r\^T}\ps{ \sum_{t=\tl_{vRL}}^{\tr_{vRL}} \vxtr[\dot{a}_v]} \nonumber\\
        &= \frac{1}{2} \sum_{t=\tl_{vRL}}^{\tr_{vRL}} \EE_{r\^T}\ps{\vxtr[\dot{a}_v]} \label{eq:independence62}
    \end{align}
    due to the independence of $r_{vR}$ from $r\^t$ for $t \leq \tr_{vRL}$. So, combining equations \eqref{eq:independence61}, and \eqref{eq:independence62}, we have
    \begin{align*}
        \EE_{r\^T}\ps{\1[r_{vR}=0] \cdot \Swap{\dot{a}_v}{a_{vR}}}\geq \frac{1}{4D} \sum_{t=\tl_{vRL}}^{\tr_{vRL}} \EE_{r\^T}\ps{\vxtr[\dot{a}_v]} + \frac{1}{2D} \sum_{t=\tl_{vRR}}^{\tr_{vRR}} \EE_{r\^T}\ps{\vxtr[\dot{a}_v]\1[r_{vR}=0]}
    \end{align*}
    Lastly, since $\Swap{\dot{a}_v}{\dot{a}_v} = 0$,
    \begin{align}
        \EE_{r\^T}\ps{\max_{a' \in [N]} \Swap{\dot{a}_v}{a'}}&\geq \EE_{r\^T}\ps{\max\set{\Swap{\dot{a}_v}{a_{vR}},0}}\nonumber\\
        &\geq \EE_{r\^T}\ps{\1[r_{vR}=0]\cdot\max\set{\Swap{\dot{a}_v}{a_{vR}},0}}\nonumber\\
        &\geq \EE_{r\^T}\ps{\1[r_{vR}=0]\cdot\Swap{\dot{a}_v}{a_{vR}}}\nonumber\\
        &\geq \frac{1}{4D} \sum_{t=\tl_{vRL}}^{\tr_{vRL}} \EE_{r\^T}\ps{\vxtr[\dot{a}_v]} + \frac{1}{2D} \sum_{t=\tl_{vRR}}^{\tr_{vRR}} \EE_{r\^T}\ps{\vxtr[\dot{a}_v]\1[r_{vR}=0]}\label{eq:case21}
    \end{align}

    Similarly,
    \begin{align*}
        &\Swap{\dot{a}_v}{\dot{a}_{vR}}=\sum_{t=1}^T \vxtr[\dot{a}_v]\p{\vut[\dot{a}_{vR}]-\vut[\dot{a}_v]}\\
        &= \frac{d+1}{2D} \p{\1 \ps{r_{vR}=1} \sum_{t=\tl_{vRR}}^{\tr_{vRR}} \vxtr[\dot{a}_v]}- \frac{d}{2D} \p{\1 \ps{r_{v}=1} \sum_{t=\tl_{vR}}^{\tr_{vR}} \vxtr[\dot{a}_v]}\\
        &\geq \frac{d+1}{2D} \p{\1 \ps{r_{vR}=1} \sum_{t=\tl_{vRR}}^{\tr_{vRR}} \vxtr[\dot{a}_v]}- \frac{d}{2D} \p{\sum_{t=\tl_{vR}}^{\tr_{vR}} \vxtr[\dot{a}_v]}\\
        \intertext{bounding $\1[r_v = 1] \leq 1$}
        &= \frac{1}{2D} \sum_{t=\tl_{vRR}}^{\tr_{vRR}} \vxtr[\dot{a}_v] \1 \ps{r_{vR}=1} - \frac{d}{2D} \p{\sum_{t=\tl_{vRL}}^{\tr_{vRL}} \vxtr[\dot{a}_v] + \sum_{t=\tl_{vRR}}^{\tr_{vRR}} \vxtr[\dot{a}_v]\1 \ps{r_{vR}=0}}
    \end{align*}
    So, we have
    \begin{align}
        &\EE_{r\^T}\ps{\max_{a' \in [N]} \Swap{\dot{a}_v}{a'}} \geq \EE_{r\^T}\ps{\Swap{\dot{a}_v}{\dot{a}_{vR}}}\nonumber\\
        &\geq \frac{1}{2D} \sum_{t=\tl_{vRR}}^{\tr_{vRR}} \EE_{r\^T}\ps{\vxtr[\dot{a}_v] \1 \ps{r_{vR}=1}} - \frac{d}{2D} \sum_{t=\tl_{vRL}}^{\tr_{vRL}} \EE_{r\^T}\ps{\vxtr[\dot{a}_v]} - \frac{d}{2D} \sum_{t=\tl_{vRR}}^{\tr_{vRR}} \EE_{r\^T}\ps{\vxtr[\dot{a}_v]\1 \ps{r_{vR}=0}}\nonumber\\
        &\geq \frac{1}{2D} \sum_{t=\tl_{vRR}}^{\tr_{vRR}} \EE_{r\^T}\ps{\vxtr[\dot{a}_v] \1 \ps{r_{vR}=1}} - \frac{1}{2} \sum_{t=\tl_{vRL}}^{\tr_{vRL}} \EE_{r\^T}\ps{\vxtr[\dot{a}_v]} - \frac{1}{2} \sum_{t=\tl_{vRR}}^{\tr_{vRR}} \EE_{r\^T}\ps{\vxtr[\dot{a}_v]\1 \ps{r_{vR}=0}}\label{eq:case22}
    \end{align}
    since $d \leq D$.  %

    Similarly, recalling our notation that the empty string $\emptyset$ represents the root of the tree,
    \begin{align*}
        &\Swap{\dot{a}_v}{a_{\emptyset}}=\sum_{t=1}^T \vxtr[\dot{a}_v]\p{\vut[a_{\emptyset}]-\vut[\dot{a}_v]}\\
        &= \frac{1}{2D} \p{\sum_{t=\tl_{\emptyset L}}^{\tr_{\emptyset L}} \vxtr[\dot{a}_v] + \1 \ps{r_{\emptyset}=0} \sum_{t=\tl_{\emptyset R}}^{\tr_{\emptyset R}} \vxtr[\dot{a}_v]}- \frac{d}{2D} \p{\1 \ps{r_{v}=1} \sum_{t=\tl_{vR}}^{\tr_{vR}} \vxtr[\dot{a}_v]}\\
        &\geq \frac{1}{2D} \p{\sum_{t=\tl_{\emptyset L}}^{\tr_{\emptyset L}} \vxtr[\dot{a}_v] + \1 \ps{r_{\emptyset}=0} \sum_{t=\tl_{\emptyset R}}^{\tr_{\emptyset R}} \vxtr[\dot{a}_v]}- \frac{d}{2D} \p{\sum_{t=\tl_{vR}}^{\tr_{vR}} \vxtr[\dot{a}_v]}\\
    \end{align*}
    bounding $\1[r_v = 1] \leq 1$.  Also,
    \begin{align*}
        &\Swap{\dot{a}_v}{\dot{a}_{\emptyset}}=\sum_{t=1}^T \vxtr[\dot{a}_v]\p{\vut[\dot{a}_{\emptyset}]-\vut[\dot{a}_v]}\\
        &= \frac{1}{2D} \p{\1 \ps{r_{\emptyset}=1} \sum_{t=\tl_{\emptyset R}}^{\tr_{\emptyset R}} \vxtr[\dot{a}_v]}- \frac{d}{2D} \p{\1 \ps{r_{v}=1} \sum_{t=\tl_{vR}}^{\tr_{vR}} \vxtr[\dot{a}_v]}\\
        &\geq \frac{1}{2D} \p{\1 \ps{r_{\emptyset}=1} \sum_{t=\tl_{\emptyset R}}^{\tr_{\emptyset R}} \vxtr[\dot{a}_v]}- \frac{d}{2D} \p{\sum_{t=\tl_{vR}}^{\tr_{vR}} \vxtr[\dot{a}_v]}
    \end{align*}
    So,
    \begin{align*}
        \Swap{\dot{a}_v}{a_{\emptyset}}+ \Swap{\dot{a}_v}{\dot{a}_{\emptyset}} &\geq \frac{1}{2D} \p{\sum_{t=1}^{T'} \vxtr[\dot{a}_v]}- \frac{d}{D} \p{\sum_{t=\tl_{vR}}^{\tr_{vR}} \vxtr[\dot{a}_v]}\\
        &\geq \frac{1}{2D} \p{\sum_{t=1}^{T'} \vxtr[\dot{a}_v]}- \p{\sum_{t=\tl_{vR}}^{\tr_{vR}} \vxtr[\dot{a}_v]}
    \end{align*}
    since $d \leq D$. Thus,
    \begin{align}
        \EE_{r\^T}\ps{\max_{a' \in [N]} \Swap{\dot{a}_v}{a'}} &\geq \frac{1}{2}\EE_{r\^t}\ps{\Swap{\dot{a}_v}{a_{\emptyset}}+ \Swap{\dot{a}_v}{\dot{a}_{\emptyset}}}\nonumber\\
        &\geq \frac{1}{4D} \sum_{t=1}^{T'} \EE_{r\^T}\ps{\vxtr[\dot{a}_v]}- \frac{1}{2}\sum_{t=\tl_{vR}}^{\tr_{vR}} \EE_{r\^T}\ps{\vxtr[\dot{a}_v]}\label{eq:case23}
    \end{align}

    Collecting equations \eqref{eq:case21}, \eqref{eq:case22}, and \eqref{eq:case23},

    \begin{align*}
        &\EE_{r\^T}\ps{\max_{a' \in [N]} \Swap{\dot{a}_v}{a'}}\nonumber\\
        &\geq \frac{1}{4D} \sum_{t=\tl_{vRL}}^{\tr_{vRL}} \EE_{r\^T}\ps{\vxtr[\dot{a}_v]} + \frac{1}{2D} \sum_{t=\tl_{vRR}}^{\tr_{vRR}} \EE_{r\^T}\ps{\vxtr[\dot{a}_v]\1[r_{vR}=0]} &\text{\eqref{eq:case21}}\\
        &\EE_{r\^T}\ps{\max_{a' \in [N]} \Swap{\dot{a}_v}{a'}}\nonumber\\
        &\geq \frac{1}{2D} \sum_{t=\tl_{vRR}}^{\tr_{vRR}} \EE_{r\^T}\ps{\vxtr[\dot{a}_v] \1 \ps{r_{vR}=1}} - \frac{1}{2} \sum_{t=\tl_{vRL}}^{\tr_{vRL}} \EE_{r\^T}\ps{\vxtr[\dot{a}_v]} - \frac{1}{2} \sum_{t=\tl_{vRR}}^{\tr_{vRR}} \EE_{r\^T}\ps{\vxtr[\dot{a}_v]\1 \ps{r_{vR}=0}} &\text{\eqref{eq:case22}}\\
        &\EE_{r\^T}\ps{\max_{a' \in [N]} \Swap{\dot{a}_v}{a'}}\nonumber\\
        &\geq \frac{1}{4D} \sum_{t=1}^{T'} \EE_{r\^T}\ps{\vxtr[\dot{a}_v]}- \frac{1}{2}\sum_{t=\tl_{vR}}^{\tr_{vR}} \EE_{r\^T}\ps{\vxtr[\dot{a}_v]} &\text{\eqref{eq:case23}}
    \end{align*}

    Summing the inequalities
    $$4D\text{\eqref{eq:case23}}+4D^2\text{\eqref{eq:case22}}+16D^3\text{\eqref{eq:case21}}$$ gives

    \begin{align}
        24D^3\EE_{r\^T}\ps{\max_{a' \in [N]} \Swap{\dot{a}_v}{a'}}&\geq \sum_{t=1}^{T'} \EE_{r\^T}\ps{\vxtr[\dot{a}_v]}\nonumber\\
        &+(-2D-2D^2+4D^2)\sum_{t=\tl_{vRL}}^{\tr_{vRL}} \EE_{r\^T}\ps{\vxtr[\dot{a}_v]}\nonumber\\
        &+(-2D-2D^2+8D^2)\sum_{t=\tl_{vRR}}^{\tr_{vRR}} \EE_{r\^T}\ps{\vxtr[\dot{a}_v]\1[r_{vR}=0]}\nonumber\\
        &+(-2D+2D)\sum_{t=\tl_{vRR}}^{\tr_{vRR}} \EE_{r\^T}\ps{\vxtr[\dot{a}_v]\1[r_{vR}=1]}\nonumber\\
        & \geq \sum_{t=1}^{T'} \EE_{r\^T}\ps{\vxtr[\dot{a}_v]} %
    \end{align}
    which gives \eqref{eq:case2sum}, as desired.
\end{proof}

\begin{lemma}[Case 3]\label{lem:case3}
        Let $v$ be a node in the tree of depth $d = D-1$.  Then,
        {\upshape
        \begin{align}
            24D^3\EE_{r\^T}\ps{\max_{a' \in [N]} \Swap{a_v}{a'}}\geq \sum_{t=1}^{T'} \EE_{r\^T}\ps{\vxtr[a_v]} \label{eq:case3sum}
        \end{align}
        }
    \end{lemma}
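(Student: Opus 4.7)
The plan is to mirror the five-inequality decomposition from the proof of Case 1 (Lemma 7.2), simplified to account for the fact that, because $v$ sits at depth $D-1$, its children $vL$ and $vR$ are leaves. This collapses the tree structure below $v$: instead of four grandchild sub-intervals, I only need to analyze the batch of $vL$, the batch of $vR$, and the mass played on $a_v$ outside $[\tl_v,\tr_v]$. Accordingly, I will introduce
\[
z_1 := \sum_{t=\tl_{vL}}^{\tr_{vL}} \vxtr[a_v], \qquad z_2 := \sum_{t=\tl_{vR}}^{\tr_{vR}} \vxtr[a_v], \qquad z_3 := \sum_{\substack{1 \le t \le T' \\ t \notin [\tl_v,\tr_v]}} \vxtr[a_v],
\]
and note that the reward collected by $a_v$ is exactly $\tfrac{D-1}{2D}\bigl(z_1 + \1[r_v=0]\,z_2\bigr)$.

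The three inequalities I need are as follows. The first swaps to $a_{vL}$ under the indicator $\1[r_v=1]$; because $r_v \notin r\^{t}$ for $t \le \tr_{vL}$ and each $r_{(vL,t)}$ is fresh at time $t$ (i.e., $r_{(vL,t)} \notin r\^t$), the Bernoullis $r_v, \{r_{(vL,t)}\}_{t=\tl_{vL}}^{\tr_{vL}}$ are jointly independent of $\{\vxtr[a_v]\}_{t=\tl_{vL}}^{\tr_{vL}}$, and a direct computation gives $\E[\max_{a'}\Swap{a_v}{a'}] \ge \E[\1[r_v=1]\,\Swap{a_v}{a_{vL}}] = \tfrac{1}{4D}\E[z_1]$. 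The second averages the two swaps to $a_{vR}$ and $\dot a_{vR}$: the identity $\1[r_{(vR,t)}=0]+\1[r_{(vR,t)}=1]=1$ makes their sum deterministically equal to $z_2 - \tfrac{D-1}{D}(z_1+\1[r_v=0]z_2)$, and combining $\1[r_v=0]\le 1$ with $\max(X,Y)\ge\tfrac{X+Y}{2}$ yields $\E[\max_{a'}\Swap{a_v}{a'}] \ge \tfrac{1}{2D}\E[z_2] - \tfrac{D-1}{2D}\E[z_1]$. The third swaps to the root actions $a_\emptyset,\dot a_\emptyset$, whose rewards sum to $\tfrac{1}{2D}$ at every $t \le T'$; following the Case 1 calculation verbatim gives $\E[\max_{a'}\Swap{a_v}{a'}] \ge \tfrac{1}{4D}\E[z_1+z_2+z_3] - \tfrac12\E[z_1+z_2]$.

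To finish, I will take the linear combination of these three inequalities with coefficients $2D^2$, $D$, and $1$ respectively. The weights are calibrated so the negative $\E[z_1]$ contributions from the second and third inequalities are exactly absorbed by the first: a direct computation shows that the coefficients of $\E[z_1]$, $\E[z_2]$, and $\E[z_3]$ on the right-hand side are each $\tfrac{1}{4D}$, giving $(2D^2+D+1)\E[\max_{a'}\Swap{a_v}{a'}] \ge \tfrac{1}{4D}\sum_{t=1}^{T'}\E[\vxtr[a_v]]$. Multiplying by $4D$ and using $4D(2D^2+D+1) = 8D^3+4D^2+4D \le 24D^3$ for $D\ge 1$ then yields the claimed bound.

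The main point requiring care is the independence bookkeeping in the first inequality: since the leaf's randomness is a time-indexed family $\{r_{(vL,t)}\}_t$ rather than a single Bernoulli (as in Case 1), I cannot cleanly use a conditional event of the form $\{r_{vL}=c\}$ and instead must exploit the per-round freshness $r_{(vL,t)} \notin r\^t$. The potential correlation between $z_2$ and $r_v$ (since $r_v$ becomes observable starting at time $\tr_{vL}+1$, which lies inside $vR$'s batch) is why conditioning on $r_v$ does not cleanly help in the second inequality, and averaging the two $vR$-swaps is the right move instead.
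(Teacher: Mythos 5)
Your argument is correct and essentially the same as the paper's: the three base inequalities match (you keep the slightly tighter constant $\frac{D-1}{2D}$ where the paper rounds up to $\frac{1}{2}$), and your linear combination with weights $2D^2$, $D$, $1$ (then multiplied by $4D$) is a marginally more economical variant of the paper's $16D^3$, $4D^2$, $4D$, both of which land under $24D^3$. The one mechanical deviation is in the first inequality: the paper sums $\Swap{a_v}{a_{vL}}+\Swap{a_v}{\dot{a}_{vL}}$ so that $\vut[a_{vL}]+\vut[\dot{a}_{vL}]=1$ cancels the per-round leaf Bernoullis deterministically, whereas you take $\Swap{a_v}{a_{vL}}$ alone and invoke the per-round independence of $r_{(vL,t)}$ from $\vxtr[a_v]$ --- both are valid and yield the same $\frac{1}{4D}\EE\ps{z_1}$ bound.
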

\begin{proof}[Proof of Lemma \ref{lem:case3}]
    In this case, we have nodes $vL$ and $vR$ both leaves.  So, there are Bernoulli random variables $r_{(vL,t)}$ and $r_{(vR,t)}$ for every time step $t$ in $[\tl_{vL},\tr_{vL}]$ and $[\tl_{vR},\tr_{vR}]$ respectively.  By definition,
    \begin{align*}
        \vut[a_{vL}] &= \1 \ps{r_{(vL,t)}=0} & \vut[\dot{a}_{vL}] &= \1 \ps{r_{(vL,t)}=1}\\
        \vut[a_{vR}] &= \1 \ps{r_{(vR,t)}=0} & \vut[\dot{a}_{vR}] &= \1 \ps{r_{(vR,t)}=1}
    \end{align*}
    for $t$ in $[\tl_{vL},\tr_{vL}]$ and $[\tl_{vR},\tr_{vR}]$ respectively.  We have
    \begin{align*}
        &\Swap{a_v}{a_{vL}}+\Swap{a_v}{\dot{a}_{vL}}=\sum_{t=1}^T \vxtr[a_v]\p{\vut[a_{vL}]+\vut[\dot{a}_{vL}]-2\vut[a_v]}\\
        &= \p{\sum_{t=\tl_{vL}}^{\tr_{vL}} \vxtr[a_v]} - \frac{2(D-1)}{2D} \p{\sum_{t=\tl_{vL}}^{\tr_{vL}} \vxtr[a_v] + \1 \ps{r_{v}=0} \sum_{t=\tl_{vR}}^{\tr_{vR}} \vxtr[a_v]}
    \end{align*}
    Therefore,
    \begin{align*}
        \1[r_v=1]\cdot\p{\Swap{a_v}{a_{vL}}+\Swap{a_v}{\dot{a}_{vL}}}= \frac{1}{D} \1 \ps{r_{v}=1} \sum_{t=\tl_{vL}}^{\tr_{vL}} \vxtr[a_v]
    \end{align*}
    and
    \begin{align*}
        \EE_{r\^T}\ps{\1 \ps{r_{v}=1} \sum_{t=\tl_{vL}}^{\tr_{vL}} \vxtr[a_v]} = \EE_{r\^T}\ps{\1 \ps{r_{v}=1}}\EE_{r\^T}\ps{ \sum_{t=\tl_{vL}}^{\tr_{vL}} \vxtr[a_v]}= \frac{1}{2} \sum_{t=\tl_{vL}}^{\tr_{vL}} \EE_{r\^T}\ps{ \vxtr[a_v]}
    \end{align*}
    due to the independence of $r_{v}$ from $r\^t$ for $t \leq \tr_{vL}$.  Lastly, since $\Swap{a_v}{a_v} = 0$,
    \begin{align}
        \EE_{r\^T}\ps{\max_{a' \in [N]} \Swap{a_v}{a'}}&\geq \EE_{r\^T}\ps{\max\set{\frac{1}{2}\p{\Swap{a_v}{a_{vL}}+\Swap{a_v}{\dot{a}_{vL}}},0}}\nonumber\\
        &\geq \EE_{r\^T}\ps{\1[r_{v}=1]\cdot\max\set{\frac{1}{2}\p{\Swap{a_v}{a_{vL}}+\Swap{a_v}{\dot{a}_{vL}}},0}}\nonumber\\
        &\geq \frac{1}{2}\EE_{r\^T}\ps{\1[r_{v}=1]\cdot\p{\Swap{a_v}{a_{vL}}+\Swap{a_v}{\dot{a}_{vL}}}}\nonumber\\
        &\geq \frac{1}{4D} \sum_{t=\tl_{vL}}^{\tr_{vL}} \EE_{r\^T}\ps{\vxtr[a_v]}\label{eq:case31}
    \end{align}

    Similarly
    \begin{align*}
        &\Swap{a_v}{a_{vR}}+\Swap{a_v}{\dot{a}_{vR}}=\sum_{t=1}^T \vxtr[a_v]\p{\vut[a_{vR}]+\vut[\dot{a}_{vR}]-2\vut[a_v]}\\
        &= \p{\sum_{t=\tl_{vR}}^{\tr_{vR}} \vxtr[a_v]} - \frac{2(D-1)}{2D} \p{\sum_{t=\tl_{vL}}^{\tr_{vL}} \vxtr[a_v] + \1 \ps{r_{v}=0} \sum_{t=\tl_{vR}}^{\tr_{vR}} \vxtr[a_v]}\\
        &\geq \frac{1}{D}\sum_{t=\tl_{vR}}^{\tr_{vR}} \vxtr[a_v] - \sum_{t=\tl_{vL}}^{\tr_{vL}} \vxtr[a_v]
    \end{align*}
    bounding $\1 \ps{r_{v}=0}\leq 1$.  Thus,
    \begin{align}
        \EE_{r\^T}\ps{\max_{a' \in [N]} \Swap{a_v}{a'}}&\geq \EE_{r\^T}\ps{\frac{1}{2}\p{\Swap{a_v}{a_{vR}}+\Swap{a_v}{\dot{a}_{vR}}}}\nonumber\\
        &\geq \frac{1}{2D} \sum_{t=\tl_{vR}}^{\tr_{vR}} \EE_{r\^T}\ps{\vxtr[a_v]}-\frac{1}{2} \sum_{t=\tl_{vL}}^{\tr_{vL}} \EE_{r\^T}\ps{\vxtr[a_v]}\label{eq:case32}
    \end{align}

    Similarly
    \begin{align*}
        &\Swap{a_v}{a_{\emptyset}}+\Swap{a_v}{\dot{a}_{\emptyset}}=\sum_{t=1}^T \vxtr[a_v]\p{\vut[a_{\emptyset}]+\vut[\dot{a}_{\emptyset}]-2\vut[a_v]}\\
        &= \frac{1}{2D}\p{\sum_{t=1}^{T'} \vxtr[a_v]} - \frac{2(D-1)}{2D} \p{\sum_{t=\tl_{vL}}^{\tr_{vL}} \vxtr[a_v] + \1 \ps{r_{v}=0} \sum_{t=\tl_{vR}}^{\tr_{vR}} \vxtr[a_v]}\\
        &\geq \frac{1}{2D}\sum_{t=1}^{T'} \vxtr[a_v] - \sum_{t=\tl_{v}}^{\tr_{v}} \vxtr[a_v]
    \end{align*}
    bounding $\1 \ps{r_{v}=0}\leq 1$.  Thus,
    \begin{align}
        \EE_{r\^T}\ps{\max_{a' \in [N]} \Swap{a_v}{a'}}&\geq \EE_{r\^T}\ps{\frac{1}{2}\p{\Swap{a_v}{a_{\emptyset}}+\Swap{a_v}{\dot{a}_{\emptyset}}}}\nonumber\\
        &\geq \frac{1}{4D} \sum_{t=1}^{T'} \EE_{r\^T}\ps{\vxtr[a_v]}-\frac{1}{2} \sum_{t=\tl_{v}}^{\tr_{v}} \EE_{r\^T}\ps{\vxtr[a_v]}\label{eq:case33}
    \end{align}

    Collecting equations \eqref{eq:case31}, \eqref{eq:case32}, and \eqref{eq:case33},
    \begin{align*}
        \EE_{r\^T}\ps{\max_{a' \in [N]} \Swap{a_v}{a'}}&\geq \frac{1}{4D} \sum_{t=\tl_{vL}}^{\tr_{vL}} \EE_{r\^T}\ps{\vxtr[a_v]} &\text{\eqref{eq:case31}}\\
        \EE_{r\^T}\ps{\max_{a' \in [N]} \Swap{a_v}{a'}}&\geq \frac{1}{2D} \sum_{t=\tl_{vR}}^{\tr_{vR}} \EE_{r\^T}\ps{\vxtr[a_v]}-\frac{1}{2} \sum_{t=\tl_{vL}}^{\tr_{vL}} \EE_{r\^T}\ps{\vxtr[a_v]}&\text{\eqref{eq:case32}}\\
        \EE_{r\^T}\ps{\max_{a' \in [N]} \Swap{a_v}{a'}}&\geq \frac{1}{4D} \sum_{t=1}^{T'} \EE_{r\^T}\ps{\vxtr[a_v]}-\frac{1}{2} \sum_{t=\tl_{v}}^{\tr_{v}} \EE_{r\^T}\ps{\vxtr[a_v]} &\text{\eqref{eq:case33}}
    \end{align*}

    Summing the inequalities $$4D\text{\eqref{eq:case33}}+4D^2\text{\eqref{eq:case32}}+16D^3\text{\eqref{eq:case31}}$$ gives
    \begin{align*}
        24D^3\EE_{r\^T}\ps{\max_{a' \in [N]} \Swap{a_v}{a'}}&\geq \sum_{t=1}^{T'} \EE_{r\^T}\ps{\vxtr[a_v]}\nonumber\\
        &+(-2D-2D^2+4D^2) \sum_{t=\tl_{vL}}^{\tr_{vL}} \EE_{r\^T}\ps{\vxtr[a_v]}\nonumber\\
        &+(-2D+2D) \sum_{t=\tl_{vR}}^{\tr_{vR}} \EE_{r\^T}\ps{\vxtr[a_v]}\nonumber\\
        & \geq \sum_{t=1}^{T'} \EE_{r\^T}\ps{\vxtr[a_v]} %
    \end{align*}
    which gives \eqref{eq:case3sum}, as desired.
\end{proof}

\begin{lemma}[Case 4]\label{lem:case4}
        Let $v$ be a node in the tree of depth $d = D-1$.  Then,
        {\upshape
        \begin{align}
            8D^2\EE_{r\^T}\ps{\max_{a' \in [N]} \Swap{\dot{a}_v}{a'}} \geq \sum_{t=1}^{T'} \EE_{r\^T}\ps{\vxtr[\dot{a}_v]} \label{eq:case4sum}
        \end{align}
        }
    \end{lemma}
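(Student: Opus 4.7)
The plan is to mirror the structure of \cref{lem:case3}, exploiting the key simplification that $\dot{a}_v$ only receives positive reward during the batch corresponding to the right-child leaf $vR$, and only on the event $r_v=1$. (By the definition of $\mathring{a}_v(b)$, we have $\mathring{a}_v(b)=\dot{a}_v$ only when $\ell_b$ is a right descendant of $v$ and $r_v=1$.) This contrasts with $a_v$, which is rewarded in both the $vL$ and $vR$ batches, and is the reason \cref{lem:case4} requires only two swap targets rather than the three used in \cref{lem:case3}, saving a factor of $D$ in the final constant.

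First, I will compute the regret from swapping to $\{a_{vR}, \dot{a}_{vR}\}$. Since $vR$ is a leaf at depth $D$, the sum $\vut[a_{vR}]+\vut[\dot{a}_{vR}]$ equals $1$ on $[\tl_{vR},\tr_{vR}]$ and vanishes elsewhere, while on the same interval $\vut[\dot{a}_v] = \tfrac{D-1}{2D}\1[r_v=1]$. Splitting on $r_v$ shows that the factor $1-\tfrac{D-1}{D}\1[r_v=1]$ is always at least $\tfrac{1}{D}$, and then $\max \ge \tfrac12(\cdot+\cdot)$ together with $\Swap{\dot{a}_v}{\dot{a}_v}=0$ yields
\begin{equation*}
  \EE\ps{\max_{a' \in [N]} \Swap{\dot{a}_v}{a'}} \;\ge\; \tfrac{1}{2D}\sum_{t=\tl_{vR}}^{\tr_{vR}} \EE\ps{\vxtr[\dot{a}_v]}.
\end{equation*}
Second, imitating \eqref{eq:case33}, I will use $\vut[a_\emptyset]+\vut[\dot{a}_\emptyset]=\tfrac{1}{2D}$ throughout $[1,T']$ together with $2\vut[\dot{a}_v]\le \tfrac{D-1}{D}\le 1$ (and the fact that $\vut[\dot{a}_v]=0$ outside $[\tl_{vR},\tr_{vR}]$, not merely outside $[\tl_v,\tr_v]$) to obtain
\begin{equation*}
  \EE\ps{\max_{a' \in [N]} \Swap{\dot{a}_v}{a'}} \;\ge\; \tfrac{1}{4D}\sum_{t=1}^{T'} \EE\ps{\vxtr[\dot{a}_v]} \;-\; \tfrac{1}{2}\sum_{t=\tl_{vR}}^{\tr_{vR}} \EE\ps{\vxtr[\dot{a}_v]}.
\end{equation*}

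Combining $4D$ times the second inequality with $4D^2$ times the first makes the correction terms on $[\tl_{vR},\tr_{vR}]$ cancel exactly, since $4D\cdot(-\tfrac12) + 4D^2\cdot\tfrac{1}{2D} = -2D+2D = 0$. This leaves $(4D+4D^2)\,\EE\ps{\max_{a' \in [N]} \Swap{\dot{a}_v}{a'}} \ge \sum_{t=1}^{T'}\EE\ps{\vxtr[\dot{a}_v]}$, and the lemma follows from $4D+4D^2\le 8D^2$ for $D\ge 1$.

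The step that will require the most bookkeeping care is the root-swap inequality: tracking precisely that $\dot{a}_v$ is supported only on $[\tl_{vR},\tr_{vR}]$ is exactly what permits the correction term to be $\tfrac12\sum_{t=\tl_{vR}}^{\tr_{vR}}$ and to cancel cleanly with the children-swap contribution. I do not anticipate any conceptual obstacle, as the computation is strictly simpler than that of \cref{lem:case3}: there is no $vL$-side contribution to manage, and the residual terms that involved three disjoint intervals there collapse into a single interval here.
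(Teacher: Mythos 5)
Your proposal is correct and mirrors the paper's own proof: you derive the same two inequalities (bounding the swap to the pair $\{a_{vR},\dot a_{vR}\}$ and to the root, in each case averaging the two swap targets, using $\1[r_v=1]\le 1$ and the fact that $\dot a_v$ is supported only on $[\tl_{vR},\tr_{vR}]$), then take the same linear combination $4D^2\cdot(\text{children})+4D\cdot(\text{root})$ to cancel the $[\tl_{vR},\tr_{vR}]$ correction terms. Incidentally, the paper's text states the combination as ``$4D\,\eqref{eq:case41}+4D^2\,\eqref{eq:case42}$'' but the displayed final coefficients $(-2D+2D)$ and $1$ correspond to the combination you wrote, $4D^2\,\eqref{eq:case41}+4D\,\eqref{eq:case42}$, so you have the version that actually makes the arithmetic close.
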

\begin{proof}[Proof of Lemma \ref{lem:case4}]
    In this case, we have that node $vR$ is a leaf.  So, there are Bernoulli random variables $r_{(vR,t)}$ for every time step $t\in [\tl_{vR},\tr_{vR}]$.  By definition,
    \begin{align*}
        \vut[a_{vR}] &= \1 \ps{r_{(vR,t)}=0} & \vut[\dot{a}_{vR}] &= \1 \ps{r_{(vR,t)}=1}
    \end{align*}
    for $t \in [\tl_{vR},\tr_{vR}]$.  We have
    \begin{align*}
        \Swap{\dot{a}_v}{a_{vR}}+\Swap{\dot{a}_v}{\dot{a}_{vR}}&=\sum_{t=1}^T \vxtr[\dot{a}_v]\p{\vut[a_{vR}]+\vut[\dot{a}_{vR}]-2\vut[\dot{a}_v]}\\
        &= \p{\sum_{t=\tl_{vR}}^{\tr_{vR}} \vxtr[\dot{a}_v]} - \frac{2(D-1)}{2D} \p{\1 \ps{r_{v}=1} \sum_{t=\tl_{vR}}^{\tr_{vR}} \vxtr[\dot{a}_v]}\\
        &\geq \frac{1}{D}\sum_{t=\tl_{vR}}^{\tr_{vR}} \vxtr[\dot{a}_v]
    \end{align*}
    bounding $\1 \ps{r_{v}=1}\leq 1$.  Thus,
    \begin{align}
        \EE_{r\^T}\ps{\max_{a' \in [N]} \Swap{\dot{a}_v}{a'}}&\geq \EE_{r\^T}\ps{\frac{1}{2}\p{\Swap{\dot{a}_v}{a_{vR}}+\Swap{\dot{a}_v}{\dot{a}_{vR}}}}\nonumber\\
        &\geq \frac{1}{2D} \sum_{t=\tl_{vR}}^{\tr_{vR}} \EE_{r\^T}\ps{\vxtr[\dot{a}_v]}\label{eq:case41}
    \end{align}

    Similarly
    \begin{align*}
        &\Swap{\dot{a}_v}{a_{\emptyset}}+\Swap{\dot{a}_v}{\dot{a}_{\emptyset}}=\sum_{t=1}^T \vxtr[\dot{a}_v]\p{\vut[a_{\emptyset}]+\vut[\dot{a}_{\emptyset}]-2\vut[\dot{a}_v]}\\
        &= \frac{1}{2D}\p{\sum_{t=1}^{T'} \vxtr[\dot{a}_v]} - \frac{2(D-1)}{2D} \p{\1 \ps{r_{v}=1} \sum_{t=\tl_{vR}}^{\tr_{vR}} \vxtr[\dot{a}_v]}\\
        &\geq \frac{1}{2D}\sum_{t=1}^{T'} \vxtr[\dot{a}_v] - \sum_{t=\tl_{vR}}^{\tr_{vR}} \vxtr[\dot{a}_v]
    \end{align*}
    bounding $\1 \ps{r_{v}=1}\leq 1$.  Thus,
    \begin{align}
        \EE_{r\^T}\ps{\max_{a' \in [N]} \Swap{\dot{a}_v}{a'}}&\geq \EE_{r\^T}\ps{\frac{1}{2}\p{\Swap{\dot{a}_v}{a_{\emptyset}}+\Swap{\dot{a}_v}{\dot{a}_{\emptyset}}}}\nonumber\\
        &\geq \frac{1}{4D} \sum_{t=1}^{T'} \EE_{r\^T}\ps{\vxtr[\dot{a}_v]}-\frac{1}{2} \sum_{t=\tl_{vR}}^{\tr_{vR}} \EE_{r\^T}\ps{\vxtr[\dot{a}_v]}\label{eq:case42}
    \end{align}

    Collecting equations \eqref{eq:case41} and \eqref{eq:case42},

    \begin{align*}
        \EE_{r\^T}\ps{\max_{a' \in [N]} \Swap{a_v}{a'}}&\geq \frac{1}{2D} \sum_{t=\tl_{vR}}^{\tr_{vR}} \EE_{r\^T}\ps{\vxtr[\dot{a}_v]}&\text{\eqref{eq:case41}}\\
        \EE_{r\^T}\ps{\max_{a' \in [N]} \Swap{a_v}{a'}}&\geq \frac{1}{4D} \sum_{t=1}^{T'} \EE_{r\^T}\ps{\vxtr[\dot{a}_v]}-\frac{1}{2} \sum_{t=\tl_{vR}}^{\tr_{vR}} \EE_{r\^T}\ps{\vxtr[\dot{a}_v]}&\text{\eqref{eq:case42}}
    \end{align*}

    Summing the inequalities $$4D\text{\eqref{eq:case41}}+4D^2\text{\eqref{eq:case42}}$$ gives
    \begin{align*}
        8D^2\EE_{r\^T}\ps{\max_{a' \in [N]} \Swap{\dot{a}_v}{a'}}&\geq \sum_{t=1}^{T'} \EE_{r\^T}\ps{\vxtr[\dot{a}_v]}\nonumber\\
        &+(-2D+2D) \sum_{t=\tl_{vR}}^{\tr_{vR}} \EE_{r\^T}\ps{\vxtr[\dot{a}_v]}\nonumber\\
        & \geq \sum_{t=1}^{T'} \EE_{r\^T}\ps{\vxtr[\dot{a}_v]} %
    \end{align*}
    which gives \eqref{eq:case4sum}, as desired.
\end{proof}

\begin{lemma}[Case 5]\label{lem:case5}
        Let $v$ be a leaf node in the tree (depth $d = D$).  Then,
        {\upshape
        \begin{align}
            4100D^4\sqrt{B}\p{\EE_{r\^T}\ps{\max_{a' \in [N]} \Swap{a_v}{a'}} + \EE_{r\^T}\ps{\max_{a' \in [N]} \Swap{\dot{a}_v}{a'}}}\nonumber\\
            \geq \sum_{t=1}^{T'} \EE_{r\^T}\ps{\vxtr[a_{v}]+\vxtr[\dot{a}_{v}]} - \frac{B}{2} \label{eq:case5sum}
        \end{align}
        }    
    \end{lemma}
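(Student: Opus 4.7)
The plan is to establish two complementary lower bounds on $\mathcal{R}:=\EE[\max_{a'}\Swap{a_v}{a'}] + \EE[\max_{a'}\Swap{\dot{a}_v}{a'}]$ and combine them by a weighted sum in the same style as Lemmas~\ref{lem:case1}--\ref{lem:case4}. Write $s_{\text{in}} := \sum_{t\in[\tl_v,\tr_v]}\EE[\vxtr[a_v]+\vxtr[\dot{a}_v]]$ and $s_{\text{out}} := \sum_{t\notin[\tl_v,\tr_v]}\EE[\vxtr[a_v]+\vxtr[\dot{a}_v]]$, so that the right-hand side of \eqref{eq:case5sum} is $s_{\text{in}} + s_{\text{out}} - B/2$.

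The first bound (call it the \emph{root} bound) handles mass placed on $a_v,\dot a_v$ outside the batch, where both actions get zero reward while the root actions get the constant reward $1/(2D)$. Mimicking the derivation of \eqref{eq:case15} in Lemma~\ref{lem:case1} (now with $d=D$), applied separately to $a_v$ and to $\dot a_v$, and then summing, I expect to obtain
\[
\mathcal{R} \;\ge\; \frac{1}{4D}\bigl(s_{\text{in}}+s_{\text{out}}\bigr) \;-\; \frac{1}{2}\,s_{\text{in}}.
\]

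The second bound (the \emph{anti-concentration} bound) exploits the fact that $\vut[a_v]=\mathbbm{1}[r_{(v,t)}=0]$ and $\vut[\dot a_v]=\mathbbm{1}[r_{(v,t)}=1]$ for $t\in[\tl_v,\tr_v]$, with $r_{(v,t)}$ an independent Bernoulli $(1/2)$ sampled at time $t$ (so $Z^{(t)}:=\vut[a_v]-\vut[\dot a_v]\in\{-1,+1\}$ is Rademacher and independent of $\vxtr$). Letting $A,B,C,D$ denote the four in-batch sums $\sum\vxtr[a_v]\vut[a_v]$, $\sum\vxtr[a_v]\vut[\dot a_v]$, $\sum\vxtr[\dot a_v]\vut[a_v]$, $\sum\vxtr[\dot a_v]\vut[\dot a_v]$, the swap $\pi$ that sends both $a_v$ and $\dot a_v$ to a single action $a^\star\in\{a_v,\dot a_v\}$ yields total swap regret $\max(A+C,B+D)-(A+D)$. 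Since individual (per-action) swap regrets dominate this coordinated swap, and since a short calculation gives $\max(A+C,B+D)-(A+D)=\tfrac{1}{2}|U+V|+\tfrac{1}{2}(V-U)$, where $U=\sum\vxtr[a_v]Z^{(t)}$ and $V=\sum\vxtr[\dot a_v]Z^{(t)}$, the independence of $Z^{(t)}$ from $\vxtr$ gives $\EE[V-U]=0$ and hence
\[
\mathcal{R} \;\ge\; \tfrac{1}{2}\,\EE\!\left[\Bigl|\sum_{t\in[\tl_v,\tr_v]}(\vxtr[a_v]+\vxtr[\dot a_v])\,Z^{(t)}\Bigr|\right].
\]
I then apply the anti-concentration lemma stated in the paper to $x_t:=\vxtr[a_v]+\vxtr[\dot a_v]\in[0,1]$ and Rademacher $Z^{(t)}$. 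Since $\EE[\sum x_t^2]\ge (\EE[\sum x_t])^2/B = s_{\text{in}}^2/B$ by Cauchy--Schwarz, taking $\epsilon = s_{\text{in}}^2/B^2$ gives $\mathcal{R} \ge \Omega\!\bigl(s_{\text{in}}^2/(B^{3/2}\sqrt{\log(B^2/s_{\text{in}}^2)})\bigr)$; in particular, when $s_{\text{in}}\ge B/2$, this is $\Omega(\sqrt{B})$ up to a benign log factor.

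Finally, I combine the two bounds in the style of the weighted sums used in Lemmas~\ref{lem:case1}--\ref{lem:case4}. In the regime $s_{\text{in}}\le B/2$, the slack $-B/2$ on the right-hand side of \eqref{eq:case5sum} already absorbs $s_{\text{in}}-\tfrac{1}{2}s_{\text{in}}$, and the root bound alone (with coefficient $4D$) gives $\mathcal{R}\cdot 4D\ge s_{\text{in}}+s_{\text{out}}-\tfrac{1}{2}\cdot 2s_{\text{in}}$. In the regime $s_{\text{in}}>B/2$, I use the anti-concentration bound with coefficient $\Theta(D^4\sqrt{B})$ to cancel the $-\tfrac{1}{2}s_{\text{in}}$ correction from the root bound, leveraging that the anti-concentration bound is of order $\sqrt{B}$ whereas the correction is of order $B$, and that the ratio $B/\sqrt{B}=\sqrt{B}$ is matched precisely by the $\sqrt{B}$ factor in the claimed prefactor $4100D^4\sqrt{B}$. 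Rescaling by this prefactor yields \eqref{eq:case5sum}.

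The main obstacle I anticipate is the subtlety in the anti-concentration step: the individual swap regrets do not \emph{pointwise} equal the coordinated swap regret $\max(A+C,B+D)-(A+D)$ (indeed, pointwise the individual bound can be smaller when $u,v$ have certain sign patterns), but the expected inequality goes through only because the asymmetric term $(V-U)/2$ has mean zero. Verifying this carefully, and then choosing weights so that the $\sqrt{\log(B/s_{\text{in}})}$ factor from the anti-concentration lemma is absorbed into the $D^4$ prefactor (using $D=\Theta(\log N)$ in the parameter regime of interest), will constitute the bulk of the technical work.
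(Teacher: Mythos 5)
Your proposal follows the same strategy as the paper's proof: two complementary lower bounds on $\mathcal{R}:=\EE_{r\^T}[\max_{a'}\Swap{a_v}{a'}]+\EE_{r\^T}[\max_{a'}\Swap{\dot a_v}{a'}]$ — a swap-to-root bound controlling mass placed on $a_v,\dot a_v$ outside $[\tl_v,\tr_v]$, and a martingale anti-concentration bound controlling the in-batch mass — combined by a weighted sum whose negative intercept is absorbed by the $-B/2$ slack. Your derivation of $\mathcal{R}\ge\frac12\,\EE\bigl[\bigl|\sum_{t\in[\tl_v,\tr_v]}(\vxtr[a_v]+\vxtr[\dot a_v])Z\^t\bigr|\bigr]$ via the coordinated swap and the identity $\max(A+C,B+D)-(A+D)=\frac12|U+V|+\frac12(V-U)$ is algebraically equivalent to the paper's $\max\{\cdot\}+\max\{\cdot\}\ge\max\{\cdot+\cdot\}$ step, and the concern you flag is a non-issue: the inequality $\max_{a'}\Swap{a_v}{a'}+\max_{a'}\Swap{\dot a_v}{a'}\ge\max(A+C,B+D)-(A+D)$ holds pointwise (take the swap that maps both $a_v$ and $\dot a_v$ to the same $a^\star\in\{a_v,\dot a_v\}$), so the mean-zero $(V-U)/2$ term simply drops under expectation.

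Where the proposal falls short is in the combination step, and the slips are not cosmetic. First, the root bound carries coefficient $1$ on $s_{\mathrm{in}}$, not $\tfrac12$: summing the per-action analogues of \eqref{eq:case51} and \eqref{eq:case52} gives $\mathcal{R}\ge\frac{1}{4D}(s_{\mathrm{in}}+s_{\mathrm{out}})-s_{\mathrm{in}}$, so multiplying by $4D$ yields a negative term $-4D\,s_{\mathrm{in}}$ — hence the root bound alone covers only the regime $s_{\mathrm{in}}\le B/(8D)$, not $s_{\mathrm{in}}\le B/2$ as you claim. Second, plugging $\epsilon=s_{\mathrm{in}}^2/B^2$ into Lemma~\ref{lem:martingale} is not directly admissible: the lemma's final inequality requires $\epsilon\le 1/4$, and more importantly the resulting bound $\Omega(s_{\mathrm{in}}^2/(B^{3/2}\sqrt{\log(B/s_{\mathrm{in}})}))$ is nonlinear in $s_{\mathrm{in}}$, which leaves the middle regime $B/(8D)<s_{\mathrm{in}}\le B/2$ uncovered by either bound on its own and prevents the clean linear weighted sum you want. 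The paper avoids this by first positing $s_{\mathrm{in}}\ge B/(8D)$, which by Jensen forces $\EE[\sum_t x_t^2]\ge B/(256D^2)$, so one may invoke Lemma~\ref{lem:martingale} with the fixed $\epsilon=1/(256D^2)$ to obtain the flat bound $\mathcal{R}\ge\sqrt{B}/(1024D^3)$; this threshold bound is then linearized as $\mathcal{R}\ge\frac{1}{1024D^3\sqrt{B}}\,s_{\mathrm{in}}-\frac{\sqrt{B}}{8192D^4}$, which \emph{is} linear in $s_{\mathrm{in}}$ and after scaling by $4096D^4\sqrt{B}$ produces exactly the terms $+4D\,s_{\mathrm{in}}-B/2$ needed to cancel the root bound's $-4D\,s_{\mathrm{in}}$ and deliver the prefactor $8D+4096D^4\sqrt{B}\le 4100D^4\sqrt{B}$. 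Reworking your combination along these lines closes the gap.
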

\begin{proof}[Proof of Lemma \ref{lem:case5}]
    Since $v$ is a leaf node, there are Bernoulli random variables $r_{(v,t)}$ for every time step $t$ in $[\tl_{v},\tr_{v}]$.  By definition,
    \begin{align*}
        \vut[a_{v}] &= \1 \ps{r_{(v,t)}=0} & \vut[\dot{a}_{v}] &= \1 \ps{r_{(v,t)}=1}
    \end{align*}
    for $t \in [\tl_{v},\tr_{v}]$.  We have
    \begin{align}
        \EE_{r\^T}\ps{\max_{a' \in [N]} \Swap{a_v}{a'}}&\geq \EE_{r\^T}\ps{\frac{1}{2}\p{\Swap{a_v}{a_{\emptyset}}+\Swap{a_v}{\dot{a}_{\emptyset}}}}\nonumber\\
        &=\frac{1}{2}\sum_{t=1}^T \EE_{r\^T}\ps{\vxtr[a_v]\p{\vut[a_{\emptyset}]+\vut[\dot{a}_{\emptyset}]-2\vut[a_v]}}\nonumber\\
        &\geq \frac{1}{4D}\sum_{t=1}^{T'} \EE_{r\^T}\ps{\vxtr[a_v]} - \sum_{t=\tl_{v}}^{\tr_{v}} \EE_{r\^T}\ps{\vxtr[a_v]} \label{eq:case51}
    \end{align}

    Similarly,
    \begin{align}
        \EE_{r\^T}\ps{\max_{a' \in [N]} \Swap{\dot{a}_v}{a'}}&\geq \EE_{r\^T}\ps{\frac{1}{2}\p{\Swap{\dot{a}_v}{a_{\emptyset}}+\Swap{\dot{a}_v}{\dot{a}_{\emptyset}}}}\nonumber\\
        &=\frac{1}{2}\sum_{t=1}^T \EE_{r\^T}\ps{\vxtr[\dot{a}_v]\p{\vut[a_{\emptyset}]+\vut[\dot{a}_{\emptyset}]-2\vut[\dot{a}_v]}}\nonumber\\
        &\geq \frac{1}{4D}\sum_{t=1}^{T'} \EE_{r\^T}\ps{\vxtr[\dot{a}_v]} - \sum_{t=\tl_{v}}^{\tr_{v}} \EE_{r\^T}\ps{\vxtr[\dot{a}_v]}\label{eq:case52}
    \end{align}

    Now,
    \begin{align*}
        &\EE_{r\^T}\ps{\max_{a' \in [N]} \Swap{a_v}{a'}}\geq \EE_{r\^T}\ps{\max\set{\Swap{a_v}{a_v},\Swap{a_v}{\dot{a}_v}}}\\
        &=\EE_{r\^T}\ps{\max\set{\sum_{t=1}^T \vxtr[a_v]\p{\vut[a_{v}]-\vut[a_v]},\sum_{t=1}^T \vxtr[a_v]\p{\vut[\dot{a}_{v}]-\vut[a_v]}}}\\
        &=\EE_{r\^T}\ps{\max\set{\sum_{t=\tl_v}^{\tr_v} \vxtr[a_v]\1 \ps{r_{(v,t)}=0},\sum_{t=\tl_v}^{\tr_v} \vxtr[a_v]\1 \ps{r_{(v,t)}=1}}}-\sum_{t=\tl_v}^{\tr_v} \EE_{r\^T}\ps{\vxtr[a_v]\1 \ps{r_{(v,t)}=0}}
    \end{align*}
    Similarly,
    \begin{align*}
        &\EE_{r\^T}\ps{\max_{a' \in [N]} \Swap{\dot{a}_v}{a'}}\\
        &\geq \EE_{r\^T}\ps{\max\set{\sum_{t=\tl_v}^{\tr_v} \vxtr[\dot{a}_v]\1 \ps{r_{(v,t)}=0},\sum_{t=\tl_v}^{\tr_v} \vxtr[\dot{a}_v]\1 \ps{r_{(v,t)}=1}}}-\sum_{t=\tl_v}^{\tr_v} \EE_{r\^T}\ps{\vxtr[\dot{a}_v]\1 \ps{r_{(v,t)}=1}}
    \end{align*}
    Now,
    \begin{align*}
        \EE_{r\^T}\ps{\vxtr[a_v]\1 \ps{r_{(v,t)}=0}} &= \EE_{r\^T}\ps{\vxtr[a_v]}\EE_{r\^T}\ps{\1 \ps{r_{(v,t)}=0}}\\
        &= \frac{1}{2}\EE_{r\^T}\ps{\vxtr[a_v]}
    \end{align*}
    and
    \begin{align*}
        \EE_{r\^T}\ps{\vxtr[\dot{a}_v]\1 \ps{r_{(v,t)}=1}}= \frac{1}{2}\EE_{r\^T}\ps{\vxtr[\dot{a}_v]}
    \end{align*}
    Also, using as shorthand $\vxtr[a_v+\dot{a}_v]=\vxtr[a_v]+\vxtr[\dot{a}_v]$, we have
    \begin{align*}
        &\max\set{\sum_{t=\tl_v}^{\tr_v} \vxtr[a_v]\1 \ps{r_{(v,t)}=0},\sum_{t=\tl_v}^{\tr_v} \vxtr[a_v]\1 \ps{r_{(v,t)}=1}}\\
        +&\max\set{\sum_{t=\tl_v}^{\tr_v} \vxtr[\dot{a}_v]\1 \ps{r_{(v,t)}=0},\sum_{t=\tl_v}^{\tr_v} \vxtr[\dot{a}_v]\1 \ps{r_{(v,t)}=1}}\\
        \geq&\max\set{\sum_{t=\tl_v}^{\tr_v} \vxtr[a_v+\dot{a}_v]\1 \ps{r_{(v,t)}=0},\sum_{t=\tl_v}^{\tr_v} \vxtr[a_v+\dot{a}_v]\1 \ps{r_{(v,t)}=1}}
    \end{align*}
    Therefore,
    \begin{align*}
        & \EE_{r\^T}\ps{\max_{a' \in [N]} \Swap{a_v}{a'}} + \EE_{r\^T}\ps{\max_{a' \in [N]} \Swap{\dot{a}_v}{a'}}\\
        & \geq \EE_{r\^T}\ps{\max\set{\sum_{t=\tl_v}^{\tr_v} \vxtr[a_v+\dot{a}_v]\1 \ps{r_{(v,t)}=0},\sum_{t=\tl_v}^{\tr_v} \vxtr[a_v+\dot{a}_v]\1 \ps{r_{(v,t)}=1}}}\\
        &- \frac{1}{2} \EE_{r\^T}\ps{\sum_{t=\tl_v}^{\tr_v} \vxtr[a_v+\dot{a}_v]}\\
        &=\frac{1}{2} \EE_{r\^T}\ps{\card{\sum_{t=\tl_v}^{\tr_v} \vxtr[a_v+\dot{a}_v] Z\^{t}}}
    \end{align*}
    where $Z\^t =\begin{cases}
        1 &\text{ if }r_{(v,t)}=1\\
        -1 &\text{ if }r_{(v,t)}=0
    \end{cases}$.  If we denote by $X\^t = \sum_{\tau = \tl_v}^{t} \vxtr[a_v+\dot{a}_v] Z\^{t}$, then $X\^{\tl_v},\cdots,X\^{\tr_v}$ is a martingale due to the independence of $\vxtr[a_v+\dot{a}_v]$ and $Z\^{t}$ for all $t$.  We use the following lemma.  %
    \begin{lemma}\label{lem:martingale}
        Consider an algorithm which, at any round $t=1,\dots,B$, selects a parameter $x_t \in [0,1]$, possibly at random. Then, it observes the outcome of a random variable $Z_t \sim \mathrm{Uniform}(\{-1,1\})$. Assume that $\EE\ps{\sum_{t=1}^B x_t^2} \ge \epsilon B$ for some $\epsilon>0$. Then,
        \[
        \EE\ps{\card{ \sum_{t=1}^B x_t Z_t }} \ge \frac{\epsilon \sqrt{B}}{4\sqrt{\log(1/\epsilon)}}
        \]
    \end{lemma}

    We want to apply the lemma with $x_t = \frac{1}{2}\EE_{r\^T}\ps{\vxtr[a_v+\dot{a}_v]}$.  We have $\frac{1}{2}\vxtr[a_v+\dot{a}_v] \leq 1$ for all $t$.  If we assume, 
    \begin{equation}\label{eq:case5assumption}
        \sum_{t=\tl_v}^{\tr_v} \EE_{r\^T}\ps{\vxtr[a_v+\dot{a}_v]} \geq \frac{B}{8D}
    \end{equation}
    Jensen's inequality gives
    \begin{align*}
        \sum_{t=\tl_v}^{\tr_v} \p{\frac12 \EE_{r\^T}\ps{\vxtr[a_v+\dot{a}_v]}}^2 &\geq \frac{B}{4} \p{\frac{1}{B} \sum_{t=\tl_v}^{\tr_v} \EE_{r\^T}\ps{\vxtr[a_v+\dot{a}_v]}}^2 \geq \frac{B}{256D^2}
    \end{align*}
    and the preconditions of Lemma \eqref{lem:martingale} hold for $\epsilon=\frac{1}{256D^2}$.  Thus,
    \begin{align*}
        \EE_{r\^T}\ps{\max_{a' \in [N]} \Swap{a_v}{a'}} + \EE_{r\^T}\ps{\max_{a' \in [N]} \Swap{\dot{a}_v}{a'}}&\geq \EE_{r\^T}\ps{\card{\sum_{t=\tl_v}^{\tr_v} \p{\frac{1}{2} \vxtr[a_v+\dot{a}_v]} Z\^{t}}}\\
        &\geq \frac{\sqrt{B}}{1024 D^3}
    \end{align*}
    for $D \geq 3$.  Equivalently, incorporating assumption \eqref{eq:case5assumption} into the equation,
    \begin{align}
        &\EE_{r\^T}\ps{\max_{a' \in [N]} \Swap{a_{v}}{a'}} + \EE_{r\^T}\ps{\max_{a' \in [N]} \Swap{\dot{a}_{v}}{a'}}\nonumber\\
        &\geq \frac{\sqrt{B}}{1024 D^3} \1\ps{\sum_{t=\tl_{v}}^{\tr_{v}} \EE_{r\^T}\ps{\vxtr[a_{v}+\dot{a}_{v}]} \geq \frac{B}{8D}}\nonumber\\
        &\geq \frac{\sqrt{B}}{1024 D^3} \frac{1}{B}\p{\sum_{t=\tl_{v}}^{\tr_{v}} \EE_{r\^T}\ps{\vxtr[a_{v}+\dot{a}_{v}]}}\1\ps{\sum_{t=\tl_{v}}^{\tr_{v}} \EE_{r\^T}\ps{\vxtr[a_{v}+\dot{a}_{v}]} \geq \frac{B}{8D}}\nonumber\\
        \intertext{because $\sum_{t=\tl_{v}}^{\tr_{v}} \EE_{r\^T}\ps{\vxtr[a_{v}+\dot{a}_{v}]} \leq B$}
        &\geq \frac{1}{1024 D^3 \sqrt{B}} \p{\p{\sum_{t=\tl_{v}}^{\tr_{v}} \EE_{r\^T}\ps{\vxtr[a_{v}+\dot{a}_{v}]}}-\frac{B}{8D}}\nonumber\\
        &= \frac{1}{1024 D^3 \sqrt{B}} \p{\sum_{t=\tl_{v}}^{\tr_{v}} \EE_{r\^T}\ps{\vxtr[a_{v}+\dot{a}_{v}]}}-\frac{\sqrt{B}}{8192D^4}\label{eq:case53}
    \end{align}

    Collecting equations \eqref{eq:case51}, \eqref{eq:case52}, and \eqref{eq:case53},

    \begin{align*}
        \EE_{r\^T}\ps{\max_{a' \in [N]} \Swap{a_v}{a'}}&\geq \frac{1}{4D}\sum_{t=1}^{T'} \EE_{r\^T}\ps{\vxtr[a_v]} - \sum_{t=\tl_{v}}^{\tr_{v}} \EE_{r\^T}\ps{\vxtr[a_v]}&\text{\eqref{eq:case51}}\\
        \EE_{r\^T}\ps{\max_{a' \in [N]} \Swap{\dot{a}_v}{a'}}&\geq \frac{1}{4D}\sum_{t=1}^{T'} \EE_{r\^T}\ps{\vxtr[\dot{a}_v]} - \sum_{t=\tl_{v}}^{\tr_{v}} \EE_{r\^T}\ps{\vxtr[\dot{a}_v]}&\text{\eqref{eq:case52}}\\
        \EE_{r\^T}\ps{\max_{a' \in [N]} \Swap{a_v}{a'}} &+ \EE_{r\^T}\ps{\max_{a' \in [N]} \Swap{\dot{a}_v}{a'}}\nonumber\\
        &\geq \frac{1}{1024 D^3 \sqrt{B}} \p{\sum_{t=\tl_{v}}^{\tr_{v}} \EE_{r\^T}\ps{\vxtr[a_{v}]+\vxtr[\dot{a}_{v}]}}-\frac{\sqrt{B}}{8192D^4}&\text{\eqref{eq:case53}}
    \end{align*}

    Summing the inequalities 
    $$4D\text{\eqref{eq:case51}}+4D\text{\eqref{eq:case52}}+4096D^4\sqrt{B}\eqref{eq:case53}$$ gives
    \begin{align*}
        &4100D^4\sqrt{B}\p{\EE_{r\^T}\ps{\max_{a' \in [N]} \Swap{a_v}{a'}} + \EE_{r\^T}\ps{\max_{a' \in [N]} \Swap{\dot{a}_v}{a'}}}\\
        &\geq \sum_{t=1}^{T'} \EE_{r\^T}\ps{\vxtr[a_{v}]+\vxtr[\dot{a}_{v}]} +(-4D+4D)\sum_{t=\tl_{v}}^{\tr_{v}} \EE_{r\^T}\ps{\vxtr[a_{v}]+\vxtr[\dot{a}_{v}]} - \frac{B}{2} \nonumber \\
        &=\sum_{t=1}^{T'} \EE_{r\^T}\ps{\vxtr[a_{v}]+\vxtr[\dot{a}_{v}]} - \frac{B}{2} %
    \end{align*}

    which gives \eqref{eq:case5sum}, as desired.
\end{proof}

\begin{proof}[Proof of Lemma \ref{lem:martingale}]
    Denote by $X = \sum_{t=1}^B x_t Z_t$.
    First, it holds that
    \[
    \EE\ps{X^2} = \EE\ps{\sum_{t=1}^B x_t^2}
    \]
    Next, from Azuma's inequality, we know that for any $C>0$, $\Pr[|X| \ge C] \le 2\exp(-C^2/(2B))$. For any $R>0$, we can write
    \begin{align*}
    \epsilon B
    &\le \EE\ps{\sum_{t=1}^B x_t^2}\\
    &= \EE[X^2] \\
    &= \EE[X^2 \1(|X| \le R)]
    + \EE[X^2 \1(|X| > R)] \\
    &\le \EE[|X|R \1(|X| \le R)]
    + \EE[X^2 \1(|X| > R)] \\
    &\le R \EE[|X|] + \EE[X^2 \1(|X| > R)].
    \end{align*}
    Consequently,
    \[
    \EE[|X|] \ge \p{\epsilon B - \EE[X^2 \1(|X| > R)]}/R.
    \]
    Notice that
    \begin{align*}
    \EE[X^2 \1(|X| > R)] 
    &= \int_{R}^\infty y \Pr[|X| \ge y] dy\\
    &\le \int_{R}^\infty y e^{-y^2/(2B)} dy \\
    &= 2B e^{-R^2/(2B)}.
    \end{align*}
    Substituting above, we obtain that
    \[
    \EE[|X|] \ge \frac{B}{R}\p{\epsilon - 2e^{-R^2/(2B)}}
    \]
    Substituting $R = \lambda \sqrt{B}$, we obtain
    \[
    \EE[|X|] \ge \frac{\sqrt{B}}{\lambda}\p{\epsilon - 2e^{-\lambda^2/2}} \ge \frac{\epsilon \sqrt{B}}{4\sqrt{\log(1/\epsilon)}}
    \]
    where the last inequality holds for $\epsilon \leq 0.25$ when setting $\lambda = 2\sqrt{\log(1/\epsilon)}$.
\end{proof}

    \begin{lemma}[Case 6]\label{lem:case6}
        Let $a \in [N]$ such that $a\ne a_v,\dot{a}_v$ for any nodes $v$ in the tree.  Then,
        {\upshape
        \begin{align}
            4D\EE_{r\^T}\ps{\max_{a' \in [N]} \Swap{a}{a'}}&\geq \sum_{t=1}^{T'} \EE_{r\^T}\ps{\vxtr[a_v]}\label{eq:case6sum}
        \end{align}
        }
    \end{lemma}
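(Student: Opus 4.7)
Since $a$ is not associated with any tree node, the adversary construction gives $\vut[a]=0$ for every $t$, so every competitor swap collapses to $\Swap{a}{a'}=\sum_{t=1}^{T'}\vxtr[a]\cdot\vut[a']\ge 0$. My plan is to take the same ``swap to the root'' pair used at the end of Case 1, namely $a'\in\{a_{\roo},\dot a_{\roo}\}$, and to exploit that, with $\vut[a]=0$, the $-\vut[a_v]$-type terms that had to be absorbed there vanish here.

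The key pointwise identity is that the root lies on every path $P_b$, so for each $t\in[1,T']$ exactly one of $\{a_{\roo},\dot a_{\roo}\}$ equals $\mathring{a}_{\roo}(b)$ (determined by $r_{\roo}$) and receives the root's reward $\tfrac{1}{2D}$, while the other receives zero. Hence $\vut[a_{\roo}]+\vut[\dot a_{\roo}]=\tfrac{1}{2D}$ for all $t\le T'$, and both are zero for $t>T'$. Summing the two swap-to-root regrets gives, pointwise in $r\^T$,
\[
\Swap{a}{a_{\roo}}+\Swap{a}{\dot a_{\roo}}=\frac{1}{2D}\sum_{t=1}^{T'}\vxtr[a].
\]
Applying $\max\{A,B\}\ge(A+B)/2$ to these two real-valued swap regrets yields $\max_{a'\in[N]}\Swap{a}{a'}\ge\tfrac{1}{4D}\sum_{t=1}^{T'}\vxtr[a]$ pointwise. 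Taking expectation over $r\^T$ and multiplying by $4D$ gives \eqref{eq:case6sum} directly.

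I do not expect any genuine obstacle here. The conditional-independence steps that drove Cases 1--4, where we had to condition on particular realizations of $r_v,r_{vL},r_{vR}$ to flip the sign of the competing reward-of-$a_v$ terms, are simply unnecessary because those competing terms are identically zero. The whole argument is a one-line specialization of the swap-to-root calculation \eqref{eq:case15} (and its analogue for $\dot a_\emptyset$) from Case 1, and the constant $4D$ in \eqref{eq:case6sum} drops out of a single averaging inequality applied to the pair $\{a_{\roo},\dot a_{\roo}\}$.
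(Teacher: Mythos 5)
Your proof is correct and is essentially identical to the paper's: both lower-bound $\max_{a'}\Swap{a}{a'}$ by the average of $\Swap{a}{a_\emptyset}$ and $\Swap{a}{\dot a_\emptyset}$, use that $\vut[a]=0$ at every round, and use that the root pair together carries reward $\tfrac{1}{2D}$ at every round $t\le T'$, giving the $\tfrac{1}{4D}\sum_{t\le T'}\vxtr[a]$ bound pointwise before taking expectations.
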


\begin{proof}[Proof of Lemma \ref{lem:case6}]
    We have
    \begin{align*}
        \EE_{r\^T}\ps{\max_{a' \in [N]} \Swap{a}{a'}}&\geq \EE_{r\^T}\ps{\frac{1}{2}\p{\Swap{a}{a_{\emptyset}}+\Swap{a}{\dot{a}_{\emptyset}}}}\\
        &=\frac{1}{2}\sum_{t=1}^T \EE_{r\^T}\ps{\vxtr[a]\p{\vut[a_{\emptyset}]+\vut[\dot{a}_{\emptyset}]-2\vut[a]}}\\
        &= \frac{1}{4D}\sum_{t=1}^{T'} \EE_{r\^T}\ps{\vxtr[a]}
    \end{align*}

    which gives \eqref{eq:case6sum}, as desired.
\end{proof}

    \section{Dimensions of games and function classes} \label{app:dimensions}
In this section, we review the definitions of sequential complexity measures for real-valued \emph{function classes} $\MH$, namely a set of \emph{concepts} $h: \MZ \to \MY$, where $\MZ$ is called the \emph{domain set} and $\MY$ is called the \emph{label set}.

\paragraph{Trees.} For a set $\MZ$, an \emph{$\MZ$-valued tree $\MT$} of \emph{depth $d$} is a complete rooted binary tree $\MT$ each of whose nodes are labeled by an internal node. Each node of the tree is associated to a sequence $(\ep_1, \ldots, \ep_t) \in \{ -1,1\}^{t-1}$, describing the root to leaf path for that node (i.e., with $+1$ corresponding to `right' and $-1$ corresponding to `left'). Accordingly, the tree $\MT$ may be specified by a sequence $\bz = (\bz_1, \ldots, \bz_d)$ of functions $\bz_t : \{-1,1\}^{t-1} \to \MZ$, for $t \in [d]$, where $\bz_t(\ep_{1:t-1})$ denotes the label of the node $\ep_{1:t-1}$.

\begin{definition}[Sequential Rademacher complexity]
  \label{def:seq-rad}
  For a real-valued function class $\MH : \MZ \to \BR$ and an integer $T \in \mathbb{N}$, its \emph{sequential Rademacher complexity} (at depth $T$) is defined as
  \begin{align}
\Rad_T(\MH) := \sup_{\bz} \E \left[ \sup_{h \in \MH} \frac 1T \sum_{t=1}^T \ep_t h(\bz_t(\ep_{1:t-1})) \right],
  \end{align}
  where the expectation is over i.i.d.~Rademacher sequences $\ep_1, \ldots, \ep_T \sim \mathsf{Unif}(\{-1,1\})$, and the supremum is over all $\MZ$-valued trees $\bz$ of depth $T$.
\end{definition}

The sequential Rademacher complexity is known to tightly characterize the sample complexity of (agnostically) online learning a class $\MH$. In particular, the minimax external regret can be upper bounded as follows:
\begin{theorem}[Theorem 7 of \cite{rakhlin2014online}\footnote{The minimax regret as defined in \cite{rakhlin2014online} is defined slightly differently to the expression in \cref{eq:mh-regret}, in that the adversary in \cite{rakhlin2014online} can observe the draws $h\^t \sim \bq\^t$ before choosing $\bz\^{t+1}$. It is straightforward to see that the two are equivalent.}]
  \label{thm:rakhlin-online}
  For any function class $\MH \subset \BR^\MZ$, there is a (randomized) algorithm which, for any adaptive adversary choosing a sequence $\bz^1, \ldots, \bz\^T \in \MZ$, produces a sequence of distributions $\bq\^1, \ldots, \bq\^T \in \Delta(\MH)$ so that  \begin{align}
\ExtRegret(\bq\^{1:T}, \bz\^{1:T}) = \sup_{h^\st \in \MH} \frac 1T \sum_{t=1}^T \left( h^\st(\bz\^t) - \E_{h\^t \sim \bq\^t}[h(\bz\^t)] \right) \leq 2 \Rad_T(\MH)\label{eq:mh-regret}.
  \end{align}
\end{theorem}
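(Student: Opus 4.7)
The plan is to prove this via the standard minimax/symmetrization argument of Rakhlin--Sridharan--Tewari. First I would write the minimax value of the $T$-round game recursively: at each round $t$, the learner selects $\bq\^t \in \Delta(\MH)$, then the adversary selects $\bz\^t \in \MZ$ (or a distribution over $\MZ$), incurring instantaneous regret $\sup_{h^\st} h^\st(\bz\^t) - \E_{h\^t \sim \bq\^t}[h\^t(\bz\^t)]$. Unrolling the game yields a nested expression alternating $\inf_{\bq\^t}$ and $\sup_{\bz\^t}$.

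Next, I would apply the minimax theorem (Sion's, with appropriate topological assumptions, or in the spirit of \cite{rakhlin2014online}, working on the \emph{value} directly rather than on arbitrary strategies) at each level to swap the learner and adversary: the adversary's optimal play can be written as choosing a distribution $\bp\^t \in \Delta(\MZ)$ before the learner acts, and the learner's best response is then simply to pick the distribution on $\MH$ that maximizes expected score against $\bp\^t$. After this swap, the learner's best response ``cancels'' the $\inf$ term, leaving an expression of the form $\sup_{\bp\^{1:T}} \E[\sup_{h^\st} \sum_t(h^\st(\bz\^t) - h^\st_t(\bz\^t))]$ where $\bz\^t, \bz_t'$ are independent samples from $\bp\^t$ conditioned on the history (the second one being a ``ghost'' sample introduced by the minimax swap).

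The third step is symmetrization: for each $t$ independently, the pair $(\bz\^t, \bz'^t)$ is exchangeable conditional on the past, so I can insert a Rademacher sign $\ep_t \in \{\pm 1\}$ that swaps the two without changing the joint distribution. This gives the bound $2 \sup_{\bp\^{1:T}} \E_{\bz, \ep}\left[\sup_{h^\st \in \MH} \sum_{t=1}^T \ep_t h^\st(\bz\^t)\right]$. The final step is to observe that taking the supremum over adaptive conditional distributions $\bp\^{1:T}$ (equivalently, over measurable ``strategy trees'') is dominated by the worst-case deterministic $\MZ$-valued tree $\bz$, since the inner quantity is linear in the distribution at each node; this yields exactly $2T \cdot \Rad_T(\MH)$, giving the desired normalized bound $2\Rad_T(\MH)$.

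The main obstacle is making the minimax swap rigorous when $\MH, \MZ$ are not finite or compact. The cleanest route is to argue via a recursive minimax value and apply Sion's theorem under the mild assumption that one can restrict to finitely-supported mixed strategies (with a limit argument), or equivalently to invoke the general measurable minimax theorem used in \cite{rakhlin2014online}. A secondary subtlety is extracting an actual (randomized, measurable) algorithm realizing the bound rather than merely establishing existence of the value; this is handled by a standard ``relaxation''/admissible-strategy construction where each round's distribution is chosen to certify the recursive bound, but for the purposes of \cref{cor:rad-swap} the existence of any algorithm achieving the bound suffices.
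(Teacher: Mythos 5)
The paper does not contain a proof of this statement; it is imported verbatim as Theorem~7 of \cite{rakhlin2014online} (with a footnote noting a cosmetic difference in how the minimax regret is defined), and is then used as a black box in \cref{cor:rad-swap}. So there is no in-paper proof to compare against. Your sketch is a faithful reconstruction of the Rakhlin--Sridharan--Tewari argument from the cited source: recursively unroll the minimax value, apply a minimax theorem at each level to swap $\inf_{\bq\^t}$ and $\sup_{\bz\^t}$ (letting the adversary commit to a conditional distribution $\bp\^t$ so the learner's best response cancels), introduce a tangent (ghost) sample and Rademacher signs via sequential symmetrization, and then pass from adaptive conditional distributions to the worst-case $\MZ$-valued tree by noting that the nested expression is affine in each $\bp\^t$, so the suprema are achieved at point masses; this yields $2T\Rad_T(\MH)$ unnormalized. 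You also correctly flag the two real technical obstacles --- justifying the minimax swap without compactness of $\MH,\MZ$ and extracting an actual measurable algorithm from the value bound --- both of which \cite{rakhlin2014online} handles by the relaxation/admissibility machinery you allude to. Nothing in your outline is wrong; the only caveat is that a fully rigorous writeup would need to spell out those two obstacles (the generalized minimax theorem and the admissible-relaxation construction) rather than gesture at them, but for reproducing the cited result your approach is exactly the right one.
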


\paragraph{Combinatorial complexity measures.}
As a corollary of \cref{thm:rakhlin-online}, the external regret for a function class may be upper bounded by combinatorial complexity measures. We first consider the binary case, for which the relevant complexity measure is the \emph{Littlestone dimension}. 
\begin{definition}[Littlestone Dimension] \label{def:littlestone}
  For a function class $\MH$ with domain $\MZ$ and  binary label set $\MY=\set{0,1}$, define its Littlestone Dimension $\Ldim(\MF)$ to be the maximum depth $d$ of a $\MZ$-valued tree $\bz = (\bz_1, \ldots, \bz_d)$ so that, for all $\ep_{1:d} \in \{-1,1\}^d$, there is some $h \in \MH$ so that $h(\bz_t(\ep_{1:t-1})) = \ep_t$ for each $t \in [d]$. 
\end{definition}

The analogue of Littlestone dimension for real-valued function classes, is the \emph{sequential fat-shattering dimension}:
\begin{definition}[$\delta$-Sequential Fat Shattering Dimension] \label{def:esfsd}
  For a function class $\MH$ with domain $\MZ$ and label set $\MY=\RR$, denote its $\delta$-sequential fat shattering dimension $\esfsd{\MH}$ is the maximum integer $d$ so that there are complete binary trees $\bs, \bz$ of depth $d$ so that for all $\ep_{1:d} \in \{-1,1\}^d$, there is some $h \in \MH$ so that
  \[
\ep_t \cdot \left(h(\bz_t(\ep_{1:t-1})) - \bs_t(\ep_{1:t-1})\right) \geq \delta.
    \]
  \end{definition}

  The following result, which upper bounds sequential Rademacher complexity in terms of the Littlestone and sequential fat-shattering dimensions, may be combined with \cref{thm:rakhlin-online} to obtain an upper bound on the external regret in terms of the respective combinatorial complexity measures:
  \begin{proposition}[Proposition 18 of \cite{block2021majorizing} \& Proposition 9 of \cite{rakhlin2014online}]
    \label{prop:ldim-sfat}
    Consider a function class $\MH \subset \BR^\MZ$. Then:
    \begin{itemize}
    \item If $\MH$ is binary-valued (i.e., $\MH \subset \{0,1\}^\MZ$), then $\Rad_T(\MH) \leq \sqrt{\Ldim(\MH)/T}$.
    \item If $\esfsd{\MH} \leq O(\delta^{-p})$ for some $p \in [0,2)$, then $\Rad_T(\MH) \leq O(1/\sqrt{T}) \cdot \int_0^1 \sqrt{\esfsd{\MH}} d\delta$.
    \item In general, $\Rad_T(\MH) \leq O\left( \alpha + \frac{1}{\sqrt{T}} \cdot \int_\alpha^1 \sqrt{\esfsd{\MH} \log(T/\delta)}d\delta\right)$ for any $\alpha \in (0,1)$. 
    \end{itemize}
  \end{proposition}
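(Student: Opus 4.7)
For the binary-valued bound (first bullet), my plan is to reduce to Littlestone's Standard Optimal Algorithm. Given any $\MZ$-valued tree $\bz$ of depth $T$, the SOA processes the sequence of vertices along any $\ep_{1:T}$-path and is guaranteed to make at most $L = \Ldim(\MH)$ prediction errors against any fixed $h \in \MH$ (this is Littlestone's classical mistake bound, which extends verbatim to the tree setting since the SOA only needs the labels realized along a single branch). This mistake bound is equivalent to saying that $\MH$ admits a zero-cover of size at most $\binom{T}{\le L} \le (eT/L)^L$ on $\bz$, restricted to the realizations along any single root-to-leaf path. Plugging this cover count into the sequential Massart lemma (a Rademacher bound for finite classes on trees) gives $\Rad_T(\MH) \lesssim \sqrt{L \log(eT/L)/T}$, and the extra $\log T$ can be removed by a more careful symmetrization argument (as in Proposition 9 of Rakhlin-Sridharan-Tewari) to obtain the clean $\sqrt{L/T}$ stated.

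For the real-valued bounds (second and third bullets), the plan is sequential Dudley chaining. First, define the sequential $\delta$-covering number $\mathcal{N}(\delta,\MH,T)$ as the smallest $|V|$ for a set $V$ of $\BR$-valued trees such that for every $h \in \MH$ and every path $\ep_{1:T}$, some $v \in V$ satisfies $\max_t |h(\bz_t(\ep_{1:t-1})) - v_t(\ep_{1:t-1})| \le \delta$. Second, I would invoke the sequential analogue of the Alon-Ben-David-Cesa-Bianchi-Haussler combinatorial bound (due to Rakhlin-Sridharan-Tewari),
\begin{equation*}
\log \mathcal{N}(\delta, \MH, T) \;\lesssim\; \mathrm{SFat}(\MH, c\delta) \cdot \log(T/\delta)
\end{equation*}
for a universal constant $c$. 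Third, apply the sequential chaining lemma $\Rad_T(\MH) \lesssim \alpha + T^{-1/2}\int_\alpha^1 \sqrt{\log \mathcal{N}(\delta,\MH,T)}\, d\delta$; substituting the covering bound yields the third bullet directly. For the second bullet, the assumption $\esfsd{\MH} \le O(\delta^{-p})$ with $p<2$ makes the integrand $\delta^{-p/2}$ integrable down to $0$, so one can take $\alpha \to 0$; the $\log(T/\delta)$ factor is then absorbed by a careful geometric truncation as in Block-Dagan-Rakhlin (since the excess $\log$-moments of the Rademacher increment process decay at a polynomial rate when $p<2$).

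The genuinely hard step is the covering-number bound via the sequential fat-shattering dimension in Step 2. The obstacle is that the classical Alon-Ben-David-Cesa-Bianchi-Haussler argument relies on a ``one-inclusion graph'' / shifting argument over a \emph{set} of points; in the sequential setting, the fat-shattering witness is a pair of $\MZ$-valued trees $(\bs, \bz)$, and one must perform the shifting/compression inductively along the tree while maintaining consistency with every root-to-leaf path simultaneously. Once that combinatorial lemma is in hand, the chaining and asymptotic-integral manipulations (Steps 3 and 4) are standard and transfer from the i.i.d.\ setting with minimal changes.
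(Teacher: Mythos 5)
The paper does not actually prove this proposition---it is a black-box citation to Proposition 18 of Block--Dagan--Rakhlin and Proposition 9 of Rakhlin--Sridharan--Tewari---so there is no ``paper's approach'' to compare against; your outline is an attempt to recover the argument from those sources. At a high level you have the right architecture: sequential covering numbers, the sequential analogue of the Alon--Ben-David--Cesa-Bianchi--Haussler bound $\log\mathcal{N}(\delta,\MH,T) \lesssim \mathrm{SFat}(\MH,c\delta)\log(T/\delta)$, and a sequential Dudley/chaining step. You also correctly identify the combinatorial covering-number lemma as the genuinely hard ingredient, and that is exactly what Rakhlin--Sridharan--Tewari prove (their Theorem 4/Corollary 1 in the relevant paper, via a recursive shifting argument on trees).

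Two imprecisions worth flagging. First, for the binary case, the $0$-cover bound $|\text{cover}| \le \binom{T}{\le L}$ is \emph{not} derived from the SOA mistake bound in the way you suggest---a mistake bound along a single branch does not directly yield a cover by $\BR$-valued trees that simultaneously works on all $2^T$ branches; RST prove the tree Sauer--Shelah lemma by an independent recursive argument over the tree (conditioning on the root label and applying the inductive hypothesis to the two subtrees with the Pascal identity). Second, and more substantively, ``removing the extra $\log T$ by a more careful symmetrization argument'' is not the right mechanism. Symmetrization only relates minimax regret to $\Rad_T$; it does nothing to reduce the entropy in a cover-plus-Massart bound. The clean bounds without the $\log(T/\delta)$ factor (both the binary $\sqrt{L/T}$ in bullet one and the log-free Dudley integral in bullet two) are obtained via sequential generic chaining / majorizing measures in Block--Dagan--Rakhlin, where the $\log$ is suppressed because one chains with respect to a whole hierarchy of scales and the sub-Gaussian increments telescope at a geometric rate when $p<2$. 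Your phrase ``geometric truncation'' gestures in this direction for bullet two, but the same machinery, not a symmetrization refinement, is what gives bullet one; as written, your Step 1 would only deliver $\sqrt{L\log(eT/L)/T}$.
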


\paragraph{Complexity measures for games.} The complexity measures for function classes introduced above may be extended to games in the intuitive way.  Consider an $m$-player game $(S,A)$, where $S = S_1 \times \cdots \times S_m$ denotes the joint action set, and $A_j : S \to \RR$ denotes player $j$'s payoff function.

For each player $j$, we define a function class $\MX_j : S_j \to \RR$ as follows: for each action profile of the other players $s_{-j} \in S_{-j}$, define $f_{s_{-j}}(s_j)  := A_j(s_j, s_{-j})$, for $s_j \in S_j$. Then set
\[
\MX_j = \{ f_{s_{-j}} \ : \ s_{-j} \in S_{-j} \},
\]
so that $\MX_j$ is indexed by $S_{-j}$. For a given complexity measure, we define its value for the game $(S,A)$ to be its maximum value over the functions classes $\MX_1, \ldots, \MX_m$. 
\begin{definition}[Complexity measures for multiplayer games]\label{def:mat-form}
Let $(S,A)$ be an $m$-player game.  If $A_j: S \to \set{0,1}$ is binary-valued for all players, we define
\begin{align*}
    \Ldim(S,A) &= \max_j \Ldim(\MX_j)\\
    \intertext{and if the game is real-valued, define}
  \esfsd{S,A} &= \max_j \esfsd{\MX_j} \\
  \Rad_T(S,A) &= \max_j \Rad_T(\MX_j).
\end{align*}
\end{definition}

{
  \section{An adaptive lower bound on the swap regret}
\label{sec:adaptive-lb}
  In this section, we present \cref{thm:adaptive-lb}, which gives an alternative lower bound on the swap regret. For simplicity, we focus on the setting when $N\geq T$, for which we obtain a lower bound of $\Omega(\log^{-3} T)$ on the swap regret; our techniques readily extend to obtain a lower bound that scales as $\Omega\p{\frac{\sqrt{N/T}}{\log^{O(1)} N}}$ when $T > N$. 

  Compared to \cref{thm:lower-main}, the adversary established by \cref{thm:adaptive-lb} is \emph{adaptive} and does not satisfy the property that its reward vectors $\bu\^t$ have $\ell_1$ norm bounded above by 1. On the other hand, the bound obtained by \cref{thm:adaptive-lb} is quantitatively stronger than \cref{thm:lower-main}: in the regime $T \leq N$, \cref{thm:lower-main} obtains a lower bound of $\Omega(\log^{-5} T)$, which is smaller than the bound of $\Omega(\log^{-3} T)$ of \cref{thm:adaptive-lb}. 
  \begin{theorem}
    \label{thm:adaptive-lb}
    Fix any $T \in \mathbb{N}$. Then, for any learning algorithm on $N = T$ actions, there is an adaptive adversary guaranteeing that the swap regret is bounded below by
    \begin{align}
\SwapRegret(T) \geq \Omega \left( \frac{1}{\log^3 T} \right)\nonumber.
    \end{align}
  \end{theorem}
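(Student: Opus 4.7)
The plan is to prove Theorem~\ref{thm:adaptive-lb} via an adaptive adversary built on a binary tree, similar in spirit to the construction of \cref{thm:lower-main}, but crucially exploiting adaptivity to eliminate the random ``replacement action'' ($\dot{a}_v$) trick and the associated constant-probability event analysis. Removing these inefficiencies is exactly what should push the lower bound from $\Omega(1/\log^5 T)$ (oblivious) to $\Omega(1/\log^3 T)$ (adaptive).

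Concretely, I would first construct a complete binary tree of depth $D = \lfloor \log_2 T \rfloor$, associate a unique action $a_v$ with each node of the tree (since $N = T$, there are enough actions), and enumerate the $2^D$ root-to-leaf paths in DFS order, spending one round per leaf. During the round $t$ associated with path $P_t = (v_0, v_1, \ldots, v_D)$, the adversary initially intends to set $\bu\^t[a_{v_d}] = d/D$ for each $d \in \{0,1,\ldots,D\}$ and $0$ elsewhere. The key adaptive step is at each internal node $v$: when the DFS transitions from the left subtree of $v$ to the right subtree, the adversary inspects the cumulative weight $z_v^L := \sum_{t \in \text{left subtree of } v} \bx\^t[a_v]$ that the learner placed on $a_v$ during the left half. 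If $z_v^L$ is above a threshold (scaling as a constant divided by $D$), the adversary chooses to \emph{continue} rewarding $a_v$ throughout the right subtree, which locks the learner into swap regret toward a deeper descendant (already being rewarded more heavily). Otherwise, the adversary zeroes out $a_v$'s reward in the right subtree, so the learner's accumulated weight on $a_v$ yields swap regret to the root.

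The analysis would then proceed action by action, as in Section~\ref{sec:oblivious-lb}: for each $a_v$, I would lower bound $\E\ps{\max_{a'} \Swap{a_v}{a'}}$ by a constant (times $1/D$) multiple of $\E\ps{\sum_{t=1}^{T} \bx\^t[a_v]}$. The adaptive choice at $v$ ensures the case analysis reduces to only two cases (instead of four in the oblivious case), and these cases are \emph{deterministic} given the learner's play rather than occurring with constant probability — saving two factors of $D$. Swaps to $a_{vL}$, $a_{vR}$, or the root cover all cases. Summing the per-action bounds against $\sum_v \sum_t \bx\^t[a_v] = T$ yields the claim.

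The main obstacle will be calibrating the adaptive threshold so that both branches of the case split yield swap regret of the same order, while ensuring the adversary's adaptive commitments at different levels of the tree do not interfere with one another (e.g., a commitment at a deep node $v$ might inadvertently reduce the effective reward gap at an ancestor). A second technical point is handling weight the learner places on $a_v$ outside the interval $[\tl_v,\tr_v]$ where $v \in P_t$; as in the oblivious proof, this requires a separate comparison to the root action, which must be folded into the linear combination of inequalities without losing the improved $D^{-3}$ scaling.
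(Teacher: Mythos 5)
You take a genuinely different route from the paper. The paper's adaptive adversary is not a tree construction at all: it defines ``template'' reward vectors $\ubase^{(t)}$ that slide a fixed, logarithmically decaying profile rightward along $[N]$, pairs the two current top actions $\{2t-1,2t\}$ and randomly perturbs one of each pair by $\Delta/2$ using i.i.d.\ bits $r^t$, and uses adaptivity only to mark an action as ``stale'' (reward $-1$ forever) once the learner has placed cumulative mass $\geq\zeta$ on it. The analysis is a three-way split on each action's mass ($P_-$, $P_0$, $P_+$), not a telescoping cascade over tree levels.

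Your plan has a fatal gap, however: you explicitly aim to ``eliminate the random replacement action trick,'' making the adversary fully deterministic. With a deterministic adversary and a deterministic DFS schedule, the learner who at round $t$ simply plays the leaf $v_D$ of the current path (reward $1$, the maximum) incurs swap regret at most $0$: each leaf action is played on exactly one round, and $\max_{a'}\Swap{\ell}{a'} = \bu^t[a'] - 1 \le 0$ on that round. The paper retains internal randomness precisely for this reason: the $\Delta/2$ random perturbation of $\{2t-1,2t\}$ forces even a learner playing the current top pair to pay $\Omega(\Delta)$ expected swap regret (Case 2 / $P_0$ in the paper's proof). Adaptivity is used there to prevent later \emph{undoing} of already-accrued regret (via staleness); it cannot substitute for the randomness that creates regret when the learner plays the currently-best action.

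Two secondary issues are also worth flagging. First, your adaptive rule is inverted. If $z_v^L$ is large, you propose to \emph{continue} rewarding $a_v$ in the right subtree, claiming this ``locks in'' regret; but then a learner who also places mass $z_v^R \approx z_v^L$ on $a_v$ in the right subtree gets $\Swap{a_v}{a_{vL}} \approx (z_v^L - d\, z_v^R)/D \le 0$ and $\Swap{a_v}{a_{vR}} \approx (z_v^R - d\, z_v^L)/D \le 0$ for $d>1$, so no fixed swap target helps. The correct move is the opposite — once $a_v$ has been played enough, \emph{remove} its reward so the accrued regret to $a_{vL}$ cannot be cancelled (this is exactly what the paper's staleness mechanism does). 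Second, your rewards lie in $[0,1]$ with root reward $0$, so there is no positive-reward fallback: mass on $a_v$ in rounds where $v\notin P_t$ earns $0$, the same as the root, and hence never contributes to any fixed swap. The paper handles this by assigning reward $-1$ to passed-over and stale actions, making the fallback swap a gain of at least $1$ per such round; your construction needs an analogous gap (or the cascade of inequalities will not close).
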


  \paragraph{Proof overview for \cref{thm:adaptive-lb}.} Roughly speaking, the adversary constructs a  sequence of ``template'' utility vectors $\ubase\^t \in [-1,1]^N$, for $t \in [T]$ (defined formally in \cref{eq:defubase}). The vector $\ubase\^1$ is a monotonically decreasing vector, where the entries decrease by $\Delta = O(1/\log N)$ on a logarithmic scale: in particular, the first $2$ entries are equal to $1$, the next $2 \cdot 2^0$ entries are equal to $1-\Delta$, the next $2 \cdot 2^1$ entries are equal to $1-2\Delta$, the next $2 \cdot 2^3$ entries are equal to $1-3\Delta$, and so on. Then, $\ubase\^t$ is a shift of $\ubase\^1$ rightward by $2 \cdot (t-1)$ positions, with entries on the left filled in with $-1$. For each $t$, we set $\bu\^t$ to be equal to $\ubase\^t$ with some modifications, which we proceed to describe.

  At a high level, on round $t$, the learner is best off by playing either action $2t-1$ or $2t$ (since both have utility equal to 1 for $\ubase\^t$). To make the learner ``pay'' for doing so, one of $\bu\^t[2t-1], \bu\^t[2t]$ is randomly perturbed by a small  amount  (namely, $\Delta/2$) for each round $t$, so if the learner spends only $O(1)$ rounds playing actions $2t-1, 2t$, they will incur $\Omega(1)$ swap regret for the actions $2t-1, 2t$. In particular, swapping either $2t-1$ to $2t$ or $2t$ to $2t-1$ will yield $\Omega(1)$ swap regret.  Adding this quantity up over all actions would yield $\Omega(T)$ swap regret (this corresponds to Case 2 in the proof below).

  However, the learner could attempt to proceed more cleverly so as to minimize the contribution of the random perturbations to their swap regret: suppose they partition $[T]$ into ``moderate-sized'' sub-intervals, within each of which they play a fixed action $a$ with utility close to 1 throughout the duration of that sub-interval. If they do so, then they will nevertheless typically incur swap regret $\Omega(\Delta)$ per round due to the ability to swap to some action $2s-1 < a$ during rounds $t \leq s$. This lower bound crucially uses the logarithmic scaling of the utilities $\bu\^t$.

 While the above-described adversary is oblivious, it does not quite rule out small swap regret: indeed, a  ``trivial'' learner  which plays some fixed action $a$ for all $t$ rounds will obtain small swap regret against the above-described adversary. To rule such a learner out, whenever an action in $\{2s-1, 2s\}$ (for any $s \in [N/2]$) has been played for $\Omega(1)$ rounds, the adversary sets the utility of $s$ to be $-1$ at all future rounds. In this way, even if the learner decides to play $2s-1$ or $2s$ in later rounds, it incurs large swap regret for not switching to some action $a' > 2s$. This latter modification leads the proof to be somewhat technical, as we need to carefully account for actions which have been ``switched'' to $-1$ in this manner.  
  
\begin{proof}[Proof of \cref{thm:adaptive-lb}]
  Fix $T \in \mathbb{N}$. 
  Fix any learning algorithm $\mathscr{A}$ which, at each step $t \in [T]$, outputs a distribution $\bx\^t \in \Delta_N$. We will define an adaptive adversary which generates reward vectors $\bu\^1, \ldots, \bu\^T \in [0,1]^N$. At each round $t$, the output of the learner at step $t$ may be described by some function $\mathscr{A}_t(\bx\^{1:t-1}, \bu\^{1:t-1}) \in \Delta_{\Delta_N}$ (namely, $\mathscr{A}_t(\bx\^{1:t-1}, \bu\^{1:t-1})$ is a probability distribution over vectors in $\Delta_N$). We define
  \begin{align}
\bp\^t(\bx\^{1:t-1}, \bu\^{1:t-1}) := \E_{\bx \sim \mathscr{A}_t(\bx\^{1:t-1}, \bu\^{1:t-1})}[\bx]\nonumber.
  \end{align}
  We will often abbreviate $\bp\^t(\bx\^{1:t-1}, \bu\^{1:t-1})$ as $\bp\^t$. 
  Let $\MF\^t := \sigma(\{\bx\^{1:t}, \bu\^{1:t}\})$ be the sigma algebra generated by the learner's and adversary's plays up to round $t$. Thus $\bp\^t$ is $\MF\^{t-1}$-measurable.

  \paragraph{Construction of the adversary.} Given $T \in \mathbb{N}$, define $L = \lfloor \log(T/2) \rfloor$ and $\Delta := 1/L$. We set $N := 2 \cdot ( 1 + 2 + \cdots + 2^{L-1}) \leq T$. For $0 \leq i < N$, define
  $ 
    \Fbase(i) :=
      \lfloor \log(1 + \frac i2)\rfloor \cdot \Delta $,
    where the logarithm is taken base $2$. Note that $\Fbase(0) = 0$, $\Fbase(2 \cdot (2^j-1)) = \Delta j$ for integers $j \geq 1$,  and  $\Fbase(N-2) = L=(L-1)\Delta = 1-\Delta$. For $t \in [T]$, define $\ubase\^t \in [-1,1]^N$ as follows: for $a \in [N]$, 
  \begin{align}
    \ubase\^t[a] := \begin{cases}
      1 - \Fbase(a-(2t-1)) &: a \geq 2t-1 \\
      -1&: a < 2t-1.
    \end{cases}\label{eq:defubase}
  \end{align}
In words, $\ubase\^t$ is a shift of the function $1-\Fbase(\cdot)$ rightward by $2t-1$ units, with entries before $2t-1$ all set to $-1$. 
  
  Fix $\zeta = 1/(32L)$. We pair up each action $2t-1$ with action $2t$, for $t \in [N/2]$: for $a \in [N]$, we let $\mathfrak{p}(a)$ denote its pair. Moreover, write $\bar\bp\^t[a] := \bp\^t[a] + \bp\^t[\pair(a)]$ (so that $\bar\bp\^t[a] = \bar\bp\^t[\pair(a)]$ for all $a \in [N]$). 
  We now define the reward vectors $\bu\^t$ chosen by the adversary (as a function of $\bp\^t$), as follows: let $r\^1, \ldots, r\^{N/2} \in \{0,1\}$, denote a sequence of independent and uniformly distributed bits. For each $t \leq N/2$, $a \in [N]$ and $0 \leq k < L$, define the sets $\MG\^t, \MS\^t(a,k) \subset [t]$ and real numbers $\sigma\^t(a,k), \Sigma\^t(a) \geq 0$ recursively with respect to $t$, as follows:
  \begin{align}
    \MG\^t := & \{ s \in [t] \ : \ \Sigma\^{s-1}(2s-1) <  \zeta \}\nonumber\\
    \MS\^t(a, k) := & \left\{ s \in \MG\^t \ : \ a > 2s, \  \left\lfloor \log\left(  \frac{a - (2s-1)}{2} \right) \right\rfloor= k \right\} \nonumber\\
    \sigma\^t(a,k) := & \sum_{s \in \MS\^t(a,k)} \bar\bp\^s[a], \qquad \Sigma\^t(a) := \max_{0 \leq k < L} \sigma\^t(a,k) \nonumber.
  \end{align}
Note that $\Sigma\^{t-1}(2t-1) = \Sigma\^{t-1}(2t)$ by our definition of $\bar\bp\^t$.  
  We write $\MG := \MG\^{N/2}$. Intuitively, the meaning of $\MG\^t, \MS\^t(a,k), \sigma\^t(a,k), \Sigma\^t(a)$ are as follows:
  \begin{itemize}
  \item $\MG\^t$ denotes the set of \emph{active} rounds $s$ up to round $t$: a round $s$ is active, if, roughly speaking, actions $2s-1, 2s \in [N]$ have not been played too much by the algorithm $\SA$ up to round $s$. 
  \item $\MS\^t(a,k)$ denotes the set of active rounds $s$ up to step $t$ for which $\lfloor\frac{a-(2s-1)}{2}\rfloor$ is in $\{2^k, 2^k + 1, \ldots, 2^{k+1}-1 \}$. Since $\Fbase(2j) \geq \Delta +  \Fbase(2(j-2^k))$ whenever $j \in \{2^k, 2^k + 1, \ldots, 2^{k+1}-1\}$, it follows that for all $s \in \MS\^t(a,k)$, for any $b$ satisfying $a/2 >  b \geq 2^k$,
$      \ubase\^s[a-2b] - \ubase\^s[a] = \Fbase(a-(2s-1)) - \Fbase(a-2b-(2s-1)) \geq \Delta.$ Since $s' \in \MS\^t(a,k)$ implies that $\frac{a-(2s'-1)}{2} \geq 2^k$, it follows by choosing $b = \frac{a-(2s'-1)}{2}$ that, for any $s,s' \in \MS\^t(a,k)$,
  \begin{align}
    \label{eq:ssprime-delta}
\ubase\^s[2s'-1]-    \ubase\^s[a]  \geq \Delta.
  \end{align}
\item $\sigma\^t(a,k)$ denotes the aggregate amount of mass that $\SA$ puts on action $a$ in rounds in $\MS\^t(a,k)$.
\item $\Sigma\^t(a)$ denotes the maximum amount of mass that $\SA$ puts on action $a$ in any of the sets $\MS\^t(a,k)$. 
  \end{itemize}

  We will now define $\bu\^t \in [-1,1]^N$ as follows: if $t \not \in \MG\^t$, then we set $\bu\^t=  0$. (Note that $\MG\^t$ only depends on $\bp\^s$ for $s < t$, so this operation is well-defined.) Otherwise, we define %
  \begin{align}
    \bu\^t[a] & := \begin{cases}
      \ubase\^t[a] = -1 &: a < 2t-1 \\
      \ubase\^t[a] -\frac{\Delta}{2} \cdot \One\{r\^t\equiv a \pmod{2} \} &: a \in \{2t-1, 2t\}\\
      \ubase\^t[a]%
      &: a > 2t,\ \Sigma\^{t-1}(a) < \zeta \\
      -1 &: a > 2t, \Sigma\^{t-1}(a) \geq \zeta.
    \end{cases}\label{eq:ut-define}
  \end{align}

  Finally, for $N/2 < t \leq T$, define $\bu\^t[a] = -1$  for all $a \in [N]$. %

  \paragraph{Proof of regret lower bound.} %
  For each $a \in [N]$, define
  \begin{align}
\tau(a) := \min \{ t \in [N/2] \ : \ a < 2t-1 \mbox{ or } \Sigma\^{t-1}(a) \geq \zeta \}\nonumber.
  \end{align}
  Roughly speaking, $\tau(a)$ denotes the first round at which either $2t-1$ exceeds $a$ or else $a$ is played ``too much'' by $\SA$ in the sense that $\Sigma\^{t-1} \geq \zeta$. We say that an action $a$ is \emph{stale} at round $t$ if $t \geq \tau(a)$. 
  Note that for all $t \in \MG$ with $t > \tau(a)$, we have $\bu\^t[a] = -1$.  
  We define the following quantities, for $a \in [N]$:%
  \begin{align}
P_-(a) := \sum_{t\in \MG :\ t < \tau(a), a > 2t } \bar\bp\^t[a], \qquad P_0(a) := \sum_{t\in \MG :\ a \in \{2t-1,2t\}} \bar\bp\^t[a], \qquad P_+(a) := \sum_{t\in \MG :\ t \geq \tau(a)} \bar\bp\^t[a]\nonumber.
  \end{align}
  $P_-(a)$ denotes the total mass placed on $a$ and $ \pair(a)$ in all rounds when $a$ is active except $2t-1, 2t$, $P_0(a)$ denotes the mass places on $a$ and $\pair(a)$ during the rounds $2t-1, 2t$, and $P_+(a)$ denotes the mass placed on $a$ and $ \pair(a)$ in the remaining rounds.

  Also write $P(a) = P_-(a) + P_0(a) + P_+(a)$ and $P := \sum_{a \in [N]} P(a)$.  Finally, we define
  \begin{align}
\MA_0 := \left\{ a \in [N] \ : \ \Sigma\^{t-1}(a) \geq \zeta \mbox{ for } t = \left\lfloor \frac{a+1}{2} \right\rfloor \right\}, \quad \MA_1 := \left\{ a \in [N] \ : \ P(a) \geq 4 \zeta L \right\}, \quad \MA := \MA_0 \cup \MA_1\nonumber. %
  \end{align}
  $\MA_0$ denotes the set of actions $a$ which have become stale at some point prior to the unique round $t$ for which $a \in \{2t-1, 2t\}$. $\MA_1$ denotes the set of actions $a$ which are ``played a lot'' by $\SA$ over all rounds in $\MG$. %
  We next state the following claim, whose proof is provided following the proof of the theorem.
  \begin{claim}
    \label{clm:a-bound}
    It holds that $\sum_{a \in \MA} P(a) \geq \zeta N / 4$. 
\end{claim}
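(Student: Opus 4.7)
The plan is to establish the claim by a dichotomy on the size of $\MG$: when $|\MG|$ is small, many actions become stale early and land in $\MA_0$, while when $|\MG|$ is large, the total mass $\sum_a P(a)$ is substantial, so most of it must live on $\MA_1$. Since the target $\zeta N/4$ is tiny compared to what either arm yields in its regime, the dichotomy is generous.

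First, I will verify a per-action lower bound $P(a) \geq \zeta$ for each $a \in \MA_0$. Writing $s^\st = \lfloor(a+1)/2\rfloor$, the defining inequality $\Sigma\^{s^\st-1}(a) \geq \zeta$ supplies some $k$ with $\sigma\^{s^\st-1}(a,k) \geq \zeta$, so the mass $\bar\bp\^s[a]$ summed over $s \in \MS\^{s^\st-1}(a,k)$ is already at least $\zeta$. Each such $s$ lies in $\MG$ with $a > 2s$, so $\bar\bp\^s[a]$ is collected by either $P_-(a)$ or $P_+(a)$ depending on whether $s < \tau(a)$ or $s \geq \tau(a)$. Moreover, the inclusion $\{2s-1 : s \in [N/2] \setminus \MG\} \subseteq \MA_0$ is immediate since the condition $\Sigma\^{s-1}(2s-1) \geq \zeta$ that excludes $s$ from $\MG$ is exactly the $\MA_0$-membership criterion for $a=2s-1$. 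This gives $|\MA_0| \geq N/2 - |\MG|$ and hence $\sum_{a \in \MA_0} P(a) \geq \zeta(N/2-|\MG|)$.

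Next, I will establish a total mass lower bound $\sum_{a} P(a) \geq 2|\MG|$. The key observation is that for each $s \in \MG$ and each $a \in [N]$, the mass $\bar\bp\^s[a]$ appears in at least one of $P_-(a), P_0(a), P_+(a)$: if $a>2s$ it is captured by $P_-$ or $P_+$ (according to whether $s < \tau(a)$ or $s \geq \tau(a)$), if $a \in \{2s-1,2s\}$ by $P_0$, and if $a<2s-1$ by $P_+$ (since $a<2s-1$ forces $\tau(a)\leq s$). Summing over $a$ gives $\sum_a \bar\bp\^s[a] = 2$ per active round $s$, establishing the bound. This pairs with the trivial consequence $\sum_{a \notin \MA_1} P(a) < 4\zeta L \cdot N = N/8$ of the definition of $\MA_1$ (recalling $\zeta = 1/(32L)$).

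Finally, to conclude I will split on $|\MG|$. If $|\MG| \leq N/4$, the $\MA_0$ bound gives $\sum_{a \in \MA} P(a) \geq \zeta(N/2 - N/4) = \zeta N/4$. Otherwise $|\MG| > N/4$, and the $\MA_1$ bound gives $\sum_{a \in \MA_1} P(a) > 2|\MG| - N/8 > 3N/8 \geq \zeta N/4$. The main point of care is the total-mass step: the three index sets defining $P_-, P_0, P_+$ can overlap (precisely when $a\in\{2s-1,2s\}$ and $\tau(a)\leq s$), but this overlap only helps the lower bound we need.
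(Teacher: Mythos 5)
Your proof is correct and follows essentially the same route as the paper's: a dichotomy on $|\MG|$, using $\MA_0$ when $|\MG|$ is small (each $a \in \MA_0$ carries mass $\geq \zeta$ into $P_-(a)+P_+(a)$, and at least $N/2 - |\MG|$ such actions arise from $[N/2]\setminus\MG$) and $\MA_1$ when $|\MG|$ is large (total mass at least $|\MG|$, while non-$\MA_1$ actions contribute under $4\zeta L N = N/8$). The minor differences — your factor-of-$2$ sharpening in the total-mass step via $\sum_a \bar\bp^{(s)}[a]=2$, and your explicit note that the $P_0/P_+$ overlap only helps the bound — are refinements of the same argument, not a different one.
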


    Consider any $a \in \MA$. One of the below cases must hold:

\paragraph{Case 1: $P_-(a)  \geq P(a)/4$.} %
We claim that in fact $a \in \MA_0$. To see this, suppose not, which means that $a \in \MA_1$, and 
let $\tau'(a) < \tau(a)$ denote the largest integer $t \in \MG$ which is strictly less than $\tau(a)$. Then, letting $t_a = \lfloor (a+1)/2 \rfloor$, 
\begin{align}
\zeta L \leq P(a) / 4 \leq P_-(a) = \sum_{k=0}^{L-1} \sigma\^{\tau'(a)}(a,k) =\sum_{k=0}^{L-1} \sigma\^{t_a-1}(a, k)\label{eq:zetal-lb}.
\end{align}
The first equality above uses the fact that each $s \in \MG\^t$ must belong to (exactly) one of the sets $\MS\^t(a,k)$, for $0 \leq k < L$. Thus there is some $0 \leq k_a < L$ so that $\sigma\^{t_a-1}(a,k_a) = \sigma\^{\tau'(a)}(a,k_a) \geq \zeta$, which implies that $\Sigma\^{t_a-1}(a) \geq \zeta$, i.e., $a \in \MA_0$, thus establishing our claim.

We remark for later use that by definition of $\tau'(a)$, $\sigma\^{\tau'(a)}(a,k_a)$ must in fact be the largest of the values $\sigma\^{\tau'(a)}(a,k)$, for $k < L$ (as otherwise there would be some $t \leq \tau'(a)$ so that $\Sigma\^{t-1}(a) \geq \zeta$, contradicting the definition of $\tau(a)$).
    We may now write the regret for not swapping actions $a$ and $\pair(a)$ to another action $a'$ as follows:
    \begin{align}
      & \max_{a' \in [N]} \sum_{s=1}^{N/2}\left( \bp\^s[a] \cdot(\bu\^s[a'] - \bu\^s[a]) + \bp\^s[\pair(a)] \cdot (\bu\^s[a'] - \bu\^s[\pair(a)]) \right)\nonumber\\
      \geq  & \max_{a' \in [N]} \sum_{s \in \MG,\ s \leq \tau'(a)} \bar\bp\^s[a] \cdot (\bu\^s[a'] - \bu\^s[a])\nonumber\\
      \geq & \sum_{s \in \MS\^{\tau'(a)}(a, k_a)} \bar\bp\^s[a] \cdot (\bu\^s[2\tau'(a)-1] - \bu\^s[a]) \geq \Delta \cdot \frac{P(a)}{8L}\label{eq:complicated-ineqs},
    \end{align}
    where the first inequality holds because all $s \not \in \MG$ have $\bu\^s = \mathbf{0}$, for $s \in \MG$ with $s > \tau'(a)$ (and thus $s \geq \tau(a)$), we have $\bu\^s[a]=-1$, and for $s \in \MG$ with $s \leq \tau'(a)$, we have $\bu\^s[a] = \bu\^s[\pair(a)]$. The second inequality follows by the choice of $a' = 2\tau'(a) - 1$ together with the fact that, since $\tau'(a) \in \MG$, for $s \in \MG$ with $s \leq \tau'(a)$, we have $\Sigma\^{s-1}(2\tau'(a)-1) < \zeta$ and thus $\bu\^s[2\tau'(a) - 1] = \ubase\^s[2\tau'(a)-1] \geq \ubase\^s[a] \geq \bu\^s[a]$. 

    Finally, the third inequality in \cref{eq:complicated-ineqs} follows because $\sigma\^{\tau'(a)}(a, k_a) \geq P_-(a)/L \geq P(a)/(4L)$ from \cref{eq:zetal-lb} and $\bu\^s[2\tau'(a)-1] - \bu\^s[a] \geq \Delta$ for all $s \in \MS\^{\tau'(a)}(a,k_a)$. To see this latter implication, first note that $\ubase\^s[2\tau'(a)-1] - \ubase\^s[a] \geq \Delta$ by (\ref{eq:ssprime-delta}) and the fact that $\tau'(a) \in \MS\^{\tau'(a)}(a, k_a)$ (since $\sigma\^{\tau'(a)}(a, k_a)$ %
    must surpass $\zeta$ during iteration $\tau'(a)$, and the only way for this to happen is that $\tau'(a) \in \MS\^{\tau'(a)}(a, k_a)$). Then we note that, by definition of $\tau'(a)$, $s \in \MS\^{\tau'(a)}(a, k_a)$ satisfies $\ubase\^s[a] = \bu\^s[a]$ (since $\tau'(a) < \tau(a)$) and $\ubase\^s[2\tau'(a)-1] -\frac{\Delta}{2} \leq  \bu\^s[2\tau'(a)-1]$. Then
    \[
\bu\^s[2\tau'(a) - 1] - \bu\^s[a] \geq -\frac{\Delta}{2} + \ubase\^s[2\tau'(a)-1] - \ubase\^s[a] \geq \frac{\Delta}{2}.
      \]
    
      \paragraph{Case 2: $P_0 (a) \geq P(a) / 12$.} %
      By replacing $a$ with its pair $\pair(a)$ if necessary, we may assume that $\sum_{t \in \MG :\ a \in \{2t-1, 2t\}} \bp\^t[a] \geq P(a)/24$. 
    Let us further suppose that $a$ is odd (the case that $a$ is even is handled symmetrically). Write  $t_a = \frac{a+1}{2}$, and note that, since for all $t \neq t_a$, we have $\bu\^t[a] = \bu\^t[a+1]$, 
    \begin{align}
      \sum_{t \in \MG} \bp\^t[a] \cdot (\bu\^t[a+1] - \bu\^t[a]) = \frac{\Delta}{2} \cdot \bp\^{t_a}[a] \cdot (2r\^{t_a} - 1) \nonumber.
    \end{align}
    Thus, the expected swap regret for action $a$ may be lower bounded by \begin{align}\label{eq:P0-swap}\E\left[ \max \left\{ 0, \frac{\Delta}{2} \cdot \bp\^{t_a}[a] \cdot (2r\^{t_a} - 1) \right\}\mid \MF\^{t_a-1} \right] = \frac{\Delta}{2} \cdot \bp\^{t_a}[a]  \geq \frac{\Delta \cdot P(a)}{48}.\end{align}

    \paragraph{Case 3: $P_+(a) \geq 2P(a)/3$.} %
    Note that for all $t \in \MG$ with $t > \tau(a)$, we have $\bu\^t[a] = -1$. Let $a^\st \in [N]$ denote the largest action so that $2a^\st - 1\in\MG$, and suppose first that $a \not \in \{a^\st, \pair(a^\st)\}$ (hence $a < a^\st$). Then the swap regret for actions $a, \pair(a)$ can be shown to be large by using the swap function which swaps to the action $a^\st > a$:  %
    \begin{align}
      & \max_{a' \in [N]} \sum_{s=1}^{N/2} \left(\bp\^s[a] \cdot (\bu\^s[a'] - \bu\^s[a]) + \bp^s[\pair(a)] \cdot (\bu\^s[a'] - \bu\^s[\pair(a)] )\right)\nonumber\\
      = & \max_{a' \in [N]} \sum_{s \in \MG} \left(\bp\^s[a] \cdot (\bu\^s[a'] - \bu\^s[a]) + \bp^s[\pair(a)] \cdot (\bu\^s[a'] - \bu\^s[\pair(a)] )\right)\nonumber\\
      \geq & -\sum_{s \in \MG : s \leq \tau(a)} \bar\bp\^s[a] + \sum_{s \in \MG : s > \tau(a)} \bar\bp\^s[a] \nonumber\\
      =&  P_+(a) - (P_0(a) + P_-(a)) \geq P(a) / 3, \label{eq:pa3}
    \end{align}
    where the first equality uses that $\bu\^s[a] = -1$ for all $a$ and $s \not \in \MG$, and the  inequality uses the fact that $\bu\^s[a^\st] - \bu\^s[a] \geq -1$ for all $s \in \MG$ with $s \leq \tau(a)$ as well as the fact that for $s > \tau(a)$, we have $\bu\^s[a] = -1$ and $\bu\^s[a^\st] \geq 0$ since $s \in \MG$ and all $s \in \MG$ satisfy $\Sigma\^{s-1}(a^\st) < \zeta$ (by our choice of $a^\st$ as large as possible so that $2a^\st -1 \in \MG$). 

    In the event that $a \in\{ a^\st, \pair(a^\st)\}$, we may use the same argument as above with $a^\st$ instead being the \emph{second-largest} action so that $2a^\st - 1\in\MG$, which gives a lower bound of $P(a)/3-2$ in (\ref{eq:pa3}). (The $-1$ is insignificant in our final lower bound on swap regret, since $\{ a^\st, \pair(a^\st )\}$ only contains two actions.) 

    Combining the above cases, we obtain that
    \begin{align}
2 \cdot \E \left[\sum_{a\in \MA} \max_{a' \in [N]}  \sum_{s=1}^{N/2} \bp\^s[a] \cdot (\bu\^s[a'] - \bu\^s[a]) \right]\geq -2 + \sum_{a \in \MA} \frac{P(a) \Delta}{48 L} \geq  \frac{N \cdot \zeta \Delta }{48 L}-1\nonumber,
    \end{align}
    where the factor of 2 on the left-hand side arises because in our arguments above we have double-counted each action $a$ with its $\pair(a)$ in \cref{eq:complicated-ineqs,eq:P0-swap,eq:pa3}, and the final inequality uses Claim \ref{clm:a-bound}. 
    Thus the expected swap regret is bounded below by $\Omega(N/ \log^3 N) = \Omega(T / \log^3 T)$.
  
  \end{proof}

  \begin{proof}[Proof of Claim \ref{clm:a-bound}]
  We consider two cases, depending on $|\MG|$:

  In the first case, we assume $|\MG| < N/4$. Consider any $a \in \MA_0$, and let $t_a = \lfloor \frac{a+1}{2} \rfloor$, so that $a \in \{2t_a-1,2t_a\}$. Since $a \in \MA_0$, we must have $\Sigma\^{t_a-1}(a) \geq \zeta$, i.e., $t_a \in [N/2] \backslash \MG$. Since $[N/2]\backslash \MG$ has size at least $N/4$ by assumption, we see that
  \begin{align}
\sum_{a \in \MA_0} P(a) \geq \sum_{t \in [N/2]\backslash \MG}  \Sigma\^{t-1}(2t-1) \geq \zeta N / 4\nonumber.
  \end{align}
  
 In the second case, we have $|\MG| \geq N/4$. %
 Then $\sum_{a \in [N]} P(a) = \sum_{a \in [N]} \sum_{t \in \MG} \bar\bp\^t[a] \geq |\MG| \geq N/4$. %
 Since $4\zeta LN \leq N/8$ by our choice of $\zeta = 1/(32L)$, it follows that $\sum_{a \in \MA_1} P(a) \geq N/4 - N/8 \geq \zeta N / 4$. 
\end{proof}

}

\printbibliography

\end{document}